\newcommand{\version}{arxiv}
\newcommand{\jmlronly}[1]{%
    \ifthenelse{\equal{\version}{jmlr}}%
    {%
        #1%
    }%
    {}%
}
\newcommand{\arxivonly}[1]{%
    \ifthenelse{\equal{\version}{arxiv}}%
    {%
        #1%
    }%
    {}%
}
\def\cleartheorem#1{\expandafter\let\csname#1\endcsname\relax
    \expandafter\let\csname c@#1\endcsname\relax
}
\newtheorem{theorem}{Theorem}
\newtheorem{lemma}{Lemma} 
\newtheorem{remark}{Remark} 
\newtheorem{corollary}{Corollary}
\newtheorem{definition}{Definition}
\newtheorem{observation}{Observation} 
\def\addvalue#1#2{\expandafter\gdef\csname my@data@#1\endcsname{#2}}
\def\usevalue#1{\csname my@data@#1\endcsname}
\RenewDocumentCommand{\diag}{m}{\mathrm{Diag}\left(#1\right)}
\NewDocumentCommand{\dataset}{}{\mathcal{D}}
\NewDocumentCommand{\loss}{}{\mathcal{L}}
\NewDocumentCommand{\derivatives}{m m O{}}{\frac{\partial#3 #1}{\partial #2#3}}
\NewDocumentCommand{\hessian}{O{\theta}}{H_{#1}}
\NewDocumentCommand{\mlp}{}{\operatorname{MLP}}
\NewDocumentCommand{\activation}{}{\sigma}
\NewDocumentCommand{\inner}{m m}{\left\langle #1, #2 \right\rangle}
\NewDocumentCommand{\relu}{}{\operatorname{ReLU}}
\NewDocumentCommand{\gelu}{}{\operatorname{GELU}}
\RenewDocumentCommand{\defeq}{}{\coloneq}
\NewDocumentCommand{\positivecomplex}{}{\complexes^{+}}
\RenewDocumentCommand{\transpose}{}{\top}
\NewDocumentCommand{\defto}{}{\eqcolon}
\NewDocumentCommand{\sas}{}{\mathcal{S{\alpha}S}}
\NewDocumentCommand{\Eta}{}{H}
\NewDocumentCommand{\dbmlp}{}{\operatorname{DB-MLP}}
\NewDocumentCommand{\jrelu}{}{\operatorname{J-SquaredReLU}}
\NewDocumentCommand{\Alpha}{}{A}
\NewDocumentCommand{\HXT}{}{H^l \left(X^{l-1} + D^l\right)^\transpose}
\NewDocumentCommand{\GAlphaT}{}{G^l_V \left(\Alpha^{l}\right)^\transpose}
\NewDocumentCommand{\Epsilon}{}{E}
\NewDocumentCommand{\efftheta}{}{\tilde{\theta}}
\NewDocumentCommand{\vectorize}{m}{\mathrm{vec}\left(#1\right)}
\NewDocumentCommand{\kkT}{}{K^l\left(K^l\right)^\transpose}
\NewDocumentCommand{\magic}{O{}}{\mathrm{#1MagicSynapse}}
\NewDocumentCommand{\proxyW}{}{\hat{W}}
\newif\ifHeightAcquired
\NewDocumentCommand{\resetHeight}{}{\HeightAcquiredfalse}
\NewDocumentCommand{\myincludegraphics}{O{} m}{%
    \ifHeightAcquired%
        \includegraphics[height=\figheight]{#2}%
    \else%
        \let\tempbox\relax%
        \newsavebox{\tempbox}%
        \sbox{\tempbox}{\includegraphics[#1]{#2}}%
        \global\edef\figheight{\the\ht\tempbox}%
        \global\HeightAcquiredtrue%
        \usebox{\tempbox}%
    \fi%
}
\crefname{lemma}{Lemma}{Lemmas}
\crefname{theorem}{Theorem}{Theorems}
\crefname{corollary}{Corollary}{Corollaries}
\crefname{observation}{Observation}{Observations}
\crefname{remark}{Remark}{Remarks}
\crefname{definition}{Definition}{Definition}
\begin{document}

\title{A Theoretical Explanation of Activation Sparsity through Flat Minima and Adversarial Robustness}

\author{\name Ze Peng \email pengze@smail.nju.edu.cn \\
        \addr State Key Laboratory for Novel Software Technology\\ Nanjing University\\
        Nanjing, China
        \AND
        \name Lei Qi \email qilei@seu.edu.cn \\
        \addr School of Computer Science and Engineering\\ Southeast University\\
        Nanjing, China
        \AND
        \name Yinghuan Shi\thanks{Corresponding author.} \email syh@nju.edu.cn
        \AND
        \name Yang Gao \email gaoy@nju.edu.cn\\
        \addr State Key Laboratory for Novel Software Technology\\ Nanjing University\\
        Nanjing, China}
\date{6 September 2023}
\jmlronly{\editor{\hphantom{1}}}

\maketitle

\begin{abstract}%
A recent empirical observation \citep{observation} of activation sparsity in MLP blocks offers an opportunity to drastically reduce computation costs for free. 
Although having attributed it to training dynamics, existing theoretical explanations of activation sparsity are restricted to shallow networks, small training steps and special training, despite its emergence in deep models standardly trained for a large number of steps. 
To fill these gaps, we propose the notion of gradient sparsity as one source of activation sparsity and a theoretical explanation based on it that sees sparsity as a necessary step to adversarial robustness w.r.t. hidden features and parameters, which is approximately the flatness of minima for well-learned models.
The theory applies to standardly trained LayerNorm-ed MLPs, and further to Transformers or other architectures trained with weight noises.
Eliminating other sources of flatness except for sparsity, we discover the phenomenon that the ratio between the largest and smallest non-zero singular values of weight matrices is small. 
When discussing the emergence of this spectral concentration, we use random matrix theory (RMT) as a powerful tool to analyze stochastic gradient noises.
Experiments for validation are conducted to verify our gradient-sparsity-based explanation. 
We propose two plug-and-play modules for both training and finetuning for sparsity.
Experiments on ImageNet-1K and C4 demonstrate their $50\%$ sparsity improvements, indicating further potential cost reduction in both training and inference.
\end{abstract}

\begin{keywords}
    sparsity, flat minima, training dynamics, random matrix theory, deep learning
\end{keywords}

\section{Introduction}

Despite the success of overparameterized deep neural networks, how they learn, work and generalize well is not fully understood. 
Although more than half of parameters and computation are devoted to them \citep{knowledge_neurons}, even in Transformers, MLP blocks have remained black box for years, blocking interpretation, manipulation and pruning in them.
Recently, some works \citep{mlp_as_database,knowledge_neurons} have tried to dive into MLP blocks and reveal its relation with learned knowledge by rewriting MLP blocks of Transformers into an attention mechanism where keys and values are provided by the first and second linear layers, and form a key-value memory that stores learned knowledge. 

A more recent work \citep{observation} further discovers activation sparsity in MLP blocks, i.e., only a small portion of neurons are activated during inference, within MLP blocks of pure MLP, ResNet, T5, ($\relu$) ViT and other architectures on various tasks, \emph{without} explicit regularization. This discovery not only brings research attention back to MLP blocks when it comes to understanding Transformers but also leads to potential aggressive but unstructured and dynamic neuron pruning during inference and thus large reduction in inference cost. Specifically, if pruning is ideally implemented to skip non-activated neurons, theoretically in T5 the inference costs happening in MLP blocks can be astonishingly reduced by about $90\%$. There is also an increasing tendency of activation sparsity, measured in percentage of zero activations, when models become larger \citep{observation}, hinting at potentially great cost reduction for large models. 
Activation sparsity in CNNs, although weaker \citep{observation,exploit_sparsity_in_CNN}, has been exploited to provide on-the-fly pruning or compression during inference \citep{exploit_sparsity_in_CNN}. 

However, the emergence of sparsity is not yet fully understood. Several works \citep{observation,sharpness_aware,large_step,from_noises} have been proposed to explain the emergence of sparsity from training dynamics. 
\citet{observation} explain activation sparsity of the last layer during the first step by computing gradients and exploiting properties of initialization methods. 
When sharpness-aware optimization is used, \citet{sharpness_aware} find positive components in gradients that point towards norm reduction of activations. However, only a shallow 2-layer MLP is studied by \citet{sharpness_aware}. \citet{large_step} consider second-order behaviors of SGD and proves sparsity on diagonal 2-layer MLPs, but for general networks it is only conjectured. 
\citet{from_noises} find that noises added to samples improve sparsity. However, the noises are manually imposed and are not included in standard augmentations. 
Although having achieved better understanding on sparsity and training dynamics and hinting at the roles of noises and flatness in activation sparsity, these works are still restricted to shallow networks, small steps, additional regularization or augmentations that cannot be found in ubiquitous training protocols. Nevertheless, they point out the role of flatness or noises in the emergence of sparsity.

Filling the gap between experiments and these theoretical results, we propose a new theoretical explanation that applies to deep networks, large training steps and standard training practices, by further emphasizing flatness and noises.
In particular, we first rewrite the flatness bias of SGD into a tendency to improve implicit adversarial robustness w.r.t. hidden features and parameters. 
Gradient sparsity and effective gradient sparsity are proposed as causes of activation sparsity. 
To support this, we prove a theorem stating that these kinds of sparsity can be one of the sources of implicit adversarial robustness. Since flat minima bias puts constraints on all layers, basing our explanation on this inductive bias allows us to reason about deep layers.
To eliminate other potential sources of implicit adversarial robustness, we exploit LayerNorm layers, refer to an already discovered inductive bias called parameter growth \citep{parameter_growth} and empirically discover a new phenomenon of spectral concentration, i.e., the fraction between the largest and smallest non-zero singular values of the first weight matrices in MLP blocks is not very large.
We prove the emergence of spectral concentration at initialization. We also theoretically discuss its re-emergence and maintenance during later training using random matrix theory (RMT) by extracting from stochastic gradients two large random matrices, indicating that training stochasticity's contribution to sparsity is two-folded. Notably, thanks to RMT our formulation of stochastic gradient noises is very direct, without assuming Gaussian or $\sas$ distributions on them but uniform and independent sampling \emph{within a batch} and bounds on anisotropy and gradient norms that can be estimated empirically.
Following these theoretical insights, we propose two plug-and-play and orthogonal architectural modifications, which brings relatively approximately $50\%$ training sparsity improvements and at least $36\%$ on testing sparsity compared to existing naturally emergent sparsity of $10\%$ or $20\%$ non-zero activations. 
The structure of this work is as follows:
\begin{itemize}
    \item In \cref{sec:preliminary}, we introduce preliminaries and background information. But in \cref{sec:preliminary:sparsity}, we propose gradient sparsity, distinguish it from activation sparsity and argue its importance over activation sparsity;
    \item In \cref{sec:illustration}, we build an intuitive framework of our explanation from flat minima and implicit adversarial robustness. Two almost plug-and-play architectural modifications, $\dbmlp$ and $\jrelu$, are proposed to further improve sparsity and ease formal analyses;
    \item In \cref{sec:gradients}, gradients of MLP blocks are computed, laying the basis for later formal analyses. Effective gradient sparsity is defined in this subsection, and its connection with training updates as well as gradient sparsity is discussed. Specifically, \cref{lemma:eff_and_sparsity} connects effective gradient sparsity measure in $L_2$ norms to activation sparsity directly measured in $L_0$ norms for $\relu$ networks, while the similar connection under $\jrelu$ is approximately discussed;
    \item In \cref{sec:three_elements}, we prove \cref{theorem:main} that relates flat minima and implicit adversarial robustness to effective gradient sparsity and activation sparsity by proving relatively tight chained upperbounds among them, demonstrating that sparsity can be the source of implicit adversarial robustness imposed by flat minima;
    \item In \cref{sec:discussion}, we instantiate \cref{theorem:main} in several specific settings involving pure MLPs and Transformers.
    Among them, \cref{theorem:main_with_hidden_vectors_and_layernorm} proves the tendency toward effective gradient sparsity on pure MLPs with LayerNorms. We argue that effective gradient sparsity is more stable and powerful during the entire training than direct activation sparsity.
    \cref{theorem:main_with_effective_duplication} deals with Transformers and other architectures by assuming perturbation training like dropout or tokenwise synapse noises. \arxivonly{Another under-testing module called $\magic$ is immediate after \cref{theorem:main_with_effective_duplication} and is elaborated on in \cref{appendix:magic}.}
        We discuss the effectiveness of zeroth biases in the rest of this subsection.
        Aside from effective gradient sparsity, implicit adversarial robustness and flatness can potentially be achieved by reducing norms of a matrix or misaligning gradients with that matrix in the term brought by our modification.
        To eliminate the first, the already discovered phenomenon of parameter growth is exploited. The latter is handled in later subsections.
    \item Eliminating the latter, we discover another phenomenon in ViT and T5 that most non-zero eigenvalues of the matrix differ by at most 100 times for most of the time, leaving only two possibilities: adversarial robustness is achieved only by (effective) gradient sparsity, or back-propagated gradients are totally lost. In \cref{sec:spectral_init}, we prove the emergence of this spectral concentration at initialization, exploiting modern initialization techniques. A drastic architectural modification, wide MLPs, is proposed to fill a gap of theories;
    \item In \cref{sec:spectral_training} we discuss spectral concentration's maintenance and re-emergence in latter stochastic training, applying random matrix theory by extracting two large random matrices from the updates to the weight matrices;
    \item In \cref{sec:refine}, $\dbmlp$ is refined following theoretical insights of \cref{sec:theory};
    \item In \cref{sec:v_experiments}, we conduct experiments 1) to show that activation sparsity can be lost but gradient sparsity is stable, and 2) to verify our explanation;
    \item In \cref{sec:p_experiments}, we train modified ViT-Base/16 on ImageNet-1K as well as T5-Base on C4 from scratch to examine the effectiveness of our modifications in the sense of sparsity and further verify our explanation. We also finetune trained weights for sparsity after plugging the two modifications to demonstrate a cheaper way to become sparser;
    \item In \cref{sec:t_experiments}, assumptions made in \cref{sec:three_elements} and \cref{sec:spectral_training} are examined empirically.
\end{itemize}
To summarize our contribution, we propose the notions of gradient sparsity, effective gradient sparsity and implicit adversarial robustness. 
We explain activation sparsity with flat minima and implicit adversarial robustness, and propose two architectural theoretically guided modifications to improve sparsity.
The benefits of LayerNorm layers and the practice of excluding their parameters from weight decay are emphasized in theories and experiments.
To our knowledge, we are the first to utilize random matrix theory to reason about inductive biases of stochastic training. As a result, the modeling of stochastic gradient noises (SGN) is very direct and avoids any debatable SGN modeling like Gaussian or $\sas$ models.
Experiments show that our explanation is more applicable than other potential ones and our modification further improves activation sparsity by a large margin.

\jmlronly{Codes of experiments can be found on GitHub\footnote{\coderepo{}} and raw logs recorded by TensorBoard are available on Huggingface\footnote{\logrepo{}}.}
\section{Related Works}

In this section, we list works on the same topic as ours. \cref{sec:preliminary} contains works on different topics that our explanation depends on, we omit their details for simplicity here.

In our point of view, the research of activation sparsity in MLP modules starts from the discovery of the relation between MLP and knowledge gained during training. \citet{mlp_as_database} first rewrite MLPs in Transformers into an unnormalized attention mechanism where queries are inputs to the MLP block while keys and values are provided by the first and second weight matrices instead of inputs. So MLP blocks are key-value memories. 
\citet{knowledge_neurons} push forward by detecting how each key-value pair is related to each question exploiting activation magnitudes as well as their gradients, and providing a method to surgically manipulate answers for individual questions in Q\&A tasks. These works reorient research attention back to MLPs, which are previously shadowed by self-attention.

Recently, comprehensive experiments conducted by \citet{observation} demonstrate activation sparsity in MLPs is a prevailing phenomenon in various architectures and on various CV and NLP tasks. 
\citet{observation} also eliminate alternative explanations and attribute activation sparsity solely to training dynamics. 
The authors explain the sparsity theoretically with initialization and by calculating gradients, but their explanation is restricted to the last layer and the first step because in later steps the independence between weights and samples required by the explanation is broken. 
They also discover that some activation functions, such as $\tanh$, hinder the sparsity \citep[see][Fig B.3(c)]{observation}, but did not elaborate on it. 
Compared to their explanations, our explanation applies to all layers and large steps, and accounts for the activation functions' critical role in activation sparsity.

Following empirical discoveries by \citet{observation}, \citet{sharpness_aware} show that sharpness-aware (SA) optimization has a stronger bias toward activation sparsity. 
They explain theoretically by calculating gradients and finding that SA optimization imposes in gradients a component toward reducing norms of activations. However, their explanation is still conducted on shallow 2-layer pure MLPs and requires SA optimization, which is not included in standard training practice. Nevertheless, this explanation hints at the role of flatness in the emergence of activation sparsity. Inspired by them, we explain \emph{deep} networks trained by standard SGD or other stochastic trainers by substituting flat minima for SA optimization.

A more recent work by \citet{from_noises} holds a point that sparsity is a resistance to noises. However, noises are manually imposed and not included in standard data augmentations. We substitute gradient noise from SGD or other stochastic optimizers for them.
\citet{large_step} prove sparsity on 2-layer diagonal MLPs and conjecture similar things to happen in more general networks. Both works hint at the relation between noises (Gaussian sample noises and stochastic gradient noises) and activation sparsity, also leading to the flatness bias of stochastic optimization.

\citet{adversarial_of_moe} study the adversarial robustness of Mixture of Experts (MoE) models brought by architecture-imposed sparsity. They inspire us to relate sparsity with adversarial robustness, although we do it reversely. It is the major inspiration for our results.

To sum up, existing discoveries hint at the relation between activation sparsity and noises, flatness and activation functions but they are still restricted to shallow layers, small steps and special training. Inspired by them and filling their gaps, our explanation applies to deep networks and large training steps, and sticks to standard training procedures.

Although not devoting much to explaining the emergence of activation sparsity in CNNs, \citet{exploit_sparsity_in_CNN} boost activation sparsity through Hoyer regularization\citep{hoyer} and a new activation function FATReLU that uses dynamic thresholds between activation and deactivation. They also design algorithms to exploit this sparsity, leading to $\ge 1.75\mathrm{x}$ speedup in CNN's inference. Compared to their sparsity encouragement method that requires well-designed procedures to select thresholds, hyperparameters for our theoretically induced modifications can be easily selected. The discontinuity of FATReLU also bothers training from scratch\citep{exploit_sparsity_in_CNN}, while we recommend applying our modifications from scratch to enjoy better sparsity and additionally smaller \emph{training} costs. Regarding exploitation, we consider it out of the manuscript's scope.
\citet{L1_sparsity} encourages activation sparsity in CNN by explicit $L_1$ regularization. We intend to investigate the emergence of activation sparsity from implicit regularization as demonstrated by \citet{observation}, so we solely rely on implicit regularization boosted by modifications. Nevertheless, our methods are architecturally orthogonal and we believe applying both together can further boost activation sparsity.

There are other works that are not devoted to activation sparsity but are related. \citet{sparse_symbol} formulate, with Shapley value, and prove that there are sparse ``symbols'' as groups of patches that are the only major contributors to the output of any well-trained and masking-robust AIs. They provide a sparsity independent of training dynamics. Their theory focuses on symbols and sparsity in inputs, which is inherently different from ours.

In Primer \citep{primer}, several architectural changes given by architecture searching include a new activation function Squared-ReLU. In this work, we induce a similar squared $\relu$ activation but with the non-zero part shifted left and use it to guide the search for flat minima and gradient/activation sparsity. \cite{primer} demonstrate impressive improvements of Squared-ReLU in both ablation and addition experiments, and our work provides a potential explanation for this improvement.
\section{Preliminary}\label{sec:preliminary}

Before starting, we concisely introduce elements that build our theory. We start from basic symbols and concepts of multiple kinds of sparsity and go through bias toward flat minima, adversarial robustness and random matrix theory that deals with large random matrices.

\subsection{Notation}

We use $x, y$ to indicate samples and labels, respectively, and $\dataset = \set{(x_s, y_s): 1 \le s \le \size{\dataset}, s \in \nats^+}$ for the training data set. We restrict focus on activation sparsity on \emph{training} samples, so we do not introduce symbols for the testing set in formal analyses. 
Lowercase letters such as $f$ are used to indicate unparameterized models and those subscripted by parameter $\theta$, such as $f_\theta$, indicate parameterized ones.
Classification task is assumed, so $f_\theta$ is assumed to output a probability distribution over the label space, and the probability of label $y$ is denoted by $f_\theta(y \mid x)$ or $f(y \mid \theta, x)$, with $\sum_y f(y \mid \theta, x) = 1$.
Let $\loss$ be a certain loss, then $\loss(f_\theta, (x_s, y_s))$ is the loss of $f_\theta$ on sample $(x_s, y_s)$. 
If $l$ is a scalar function of matrix or vector $X$, we use $\derivatives{l}{X} \defeq \begin{bmatrix} \derivatives{l}{X_{i, j}} \end{bmatrix}_{i, j}$ to denote the partial derivatives of $l$ w.r.t. $X$'s entries collected in the same shape, while $\nabla_{\vectorize{X}} l(X)$ is used after flattening. 
$\diag{x}$ is used to transform a vector into a diagonal matrix.
We use subscription to indicate substructures of matrices, i.e., $X_i$ is the $i$-th row of matrix $X$ while $X_{\cdot, j}$ is its $j$-th column. Nevertheless, sometimes columns are frequently referred to so we use the subscripted lowercase matrix name $x_j$ to indicate the $j$-th column $X_{\cdot, j}$ of matrix $X$.
The Hessian of scalar function $l$ w.r.t. to vectorized parameter $\theta$ is $\hessian \defeq \begin{bmatrix} \frac{\partial^2 l}{\partial \theta_i \partial \theta_j} \end{bmatrix}_{i, j}$. Throughout this work the scalar function in the Hessian will be the empirical loss $\ex[(X, Y) \sim \dataset]{\loss(f_\theta, (X, Y))}$. 
We assume models are stacked with similar modules, each of which contains at least one MLP block. The layer index is indicated with superscription $l$. 
We assume the hidden features of models on \emph{single} samples are single (column) vectors/tokens $x^l$, as in pure MLPs, or stacked tokens, i.e., matrix $X^l$, where the $j$-th column $x^l_j \defeq X^l_{\cdot, j}$ is the vector for the $j$-th token in Transformers.
A vanilla $\mlp^l$ block in the $l$-th layer contains two linear layers $\mlp^l_K$ and $\mlp^l_V$, each with a learnable matrix $K^l \in \reals^{n^l \times d^l}$ or $V^l \in \reals^{d^l \times n^l}$ of weights, and a bias $b_K^l \in \reals^{n^l}$ or $b_V^l \in \reals^{d^l}$. $\mlp^l_K$ has a non-linear activation function $\activation$ while $\mlp^l_V$ does not. 
Following the terminology of \citet{knowledge_neurons} and \citet{mlp_as_database}, $\mlp_K^l$ is called a ``key'' layer while $\mlp_V^l$ is called a ``value'' layer, whose weights are called ``keys'' or ``values'', respectively. 
They compute the next hidden feature as the following:
\begin{align}
    \Alpha^l \defeq& \mlp_K^l\left(X^{l-1}\right)
    \defeq \activation\left(K^l X^{l-1} + b_K^l\right),\\
    Z^{l} \defeq&  \mlp_V^l\left(\Alpha^l\right)
    \defeq V^l \Alpha^l + b_V^l,\\
    Z^{l} =& \mlp^l\left(X^{l-1}\right) \defeq \mlp_V^l\left(\mlp_K^l\left(X^{l-1}\right)\right).
\end{align}
Note that here $\Alpha$ is capitalized ``$\alpha$'' that stands for ``activation'' instead of capitalized ``$a$''.
For non-stacked tokens $x^{l-1}$, the above equations simplify to
\begin{align}
    \alpha^l \defeq& \mlp_K^l\left(x^{l-1}\right) 
    \defeq \activation\left(K^l x^{l-1} + b_K^l\right)
    =       \begin{bmatrix} \activation\left(\inner{K^l_i}{x^{l-1}} + \left(b_K^l\right)_i \right)\end{bmatrix}_i \in \reals^{n^l},\\
    z^{l} \defeq&    \mlp_V^l\left(\alpha^{l}\right)
    \defeq \activation\left(V^l \alpha^l + b_V^l\right)
    =       \begin{bmatrix} \activation\left(\inner{V^l_i}{\alpha^l} + \left(b_V^l\right)_i \right)\end{bmatrix}_i \in \reals^{d^l},
\end{align}
where $K^l_i, V^l_i$ are the $i$-th rows of weights $K^l, V^l$. In above equations, $\Alpha^l$ and $\alpha^l$ are called the activation pattern of $l$-th $\mlp$ block. Usually, the shapes of $K^l$s and $V^l$s are respectively identical across different $\mlp$ blocks so we drop superscriptions $l$ in $n^l$ and $d^l$.
In \cref{sec:discussion}, we will investigate the implications of \cref{theorem:main} under a notion of ``pure'' MLP. Our definition of it is models where weight matrices are updated by only one token during backward propagation. Under this definition, the most primitive fully connected network (possibly with residual connections and normalizations) is pure MLP, while CNNs, Transformers and MLP-Mixers are not because hidden features consist of multiple tokens and each MLP block must process all of them.

During proofs, we frequently use the properties of matrix trace due to its connection with (elementwise) norms of vectors and matrices. Unless explicitly pointed out, $\norm{\cdot}_p$ for vectors is $L_p$ norm, and $\norm{\cdot}_q$ for matrices is Schatten $q$-norm, i.e., $L_q$ norm of singular values. In the main text, only real matrices are involved and only $L_2$ matrix norm is used for matrices. So particularly, elementwise $L_2$ norm for matrix, or Frobenius norm, of vector $x$ or matrix $X$ can be computed by trace:
\begin{align}
    \trace{x^\transpose x} = \trace{x x^\transpose} = \norm{x}_2^2,
    \norm{X}_2^2 = \trace{X^\transpose X} = \trace{X X^\transpose} = \norm{X}_F^2,
\end{align}
because $\trace{A^T B} = \sum_{i, j} A_{i, j} B_{i, j}.$
This argument also indicates that elementwise $L_2$ norm coincides with Schatten 2-norm by noting $X^\transpose X$'s eigenvalues are squared singular values of $X$ and are summed by the trace. 
Therefore, in later proofs we use trace to express $L_2$ norms and use a bunch of trace properties, for example its linearity \arxivonly{
\begin{align}
    \trace{a A + b B} = a\trace{A} + b \trace{B}
\end{align}
}and the famous cyclic property with matrix products
\begin{align}
    \trace{A B C} = \trace{C A B}.
\end{align}
When trace is combined with Hadamard product $\hadamard$, i.e., elementwise shape-keeping product, there is
\begin{align}
    \trace{A^\transpose (B \hadamard C)} = \sum_{i, j} A_{i, j} B_{i, j} C_{i, j} = \trace{B^\transpose (A \hadamard C)} = \trace{C^\transpose (A \hadamard B)}.
\end{align}
When $A, B$ are real symmetric positive semi-definite, the trace of their product can be bounded by their minimum eigenvalue and individual traces:
\begin{align}
    \trace{A B} \ge \lambda_{\min}(A) \trace{B}, \label{eq:trace_product_and_eigenvalue}
\end{align}
where $\lambda_{\min}(\cdot)$ indicates the smallest eigenvalue of a matrix.
Finally, Hadamard product can be related to matrix products with diagonal matrices:
\begin{align}
    \diag{x} A \diag{y} = \left(x y^T\right) \hadamard A \label{eq:hadamard_and_diagonal}.
\end{align}
This is because LHS scales rows of $A$ first and then scales its columns, while RHS computes the scaling factors for elements in $A$ first and then scaling them in the Hadamard product.

\subsection{Sparsity, Activation Sparsity and Gradient Sparsity}\label{sec:preliminary:sparsity}

In the most abstract sense, sparsity is the situation where most elements in a set are zero.
Many kinds of sparsity exist in neural networks such as weight sparsity, dead neuron, attention sparsity and activation sparsity \citep{sparsity_handbook}. 
The most static one is weight sparsity, where many elements in weight matrices are zero \citep{stochastic_collapse}. 
More dynamic ones found in hidden features are of more interest because they lead to potential pruning but without damaging model capacity. An example of them is attention sparsity found in attention maps of Transformers that many previous sparsity works \citep{attention_sparsity_1,attention_sparsity_2,attention_sparsity_3} focus on.

The activation sparsity of our focus is the phenomenon where activations in MLPs of trained models contain only a few non-zero elements as in \cref{def:activation_sparsity}
\begin{definition}[Activation Sparsity]\label{def:activation_sparsity}
    Recall the definition of activation pattern
    \begin{align}
        \alpha^l \defeq& \mlp_K^l\left(x^{l-1}\right)
        \defeq \activation\left(K^l x^{l-1} + b_K^{l}\right)
        =   \begin{bmatrix}
            \activation\left(\inner{K^l_i}{x^{l-1}} + \left(b_K^l\right)_i\right)
        \end{bmatrix}_i \in \reals^n.
    \end{align}
    For uniformity, define activation sparsity pattern of $\mlp^l$ on sample $X^0$ at token $x^{l-1}$  to be the activation pattern $\alpha^l$.

    Activation sparsity is the phenomenon that most entries in activation sparsity patterns are zero for most samples and tokens.
    For mathematical convenience, squared $L_2$ norm $\norm{\alpha^l}_2^2$ is used as a proxy for activation sparsity.
\end{definition}
Activation sparsity is observed in pure MLPs, CNNs such as ResNet, MLP blocks in Transformers, and channel mixing blocks in MLP-Mixers \citep{observation}. Note that activation sparsity cannot be explained by dead neurons or extreme weight sparsity, because every neuron has its activation for some sample \citep{observation}. Also be noted that it is not done simply by weight decay \citep{sharpness_aware} but by moving pre-activations towards the negative direction in $\relu$ networks. Activation sparsity potentially allows \emph{dynamic} aggressive neuron pruning during inference with zero accuracy reduction \citep{observation}. 

Similar sparsity in forward propagation can be imposed by architectural design such as Mixture of Experts (MoE) \citep{moe_vit,switch_transformer,moe_DG}, where each block is equipped with multiple MLPs and tokens are dynamically and sparsely routed to only 1 or 2 of them. However, MoE has to deal with discrete routing that unstables the training, and emergent imbalanced routing that makes only one expert live and learn. Emergent activation sparsity has no such concerns and can also emerge within each of the experts, and thus still deserves attention even combined with MoE. 

For common activation functions like $\relu$, being activated coincides for itself and its derivative, i.e., activation is zero if and only if its derivative is zero. For other activation functions like GELU and Leaky $\relu$, derivatives in the ``suppressed'' state are also coincidentally small. These coincidences make us wonder if activation sparsity is purely direct activation and if it is actually gradient sparsity to some extent.
Following this insight, we further propose gradient sparsity as a source of activation sparsity. 
Gradient sparsity is that most elements in the derivatives $\activation'(K^l x^{l-1} + b^l)$ of activations are zero as in \cref{def:gradient_sparsity}.
\begin{definition}[Gradient Sparsity]\label{def:gradient_sparsity}
    Let $\gamma^l$ to be the entrywise derivatives of activation in $\mlp^l$ with activation function $\activation$, i.e., 
    \begin{align}
        \gamma^l 
        \defeq& \activation'\left(K^l x^{l-1} + b_K^l\right)
        =      \begin{bmatrix}
            \activation'\left(\inner{K^l_i}{x^{l-1}} + \left(b_K^l\right)_i\right)
        \end{bmatrix}_i \in \reals^n.
    \end{align}
    If the model uses stacked hidden features of $k$ tokens, then let matrix $\Gamma^l \in \reals^{n \times k}$ be the stacked version.
    Define gradient sparsity pattern of $\mlp^l$ on sample $X^0$ at token $x^{l-1}$ to be $\gamma^l$.

    Gradient sparsity is the phenomenon where most entries in gradient sparsity pattern $\gamma^l$ are zero for most samples and tokens. 
    For mathematical convenience, squared $L_2$ norm $\norm{\gamma^l}_2^2$ is used as a proxy for gradient sparsity.
\end{definition}
\begin{remark}\label{remark:L2_and_L0}
    We model gradient sparsity by $L_2$ norm for mathematical convenience, which has weaker direct relations to sparsity than $L_1$ norm or $L_0$ ``norm''.
    However, when $\relu$ is used, its derivative is a $0$-$1$ indicator for activations and derivatives, so the squared $L_2$ norm of activation derivatives is exactly $L_0$ norm of activations as well as derivatives.
    Our modified activation function $\jrelu$ defined in later \cref{eq:jsrelu} shares the same style of $0$-$1$ jump discontinuity in its derivative at zero, but the derivative increases if the pre-activation further increases. The connection between $L_2$ and $L_0$ norm is weakened by this increase, but when the $L_2$ norm decreases so that it is small enough, the connection will be stronger and one must improve sparsity to achieve a smaller $L_2$ norm after squeezing all pre-activations to near zero.
\end{remark}
For activation functions like $\relu$, gradient sparsity coincides with activation sparsity and gives birth to the latter, and the coincidence is why only activation sparsity is proposed in the previous empirical work by \citet{observation}.
We argue gradient sparsity is more essential and stable than direct activation sparsity. Our theoretical analyses will show that gradient sparsity is a stable cause of activation sparsity through their coincidences, although there is unstable but direct implicit regularization on activation sparsity. Experiments for validation in \cref{sec:v_experiments} show activation can be manipulated to be dense by gradient sparsity and experiments in \cref{sec:t_exp:contribution} show that direct activation sparsity is weak compared to gradient sparsity at least in deep layers. 

Here, we argue the practical importance of gradient sparsity over activation sparsity. Interestingly, activation sparsity itself can be generalized and thus leads to gradient sparsity. To prune most neurons, one actually does not need exact activation sparsity where most activations are zero, but only the fact that most activations have the \emph{same} (possibly non-zero) value that can be \emph{known a priori}. If so, activations can be shifted by the a priori most-likely activation to obtain the exact activation sparsity followed by pruning, and adding the sum of key vectors multiplied by that value back can compensate for the shift. If certain regularities of the activation function (for example, being monotonically increasing like $\relu$) can be assumed, then these most-likely activations must reside in a contiguous interval in the activation function's domain, which leads to gradient sparsity. 
The conversed version of this argument also shows that gradient sparsity leads to pruning during inference.
Therefore, in addition to the fact that both of them are sufficient conditions, gradient sparsity is much closer to a necessary condition for massive pruning than activation sparsity.
Moreover, gradient sparsity also allows aggressive neuron pruning during \emph{training}, which is beneficial especially for academia in the era of large models.
Therefore, more attention should be paid to gradient sparsity.

A more generalized version, effective gradient sparsity, is the sparsity defined on the gradient w.r.t. pre-activations, or equivalently the Hadamard product, or entrywise product, between the gradient sparsity pattern $\gamma^l$ and the gradient w.r.t. to the activation. Exactly how and why it is defined can be found in \cref{def:effective_gradient_sparsity} to avoid confusion. Using this notion of sparsity, better theoretical results can be obtained. Making these theoretical shifts practically meaningful, gradient w.r.t. activation allows further pruning because neurons with near-zero gradients w.r.t. to themselves 1) have little contribution to gradients back propagated to shallower layers and 2) have little influence on the output during forward propagation if activation is also small.
As one shall see in \cref{sec:gradients}, from a theoretical and interpretational view, effective gradient sparsity is also what $\mlp$ blocks try to memorize in their key matrices.

\subsection{Empirically Measuring Sparsity}

\citet{observation} utilize the percentage of nonzeros in activations, or equivalently $L_0$ ``norm'' of $\alpha^l$s, on testing samples as a simple measurement of sparsity in $\relu$ network. We adopt a similar measure but it is also conducted on \emph{training} samples in each batch and we observe its revolution during the entire training. We additionally observe training sparsity in order to see how potentially well gradient sparsity reduces training cost, and we restrict samples to those in the current batch because in practical training, samples outside the batch will not be used and are irrelevant to actual training costs. Percentages are further averaged across layers and integrated across steps to approximate the overall reduction in MLPs' training costs.

\subsection{Flat Minima and Stochastic Gradient Noise}\label{sec:flat_minima}

One of the accounts for good generalization unexpected by traditional statistical learning theory in deep networks is the implicit regularization introduced by architectures, training procedures, etc. 
One of the most considered is the inductive bias toward flat minima, i.e., loss minima given by SGD are very likely to be flat. 
This flatness in the training loss landscape indicates that the loss landscape near the minima will not rise acutely due to distribution shift and explains small loss increase and good generalization in testing \citep{flat_minima}. 

Bias toward flat minima is usually considered due to stochastic gradient noises (SGN) introduced by SGD or other stochastic optimizers, which drives the parameter to escape from sharp minima \citep{escape}. Although SGD is usually considered to have a stronger flatness bias, parameters optimized by other adaptive optimizers such as Adam still escape sharp minima, only in a slower manner \citep{escape}.

In works that study SGN and flat minima like those by \citet{alpha_stable} and \citet{escape}, the updates of SGD at step $t$ are often written as 
\begin{align}
    \theta^{t+1} = \theta^{t} - \eta \nabla_{\theta} \loss(\theta) + \eta U^t,
\end{align}
where $\loss(\theta) \defeq \ex[(X, Y)]{\loss(f_\theta, (X, Y))}$ is the full-batch gradient, and noise term $U^t \defeq \nabla_\theta \loss(\theta) - \frac{1}{\size{B_t}} \sum_{s \in B_t} \nabla_{\theta} \loss(\theta, (x_s, y_s))$ is the difference between full-batch gradient and the gradient provided by the current batch $B_t$. If samples in batches are uniformly sampled, the expected mini-batch gradient $\sum_{s \in B_t} \nabla_{\theta} \loss(\theta, (x_s, y_s))$ is naively $\nabla_\theta \loss(\theta)$ and the noise $U^{t}$ is centered by definition. $U^t$ is previously modeled by Gaussian distribution as a result of the Central Limit Theorem. Recent works \citep{alpha_stable,escape} argue that it should better be modeled with symmetric $\alpha$-stable ($\sas$) distribution based on Generalized Central Limit Theorem where finite variance is not necessary. Under this model, the noise norm is long-tailed and the expected norm can be very large since an $\sas$ distribution has infinite variance if it is not Gaussian. 

In this work we only rely on the empirical and theoretical results that parameters are optimized toward flat minima. Using random matrix theory, we are allowed to model SGN in the most direct way without relying on Gaussian or $\sas$ distribution, but by assuming within-batch independence as well as norm and anisotropy bounds on gradient and feature vectors.

Following theoretical works on information bottlenecks \citep{ib_disentangle,chaudhari_stochastic,weight_information_bottleneck}, we start from the nuclear norm of the Hessian at a minimum to measure how flat the minimum is. When a local minimum is reached the Hessian is real symmetric and positive semi-definite, so the nuclear norm equals to its trace, i.e.
\begin{align}
    \norm{\hessian}_* = \trace{\sqrt{\hessian^* \hessian}} = \sum_i \abs{\lambda_i} = \sum_i \lambda_i= \trace{\hessian} 
    ,
\end{align}
where $\lambda_i$ is the $i$-th largest eigenvalues of $\hessian$. For mathematical convenience, the trace $\trace{\hessian}$ is actually used to measure flatness, as \citet{anti_PGD}. We assume as \citet{chaudhari_stochastic}, \citet{weight_information_bottleneck} and \citet{ib_disentangle} that this trace is suppressed during stochastic training.
For non-minima, there is $\norm{\hessian}_* \ge \trace{\hessian}$, so the explanation will be still meaningful at non-minima if sparsity lowerbounds $\trace{\hessian}$, as we will prove in the following sections.

\subsection{Random Matrix Theory and Marchenko-Pastur Distribution}

The latter half of our theory explaining sparsity's tendency during stochastic training is based on spectral analysis on sample covariance matrix of large random matrices, which are the main focus of random matrix theory (RMT). Particularly about the most classic setting, consider a sequence of random matrices $\set{X^p \in \complexes^{p \times n}}_{p}$ with size increasing to infinity, where all entries in the sequence are centered, standardized and I.I.D. sampled, and $n = n(p) = \Theta(p)$ increases linearly with $p$. The sample covariance of $X^p$ is $S^p \defeq \frac{1}{n} X^p \left(X^p\right)^\transpose$. To measure their spectral properties, define the empirical spectral distribution $F^{S^p}(x) \defeq \frac{1}{p} \sum_{i} \indic{\lambda_i\left(S^p\right) \le x}$ and corresponding density $f^{S^p}$ of eigenvalues for each matrix $S^p$. Note that $F^{S^p}$ is a random variable as $S^p$'s function, but \citet{mp_original} prove that when $p$ goes to infinity, $F^{S^p}$ converges to a non-random distribution later named as Marchenko-Pastur distribution, formally stated in \cref{theorem:singular_value_of_product_of_random_matrices}.

\begin{theorem}[Marchenko–Pastur distribution \citep{mp_distribution}]\label{theorem:singular_value_of_product_of_random_matrices}
    Let $X_{i, j}$, $1 \le i \le p, 1 \le j \le n$, be I.I.D. complex random variables with $\ex{X_{i, j}} = 0$ and $\ex{X_{i, j}^2} = 1$. Let $X^p \defeq [X_{i, j}]_{i, j} \in \reals^{p \times n}$ be the matrix comprised of these random variables. Let $\lambda_k(\cdot)$ be the $k$-th largest eigenvalues of the symmetric matrix 
    \begin{align}
        S^p \defeq \frac{1}{d} X^p \left(X^p\right)^\transpose,
    \end{align}
    and define its empirical spectral distribution (ESD) by
    \begin{align}
        F_p(x) = F^{S^p}(x) = \frac{1}{p} \sum_{k=1}^{p} \indic{\lambda_k\left(S^p\right) \le x} \label{eq:def:mp_density}.
    \end{align}
    and corresponding density $f_p = f^{S^p}$.

    When $p, n\to \infty$ with $p / n \defto c$, $f^p \to f$ in probability, where density $f$ of Marchenko-Pastur distribution is defined by
    \begin{align}
        f(x) = \frac{1}{2 x c \pi} \sqrt{(b - x)(x - a)} \indic{a \le x \le b} + \indic{c \ge 1} \cdot \left(1 - \frac{1}{c}\right) \delta(x) \label{eq:mp_density},
    \end{align} 
    $a = (1 - \sqrt{c})^2, b = (1 + \sqrt{c})^2$, and $\delta$ is the Dirac delta function.
\end{theorem}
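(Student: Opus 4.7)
The plan is to prove convergence by the Stieltjes transform method, which is the cleanest route for establishing convergence in probability of the empirical spectral distribution to a prescribed limit. For each $S^p$ define the Stieltjes transform
\begin{align}
    m_p(z) \defeq \int \frac{1}{x - z}\,\dd F_p(x) = \frac{1}{p}\trace{\left(S^p - z I_p\right)^{-1}}, \qquad z \in \positivecomplex.
\end{align}
The overall strategy is: (i) show that $m_p(z)$ concentrates around its expectation as $p \to \infty$; (ii) show that this expectation converges to a deterministic $m(z)$ characterized by a fixed-point equation; (iii) solve the fixed-point equation in closed form; and (iv) recover the Marchenko-Pastur density via the Stieltjes inversion formula.

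For step (i), I would use a martingale difference argument applied to the filtration generated by successively revealing the columns (or rows) of $X^p$, together with the rank-one perturbation bound $\size{\trace{(A + u u^*)^{-1}} - \trace{A^{-1}}} \le 1/\ipart{z}$ for $A$ with positive imaginary part. This shows $\var{m_p(z)} = O(1/p^2)$, so $m_p(z) - \ex{m_p(z)} \asto 0$ pointwise on $\positivecomplex$. For step (ii), I would apply the Schur complement formula to the diagonal entries of the resolvent $R = (S^p - z I)^{-1}$, writing
\begin{align}
    R_{ii} = \frac{1}{-z - \frac{1}{n} x_i^* \left(S^p_{(i)} - z I\right)^{-1} x_i},
\end{align}
where $x_i$ is the $i$-th row of $X^p$ and $S^p_{(i)}$ is the analogous sample covariance with the $i$-th row removed. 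A standard quadratic-form concentration lemma (Hanson-Wright style, valid because the entries of $x_i$ are I.I.D. with unit variance) then reduces $\frac{1}{n} x_i^* (S^p_{(i)} - zI)^{-1} x_i$ to $\frac{1}{n} \trace{(S^p_{(i)} - zI)^{-1}}$, which is close to $\frac{p}{n} m_p(z) \to c\, m(z)$. This yields the fixed-point equation
\begin{align}
    m(z) = \frac{1}{-z - c\, z\, m(z) + (1 - c)},
\end{align}
after absorbing the outlier in dimensions when $c \ne 1$.

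For step (iii), the fixed-point equation is a quadratic in $m(z)$, and selecting the branch that satisfies $\ipart{m(z)} > 0$ for $z \in \positivecomplex$ and the asymptotic $m(z) \sim -1/z$ for large $\size{z}$ gives the unique solution
\begin{align}
    m(z) = \frac{1 - c - z + \sqrt{(z - 1 - c)^2 - 4c}}{2 c z},
\end{align}
with the branch of the square root chosen accordingly. For step (iv), the Stieltjes inversion formula $f(x) = \frac{1}{\pi}\lim_{\epsilon \to 0^+} \ipart{m(x + i \epsilon)}$ recovers exactly the density in \cref{eq:mp_density} on the bulk $[a, b]$, while the atom at $0$ when $c \ge 1$ appears because $S^p$ is necessarily rank-deficient in that regime, contributing a mass of $1 - 1/c$ to the ESD; this can be read off from the residue of $m(z)$ at $z = 0$. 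Convergence in probability of $F_p$ to the absolutely continuous (plus possibly atomic) limit then follows from pointwise convergence of Stieltjes transforms on $\positivecomplex$ by a standard Portmanteau-type argument.

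The main obstacle is step (ii), specifically the concentration of the quadratic form $\frac{1}{n} x_i^* (S^p_{(i)} - zI)^{-1} x_i$ around $\frac{1}{n}\trace{(S^p_{(i)} - zI)^{-1}}$. The difficulty is that although $x_i$ is independent of $S^p_{(i)}$, one must control the variance of this quadratic form uniformly in the random resolvent, which requires either a Hanson-Wright inequality with moment assumptions beyond unit variance or a truncation argument to reduce to bounded entries. The other subtle point is justifying the exchange of the limit $\ex{m_p(z)} \to m(z)$ with the nonlinear fixed-point relation, which requires uniform boundedness of $\ex{m_p(z)}$ on compact subsets of $\positivecomplex$; this follows from $\size{m_p(z)} \le 1/\ipart{z}$ almost surely.
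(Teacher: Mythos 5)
The paper does not prove this theorem at all: it is imported as a classical result of random matrix theory with a citation to the literature, so there is no internal proof to compare against. Your sketch is a correct outline of the standard Stieltjes-transform route (essentially the Marchenko--Pastur/Bai--Silverstein argument): concentration of $m_p(z)$ via martingale differences plus the rank-one resolvent perturbation bound, the Schur-complement self-consistency leading to $m(z) = \left(1 - c - z - c z m(z)\right)^{-1}$, the explicit branch selection, and Stieltjes inversion recovering the semicircle-type bulk density on $[a,b]$ together with the atom of mass $1 - 1/c$ at zero when $c \ge 1$. I verified that your closed form is the correct root, since $(1 - c - z)^2 - 4cz = (z - 1 - c)^2 - 4c$, whose zeros are exactly $a$ and $b$. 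Two points deserve emphasis. First, the hypothesis here is only a finite second moment, so a Hanson--Wright-type inequality is not directly available; the truncation-and-recentering reduction to bounded entries that you mention in passing is in fact an unavoidable component of the argument, not an optional alternative, and one must also check that truncation perturbs the ESD only negligibly (e.g.\ via a rank or Hoffman--Wielandt bound). Second, your displayed Schur-complement identity for $R_{ii}$ is written a bit loosely (the standard form carries a factor $-z\bigl(1 + \frac{1}{n} x_i^* (\cdot)^{-1} x_i\bigr)$ in the denominator, with the resolvent of the companion matrix), but the fixed-point equation you extract from it is the correct one, so this is a presentational rather than a substantive issue. As a final aside, the theorem statement itself (in the paper, not your proposal) has two typos your proof implicitly corrects: the normalization should be $\frac{1}{n}$ rather than $\frac{1}{d}$, and the moment condition for complex entries should read $\ex{\abs{X_{i,j}}^2} = 1$.
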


We will discover $X^p \left(X^p\right)$-like matrices products in \cref{sec:spectral_init} and \cref{sec:spectral_training}, and apply generalized \cref{theorem:singular_value_of_product_of_random_matrices} to them, for example at initialization and in stochastic additive updates, given that hidden features are of several hundreds dimensions and there are millions of steps involved in training. We consider large random matrices as a powerful tool in analyzing behaviors of deep models given their high dimension and randomness.

\subsection{Adversarial Robustness}

It is well known that deep models are prone to adversarial attacks, i.e., small changes in inputs may result in large changes in output or classification.
Most works on adversarial attack and robustness consider perturbations at the beginning of the network. Given the layered structure of deep networks, we naturally consider perturbations to hidden intermediate features, simulated by perturbations on parameters in shallower layers. 
We refer to them as implicit adversarial attacks and implicit adversarial robustness and see (effective) gradient sparsity as a necessary step in resistance of them. 
Following works that connect flat minima or sparsity to adversarial robustness \citep{adversarial_and_flat_minima,adversarial_of_moe}, we use squared $L_2$ norm $\norm{\derivatives{\loss(f_\theta, (x_s, y_s))}{x^l}}_2^2$ of gradients w.r.t. hidden features on \emph{individual samples} to measure adversarial robustness for mathematical convenience.
\section{Gradiential Explanation of Activation Sparsity}\label{sec:theory}

\IfFileExists{theorems.aux}{\input{theorems.aux}}{}

Before starting the formal analysis of emergence of activation sparsity, we provide intuitive framework of our explanation. Slight architectural changes in MLPs, which ease theoretical analyses, are also already available under this intuitive illustration. Another radical change is proposed after more detailed analyses.

\subsection{Illustration}\label{sec:illustration}

Deep networks are built by stacking. Between layers, hidden intermediate features are generated and passed to deeper parts of the network. 
The first important intuition is that the output of shallower layers can be seen as the input of deeper layers, which allows us to see the \emph{parameters} of shallower layers as a part of \emph{inputs} to deeper layers and apply notions previously defined at the real inputs $x^0$, for example, the notion of (implicit) adversarial robustness.

For deeper layers to have adversarial robustness, a way is to reduce the norm of activations' derivatives, because the gradients w.r.t. $x^{l-1}$ is calculated by multiplying the gradients by $x^l$ with derivatives of the activations and weights. 
So by suppressing the gradient norm of activations to the extreme, i.e., promoting gradient sparsity, adversarial robustness increases for sure.
Note that good sparsity and implicit adversarial robustness do not necessarily hinder approximability because they only imply local flatness and naturally distinct samples differ so much that there are multiple flat regions between them in the output-input landscape, i.e., approximability may be achieved by drastically changing which neurons are activated \citep{parameter_growth}.

Without explicit adversarial training, adversarial samples may come from perturbations in shallower layers.
The inputs to deeper layers are not static and even not stable, because gradient noises in stochastic training are driving the parameters to run around randomly, which gives birth to adversarial robustness.
Note that the idea of wrapping perturbations in weights into hidden features has been used by \citet{weight_information_bottleneck}, where this technique is implicitly applied to define representation information bottleneck, but they did not explicitly consider the perturbations to shallow parameters as those to hidden features.
This explains how deep neural networks gain implicit adversarial robustness and then gradient sparsity from stochastic optimization in standard training.

From a more static point of view, i.e., considering the parameter after training, we model the perturbations in a virtual way as illustrated in \cref{fig:illustration}.
\begin{figure}
    \centering
    \includegraphics[width=8cm]{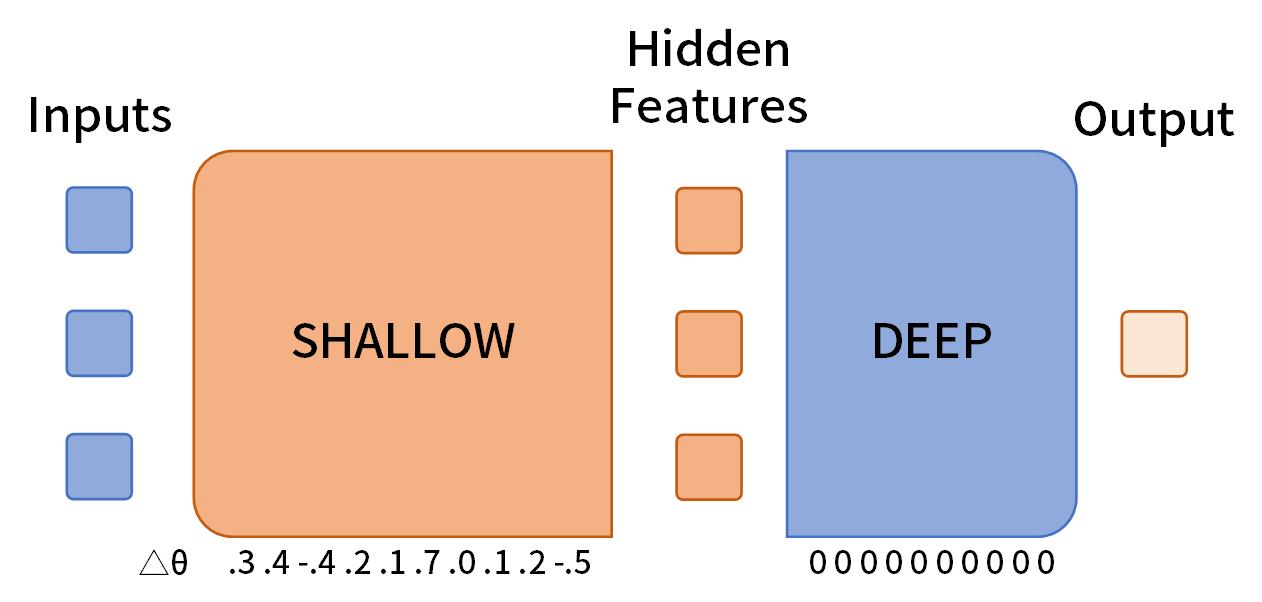}
    \caption{Illustration of virtual perturbations to shallow layers. Given a trained parameter at a flat minimum, if a small change around $0$ in shallow parameters is added, the loss will not drastically increase due to the flatness. These virtual perturbations in shallow parts, their outputs and the final output are illustrated in red, while frozen inputs and deeper parts are in blue. The minor increase in loss, colored by pale red, indicates that the deep layers have to learn adversarial robustness w.r.t. hidden features.}\label{fig:illustration}
\end{figure}
If the trained parameter is a flat minimum, then slight changes in its parameters do not drastically increase loss.
Changes restricted to the shallow layers inherit this property. So parameter perturbations that are reflected in hidden features cause no drastic increase in losses.  As explained before, the frozen deep layers must have learned implicit adversarial robustness to ensure flatness w.r.t. virtually perturbed inputs for deep layers to mitigate these virtual perturbations to hidden features.
This intuition leads to the notion of flat minima and we will follow it in \cref{sec:three_elements} by proving that both implicit adversarial robustness and some sparsity-related inner products are upperbounded by the measure of flatness, i.e., $\trace{\hessian}$.

Following this line, one question to answer is how diverse the perturbations are to hidden features. 
Perturbations to parameters have to go through non-linear operations, which hinders the expressibility of parameter perturbations especially in non-activated neurons. 
To make things worse, \emph{parameter sharing} creates hidden features whose dimension is much larger than the parameter that directly produces it, in contrast with explicit adversarial training where perturbations are directly added to $x^0$ in a full-dimensional way. As a result, the perturbation to parameters creates correlated perturbations in hidden features, which cannot cover explicit adversarial attacks that may perturb in an uncorrelated manner.
We consider the problem from parameter sharing to be extremely sever in CNN because normally a small ($\sim 4 \times 4$) convolution kernel should produce a large feature map. This problem in Transformers is moderate because there are seemingly full-dimensional parameters (e.g. weights and bias in value matrices and MLPs), but tens of tokens are subject to the same parameter so it is low in this token-stacking dimension. Pure MLPs have the smallest problem because its weights and biases are full-dimensional and there is only one token as features, but the perturbations are still subjected to activation functions.

To alleviate these problems through adding \emph{non-shared} parameters \emph{after} activation functions, we propose Doubly Biased MLP ($\dbmlp$) by introducing an extra bias $D^l$ or $d^l$, called zeroth bias (ZB) due to its position, before other operations (so it is after activation functions of previous layers). ZB is a \emph{matrix} of the same shape with hidden features if they are matrices of stacked tokens, i.e.
\begin{align}
    \dbmlp^l(X^{l-1})
    \defeq& \mlp^l(X^{l-1} + D^{l}),
\end{align}
where $D^l$ is the zeroth bias of current $\dbmlp$. For hidden features of single tokens, it writes
\begin{align}
    \dbmlp^l(x^{l-1})
    \defeq& \mlp^l(x^{l-1} + d^l) = \mlp_V^l\left(\activation(K^l (x^{l-1} + d^l) + b^l)\right).
\end{align}
For single-token hidden features, the zeroth bias removes the hindering effect of activation functions on the previous layer's bias $b^{l-1}$, and for matrix hidden features, $D^l$ of the same shape allows full-dimensional perturbations. Note that zeroth biases are added to hidden features, meaning that they share gradients. This property will ease the analyses bridging flatness and implicit adversarial robustness.

Another implication of the above illustration is that activation function matters in gradient and/or activation sparsity, because its derivatives are multiplied when computing gradients w.r.t. hidden features to compute gradient norm that proxies implicit adversarial robustness. 
In \cref{sec:v_experiments}, we design strange activation to differentiate gradient and activation sparsity and show that activation sparsity is lost to reach gradient sparsity defined by the activation function.
On the other hand, one problem of common $\relu$ and $\gelu$ is that their derivatives are almost piecewisely constant for most regions, having difficulties in guiding the second-order search for flat minima. By squaring $\relu$, which coincides with \citet{primer}, the non-constant derivatives of activations provide guidance to sparser neighbors, and $\Omega(x)$-large derivatives for large activations drive the parameters into sparse ones. 
However, in tentative experiments, we found that the derivatives are too small around zero and provide little drive toward sparsity, so we shift the non-zero parts left and give
\begin{align}
    \jrelu(x) \defeq \begin{cases}
        0   &   x < 0,\\
        \frac{1}{2} \left( (x + 1)^2 - 1 \right) & x \ge 0,
    \end{cases} \label{eq:jsrelu}
\end{align}
where ``J'' means jumping across the ignored interval $[0, 1]$ and $\frac{1}{2}$ is used to calibrate derivatives at $x=0$ to $1$, approximating $\relu$'s and $\gelu$'s behaviors near $0$. The visualization of $\relu$, $\textrm{Squared-ReLU}$ and $\jrelu$ can be found in \cref{figure:activations}.

\addvalue{relu}{$\relu$}
\addvalue{relu2}{$\textrm{Squared-ReLU}$}
\addvalue{jsrelu}{$\jrelu$}
\begin{figure}
    \centering
    \foreach \act in {relu,relu2,jsrelu}{
        \begin{subfigure}{0.3\textwidth}
            \includegraphics[width=\textwidth]{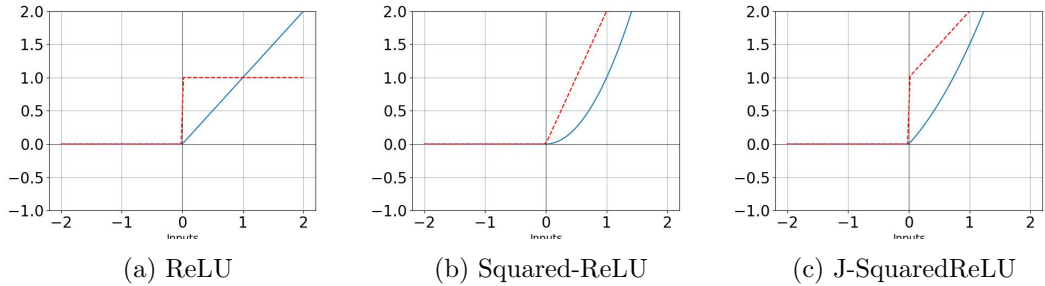}
            \caption{\usevalue{\act}}
        \end{subfigure}
    }
    \caption{Visualization of $\relu$, $\textrm{Squared-ReLU}$ and $\jrelu$. Activation and its derivative are indicated by blue and dashed red lines, respectively.}\label{figure:activations}
\end{figure}

One of the benefits of our architectural modifications is that they are plug-and-play, not only for training from scratch, but also for directly loading weights of vanilla architectures with changes plugged in and finetuning for sparsity, given that $D^l$ and $d^l$ can be initialized to $0$ for any weights, and $\jrelu$ approximates $\relu$ or $\gelu$ to the first order at $0$. Due to their simplicity, they are also likely orthogonal to other potential methods.

Now we have established the framework of our explanation. A formal analysis following this framework, based on $\dbmlp$, is conducted in the following three subsections. We will first calculate gradients in MLP blocks. Starting the analyses, we relate flatness to implicit adversarial robustness, and then implicit adversarial robustness to gradient sparsity in \cref{sec:three_elements}. After that, we exclude alternative sources of implicit robustness and flatness. To give the most general form of results, we assume $\dbmlp$ in later subsections, if $\mlp$ is not explicitly pointed out. To obtain results for $\mlp$, simply remove the ZB-related term (usually the first term) in inequalities.

\subsection{Gradients w.r.t. Weight Matrices and Zeroth Biases}\label{sec:gradients}

Since we are interested in the flat minima, gradients will be heavily involved. Therefore, we will compute gradients and updates to weight matrices in this subsection. We will also define the notion of effective gradient sparsity and argue its practical and theoretical importance, justifying our theories that are built on effective gradient sparsity.

Let $\mlp_*^l$ be any sublayer of the block $\mlp^l$, whose weight matrix is $W^l$. Let $g^{l}_{*, i}$ be the gradient w.r.t. the output of $\mlp_*^l$ on sample $(x_s, y_s)$ at a token $u_i^{l}$, where $u_i^{l}$ abstracts $x^{l-1}$, $x^{l-1} + d^l$ or $\alpha^l$. Let $G^{l}_{*}$ be the stacked version of $g^{l}_{*, i}$ and $U^l$ be that of $u_i^l$.

One can compute the gradient w.r.t. $W^l$ on a token input $u^{l}_i$ by
\begin{align}
    \left(g^{l}_{*, i} \hadamard \sigma'_i \right) \left(u^{l}_i\right)^{\transpose},
\end{align}
where $\sigma'_i = \gamma^l_i$ for weight matrices, while $\sigma'$ is $[1]_{i \in \set{1, \dots, d}}$ for value matrices since there are no activation functions in value layers.
Summing updates from all tokens to obtain gradients of sample $(x_s, y_s)$ gives
\begin{align}
    \derivatives{\loss(\theta, (x_s, y_s))}{W^l}
    \defeq&  \sum_{i} \left(g^{l}_{*, i} \hadamard \sigma'_i \right) \left(u^{l}_{i}\right)^\transpose
    =      \left(G^{l}_* \hadamard \Sigma'\right) \left(U^l\right)^\transpose.
\end{align}

The gradients w.r.t. zeroth biases, if they exist, are also computed.
\begin{align}
    \derivatives{\loss(\theta, (x_s, y_s))}{d^l_i}
    =& \left(J_{\loss}\left(\alpha^l\right) J_{\alpha^l}\left(K^l \left(x^l_i + d^l_i\right) + b_K^l\right) J_{K^l \left(x^l_i + d^l_i\right) + b_K^l}\left(d^l_i\right)\right)^\transpose\\
    =& \left(\left(g^{l}_{K, i}\right)^\transpose \diag{\gamma^l_i} K^l\right)^\transpose
    =   \left(K^l\right)^\transpose \left(g^l_{K, i} \hadamard \gamma^l_i\right).
\end{align}

Instantiating these results on key and value matrices obtains \cref{lemma:gradients}.
\begin{lemma}[Gradients w.r.t. weight matrices]\label{lemma:gradients}
    The gradients w.r.t. weight matrices and zeroth biases in $\dbmlp^l$ of sample $(x_s, y_s)$ are
    \begin{align}
        \derivatives{\loss(\theta, (x_s, y_s))}{K^l}
        =& \left(G^{l}_K \hadamard \Gamma^l\right) \left(X^{l-1} + D^l\right)^\transpose,
        \derivatives{\loss(\theta, (x_s, y_s))}{V^l}
        = G^{l}_V \left(\Alpha^{l}\right)^\transpose,\\
        \derivatives{\loss(\theta, (x_s, y_s))}{D^l}
        =&  \left(K^l\right)^\transpose \left(G^l_K \hadamard \Gamma^l\right).
    \end{align}
    Particularly if hidden features are single tokens, there are
    \begin{align}
        \derivatives{\loss(\theta, (x_s, y_s))}{K^l}
        =& \left(g^{l}_K \hadamard \gamma^l\right) \left(x^{l-1} + d^l\right)^\transpose,
        \derivatives{\loss(\theta, (x_s, y_s))}{V^l}
        = g^{l}_V \left(\alpha^{l}\right)^\transpose,\\
        \derivatives{\loss(\theta, (x_s, y_s))}{d^l}
        =&  \left(K^l\right)^\transpose \left(g_{K}^l \hadamard \gamma^l\right).
    \end{align}
\end{lemma}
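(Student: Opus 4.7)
The plan is to derive each identity by a direct application of the chain rule, exploiting that the activation $\activation$ acts entrywise so its Jacobian at any preactivation vector is a diagonal matrix whose entries are exactly the components of $\gamma^l = \activation'(\cdot)$ from \cref{def:gradient_sparsity}. I would first handle the single-token case (the matrix identities then follow by summing per-token contributions), and treat the $V^l$ derivative first since it is the simplest: because $\mlp_V^l$ is linear in $V^l$ with no activation in between, the standard matrix calculus outer-product identity immediately gives $\partial \loss/\partial V^l = g^l_V (\alpha^l)^\transpose$.

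For the key matrix $K^l$, I would expand $(\alpha^l)_i = \activation(\inner{K^l_i}{x^{l-1}+d^l} + (b_K^l)_i)$ entrywise and apply the chain rule. Differentiating in $K^l_{ij}$ produces two factors: the factor $\gamma^l_i = \activation'(\cdot)_i$ from the diagonal Jacobian of $\activation$, and the factor $(x^{l-1}+d^l)_j$ from the linear map. By definition $g^l_{K,i}$ is the upstream gradient flowing into entry $(\alpha^l)_i$, so
$\partial \loss/\partial K^l_{ij} = g^l_{K,i}\,\gamma^l_i\,(x^{l-1}+d^l)_j$,
and collecting over $i,j$ recognizes this as the outer product $(g^l_K \hadamard \gamma^l)(x^{l-1}+d^l)^\transpose$. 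The $D^l$ derivative chains one additional Jacobian: writing the composition through $\activation$ and through $K^l$ gives $\partial \loss/\partial d^l = (K^l)^\transpose \diag{\gamma^l}\, g^l_K$, and pushing the diagonal inside as a Hadamard product via a standard identity yields $(K^l)^\transpose (g^l_K \hadamard \gamma^l)$.

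To lift to the stacked-token case, I would observe that $K^l$ and $V^l$ are shared across all token columns of $X^{l-1}$, so their gradients are sums of per-token contributions; these sums coincide with the matrix products $(G^l_K \hadamard \Gamma^l)(X^{l-1}+D^l)^\transpose$ and $G^l_V (\Alpha^l)^\transpose$ by the definition of matrix multiplication as a sum of outer products of columns against rows. The zeroth bias $D^l$ has the same shape as $X^{l-1}$ and is added elementwise, so \emph{no} sharing occurs along the token dimension, and the per-column formula from the single-token derivation assembles without a sum into $(K^l)^\transpose (G^l_K \hadamard \Gamma^l)$.

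The main obstacle is strictly notational: keeping the Hadamard product aligned with the entries of the diagonal Jacobian and tracking which parameters are shared across which tokens. I would lean on the identity $\diag{x} A \diag{y} = (xy^\transpose)\hadamard A$ recalled in the preliminaries to rewrite diagonal-sandwiched Jacobians as Hadamard products, and use the trace duality $\trace{(\partial\loss/\partial W)^\transpose \delta W}$ as a sanity check that the matrix-form identities reproduce the scalar chain rule entrywise. There are no estimates, limits, or probabilistic arguments required — only careful index bookkeeping.
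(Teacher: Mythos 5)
Your proposal is correct and follows essentially the same route as the paper: a per-token chain-rule computation yielding the outer product $(g^l_{K}\hadamard\gamma^l)(x^{l-1}+d^l)^\transpose$ (with the activation's diagonal Jacobian supplying $\gamma^l$), the Jacobian chain $(K^l)^\transpose\diag{\gamma^l}g^l_K$ for the zeroth bias, and summation (resp.\ columnwise assembly for the unshared $D^l$) to obtain the stacked-token forms. No gaps.
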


\cref{lemma:gradients} introduces a Hadamard product between gradients from deeper layers and the derivatives of the activation in weight layers. We define it as the effective gradient pattern.
\begin{definition}[Effective gradient sparsity]\label{def:effective_gradient_sparsity}
    Define effective gradient patterns of $\mlp^l$ on sample $(x_s, y_s)$ at token $x^{l-1}$ to be 
    \begin{align}
        \eta^l \defeq \diag{g^l_K} \gamma^l = g^l_K \hadamard \gamma^l. 
    \end{align}
    Let $\Eta^l = G^l_K \hadamard \Gamma^l \in \reals^{n \times k}$ (capitalized ``$\eta$'' instead of capitalized ``$h$'') be its stacked version when there are $k$ tokens.

    Effective gradient sparsity is that most elements in $\eta^l$ are near zero for most samples and tokens.
    For mathematical convenience, define $\norm{\eta^l}_2^2 = \norm{\diag{g^l_K} \gamma^l}_2^2 = \norm{g^l_K \hadamard \gamma^l}_2^2$ to be the effective gradient sparsity measured in squared $L_2$ norm.
\end{definition}

This notion first simplifies \cref{lemma:gradients}.
\begin{lemma}[Gradients w.r.t. weight matrices, restated with $\eta$ and $\Eta$]\label{lemma:gradients_with_eff}
    The gradients w.r.t. weight matrices and zeroth biases in $\dbmlp^l$ of sample $(x_s, y_s)$ are
    \begin{align}
        \derivatives{\loss(\theta, (x_s, y_s))}{K^l}
        =& \Eta^l \left(X^{l-1} + D^l\right)^\transpose,
        \derivatives{\loss(\theta, (x_s, y_s))}{V^l}
        = G^{l}_V \left(\Alpha^{l}\right)^\transpose,\\
        \derivatives{\loss(\theta, (x_s, y_s))}{D^l}
        =&  \left(K^l\right)^\transpose \Eta^l.
    \end{align}
    Particularly if hidden features are single tokens, there are
    \begin{align}
        \derivatives{\loss(\theta, (x_s, y_s))}{K^l}
        =& \eta^l \left(x^{l-1} + d^l\right)^\transpose,
        \derivatives{\loss(\theta, (x_s, y_s))}{V^l}
        = g^{l}_V \left(\alpha^{l}\right)^\transpose,
        \derivatives{\loss(\theta, (x_s, y_s))}{d^l}
        =  \left(K^l\right)^\transpose \eta^l.
    \end{align}
\end{lemma}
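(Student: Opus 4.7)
The plan is essentially to observe that this lemma is a pure restatement of \cref{lemma:gradients} after substituting the newly introduced notation from \cref{def:effective_gradient_sparsity}. So my proof would proceed by direct substitution rather than by any fresh computation.

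First I would invoke \cref{lemma:gradients} to recall that
\[
    \derivatives{\loss(\theta, (x_s, y_s))}{K^l} = \left(G^{l}_K \hadamard \Gamma^l\right) \left(X^{l-1} + D^l\right)^\transpose,
    \quad
    \derivatives{\loss(\theta, (x_s, y_s))}{D^l} = \left(K^l\right)^\transpose \left(G^l_K \hadamard \Gamma^l\right),
\]
and the analogous single-token forms. Then I would cite \cref{def:effective_gradient_sparsity}, which defines $\Eta^l \defeq G^l_K \hadamard \Gamma^l$ (and $\eta^l \defeq g^l_K \hadamard \gamma^l$ in the single-token case), and rewrite every occurrence of the Hadamard product on the right-hand sides using $\Eta^l$ or $\eta^l$. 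The gradient w.r.t. $V^l$ is unchanged because the value sublayer has no activation, so its corresponding ``$\sigma'$'' is identically one and no Hadamard product arises there. This explains why only the $K^l$ and $D^l$ identities acquire the new symbol while the $V^l$ identity is copied verbatim.

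There is no real obstacle here: the lemma is stated explicitly as a restatement, and its purpose is notational cleanup so that later results in \cref{sec:three_elements} and \cref{sec:discussion} can be written cleanly in terms of $\eta^l$ and $\Eta^l$. The only thing worth double-checking in the write-up is that the shapes match, i.e. that $\Eta^l \in \reals^{n \times k}$ multiplied on the right by $(X^{l-1} + D^l)^\transpose \in \reals^{k \times d}$ yields the correct $n \times d$ shape of $K^l$, and similarly that $(K^l)^\transpose \Eta^l \in \reals^{d \times k}$ matches the shape of $D^l$. Once these shape checks are stated in a single sentence, the proof is complete.
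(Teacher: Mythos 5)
Your proposal is correct and matches the paper's treatment: the paper gives no separate proof, presenting \cref{lemma:gradients_with_eff} as an immediate restatement of \cref{lemma:gradients} obtained by substituting $\Eta^l \defeq G^l_K \hadamard \Gamma^l$ (and $\eta^l \defeq g^l_K \hadamard \gamma^l$) from \cref{def:effective_gradient_sparsity}. Your added remarks on why the $V^l$ identity is untouched and on the shape consistency are fine but not required.
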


This sparsity inherits sparsity in $\gamma$s but also allows more ``sparsity'' due to $g^l_K$s. Sparsity in $g^l_K$ is also meaningful in the sense that if the $i$-th entry $g^l_i$ of $g^l_K$ is small in magnitude, then 1) there is little contribution of $i$-th neuron in gradients to shallower layers during the backward propagation and 2) $\alpha^l_i$ does not influence the output much in forward propagation if the activation is also near-zero. Therefore, the $i$-th neuron can also be pruned during backward propagation and possibly during inference with minor cost in accuracy, and thus this notion is of even more practical value, although $g^l_K$ cannot be known before backward propagation. 
The notion of effective gradient sparsity in $\eta^l$ considers the two kinds of sparsity in a combined manner.
This incorporation of gradients w.r.t. activations also reminds us of the improved knowledge attribution method proposed by \citet{knowledge_neurons} where not only activation magnitude but also the gradients of model output w.r.t. the activation are exploited. 
Last but not least, the effective sparsity hides the complexity of deeper layers and attributes its emergence solely to $K^l$, which is somehow shallow despite there being dozens of deeper modules and allows easier theoretical manipulations.

More importantly, effective gradient sparsity patterns are what key layers try to memorize in columns, given in \cref{obs:memorizing_eff}. 
\begin{observation}[$\eta$s are memorized in key matrices columns]\label{obs:memorizing_eff}
    Consider the update of one sample to the key matrix given by \cref{lemma:gradients_with_eff}
    \begin{align}
            &\derivatives{\loss(\theta, (x_s, y_s))}{K^l}
            = \Eta^l \left(X^{l-1} + D^l\right)^\transpose
            = \sum_{i} \eta^l_i \left(x^{l-1}_i + d^l_i\right)^\transpose\\
            =&  \begin{bmatrix}
                \sum_i \left(X^{l-1}_{i, 1} + D^{l}_{i, 1}\right)\eta^l_i & \cdots & \sum_{i} \left(X^{l-1}_{i, j} + D^{l}_{i, j}\right) \eta^l_i & \cdots & \sum_{i} \left(X^{l-1}_{i, d} + D^{l}_{i, d}\right) \eta^l_i
            \end{bmatrix}.
    \end{align}
    In the update of each column, a mixture of effective gradient sparsity patterns is borne into the key matrix with different weights given by the input.
\end{observation}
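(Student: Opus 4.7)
The plan is to prove \cref{obs:memorizing_eff} by unpacking the matrix product provided by \cref{lemma:gradients_with_eff} into a sum of rank-one outer products indexed by tokens, and then reading off each column. First I would recall the elementary identity that, for any $A \in \reals^{n \times k}$ and $B \in \reals^{d \times k}$, the product $A B^\transpose$ equals $\sum_{i=1}^{k} A_{\cdot, i} B_{\cdot, i}^\transpose$, i.e.\ a sum of $k$ rank-one outer products taken along the shared axis. Applied with $A = \Eta^l$ and $B = X^{l-1} + D^l$, this yields immediately $\derivatives{\loss(\theta, (x_s, y_s))}{K^l} = \sum_i \eta^l_i (x^{l-1}_i + d^l_i)^\transpose$, where the index $i$ ranges over tokens and each summand captures a single token's contribution to the update.

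Second, to obtain the columnwise description displayed in the statement, I would simply extract the $j$-th column of each outer product: $\eta^l_i (x^{l-1}_i + d^l_i)^\transpose$ has $j$-th column equal to the scalar $(x^{l-1}_i + d^l_i)_j$ times the vector $\eta^l_i$. Summing over $i$ writes every column of the gradient as a linear combination of the effective gradient patterns $\eta^l_i$, with coefficients supplied by the $j$-th coordinate of the shifted token $x^{l-1}_i + d^l_i$. Stacking the resulting columns recovers the explicit block-row display in the statement, and the same manipulation specializes to the single-token expressions of \cref{lemma:gradients_with_eff} when the token dimension is trivial.

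The calculation is pure linear algebra and carries no analytic obstacle; the only point requiring care is to keep the axes of $\Eta^l$ and $X^{l-1} + D^l$ aligned so that the outer-product decomposition is taken along the token axis rather than the feature axis, which matters because the observation is about the $K^l$-columns (feature axis) being mixtures indexed over tokens. The substantive content of \cref{obs:memorizing_eff} is interpretive rather than technical: each stochastic update writes into the columns of $K^l$ a token-weighted mixture of the current batch's effective gradient patterns, so that any structure, in particular sparsity, borne in the $\eta^l_i$'s is copied into $K^l$. I would therefore spend the closing lines of the proof underscoring this memorization picture, which aligns with the key-value MLP view of \citet{mlp_as_database} and \citet{knowledge_neurons} and sets up the later spectral analyses of $K^l$ in \cref{sec:spectral_init} and \cref{sec:spectral_training}.
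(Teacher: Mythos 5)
Your proposal is correct and matches the paper's own (inline) derivation exactly: the paper likewise expands $\Eta^l (X^{l-1}+D^l)^\transpose$ as a sum of rank-one outer products over the token index and then reads off each column as a token-coefficient-weighted mixture of the $\eta^l_i$. Your explicit care about taking the decomposition along the token axis is sound — indeed the $j$-th column coefficient is the $j$-th coordinate of $x^{l-1}_i + d^l_i$, i.e.\ $X^{l-1}_{j,i}+D^l_{j,i}$, which makes your write-up slightly more careful than the paper's displayed index order.
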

Taking a transposed view, key layers also memorize $x^{l-1}_i$s, under the control of $\eta^l_i$s.
\begin{observation}[$\eta$s control row memorization in key matrices]\label{obs:memorizing_inputs}
    Consider the update of one sample to the key matrix given by \cref{lemma:gradients_with_eff}
    \begin{align}
            \derivatives{\loss(\theta, (x_s, y_s))}{K^l}
            =& \Eta^l \left(X^{l-1} + D^l\right)^\transpose
            = \sum_{i} \eta^l_i \left(x^{l-1}_i + d^l_i\right)^\transpose
            =  \begin{bmatrix}
                \sum_i \Eta^l_{j, i} \left(x^{l-1}_i + d^l_i\right)^\transpose
            \end{bmatrix}_j.
    \end{align}
    In the update of each row, a mixture of inputs is borne into the key matrix, weighted by entries of effective gradient sparsity pattern.
\end{observation}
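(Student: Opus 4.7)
The plan is to derive the row-wise decomposition by reinterpreting the matrix product already written down in \cref{lemma:gradients_with_eff}, rather than by any fresh computation. First, I would recall from \cref{lemma:gradients_with_eff} that $\derivatives{\loss(\theta, (x_s, y_s))}{K^l} = \Eta^l (X^{l-1} + D^l)^\transpose$, where $\Eta^l \in \reals^{n \times k}$ with $k$ the number of tokens and $n$ the key dimension, and $(X^{l-1} + D^l)^\transpose \in \reals^{k \times d}$. The product is therefore an $n \times d$ matrix, and the only content of the observation is to read off its $j$-th row.

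Second, I would unfold the product from the row perspective: by the definition of matrix multiplication, the $j$-th row of $\Eta^l (X^{l-1} + D^l)^\transpose$ is the linear combination of the rows of $(X^{l-1} + D^l)^\transpose$ weighted by entries of the $j$-th row of $\Eta^l$. Since the $i$-th row of $(X^{l-1} + D^l)^\transpose$ is the transpose of its $i$-th column, namely $(x^{l-1}_i + d^l_i)^\transpose$, one obtains directly
\begin{align}
    \left[\derivatives{\loss(\theta, (x_s, y_s))}{K^l}\right]_j
    = \sum_i \Eta^l_{j, i} \left(x^{l-1}_i + d^l_i\right)^\transpose,
\end{align}
which is exactly the displayed formula in the statement. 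Stacking these rows for $j = 1, \dots, n$ gives the claimed bracketed expression.

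The conceptual step, rather than a technical obstacle, is to emphasize that the same outer-product sum $\sum_i \eta^l_i (x^{l-1}_i + d^l_i)^\transpose$ used in \cref{obs:memorizing_eff} admits two dual readings of a rank-one update: the columnwise one (already exploited to show that columns of $K^l$ memorize mixtures of $\eta^l_i$'s weighted by input coordinates) and the rowwise one pursued here (each row of $K^l$ memorizes a mixture of whole input tokens $x^{l-1}_i + d^l_i$, with coefficients given by the effective gradient sparsity entries $\Eta^l_{j, i}$). I would therefore simply point out this transposed viewpoint after the one-line derivation above; no real estimates or inequalities are needed, and there is no genuine difficulty — the observation is a definitional rewriting whose purpose is interpretative, motivating the later role of effective gradient sparsity in controlling which input tokens get written into which rows of the key matrix during training.
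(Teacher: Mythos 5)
Your proposal is correct and follows essentially the same route as the paper: the observation is just the row-wise reading of the outer-product sum $\sum_i \eta^l_i \left(x^{l-1}_i + d^l_i\right)^\transpose$ already established in \cref{lemma:gradients_with_eff} and \cref{obs:memorizing_eff}, and the paper likewise treats it as a definitional rewriting (the chain of equalities in the statement is the entire argument). Your identification of the $j$-th row as $\sum_i \Eta^l_{j,i} \left(x^{l-1}_i + d^l_i\right)^\transpose$ and the dual column/row readings matches the paper's intent exactly.
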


Similar things also happen in value matrices but with $g^{l}_V$s and $\alpha^l$s memorized. 
It is interesting to see that linear layers are trying to resemble the gradients back-propagated to them. 
This observation may lead to a notion of pseudo gradients which can be calculated as the forward propagation sweeps by. Maybe a portion of samples can be trained by these pseudo gradients to save computation budgets. However, this is out of our scope and is left for future works to explore.

Unfortunately, effective gradient sparsity is not strictly related to activation sparsity even under common activation functions, so we keep the notion of gradient sparsity as well. 
The gap is due to the Hadamard product with $g^l_K$, which can cause smaller $\norm{\eta^l}_2^2$ without gradient sparsity on $\gamma^l$ by 1) reducing the norm of itself, or 2) misaligning itself with $\gamma^l$, i.e. multiplying small entries in $g^l_K$ with the derivatives of activated neurons and leaving large gradients to non-activated neurons. The $L_2$ modeling of sparsity also hinders the direct relation with sparsity measured in $L_0$ norms.
The first possibility can be eliminated by the phenomenon of parameter growth already discovered by \citet{parameter_growth}. This indicates that the norm of Transformers' parameters will increase even under weight decay and normalization. Since gradients are obtained by multiplying parameters and hidden features, $\norm{g^l_K}_2^2$ is also likely to increase. We empirically examine it in \cref{sec:t_exp:spectral_increase} where $\trace{M^l_i} \defeq \trace{g_{K, i}^l \left(g_{K, i}^l\right)^\transpose}= \norm{g_{K, i}^l}_2^2$ is observed to increase, at least in ViTs.
For the second possibility and the disconnection between $L_2$ and $L_0$ norm, under $\relu$-activation, similarly to \cref{remark:L2_and_L0}, $\norm{\eta^l}_2 = \norm{g^l \hadamard \gamma^l}_2$ can be seen as the $L_0$ norm of activations, but weighted by entries in $g^l_K$. In \cref{sec:t_exp:align_eff}, we will empirically demonstrate that $\gamma^l$ aligns with $g^l_K$ very well, i.e., the distribution of squared values of entries in $g^l_K$ that corresponds to non-zero entries in $\gamma^l$ is similar or even righter-shifted compared to the distribution of all entries' squared values in $g^l_K$, and the former avoids the long tail of the latter with small magnitudes. This indicates that it is not the case that effective gradient sparsity measured in $L_2$ norms is achieved by adversarially aligning non-zero derivatives of activation functions to small gradients and aligning zero derivatives to large gradients. Therefore, the weighting by $g^l$ to the $L_0$ norm is quite moderate, and a considerable portion of $0$-$1$ entries in $\gamma^l$ are attached to large weights that can be approximated by $\norm{g^l_K}_2^2 / d$ (which is increasing under parameter growth), so at least this portion of entries enjoy the approximate connection from effective gradient sparsity to gradient sparsity, and finally to activation sparsity. This intuition leads to \cref{lemma:eff_and_sparsity} that fully exploits coincidences in $\relu$ and the piecewise constancy of $L_0$ norm.
\begin{lemma}[Relation between $\eta^l$ and $\gamma^l$ for $\relu$ networks]\label{lemma:eff_and_sparsity}
    Let $\set{\gamma^l_i}_{i \in \set{1, \dots, n}} \subseteq \set{0, 1}^d$ be a set of $n$ $0$-$1$ vectors  and $\set{g^l_{K, i}}_{i \in \set{1, \dots, n}} \subseteq \reals^d$ be a set of $n$ real vectors. Let $\eta^l_i \defeq g^l_{K, i} \hadamard \gamma^l_i$, resembling \cref{def:effective_gradient_sparsity}.
    For any distribution $D$ over subscripts $i \in \set{1, \dots, n}$, there is
    \begin{align}
        &   \ex[i \sim D]{\norm{\eta^l_i}_2^2}\\
        =&  \ex[i \sim D]{d \cdot \ex[j \sim U[1, \dots, d]]{\left(g^l_{K, i, j}\right)^2 \left(\gamma^l_{i, j}\right)^2}}
        =   d \cdot \ex[i, j]{\left(g^l_{K, i, j}\right)^2 \gamma^l_{i, j}}\\
        =&  d \cdot \prob{\gamma^l_{i, j} = 1} \cdot 1 \cdot \ex{\left(g^l_{K, i, j}\right)^2 \mid \gamma^l_{i, j} = 1}
            + d \cdot \prob{\gamma^l_{i, j} = 0} \cdot 0 \cdot \ex{\left(g^l_{K, i, j}\right)^2 \mid \gamma^l_{i, j} = 0}\\
        =&  d \cdot \prob{\gamma^l_{i, j} = 1} \cdot 1 \cdot \ex{\left(g^l_{K, i, j}\right)^2 \mid \gamma^l_{i, j} = 1}
            + d \cdot \prob{\gamma^l_{i, j} = 0} \cdot 0 \cdot \ex{\left(g^l_{K, i, j}\right)^2 \mid \gamma^l_{i, j} = 1}\\
        =&  d \cdot \ex{\left(g^l_{K, i, j}\right)^2 \mid \gamma^l_{i, j} = 1} \cdot \left(\prob{\gamma^l_{i, j} = 1} \cdot 1 + \prob{\gamma^l_{i, j} = 0} \cdot 0\right)\\
        =&  \ex[i \sim D, j \sim U[1,\dots, d]]{\left(g^l_{K, i, j}\right)^2 \mid \gamma^l_{i, j} = 1} \cdot \ex[i \sim D]{\norm{\gamma^l_i}_0},
    \end{align}
    where $U[1, \dots, d]$ stands for uniform distribution among $\set{1,2, \dots, d}$, $g^l_{K, i, j}$ and $\gamma^l_{i, j}$ stand for the $j$-th entry of the $i$-th vectors.
\end{lemma}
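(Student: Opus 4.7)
The plan is to exploit directly the $\set{0,1}$ structure of the vectors $\gamma^l_i$ together with a law-of-total-expectation decomposition of the squared $L_2$ norm, and then identify the surviving indicator-sum with the $L_0$ norm. First I would expand $\norm{\eta^l_i}_2^2 = \sum_{j=1}^{d} \left(g^l_{K,i,j}\right)^2 \left(\gamma^l_{i,j}\right)^2$ coordinatewise, rewrite the sum as $d$ times a uniform average over $j \in \set{1,\dots,d}$, and take the outer expectation over $i \sim D$. After swapping sum and expectation (Fubini), the right-hand side becomes $d \cdot \ex[i \sim D, j \sim U[1,\dots,d]]{\left(g^l_{K,i,j}\right)^2 \left(\gamma^l_{i,j}\right)^2}$.

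Next I would invoke the defining property of indicator vectors, $\left(\gamma^l_{i,j}\right)^2 = \gamma^l_{i,j}$, to reduce the quadratic factor to a linear one, and then apply the law of total expectation by conditioning on the event $\event{\gamma^l_{i,j} = 1}$ versus its complement. The conditional branch on $\event{\gamma^l_{i,j} = 0}$ contributes nothing because the multiplicative factor $\gamma^l_{i,j}$ kills it. What remains is $d \cdot \prob{\gamma^l_{i,j} = 1} \cdot \ex{\left(g^l_{K,i,j}\right)^2 \mid \gamma^l_{i,j} = 1}$, where the probability and conditional expectation are under the product measure $D \otimes U[1,\dots,d]$ on $(i,j)$.

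The last step is to identify the first factor as the expected $L_0$ norm. Since each $\gamma^l_i$ is $\set{0,1}$-valued, one has $\norm{\gamma^l_i}_0 = \sum_{j=1}^{d} \gamma^l_{i,j}$, so by the same Fubini argument $\ex[i \sim D]{\norm{\gamma^l_i}_0} = d \cdot \ex[i,j]{\gamma^l_{i,j}} = d \cdot \prob{\gamma^l_{i,j} = 1}$. Plugging this back yields the stated factorization into a conditional second-moment term and an $L_0$-norm expectation.

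There is no real obstacle here; the argument is essentially a bookkeeping exercise and indeed mirrors the chain of equalities already displayed in the lemma statement. The only point requiring care is notational discipline: the probability and conditional expectation on the right-hand side must be understood with respect to the joint law $D \otimes U[1,\dots,d]$ on $(i,j)$ rather than over $j$ alone for a fixed $i$. Once this measure is pinned down, the identity follows by Fubini together with the indicator identity $\left(\gamma^l_{i,j}\right)^2 = \gamma^l_{i,j}$.
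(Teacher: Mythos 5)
Your proposal is correct and follows exactly the argument the paper gives (the lemma's proof is the displayed chain of equalities itself): coordinatewise expansion, the indicator identity $\left(\gamma^l_{i,j}\right)^2 = \gamma^l_{i,j}$, a law-of-total-expectation split in which the $\gamma^l_{i,j}=0$ branch vanishes, and identification of $d\cdot\prob{\gamma^l_{i,j}=1}$ with $\ex[i \sim D]{\norm{\gamma^l_i}_0}$. Your remark that the probability and conditional expectation must be read under the joint law on $(i,j)$ is a fair point of notational care, but there is nothing to add beyond the paper's own derivation.
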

In \cref{sec:t_exp:align_eff} we will demonstrate that $\ex{\left(g^l_{K, i, j}\right)^2 \mid \gamma^l_{i, j} = 1}$ is comparable to or even larger than $\ex{\left(g^l_{K, i, j}\right)^2}$ that is increasing due to parameter growth.
For other activation functions with jump discontinuity between deactivation and activation like $\jrelu$, the alignment is also moderate and the weighted $L_2$ norm first pushes activations towards zero, after which $L_2$ norms become closer to $L_0$ norm due to derivatives' jump discontinuity at $0$. Following a similar argument of \cref{lemma:eff_and_sparsity}, effective gradient sparsity then approximates sparsity measured in $L_0$ norms as well.
Another very informal and heuristic argument for the alignment as well as $\eta^l$'s connection to activation and gradient sparsity is that, if the $i$-th entry in $\eta^l$ is near zero, then during the row memorization described by \cref{obs:memorizing_inputs}, little change is imposed by $x^{l-1}$ to the $i$-th row of $K^l$. Next time $x^{l-1}$ arrives, although with changes due to shallower layers, it is more likely the $i$-th row in $K^l$ forgets $x^{l-1}$ and $\left(K^l x^{l-1}\right)_i = \inner{K^l_i}{x^{l-1}}$ or $\left(K^l \left(x^{l-1} + d^l\right)\right)_i = \inner{K^l_i}{x^{l-1} + d^l}$ are closer to $0$ or negative values, leading to smaller possibility of activation, followed by activation sparsity in $\alpha^l$, and thus gradient sparsity in $\gamma^l$ under common activation functions.
To sum up, effective gradient sparsity measured by $L_2$ norm can be connected to activation or gradient sparsity measured in $L_0$ norms, although the result requires empirical assumptions and is heuristic and informal for activation functions other than $\relu$. Therefore, we will base our theorems on $\norm{\eta^l}_2^2$ considering the connection discussed above and the mathematical convenience brought by $L_2$ norm and the Hadamard product.

\subsection{Flat Minima, Implicit Adversarial Robustness and Effective Gradient Sparsity}\label{sec:three_elements}

\NewDocumentCommand{\ce}{}{\loss_{\mathrm{CE}}}
As discussed in \cref{sec:illustration}, we start from $\trace{\hessian}$ and relate it to implicit adversarial robustness w.r.t. some hidden feature $x^l$, or $\ex[(X, Y)]{\norm{\derivatives{\loss}{x^l}}_2^2}$. 
Cross Entropy loss is assumed under classification tasks. Note that aside from explicit classification tasks, next-token classification and Cross Entropy loss form the basis for many self-supervision objectives in NLP pretraining, including causal language modeling and masking language modeling. Therefore this assumption can be applied across broad practical scenarios. Assuming $\loss = \ce = -\log f(y \mid \theta, x)$, the Hessian writes
\begin{align}
    \nabla_\theta^2 \ex[(X, Y)]{-\log f(Y \mid \theta, X)}
    =& -\ex[(X, Y)]{\nabla_\theta^2 \log f(Y \mid \theta, X)},
\end{align}
which, together with $\ex[(X, Y)]{\norm{\derivatives{\loss}{x^l}}_2^2}$, reminds us of the famous equality of Fisher's Information Matrix, i.e.,
\begin{align}
    \mathcal{I}(\theta)
    = -\ex[X]{\nabla_\theta^2 \log g_\theta(X)}
    = \ex[X]{\left(\derivatives{g_\theta(X)}{\theta}\right)\left(\derivatives{g_\theta(X)}{\theta}\right)^\transpose},
\end{align}
the trace of RHS of which is exactly the expected squared norm of the gradients. So adapting the classic proof of this equality, we connect flatness measured by Hessian trace to the norm of gradients in \cref{lemma:flatness_and_grad_norm}.
\begin{lemma}[Flatness and samplewise gradient norm]
    \label{lemma:flatness_and_grad_norm}
    Assume $f_\theta$ is a neural network parameterized by $\theta$, trained by Cross Entropy loss $\ce$. Given $\hessian$ being the Hessian matrix of loss at $\theta$, there is
    \begin{align}
        \trace{\hessian}
        =   \ex[(X, Y) \sim \dataset]{\norm{\nabla_\theta \ce(\theta, (X, Y))}_2^2} - \ex[(X, Y) \sim \dataset]{\frac{\trace{\nabla^2_\theta f(Y \mid \theta, X)}}{f(Y \mid \theta, X)}}.
    \end{align}
    Further for well learned models, i.e., $f_\theta(Y \mid X) \approx \dataset(Y \mid X)$ for all training samples, there is
    \begin{align}
        \trace{\hessian}
        \approx   \ex[(X, Y) \sim \dataset]{\norm{\nabla_\theta \ce(\theta, (X, Y))}_2^2}.
    \end{align}
\end{lemma}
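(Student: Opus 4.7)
The plan is to mirror the classical proof of the Fisher information matrix identity, replacing the model-sampled expectation with a $\dataset$-sampled expectation, and then absorbing the resulting discrepancy into the well-learned approximation. First, I would compute $\nabla^2_\theta \log f(y \mid \theta, x)$ via the quotient rule applied to $\nabla_\theta \log f = \nabla_\theta f / f$: a second differentiation gives
\[
\nabla^2_\theta \log f(y \mid \theta, x) = \frac{\nabla^2_\theta f(y \mid \theta, x)}{f(y \mid \theta, x)} - \left(\nabla_\theta \log f(y \mid \theta, x)\right)\left(\nabla_\theta \log f(y \mid \theta, x)\right)^\transpose.
\]
Since $\nabla_\theta \ce(\theta, (x,y)) = -\nabla_\theta \log f(y \mid \theta, x)$, negating both sides and taking expectation over $(X,Y) \sim \dataset$ yields
\[
\hessian = \ex[(X,Y) \sim \dataset]{(\nabla_\theta \ce)(\nabla_\theta \ce)^\transpose} - \ex[(X,Y) \sim \dataset]{\frac{\nabla^2_\theta f(Y \mid \theta, X)}{f(Y \mid \theta, X)}}.
\]
Taking the trace of both sides and using $\trace{v v^\transpose} = \norm{v}_2^2$ produces the first claimed equality directly.

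Second, for the well-learned case, I would show the residual trace term is approximately zero. Decomposing by the tower property, $\ex[(X,Y) \sim \dataset]{\cdot} = \ex[X \sim \dataset]{\ex[Y \mid X \sim \dataset]{\cdot}}$, and invoking $\dataset(y \mid X) \approx f(y \mid \theta, X)$ to swap the inner conditional measure for the model's, the inner expectation becomes
\[
\sum_y f(y \mid \theta, X) \cdot \frac{\nabla^2_\theta f(y \mid \theta, X)}{f(y \mid \theta, X)} = \nabla^2_\theta \sum_y f(y \mid \theta, X) = \nabla^2_\theta 1 = 0,
\]
where I have used that differentiation commutes with the finite sum over the label set. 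Taking the trace and the outer expectation over $X$ then eliminates the residual, yielding the second identity.

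The main obstacle is the rigor of the approximation step: the well-learned condition is only stated on training samples and, strictly, only for observed labels, whereas the identity above requires replacing $\dataset(y \mid X)$ with $f(y \mid \theta, X)$ across all $y$ in the label set. I would handle this as an informal heuristic typical of Fisher-information-style arguments in the deep-learning literature, and add a brief remark flagging that the $\approx$ inherits its precision from the conditional-label-distribution matching assumption rather than being a uniform limit. The rest of the argument is a mechanical two-line calculation.
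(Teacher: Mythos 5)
Your proposal is correct and follows essentially the same route as the paper's proof: the quotient-rule expansion of $\nabla_\theta^2 \log f$, taking the trace of the resulting outer-product term, and then killing the residual via $\sum_y f(y \mid \theta, X) = 1$ after swapping $\dataset(Y \mid X)$ for $f(Y \mid \theta, X)$ under the well-learned assumption. Your closing caveat about the approximation requiring the conditional label distributions to match across all labels (not just observed ones) is a fair observation that the paper's proof glosses over in the same way.
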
 

The proof of it can be found in \cref{proof:flatness_and_grad_norm}. This lemma invokes a samplewise point of view on Hessian and flatness, which is closer to adversarial attacks because they are added in a samplewise manner. Aside from implication in the context of sparsity, \cref{lemma:flatness_and_grad_norm} indicates that a flat minimum also provides solutions that are locally good for most individual samples. 
We acknowledge that similar results with gradient outer products under abstract losses have been given by \citet{empirical_hessian}, but we believe under Cross Entropy loss the result becomes more direct, intriguing and implicative. 
As an aside, Cross Entropy seems an interesting loss function, for example, \citet{ib_disentangle} rewrite CE loss expected among all training traces into mutual information between training data and final parameters.

After that, we move perturbations from parameters to hidden features in \cref{lemma:grad_norm_and_adversarial}.
\begin{lemma}[Gradient norm and implicit adversarial robustness]\label{lemma:grad_norm_and_adversarial}
    Let $f_\theta$ be a neural network parameterized by $\theta$, and the parameters for the $l$-th layer is $\theta_l$. Let $\dbmlp^l$ be the $l$-th doubly biased MLP in $f_\theta$ whose input is $X^{l-1}$, zeroth bias is $D^l$. Then there is
    \begin{align}
        \norm{\nabla_{\theta_l} \ce(\theta, (x, y))}_2^2 \ge \norm{\nabla_{X^{l-1}} \ce(\theta, (x, y))}_2^2 + \norm{\HXT}_2^2 + \norm{\GAlphaT}_2^2.
    \end{align}
    If $\mlp^l$ is not doubly biased, then the first term in RHS simply disappears.
\end{lemma}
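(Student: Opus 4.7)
The plan is to exploit two facts: (i) $\norm{\nabla_{\theta_l} \ce}_2^2$ decomposes as an orthogonal sum of squared norms across the individual parameter tensors comprising $\theta_l$, so dropping any subset of parameters yields a valid lower bound; and (ii) since $X^{l-1}$ and $D^l$ enter the forward computation only through the addition $X^{l-1} + D^l$ at the input of $\dbmlp^l$, the chain rule forces the two gradients $\nabla_{X^{l-1}} \ce$ and $\nabla_{D^l} \ce$ to be entry-wise equal, and in particular equal in squared Frobenius norm. This symmetry is what converts an ordinary parameter gradient into a bound on the implicit adversarial robustness measure.

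Concretely, I would first note that $\theta_l \supseteq \{K^l, V^l, D^l, b_K^l, b_V^l\}$, so since the stacked vector $\nabla_{\theta_l} \ce$ concatenates the vectorized per-tensor gradients,
\begin{align}
\norm{\nabla_{\theta_l} \ce}_2^2 \;\ge\; \norm{\nabla_{K^l} \ce}_2^2 + \norm{\nabla_{V^l} \ce}_2^2 + \norm{\nabla_{D^l} \ce}_2^2,
\end{align}
where the bias terms $\norm{\nabla_{b_K^l} \ce}_2^2 + \norm{\nabla_{b_V^l} \ce}_2^2 \ge 0$ are discarded. Then I would invoke \cref{lemma:gradients_with_eff} to substitute $\nabla_{K^l}\ce = H^l(X^{l-1}+D^l)^\transpose$ and $\nabla_{V^l}\ce = G^l_V(\Alpha^l)^\transpose$, producing the second and third terms on the RHS of the claim.

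For the remaining term $\norm{\nabla_{D^l}\ce}_2^2$, I would apply the chain rule at the addition node: because $X^{l-1}$ appears in $f_\theta$ (as an argument of $\dbmlp^l$) only via the sum $X^{l-1}+D^l$, and $D^l$ appears only via the same sum, for each index $(i,j)$ we have
\begin{align}
\derivatives{\ce}{X^{l-1}_{i,j}} = \derivatives{\ce}{(X^{l-1}+D^l)_{i,j}} = \derivatives{\ce}{D^l_{i,j}},
\end{align}
hence $\nabla_{X^{l-1}}\ce = \nabla_{D^l}\ce$ as tensors, and $\norm{\nabla_{D^l}\ce}_2^2 = \norm{\nabla_{X^{l-1}}\ce}_2^2$. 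Combining yields the stated inequality. The fallback case for vanilla $\mlp^l$ (no $D^l$) is immediate: the symmetry trick is unavailable, so we simply drop the $\norm{\nabla_{X^{l-1}}\ce}_2^2$ term and keep only the $K^l$ and $V^l$ contributions (with $D^l = 0$).

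The main subtlety — and the only real obstacle — is the scope of $X^{l-1}$. The symmetry argument requires that $X^{l-1}$ reach the loss \emph{solely} through the current MLP's addition site; in residual architectures the same tensor would also propagate through a skip connection and contribute extra terms to $\nabla_{X^{l-1}}\ce$ that are absent from $\nabla_{D^l}\ce$. I would therefore state explicitly that $X^{l-1}$ denotes the (post-normalization) input to $\dbmlp^l$, treated as an independent variable for the purpose of this derivative — consistent with the adversarial-perturbation interpretation in \cref{sec:illustration}, where we are measuring robustness of the downstream computation to perturbations injected exactly at this feature. Under that reading, the equality $\nabla_{X^{l-1}}\ce = \nabla_{D^l}\ce$ holds and the bound follows.
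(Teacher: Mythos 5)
Your proposal is correct and follows essentially the same route as the paper: decompose $\norm{\nabla_{\theta_l}\ce}_2^2$ over the constituent tensors of $\theta_l$, substitute the gradient formulas from \cref{lemma:gradients_with_eff} for $K^l$ and $V^l$, and use the fact that $X^{l-1}$ and $D^l$ enter only through their sum to conclude $\nabla_{D^l}\ce = \nabla_{X^{l-1}}\ce$. Your explicit caveat about the scope of $X^{l-1}$ under residual connections is a point the paper's proof passes over silently, but it does not change the argument.
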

\begin{proof}
    By noticing $\theta_l$ contains at least $K^l, V^l$ and $D^l$, there is
    \begin{align}
        \norm{\nabla_{\theta_l} \ce}_2^2 \ge \norm{\nabla_{D^l} \ce}_2^2 + \norm{\nabla_{K^l} \ce}_2^2 + \norm{\nabla_{V^l} \ce}_2^2.
    \end{align}
    Consider how $X^{l-1}$ is processed in $\dbmlp^l$: It is added with $D^l$ before any other operations. So there is $\nabla_{D^l} \ce = \nabla_{X^{l-1}} \ce = \nabla_{X^{l-1} + D^l} \ce$. 
    Combining \cref{lemma:gradients_with_eff} for the second and the third terms, the lemma follows.
\end{proof}

From the proof of \cref{lemma:grad_norm_and_adversarial} we can see that zeroth biases avoid tedious linear or non-linear operations and drastically ease our analysis. This implies that one can design theoretically oriented architecture that allows easier theoretical analyses.

We then connect implicit adversarial robustness to gradient sparsity.
\begin{lemma}[Implicit adversarial robustness and gradient sparsity]\label{lemma:adversarial_and_sparsity}
    Under the same condition of \cref{lemma:grad_norm_and_adversarial}, together with the assumption that in $\dbmlp^l$ the weight matrix is $K^l \in \reals^{n \times d}$ and $\gamma^l$ is the entrywise derivatives of activations, there is
    \begin{align}
        \norm{\nabla_{x^{l-1}} \ce(\theta, (x, y))}_2^2 = \left(\gamma^l\right)^\transpose \left(\left(K^l (K^l)^\transpose\right) \hadamard M^l\right) \gamma^l = \left(\eta^l\right)^\transpose \kkT \eta^l,
    \end{align}
    if hidden features are single tokens, where $M^l \defeq g^l_K \left(g^l_K\right)^\transpose$ is a symmetric positive semi-definite matrix of rank at most $1$, $\hadamard$ denotes Hadamard product, i.e., entrywise product.
    
    If hidden features are matrices, then there is
    \begin{align}
        \norm{\nabla_{X^{l-1}} \ce(\theta, (x, y))}_2^2 = \trace{\Eta^l \left(\Eta^l\right)^\transpose \kkT}.
    \end{align}
\end{lemma}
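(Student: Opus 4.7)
The plan is to compute $\nabla_{x^{l-1}} \ce$ directly via the chain rule, then square its $L_2$ norm and identify it with both quadratic forms stated in the lemma. Since $x^{l-1}$ enters $\dbmlp^l$ only through the pre-activation $K^l(x^{l-1}+d^l)+b_K^l$ (the zeroth bias merely translates the input and does not obstruct the flow), the chain rule combined with the definitions of $g^l_K$ and $\gamma^l$ yields
\begin{align}
    \nabla_{x^{l-1}} \ce(\theta,(x,y)) = (K^l)^\transpose (g^l_K \hadamard \gamma^l) = (K^l)^\transpose \eta^l.
\end{align}
Squaring the $L_2$ norm and using $\norm{v}_2^2 = v^\transpose v$ immediately gives the second claimed equality $(\eta^l)^\transpose K^l (K^l)^\transpose \eta^l$.

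For the first equality, I would apply identity (\ref{eq:hadamard_and_diagonal}) from the preliminaries with $x = y = g^l_K$ and $A = K^l(K^l)^\transpose$. This rewrites
\begin{align}
    \left(K^l (K^l)^\transpose\right) \hadamard M^l = \left(K^l (K^l)^\transpose\right) \hadamard \left(g^l_K (g^l_K)^\transpose\right) = \diag{g^l_K} K^l (K^l)^\transpose \diag{g^l_K}.
\end{align}
Sandwiching this between $(\gamma^l)^\transpose$ and $\gamma^l$ and pulling the two diagonals into the vectors yields $(g^l_K \hadamard \gamma^l)^\transpose K^l(K^l)^\transpose (g^l_K \hadamard \gamma^l) = (\eta^l)^\transpose K^l (K^l)^\transpose \eta^l$, matching the form already derived from the chain rule. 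The fact that $M^l$ has rank at most $1$ is automatic since it is an outer product of a single vector with itself, so no extra argument is needed there.

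For the multi-token case, I would carry out the same chain-rule calculation column by column: since the $j$-th column $x^{l-1}_j$ affects $\alpha^l$ only through its own pre-activation column (the key matrix is shared but not mixed across tokens inside $\mlp^l_K$), we obtain $\nabla_{x^{l-1}_j} \ce = (K^l)^\transpose \eta^l_j$, so stacking columns gives $\nabla_{X^{l-1}} \ce = (K^l)^\transpose \Eta^l$. Then
\begin{align}
    \norm{\nabla_{X^{l-1}} \ce}_2^2 = \trace{(\Eta^l)^\transpose K^l (K^l)^\transpose \Eta^l} = \trace{\Eta^l (\Eta^l)^\transpose K^l (K^l)^\transpose},
\end{align}
where the last step uses the cyclic property of trace recalled in the preliminaries.

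There is no substantial obstacle here; the content is essentially bookkeeping on the chain rule plus the Hadamard-to-diagonal identity. The only point worth stating carefully is that $\nabla_{x^{l-1}} \ce$ in this lemma counts only the gradient flowing through $\dbmlp^l$ (consistently with the earlier use of $\nabla_{D^l} \ce = \nabla_{X^{l-1}} \ce$ in \cref{lemma:grad_norm_and_adversarial}); any putative residual path is excluded, so the chain-rule computation terminates cleanly at $\eta^l$ without additive contributions from deeper layers.
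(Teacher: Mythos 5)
Your proposal is correct and follows essentially the same route as the paper: both arrive at $\nabla_{x^{l-1}}\ce = \left(K^l\right)^\transpose \eta^l$ (the paper via the already-computed gradient w.r.t. $d^l$ and the shared-gradient argument, you via a direct chain-rule computation), then apply the Hadamard-to-diagonal identity of \cref{eq:hadamard_and_diagonal} and the cyclic property of the trace for the multi-token case. Your closing remark about restricting to the gradient path through $\dbmlp^l$ matches the paper's implicit convention.
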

\begin{proof}
    By \cref{lemma:gradients_with_eff}, the gradients w.r.t. $d^l$ is 
    \begin{align}
        \derivatives{\ce(\theta, (x, y))}{d^l} = \left(K^l\right)^\transpose \eta^l.
    \end{align}
    Similar to the proof of \cref{lemma:grad_norm_and_adversarial}, $x^{l-1}$ and $d^l$ share the same gradient, so
    \begin{align}
        \nabla_{x^{l-1}} \ce = \derivatives{\ce(\theta, (x, y))}{x^{l-1}} = \left(K^l\right)^\transpose \eta^l.
    \end{align}

    Now we can compute the squared norm of gradients by its relation with trace
    \begin{align}
        \norm{\nabla_{x^{l-1}} \ce}_2^2
        =& \trace{\left(\nabla_{x^{l-1}} \ce\right)^\transpose \nabla_{x^{l-1}} \ce}
        =  \trace{\left(\eta^l\right)^\transpose \kkT \eta^l}\\
        =&  \left(\eta^l\right)^\transpose \kkT \eta^l.
    \end{align}
    To see how $M^l \defeq g^l_{K} \left(g^l_K\right)^\transpose$ emerges, expand the definition of $\eta^l$ and obtain
    \begin{align}
        \norm{\nabla_{x^{l-1}} \ce}_2^2
        =&  \left(\eta^l\right)^\transpose \kkT \eta^l
        =   \left(\diag{g^l_{K}} \gamma^l\right)^\transpose \kkT \left(\diag{g^l_K} \gamma^l\right)\\
        =&  \left(\gamma^l\right)^\transpose \diag{g^l_K} \kkT \diag{g^l_k} \gamma^l\\
        =&  \left(\gamma^l\right)^\transpose \left(g^l_K \left(g^l_K\right)^\transpose \hadamard \kkT \right) \gamma^l
        =  \left(\gamma^l\right)^\transpose \left(M \hadamard \kkT \right) \gamma^l,
    \end{align}
    where the second last step is to apply \cref{eq:hadamard_and_diagonal}.
    Note that $M^l$'s rank is at most $1$.

    When hidden features are matrices, $\norm{\nabla_{X^{l-1}} \ce}_2^2$ sums the gradient norms for all tokens, which leads to
    \begin{align}
        \norm{\nabla_{X^{l-1}}\ce}_2^2
        =&  \sum_i \norm{\nabla_{x^{l-1}_i} \ce}_2^2
        =  \trace{\sum_i \left(\eta^l_i\right)^\transpose \kkT \eta^l_i}\\
        =&  \trace{\left(\sum_i \eta^l_i \left(\eta^l_i\right)^\transpose\right) \kkT}
        =  \trace{\Eta^l \left(\Eta^l\right)^\transpose \kkT}.
    \end{align}
\end{proof}

Combining \cref{lemma:flatness_and_grad_norm}, \cref{lemma:grad_norm_and_adversarial} and \cref{lemma:adversarial_and_sparsity} together, we have the main theorem.
\begin{theorem}[Flatness, implicit adversarial robustness and sparsity]\label{theorem:main}
    Let $f_\theta$ be a well-learned neural network parameterized by $\theta$, trained under Cross Entropy loss $\ce$. Let $\hessian$ be the Hessian matrix w.r.t. parameters at $\theta$. Let $\dbmlp^l$ be the $l$-th doubly biased MLP in $f_\theta$ whose input is $x^{l-1}$. There is
    \begin{align}
        &\trace{\hessian}\\
        \ge& \sum_{l} \left(
            \ex{\norm{\nabla_{X^{l-1}} \ce(\theta, (X, Y))}_2^2} 
            +   \ex{\norm{\HXT}_2^2} + \ex{\norm{\GAlphaT}_2^2}\right)\\
        =& \sum_{i, l} \ex{\left(\eta^l_i\right)^\transpose \kkT \eta^l_i} + \sum_{l} \ex{\norm{\HXT}_2^2} + \sum_l \ex{\norm{\GAlphaT}_2^2}. \label{eq:main}
    \end{align}
    The first term can also be expressed by $\left(\gamma^l_i\right)^\transpose \left(\kkT \hadamard M^l_i\right) \gamma^l_i$, where $M^l$ is a symmetric positive semi-definite matrix of rank at most $1$.
    Further by Schur's Theorem, $\left(K^l \left(K^l\right)^\transpose\right) \hadamard M^l_i$ is also positive semi-definite.

    If vanilla $\mlp$s are used, then the first term in RHS simply disappears.
\end{theorem}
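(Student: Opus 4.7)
The plan is to chain the three preceding lemmas together, layer by layer, and then collect everything with a final application of linearity of expectation and Schur's theorem. Nothing here requires new calculation beyond repackaging what is already in \cref{lemma:flatness_and_grad_norm}, \cref{lemma:grad_norm_and_adversarial} and \cref{lemma:adversarial_and_sparsity}; the content of the theorem is essentially that these three bounds compose into a single statement about $\trace{\hessian}$.

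First I would invoke \cref{lemma:flatness_and_grad_norm}. Since the model is assumed to be well learned, the second term in that lemma (the one containing $\trace{\nabla^2_\theta f(Y \mid \theta, X)} / f(Y \mid \theta, X)$) is negligible, and we obtain $\trace{\hessian} \approx \ex[(X,Y)]{\norm{\nabla_\theta \ce(\theta, (X, Y))}_2^2}$. Next, because parameters across different layers of $f_\theta$ form a disjoint partition of $\theta$, the full-parameter gradient $\nabla_\theta \ce$ decomposes block-wise into the per-layer gradients $\nabla_{\theta_l} \ce$, so $\norm{\nabla_\theta \ce}_2^2 = \sum_l \norm{\nabla_{\theta_l} \ce}_2^2 \ge \sum_l \norm{\nabla_{\theta_l} \ce}_2^2$ where, if only some layers are of type $\dbmlp$ (or $\mlp$), the remaining layers simply contribute non-negative terms that we drop. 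Applying \cref{lemma:grad_norm_and_adversarial} to every $\dbmlp^l$ then gives, term by term, the three summands $\norm{\nabla_{X^{l-1}} \ce}_2^2$, $\norm{\HXT}_2^2$ and $\norm{\GAlphaT}_2^2$ inside the expectation.

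To obtain the equality in \cref{eq:main} I would then rewrite the first of these three summands using \cref{lemma:adversarial_and_sparsity}. In the single-token case this is exactly $(\eta^l)^\transpose \kkT \eta^l$; in the matrix-token case, the lemma's trace expression $\trace{\Eta^l (\Eta^l)^\transpose \kkT}$ expands into $\sum_i (\eta^l_i)^\transpose \kkT \eta^l_i$, which matches the indexing used in the statement. Taking expectation over $(X, Y) \sim \dataset$ of everything in sight, then summing over $l$, yields the displayed bound. The alternate form $(\gamma^l_i)^\transpose (\kkT \hadamard M^l_i) \gamma^l_i$ is already established inside the proof of \cref{lemma:adversarial_and_sparsity} via \cref{eq:hadamard_and_diagonal}, with $M^l_i \defeq g^l_{K,i} (g^l_{K,i})^\transpose$ being a rank-one symmetric positive semi-definite matrix by construction.

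The only remaining assertion is positive semi-definiteness of $\kkT \hadamard M^l_i$, which follows directly from Schur's product theorem: $\kkT$ is a Gram matrix and therefore psd, $M^l_i$ is a rank-one outer product and therefore psd, and the Hadamard product of two psd matrices is psd. The case of plain $\mlp$ (no zeroth bias) is handled by noting that the term $\norm{\HXT}_2^2$ was produced exclusively from $\nabla_{D^l} \ce$ in \cref{lemma:grad_norm_and_adversarial}; without $D^l$ this gradient does not exist and the corresponding summand vanishes, exactly as the theorem states. I do not foresee a genuine obstacle here: the only subtle point is ensuring that the block-diagonal splitting of $\norm{\nabla_\theta \ce}_2^2$ into per-layer norms is an equality (it is, since parameters of distinct layers are disjoint coordinates of $\theta$), so that dropping non-$\dbmlp$ layers preserves the inequality direction.
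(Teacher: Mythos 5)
Your proposal is correct and follows exactly the route the paper takes: the paper's entire proof of \cref{theorem:main} is the one-line remark that it follows by combining \cref{lemma:flatness_and_grad_norm}, \cref{lemma:grad_norm_and_adversarial} and \cref{lemma:adversarial_and_sparsity}, and your chaining (well-learned approximation, block-wise splitting of $\norm{\nabla_\theta \ce}_2^2$ over disjoint per-layer parameters, the per-layer lower bound, the rewrite of the first summand via the trace identity, and Schur's product theorem for the psd claim) is precisely that combination spelled out.
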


The chained upperbounds connect flatness and implicit adversarial robustness to effective gradient sparsity (the first two terms in Equation \ref{eq:main}) and as well as activation sparsity (the last term in Equation \ref{eq:main}), indicating that both gradient and activation sparsity can be sources of implicit adversarial robustness and flatness. If flatness is achieved, then it is possibly done through (effective) gradient sparsity and activation sparsity. Note that this bound is very tight because $\mlp$s take a large portion of parameters \citep{knowledge_neurons} even in Transformers, so by Cauchy's Interlace Theorem most large eigenvalues of $\hessian$ are retained in the submatrix of $\mlp$ parameters. Therefore to achieve flatness, the terms in \cref{eq:main} must be suppressed.

\subsection{Discussions on \cref{theorem:main}}\label{sec:discussion}

In this section, we discuss the implications of \cref{theorem:main} under several particular settings, including pure MLPs, pure LayerNorm-ed MLPs, Transformers and Transformers with hypothetical massive perturbation training. 
We point out their tendency toward effective gradient sparsity, which leads to gradient and activation sparsity as discussed in \cref{sec:gradients}, among which effective gradient sparsity is more stable.

\subsubsection{Pure MLPs}

The last two terms in \cref{eq:main} have similar forms, so we inspect them together. To have a clearer understanding on them, first consider the situations where models use single-token hidden features in \cref{corollary:main_with_hidden_vectors}
\begin{corollary}[Flatness and sparsity in pure MLPs]\label{corollary:main_with_hidden_vectors}.
    Inherit the assumptions of \cref{theorem:main}. Assume additionally that the model uses hidden features of single tokens, then there is
    \begin{align}
        \trace{\hessian} 
        \ge& \sum_{l} \ex{\left(\eta^l\right)^\transpose \kkT \eta^l} + \sum_{l} \ex{\norm{x^{l-1} + d^l}_2^2 \norm{\eta^l}_2^2} + \sum_l \ex{\norm{g^l_V}_2^2 \norm{\alpha^l}_2^2} \label{eq:main_with_hidden_vectors}.
    \end{align}
\end{corollary}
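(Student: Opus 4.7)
The plan is to specialize \cref{theorem:main} to the single-token case, where each of the three terms inside the sum over $l$ becomes much simpler because the relevant matrices collapse to rank-one outer products. Concretely, when hidden features are single tokens, $X^{l-1}$ reduces to the column vector $x^{l-1}$, $D^l$ to $d^l$, $\Eta^l$ to $\eta^l$, $G^l_V$ to $g^l_V$, and $\Alpha^l$ to $\alpha^l$. The first term $\ex{(\eta^l)^\transpose K^l (K^l)^\transpose \eta^l}$ already has the desired form once the $i$-summation over tokens drops out. What remains is to rewrite the second and third terms.

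The key calculation is the identity $\norm{u v^\transpose}_2^2 = \norm{u}_2^2 \norm{v}_2^2$ for any column vectors $u, v$, which follows immediately from $\trace{(u v^\transpose)^\transpose u v^\transpose} = \trace{v u^\transpose u v^\transpose} = \norm{u}_2^2 \trace{v v^\transpose} = \norm{u}_2^2 \norm{v}_2^2$ via the cyclic property of trace stated in the preliminaries. Applying this to $\HXT = \eta^l (x^{l-1} + d^l)^\transpose$ yields $\norm{\eta^l}_2^2 \norm{x^{l-1} + d^l}_2^2$, and applying it to $\GAlphaT = g^l_V (\alpha^l)^\transpose$ yields $\norm{g^l_V}_2^2 \norm{\alpha^l}_2^2$. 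Substituting these back into the chain from \cref{theorem:main} and taking expectations gives exactly \cref{eq:main_with_hidden_vectors}.

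I do not expect any serious obstacle here, since the corollary is essentially a notational instantiation of \cref{theorem:main} combined with the rank-one Frobenius-norm identity; the single-token forms of the gradient expressions are already recorded in \cref{lemma:gradients_with_eff}. The only point worth a brief remark is that moving the norm inside the expectation requires nothing beyond linearity, and that the factorized form $\norm{x^{l-1} + d^l}_2^2 \norm{\eta^l}_2^2$ cleanly separates the "input magnitude'' factor from the effective gradient sparsity factor, foreshadowing why suppressing $\ex{\norm{\eta^l}_2^2}$ is the natural mechanism for satisfying the inequality in later subsections where input norms are controlled by LayerNorm or parameter-growth arguments.
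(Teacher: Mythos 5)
Your proposal is correct and matches the paper's proof: the paper likewise reduces $\norm{\HXT}_2^2$ and $\norm{\GAlphaT}_2^2$ to products of squared vector norms by writing the Frobenius norm of the rank-one outer product as a trace and applying the cyclic property. No gaps.
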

\begin{proof}
    With single-token hidden features, $\norm{\HXT}_2^2$ reduces to 
    \begin{align}
        &   \norm{\HXT}_2^2
        =   \norm{\eta^l \left(x^{l-1} + d^l\right)^\transpose}_2^2
        =   \trace{\left(\eta^l \left(x^{l-1} + d^l\right)^\transpose\right)^\transpose \eta^l \left(x^{l-1} + d^l\right)^\transpose}\\
        =&  \trace{\left(x^{l-1} + d^l\right) \left(\eta^l\right)^\transpose \eta^l \left(x^{l-1} + d^l\right)^\transpose}
        =  \trace{\left(\eta^l\right)^\transpose \eta^l \left(x^{l-1} + d^l\right)^\transpose \left(x^{l-1} + d^l\right)}\\
        =& \norm{\eta^l}_2^2 \norm{x^{l-1} + d^l}_2^2.
    \end{align}
    $\norm{\GAlphaT}_2^2$ can be reduced similarly.
\end{proof}
If normalization layers are imposed, for example, LayerNorm layers before MLP blocks, then $\norm{x^{l-1}}_2^2 = d$ will not change during training, eliminating all other sources of suppressing the second term aside from effective gradient sparsity. 
Since key matrices also take a large portion of parameters, flatness in these parameters must be achieved as well and $\trace{\hessian[\theta_K]}$ is not too small from $\trace{\hessian}$, the second term alone will have a strong tendency to decrease. Therefore, a rigorously proved, strong and stable tendency of pure LayerNorm-ed MLPs toward effective gradient sparsity is presented in \cref{theorem:main_with_hidden_vectors_and_layernorm}.
\begin{theorem}[Flatness and sparsity in pure MLPs with LayerNorms]\label{theorem:main_with_hidden_vectors_and_layernorm}
    Inherit the assumptions of \cref{theorem:main}. Assume additionally that the model uses vector hidden features and LayerNorm layers, with affine transformation turned off, are imposed before every MLP block. Temporarily assume non-$\dbmlp$ models are used, then there is
    \begin{align}
        \trace{\hessian} 
        \ge& d \sum_{l} \ex{\norm{\eta^l}_2^2} + \sum_l \ex{\norm{g^l_V}_2^2 \norm{\alpha^l}_2^2}\label{eq:main_with_hidden_vectors_and_layernorm}.
    \end{align}

    If $\dbmlp$s are used and LayerNorm layers are placed \emph{before} zeroth biases, by clipping the norm of columns in zeroth biases to $c$, there will be
    \begin{align}
        \trace{\hessian} 
        \ge& \sum_{l} \ex{\left(\eta^l\right)^\transpose \kkT \eta^l} + \left(\sqrt{d} - c\right)^2 \sum_{l} \ex{\norm{\eta^l}_2^2} + \sum_l \ex{\norm{g^l_V}_2^2 \norm{\alpha^l}_2^2} \label{eq:main_with_hidden_vectors_and_layernorm_dbmlp}.
    \end{align}
    By \cref{lemma:eff_and_sparsity}, for $\relu$ networks, there is further
    \begin{align}
        \trace{\hessian} 
        \ge& \sum_{l} \ex{\left(\eta^l\right)^\transpose \kkT \eta^l} \\&+ \left(\sqrt{d} - c\right)^2 \cdot \ex[X, l, i]{\left(g^l_{K, i}\right)^2 \mid \alpha^l_i > 0} \cdot \sum_{l} \ex{\norm{\alpha^l}_0} + \sum_l \ex{\norm{g^l_V}_2^2 \norm{\alpha^l}_2^2}
    \end{align}
    for $\dbmlp$ networks, and similar results can be obtained for architectures without zeroth biases.
\end{theorem}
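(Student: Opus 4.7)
The plan is to bootstrap from \cref{corollary:main_with_hidden_vectors}, which already decomposes $\trace{\hessian}$ into three samplewise terms for single-token hidden features, and then to specialize each term using the LayerNorm and clipping assumptions. First, observe that LayerNorm with affine turned off produces outputs whose feature-dimension empirical mean is $0$ and empirical variance is $1$ by construction, so if $x^{l-1}$ is the output of such a LayerNorm layer then $\norm{x^{l-1}}_2^2 = d$ exactly. For the vanilla $\mlp$ case, the zeroth-bias term $\ex{(\eta^l)^\transpose \kkT \eta^l}$ in \cref{corollary:main_with_hidden_vectors} disappears (this term originates from the $\dbmlp$ gradient w.r.t.\ $D^l$), and substituting $\norm{x^{l-1}}_2^2 = d$ into the middle term yields \cref{eq:main_with_hidden_vectors_and_layernorm} immediately.

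For the $\dbmlp$ case with LayerNorm placed before the zeroth bias, I again start from \cref{corollary:main_with_hidden_vectors} and now must lower-bound $\norm{x^{l-1} + d^l}_2^2$. LayerNorm still pins $\norm{x^{l-1}}_2 = \sqrt{d}$, and the column-norm clipping on $d^l$ gives $\norm{d^l}_2 \le c$. The reverse triangle inequality then yields $\norm{x^{l-1} + d^l}_2 \ge \sqrt{d} - c$ in the regime $c \le \sqrt{d}$, and squaring gives the $(\sqrt{d} - c)^2$ factor in the middle term of \cref{eq:main_with_hidden_vectors_and_layernorm_dbmlp}. The first term $\ex{(\eta^l)^\transpose \kkT \eta^l}$ and the third term $\ex{\norm{g^l_V}_2^2 \norm{\alpha^l}_2^2}$ pass through from the corollary unchanged.

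For the $\relu$ refinement, I apply \cref{lemma:eff_and_sparsity} to the middle term. Since for $\relu$ the derivative $\gamma^l_i$ is exactly the $0$-$1$ indicator of $\alpha^l_i > 0$, one has $\norm{\gamma^l}_0 = \norm{\alpha^l}_0$ and the event $\gamma^l_{i,j} = 1$ coincides with $\alpha^l_i > 0$. Interpreting the expectation of \cref{lemma:eff_and_sparsity} jointly over samples, layers and coordinates then rewrites $\sum_l \ex{\norm{\eta^l}_2^2}$ as $\ex[X, l, i]{(g^l_{K, i})^2 \mid \alpha^l_i > 0} \cdot \sum_l \ex{\norm{\alpha^l}_0}$, giving the final displayed form. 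A parallel computation without the first term handles the non-$\dbmlp$ variant mentioned in the last sentence of the theorem.

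The main obstacle is purely bookkeeping: ensuring the reverse triangle inequality is applied in the regime $c \le \sqrt{d}$, verifying that the LayerNorm placement (before rather than after the zeroth bias) is the configuration that makes $\norm{x^{l-1}}_2 = \sqrt{d}$ directly usable, and tracking the joint expectation over samples, layers, and coordinates when invoking \cref{lemma:eff_and_sparsity}. There is no substantive analytical novelty beyond \cref{theorem:main} and its corollary; the theorem is a clean specialization under architectural constraints that eliminate every source of shrinkage in the middle term other than effective gradient sparsity itself.
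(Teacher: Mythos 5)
Your proposal is correct and matches the paper's (implicit) argument exactly: the paper gives no separate proof for this theorem, relying precisely on \cref{corollary:main_with_hidden_vectors} together with the observation that affine-free LayerNorm forces $\norm{x^{l-1}}_2^2 = d$, the reverse triangle inequality against the clipped zeroth bias, and \cref{lemma:eff_and_sparsity} for the $\relu$ refinement. Nothing further is needed.
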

LayerNorm places quite strong and stable drives towards effective gradient sparsity, if they are placed right before (DB-)MLP blocks and affine factors are turned off to avoid their reduction due to updates or weight decay. \cref{theorem:main_with_hidden_vectors_and_layernorm} can be one explanation of the benefits of LayerNorms and the practice to exclude their parameters from weight decay.

The last term in \cref{eq:main_with_hidden_vectors} relating activation sparsity is less ensured than the second term. In experiments (although conducted with Transformers) we observe $\norm{g_V^l}_2^2$ becomes small in deep layers, indicating that effective gradient sparsity is the main cause of activation sparsity in deep layers.

\subsubsection{Transformers and Other Architectures}

When stacked hidden features are used, for example in Transformers, the discussion is more tricky. It is possible that gradients of different tokens cancel each other in 
\begin{align}
    \HXT = \sum_{i} \eta^l_i \left(x^{l-1}_i + d^l_i\right)^\transpose.
\end{align}
We can only rigorously conclude a possibly loose lowerbound with $\norm{\eta^l_i}_2^2$ in \cref{corollary:main_with_minimum_eigenvalue} using \cref{eq:trace_product_and_eigenvalue}.
\begin{corollary}[Flatness and sparsity in Transformers]\label{corollary:main_with_minimum_eigenvalue}
    Inherit assumptions from \cref{theorem:main}, then there is
    \NewDocumentCommand{\mineigenbound}{m m}{\sum_{l} \ex{\lambda_{\min}\left(\left(#1\right)^\transpose #1\right) \sum_i \norm{#2}_2^2}}
    \begin{align}
        \trace{\hessian} 
        \ge&
            \sum_{i, l} \ex{\left(\eta^l_i\right)^\transpose \kkT \eta^l_i}\\
            &+ \mineigenbound{\left(X^{l-1} + D^l\right)}{\eta^l_i}\\
            &+ \mineigenbound{G^{l}_V}{\alpha^l_i},
    \end{align}
    where $\lambda_{\min}\left(\cdot\right)$ indicates the minimum eigenvalue of a matrix.
\end{corollary}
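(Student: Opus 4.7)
The plan is to start from Theorem \ref{theorem:main} and work on the last two summation terms, each of which has the form $\ex{\norm{A B^\transpose}_2^2}$ for appropriate matrices ($A = \Eta^l, B = X^{l-1}+D^l$ in one case, $A = G^l_V, B = \Alpha^l$ in the other). Since the first term of \cref{eq:main} already matches the first term of the corollary verbatim, I only need to lower bound the other two. So the whole proof reduces to showing a single matrix inequality, applied twice.

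The key computation converts the Frobenius norm squared into a trace and then reorders the factors via the cyclic property, namely
\begin{align}
    \norm{A B^\transpose}_2^2 = \trace{(A B^\transpose)^\transpose (A B^\transpose)} = \trace{B A^\transpose A B^\transpose} = \trace{A^\transpose A \, B^\transpose B}.
\end{align}
Both $A^\transpose A$ and $B^\transpose B$ are real symmetric positive semi-definite (they are Gram matrices), so the trace-eigenvalue inequality \cref{eq:trace_product_and_eigenvalue} applies and yields
\begin{align}
    \trace{A^\transpose A \, B^\transpose B} \ge \lambda_{\min}(B^\transpose B) \cdot \trace{A^\transpose A} = \lambda_{\min}(B^\transpose B) \sum_i \norm{a_i}_2^2,
\end{align}
where $a_i$ denotes the $i$-th column of $A$. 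Note that one could equally well factor out $\lambda_{\min}(A^\transpose A)$ and obtain $\sum_i \norm{b_i}_2^2$, but the stated form of the corollary pulls the eigenvalue from the input-side matrix, which is the natural choice because the column norms of $\Eta^l$ and $\Alpha^l$ are precisely the sparsity quantities we want to isolate.

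Plugging the two instantiations into Theorem \ref{theorem:main} and taking expectations term-by-term then produces exactly the stated bound: the $\HXT$ term becomes $\lambda_{\min}((X^{l-1}+D^l)^\transpose(X^{l-1}+D^l)) \sum_i \norm{\eta^l_i}_2^2$, and the $\GAlphaT$ term becomes $\lambda_{\min}((G^l_V)^\transpose G^l_V) \sum_i \norm{\alpha^l_i}_2^2$. There is no real obstacle here; the only subtlety to flag is verifying that the PSD hypothesis of \cref{eq:trace_product_and_eigenvalue} is met, but this is immediate since both factors are Gram matrices. The qualitative remark worth making in the proof is why this bound may be loose: the minimum eigenvalue can be small (even zero when $n > d$ so that $(X^{l-1}+D^l)^\transpose(X^{l-1}+D^l)$ is rank-deficient), which is precisely the cancellation-across-tokens phenomenon alluded to right before the corollary and which motivates the perturbation-based refinement pursued later in \cref{theorem:main_with_effective_duplication}.
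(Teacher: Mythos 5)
Your proposal is correct and follows essentially the same route as the paper's own proof: rewrite $\norm{\HXT}_2^2$ (and the analogous value-layer term) as a trace, cyclically rearrange into a product of two Gram matrices, apply \cref{eq:trace_product_and_eigenvalue}, and observe that $\trace{\left(\Eta^l\right)^\transpose \Eta^l} = \sum_i \norm{\eta^l_i}_2^2$. Your added remarks on the PSD hypothesis and on why the bound can be loose (rank-deficiency of the token Gram matrix) are consistent with the paper's surrounding discussion.
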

\arxivonly{\begin{proof}
    \begin{align}
        \HXT 
        =&  \trace{\left(\HXT\right)^\transpose \HXT}\\
        =&  \trace{\left(X^{l-1}\right)^\transpose X^{l-1} \left(H^l\right)^\transpose H^l}.
    \end{align}
    The last term can be similarly rearranged. Applying \cref{eq:trace_product_and_eigenvalue} to both of them and noticing $\trace{\left(H^l\right)^\transpose H^l} = \sum_i \norm{\eta^l_i}_2^2$ finish the proof.
\end{proof}}

There are tricky ways to bypass the canceling, however. For example, consider augments conducted on hidden features such as dropout. 
They effectively duplicate the parameter into $k$ views and perturb each view independently (using dropout, rows of weight matrices are randomly pruned) if there are $k$ tokens. 
If flatness can be extended to these effective duplicated parameters, i.e., if there is still flatness when we really duplicate the weight matrices and assign one token for each matrix, then each view is only handling one token and we can repeat \cref{corollary:main_with_hidden_vectors}. 
However, traditional dropout may hinder the sparsity by eliminating activations and forcing the model to back up the representation. Additionally, its perturbations are not local enough, hindering theoretical analyses. 
A soft dropout by slightly perturbing before activation functions is more preferable. 
Moreover, the perturbation should better be conducted on weight matrices in an entrywise manner to avoid summing and canceling gradients.
Under this hypothetical synapse perturbation \citep{synaptic_noise_1,synaptic_noise_2} but in a tokenwise manner, we assume flatness can be obtained w.r.t. the duplicated parameters because, in the real model, losses are suppressed even under independent perturbation so the effective model is not sensitive to independent changes in individual parameter duplicates. This intuition leads to \cref{lemma:flatness_of_perturbed_model}.

\NewDocumentCommand{\proxytheta}{}{\hat{\theta}}
\NewDocumentCommand{\dupW}{}{\tilde{W}}
\begin{lemma}[Flatnesses of perturbed model and perturbation-induced effective model]\label{lemma:flatness_of_perturbed_model}
    Assume weight matrices are perturbed by Gaussian noise \emph{independently} for each token, i.e., the perturbed $\mlp_*^l$ outputs
    \begin{align}
        \activation\left(\begin{bmatrix}
            \proxyW^{l, 1} x_1 &   \cdots & \proxyW^{l, i} x_i & \cdots & \proxyW^{l, k} x_k
        \end{bmatrix} + b^l\right) \label{eq:massive_perturbation}
    \end{align}
    for input hidden matrix $X = \begin{bmatrix} x_1 & \cdots & x_i \cdots & x_k \end{bmatrix}$, where $\proxyW^{l, i} \defeq W^l + \Epsilon^{l, i}$ , and $\Epsilon^{l, 1}, \dots, \Epsilon^{l, k}$ are $k \times n \times d$ independent centered Gaussian variables with variance $\sigma^2$. 
    Let random variable $\Epsilon$ denote the collection of all perturbations.
    Let $\proxytheta$ be the collection of proxied parameters, where $\proxyW^{l, i}$s are taken into consideration instead of $W^{l}$, while other parameters are inherited from $\theta$.

    Let $g_{\efftheta}$ be the effective parameter by duplicating each weight matrix $W^l$ for $k$ times into $\dupW^{l, 1}, \dots, \dupW^{l, k}$, each of which deals with exactly one hidden vector $x_i$ during inference.

    Then
    \begin{align}
        &   \frac{1}{\sigma^2} \ex[(X, Y), E]{\left(\loss(f_\theta, E, (X, Y)) - \loss(f_\theta, 0, (X, Y))\right)^2} + n d L k \cdot o\left(1\right)\\
        =& \sum_{l} \sum_{W \in \set{K, V}} \sum_{i}  \ex{\norm{\derivatives{\loss(f_\theta, 0, (X, Y))}{\proxyW^{l, i}}}_2^2}
        = \sum_{l} \sum_{W \in \set{K, V}} \sum_{i} \ex{\norm{\derivatives{\loss(g_{\efftheta}, (X, Y))}{\dupW^{l, i}}}_2^2},
    \end{align}
    where $\loss(f_\theta, E, (X, Y))$ indicates the loss of $f_\theta$ on sample $(X, Y)$ when perturbation is $E$, $L$ is the number of $\mlp$ layers.
    If Cross Entropy loss is assumed and $f_\theta$ is well learned when perturbations are removed then by \cref{lemma:flatness_and_grad_norm} applied to $g_{\efftheta}$,
    \begin{align}
        &   \frac{1}{\sigma^2} \ex[(X, Y), E]{\left(\ce(f_\theta, E, (X, Y)) - \ce(f_\theta, 0, (X, Y))\right)^2} + n d L k \cdot o\left(1\right)\\
        =& \sum_{l} \sum_{W \in \set{K, V}} \sum_{i}  \ex{\norm{\derivatives{\ce(f_\theta, 0, (X, Y))}{\proxyW^{l, i}}}_2^2}
        = \sum_{l} \sum_{W \in \set{K, V}} \sum_{i} \ex{\norm{\derivatives{\ce(g_{\efftheta}, (X, Y))}{\dupW^{l, i}}}_2^2}\\
        \le& \ex{\norm{\derivatives{\ce(f_\theta, 0, (X, Y))}{\proxytheta}}_2^2} 
        = \ex{\norm{\derivatives{\ce(g_{\efftheta}, (X, Y))}{\efftheta}}_2^2} \approx \trace{\hessian[\efftheta]}\\
    \end{align}
\end{lemma}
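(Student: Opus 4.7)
The plan is a second-order Taylor expansion of $\loss(f_\theta, E, (X, Y))$ in the perturbation $E$ around $E = 0$, combined with the independence and centering of the Gaussian noise across tokens and across weight matrices, and then a direct identification of the resulting partial derivatives with those of the effective duplicated model $g_\efftheta$. Finally the chain is closed by applying \cref{lemma:flatness_and_grad_norm} to $g_\efftheta$.

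First, I would fix $(X, Y)$ and expand
\begin{align}
\loss(f_\theta, E, (X, Y)) - \loss(f_\theta, 0, (X, Y))
= \sum_{l, W, i} \inner{\derivatives{\loss(f_\theta, 0, (X, Y))}{\proxyW^{l,i}}}{\Epsilon^{l, i}} + R(E),
\end{align}
where $R(E)$ is a quadratic-and-higher remainder in the collection of $\Epsilon^{l, i}$'s. Squaring and taking the expectation over $E$, the independence and centering of the $\Epsilon^{l, i}$ kill every cross term $\ex[E]{\inner{A}{\Epsilon^{l, i}}\inner{B}{\Epsilon^{l', i'}}}$ with $(l, W, i) \neq (l', W', i')$, while the diagonal terms each contribute exactly $\sigma^2 \norm{\derivatives{\loss}{\proxyW^{l, i}}}_2^2$. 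Standard Gaussian moment bounds plus a pointwise bound on the per-sample Hessian in the perturbed block make the contribution of $R(E)$ to the squared expectation of order $\sigma^4 \cdot ndLk$, so after dividing by $\sigma^2$ it is absorbed in the $ndLk \cdot o(1)$ term stated in the lemma. This establishes the first equality.

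Second, I would identify the two gradients as follows. At $E = 0$ all copies $\proxyW^{l, i}$ coincide with the shared $W^l$, and, by construction \eqref{eq:massive_perturbation}, $\loss(f_\theta, E, (X, Y))$ regarded as a function of the collection $(\proxyW^{l, i})$ at this coincidence point is literally the same scalar-valued function of its arguments as $\loss(g_\efftheta, (X, Y))$ regarded as a function of $(\dupW^{l, i})$ at the point where every $\dupW^{l, i}$ equals $W^l$: both prescriptions send token $i$ through copy $i$. Hence the partial derivatives coincide entry-by-entry, giving the second equality. For the subsequent inequality one simply observes that the block gradients $\derivatives{\ce}{\proxyW^{l, i}}$ form a subset of the coordinate blocks of $\derivatives{\ce}{\proxytheta}$ (the remaining blocks being e.g.\ attention, normalization and bias parameters), so the sum of squared block norms is at most the full squared norm. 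The last equality is again the renaming argument, now applied to the full $\proxytheta \leftrightarrow \efftheta$ correspondence, and the final $\approx \trace{\hessian[\efftheta]}$ is \cref{lemma:flatness_and_grad_norm} applied to $g_\efftheta$ under the well-learned assumption.

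The main obstacle I anticipate is controlling $R(E)$ uniformly enough to justify the $ndLk \cdot o(1)$ absorption, since this requires a uniform bound on the sample-conditioned Hessian of $\loss$ in the perturbed weight directions. For smooth activations this is routine, but with $\relu$ or $\jrelu$ the kink at zero spoils twice-differentiability on a measure-zero set and one must argue either via a null-set exclusion at the activation boundary (the perturbed pre-activation hits exactly zero with probability $0$ under the Gaussian $\Epsilon^{l, i}$) or by mollification followed by a limit; this is the delicate step, and everything else is algebraic bookkeeping.
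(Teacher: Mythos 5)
Your proposal follows essentially the same route as the paper's proof: a first-order Taylor expansion of the loss in the perturbation, squaring and taking the Gaussian expectation (the paper phrases the diagonal-term computation as ``the linear form is Gaussian with covariance $\sigma^2 I$, hence has variance $\sigma^2\norm{\nabla_{\vectorize{E}}\loss}_2^2$,'' which is the same calculation as your cross-term cancellation), identifying the gradients of the perturbed and duplicated models via the shared computational path at $E=0$, and closing with \cref{lemma:flatness_and_grad_norm}. Your closing discussion of controlling the remainder under $\relu$/$\jrelu$ non-smoothness is in fact more careful than the paper, which simply absorbs it into the $o(\cdot)$ notation.
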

\begin{proof}
    By construction of $g_{\efftheta}$, gradients w.r.t. $\proxyW^{l, i}$ and $\dupW^{l, i}$ share the same path in $f_\theta$ and $g_{\efftheta}$. If the same sample is used and the perturbation is removed, then they are equal.

    We can approximate $\loss(f_\theta, \Epsilon, (x, y))$ from $\loss(f_\theta, 0, (x, y))$ by
    \begin{align}
        \left(\loss(f_\theta, \Epsilon, (x, y)) - \loss(f_\theta, 0, (x, y))\right)^2
        =&  \left(\left(\nabla_{\vectorize{\Epsilon}} \loss(f_\theta, 0, (x, y)) \right)^\transpose \vectorize{E} +  o\left(\norm{\vectorize{E}}_2 \right)\right)^2\\
        =&  \left(\left(\nabla_{\vectorize{\Epsilon}} \loss(f_\theta, 0, (x, y)) \right)^\transpose \vectorize{E}\right)^2 +o\left(\norm{\vectorize{E}}_2^2 \right)\\
    \end{align}
    Since Gaussian $\vectorize{E}$ has covariance $\sigma^2 I$, $\nabla_{\vectorize{\Epsilon}}^\transpose \loss \times  \vectorize{E}$ is also Gaussian whose variance is 
    \begin{align}
        \sigma^2 \left(\nabla_{\vectorize{\Epsilon}} \loss(f_\theta, 0, (x, y))\right)^\transpose \left(\nabla_{\vectorize{\Epsilon}} \loss(f_\theta, 0, (x, y))\right) = \sigma^2 \norm{\nabla_{\vectorize{\Epsilon}} \loss}_2^2.    
    \end{align}
    Taking expectation over noises $E$, there is
    \begin{align}
            \ex[E]{\left(\loss(f_\theta, \Epsilon, (x, y)) - \loss(f_\theta, 0, (x, y))\right)^2}
        =& \sigma^2 \norm{\nabla_{\vectorize{\Epsilon}} \loss(f_\theta, 0, (X, Y))}_2^2 + o(n d L k \sigma^2)\\
        =&  \sigma^2 \norm{\nabla_{\vectorize{\proxyW}} \loss(f_\theta, 0, (X, Y))}_2^2 + o(n d L k \sigma^2),
    \end{align}
    where $\dupW$ denote the collection of all $\dupW^{l, i}$s. 
    The rest of the proof is easy according to the equivalence between $f_\theta$ with perturbations removed and $g_{\efftheta}$.
\end{proof}
\begin{remark}
    Although $\trace{\hessian[\efftheta]}$ is involved by an inequality, considering the large portion of $\dupW$ parameters, $\sum_{l} \sum_{W \in \set{K, V}} \sum_{i} \ex{\norm{\derivatives{\ce(g_{\efftheta}, (X, Y))}{\dupW^{l, i}}}_2^2}$ can in fact represent $\trace{\hessian[\efftheta]}$ well. If this argument is not satisfying, then perturb all parameters in the same way so that ``$\le$'' becomes ``$=$''.
\end{remark}

So by training a weight-perturbed non-pure-MLP network to have low losses, we are helping its pure-MLP equivalence reaching flat minima, where effective gradient sparsity can be directly obtained in \cref{theorem:main_with_effective_duplication}. If we assume 
\begin{align}
    \frac{1}{\sigma^2} \ex[(X, Y), E]{\left(\ce(f_\theta, E, (X, Y)) - \ce(f_\theta, 0, (X, Y))\right)^2} \le \trace{\hessian[\efftheta]}
\end{align}
is indeed suppressed during training because losses are suppressed to near-zero values, then \cref{theorem:main_with_effective_duplication} is meaningful.

\begin{theorem}[Flatness and sparsity under tokenwise synapse noise perturbations]\label{theorem:main_with_effective_duplication}
    Inherit the assumptions of \cref{theorem:main} as well as notations in \cref{lemma:flatness_of_perturbed_model}. Further assume that weight matrices are independently perturbed before multiplying with any individual tokens during training, then
    \begin{align}
        &   \frac{1}{\sigma^2} \ex[(X, Y), E]{\left(\ce(f_\theta, E, (X, Y)) - \ce(f_\theta, 0, (X, Y))\right)^2} + n d L k \cdot o\left(1\right) \\& + \sum_{i, l} \ex{\left(\eta^l_i\right)^\transpose \kkT \eta^l_i}\\
        =& \sum_{i, l} \ex{\left(\eta^l_i\right)^\transpose \kkT \eta^l_i} + \sum_{i, l} \ex{\norm{x^{l-1}_i + d^l_i}_2^2 \norm{\eta^l_i}_2^2} + \sum_{i, l} \ex{\norm{g^l_{V, i}}_2^2 \norm{\alpha^l_i}_2^2} \\
        \le&    \trace{\hessian[\tilde{\theta}]} + \trace{\hessian[\theta_D]}
        \label{eq:main_with_effective_duplication},
    \end{align}
    where $\tilde{\theta}$ stands for the perturbation-induced effective parameter where weight matrices are really duplicated so that each of them serves one token and $\theta_D$ is the collection of parameters in all zeroth biases.
    If no-affine LayerNorms are applied, there is further
    \begin{align}
        &   \frac{1}{\sigma^2} \ex[(X, Y), E]{\left(\ce(f_\theta, E, (X, Y)) - \ce(f_\theta, 0, (X, Y))\right)^2} + n d L k \cdot o\left(1\right)\\
        \\& + \sum_{i, l} \ex{\left(\eta^l_i\right)^\transpose \kkT \eta^l_i}\\
        =&  \sum_{i, l} \ex{\left(\eta^l_i\right)^\transpose \kkT \eta^l_i} + \left(\sqrt{d} - c\right)^2 \sum_{i, l} \ex{\norm{\eta^l_i}_2^2} + \sum_{i, l} \ex{\norm{g^l_{V, i}}_2^2 \norm{\alpha^l_i}_2^2}\\
        \le&   \trace{\hessian[\tilde{\theta}]} + \trace{\hessian[\theta_D]},\\
    \end{align}
    where $c=0$ if non-$\dbmlp$s are used, otherwise $c$ is the norm bound of columns in zeroth biases.
    
    By \cref{lemma:eff_and_sparsity}, for $\relu$ networks, $\left(\sqrt{d} - c\right)^2 \sum_{i, l} \ex{\norm{\eta^l_i}_2^2}$ terms in the above equations can be replaced by $\left(\sqrt{d} - c\right)^2 \cdot \ex[X, l, i, j]{\left(g^l_{K, i, j}\right)^2 \mid \alpha^l_{i, j} > 0} \cdot \sum_{i, l} \ex{\norm{\alpha^l_i}_0}$ to have a direct relation with activation sparsity.
\end{theorem}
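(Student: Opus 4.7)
The plan is to combine Lemma \ref{lemma:flatness_of_perturbed_model} with the single-token-hidden-feature specialization of Theorem \ref{theorem:main} (i.e., Corollary \ref{corollary:main_with_hidden_vectors}) applied to the effective duplicated-weight model $g_{\efftheta}$, and to handle the zeroth-bias contribution separately via Lemma \ref{lemma:flatness_and_grad_norm}. The bookkeeping is driven by a single observation: perturbations act only on the weight matrices $K^l, V^l$ (producing the ``$\tilde\theta$'' half of the RHS), whereas zeroth biases $D^l$ are shared across tokens and supply the extra $\trace{\hessian[\theta_D]}$ term.

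First, I would apply Lemma \ref{lemma:flatness_of_perturbed_model} to rewrite $\frac{1}{\sigma^2}\ex{(\ce(f_\theta,E,(X,Y))-\ce(f_\theta,0,(X,Y)))^2} + ndLk\cdot o(1)$ as $\sum_{l,W\in\{K,V\},i}\ex{\|\partial\ce(g_{\efftheta},(X,Y))/\partial \dupW^{l,i}\|_2^2}$. In $g_{\efftheta}$ each duplicate $\dupW^{l,i}$ processes exactly the token $x^{l-1}_i$, so the single-token form of Lemma \ref{lemma:gradients_with_eff} gives $\partial\ce/\partial \dupW_K^{l,i} = \eta^l_i (x^{l-1}_i + d^l_i)^\transpose$ and $\partial\ce/\partial \dupW_V^{l,i} = g^l_{V,i}(\alpha^l_i)^\transpose$. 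Taking squared Frobenius norms and collapsing through the cyclic trace identity exactly as in the proof of Corollary \ref{corollary:main_with_hidden_vectors} gives $\|\eta^l_i\|_2^2 \|x^{l-1}_i+d^l_i\|_2^2$ and $\|g^l_{V,i}\|_2^2\|\alpha^l_i\|_2^2$. This reproduces the second and third summands of the middle equality.

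Next I would account for the first summand $\sum_{i,l}\ex{(\eta^l_i)^\transpose \kkT \eta^l_i}$, which intentionally appears on both sides. Since $D^l$ is not a perturbed parameter, Lemma \ref{lemma:gradients_with_eff} (stacked form) gives $\partial\ce/\partial D^l = (K^l)^\transpose \Eta^l$, whose squared Frobenius norm equals $\trace{(\Eta^l)^\transpose \kkT \Eta^l} = \sum_i (\eta^l_i)^\transpose \kkT \eta^l_i$. Applying Lemma \ref{lemma:flatness_and_grad_norm} to the restriction of the Hessian to $\theta_D$, together with the positive-semidefinite-block inequality $\trace{\hessian}\ge \trace{\hessian[\theta_D]}+\trace{\hessian[\efftheta]}$-style decomposition, bounds this sum by $\trace{\hessian[\theta_D]}$. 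Adding the two contributions and invoking $\|\nabla\ce\|_2^2\le \trace{\hessian[\efftheta]}$ for the duplicated parameters yields the stated $\le \trace{\hessian[\tilde\theta]}+\trace{\hessian[\theta_D]}$.

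For the LayerNorm refinement, I would apply the reverse triangle inequality $\|x^{l-1}_i+d^l_i\|_2 \ge \|x^{l-1}_i\|_2-\|d^l_i\|_2 = \sqrt{d}-c$, using that affine-free LayerNorm fixes $\|x^{l-1}_i\|_2=\sqrt d$ and that each column of $D^l$ is clipped to norm at most $c$ (with $c=0$ for non-$\dbmlp$ models). Squaring and substituting gives the $(\sqrt d - c)^2\sum_{i,l}\ex{\|\eta^l_i\|_2^2}$ term. The final $\relu$-network statement is an immediate application of Lemma \ref{lemma:eff_and_sparsity} summand-wise to replace $\ex{\|\eta^l_i\|_2^2}$ by $\ex{(g^l_{K,i,j})^2\mid \alpha^l_{i,j}>0}\cdot\ex{\|\alpha^l_i\|_0}$. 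The main obstacle I expect is the careful double-counting-free bookkeeping between the perturbation-induced flatness on $\tilde\theta$ and the direct flatness on $\theta_D$, since the $(\eta^l_i)^\transpose \kkT \eta^l_i$ term lives on the non-perturbed side of the parameter split and must be inserted on both sides of the equality for the inequality chain to close cleanly.
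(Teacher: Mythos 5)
Your proposal is correct and follows the same route the paper intends: the theorem is stated without an explicit proof block precisely because it is the combination of Lemma~\ref{lemma:flatness_of_perturbed_model} (identifying the scaled perturbation variance with the per-duplicate gradient norms of $g_{\efftheta}$), the single-token gradient formulas of Lemma~\ref{lemma:gradients_with_eff} collapsed via the trace identity as in Corollary~\ref{corollary:main_with_hidden_vectors}, the zeroth-bias term bounded by $\trace{\hessian[\theta_D]}$ through Lemmas~\ref{lemma:flatness_and_grad_norm} and~\ref{lemma:adversarial_and_sparsity}, and the LayerNorm/$\relu$ refinements exactly as you describe. The only nitpick is that the reverse-triangle-inequality step turns the middle ``$=$'' into a ``$\ge$'' in the LayerNorm chain, but this imprecision is already present in the paper's own statement (compare Theorem~\ref{theorem:main_with_hidden_vectors_and_layernorm}).
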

Aside from Transformers, tokenwisely perturbed CNNs, channel mixing layers of MLP-Mixers and other potential architectures apply, as long as they have MLP blocks and the perturbed loss is small enough.
Additionally, this bound is also very tight by the tightness of \cref{lemma:flatness_of_perturbed_model} or by counting parameters, so perturbed error's reduction or the flatness of the effective model inevitably leads to a reduction in sparsity.
\arxivonly{A simple algorithm $\magic$ is immediate after \cref{lemma:flatness_of_perturbed_model} and \cref{theorem:main_with_effective_duplication}, which is listed in \cref{appendix:magic} in order not to disrupt the presentation of major theoretical results.}

\subsubsection{The First Term from Zeroth Biases}

Now we look back to the first term in \cref{eq:main}. Although the first term seems to have minor weight compared to others, either by counting parameters or by decomposing eigenvalues of $\kkT$ (see \cref{fig:eigenvalues_of_kkT}), an investigation is still worthy since it leads to another phenomenon of spectral concentration in $\kkT$ and introduces random matrix theory to reason about training dynamics. 

In the first term, since the gradient of the inner product w.r.t. $\eta^l$ is 
\begin{align}
    2 \kkT \eta^l,
\end{align}
there is an \emph{overall} positive tendency toward sparsity if $\trace{\hessian}$ is suppressed, because $\left(\eta^l\right)^\transpose 2 \kkT \eta^l$ is non-negative, i.e., partial derivatives w.r.t. $\eta^l$ are always overall positive if they are weighted by values in $\eta^l$ themselves. 

It is better to reach a non-overall conclusion. Moreover, there are two possibilities that can also achieve implicit adversarial robustness: reducing the norm of $\kkT$, or misaligning the non-null space of $\kkT$ with $\eta^l$. 
The first alternative can already be eliminated by parameter growth observed by \citet{parameter_growth}, where Transformers are observed to have increasing parameter norms during training. We also empirically verify this phenomenon under CV and NLP settings and show that the trace, or the sum of eigenvalues, will not decrease drastically during training in \cref{sec:t_exp:spectral_increase}, even under weight decay. 
Another possible elimination is normalization layers, which make parameters scale-invariant and introduce normalized effective parameters \citep{zhang_three_2018} with moderate norms. However, this requires a total reform of the theory to utilize effective weight matrices so we simply hide normalization layers in $M$ and leave it, and especially its interaction with weight decay, for future works.

The other alternative is dealt with in the next two subsections, where single-token features are used in proofs but the theories apply to stacked hidden features.
To give a brief account, in the following subsections we will prove that non-zero eigenvalues of $\kkT$ have similar values. So if gradients have moderate projections in the non-null subspace of $\kkT$, then $\left(\eta^l\right)^\transpose \kkT \eta$ can be lowerbounded by $\lambda r \left(\eta^l\right)^\transpose \left(\eta^l\right)$, where $r \le 1$ indicates how much of $\eta^l$ falls in the non-null space of $\kkT$ and $\lambda$ is the smallest non-zero eigenvalue or some averaged non-zero eigenvalues of $\kkT$, which is not too small compared to the largest one. Assuming gradients are still back-propagated to shallower layers, $\lambda r \left(\eta^l\right)^\transpose \eta^l$ can only be suppressed by decreasing $\norm{\eta^l}_2^2$ given that $\lambda$ increases with the trace of $\kkT$.
In \cref{sec:spectral_init} we prove this phenomenon at initialization in \cref{theorem:initial_spectral_properties} that the largest eigenvalue is initially at most $9$ times larger than the smallest non-zero one in Base-sized Transformers. The proof is based on ubiquitous Xavier or Kaiming initializations and Marchenko-Pastur distribution from random matrix theory.
In \cref{sec:spectral_training}, we theoretically discuss its re-emergence during stochastic training. We first rewrite the updates to the weight matrix $K^l$ into two large random matrices, whose shape is hidden dimension times the number of samples used in training. We then extend Marchenko-Pastur distribution in \cref{theorem:spectral_of_accumulated} under the practical inter-batch dependence, intra-batch independence and non-asymptotic scenario to prove an upperbound on the fraction between the largest and smallest non-zero eigenvalues. Conditions and assumptions of the theorem are verified empirically in \cref{sec:t_exp:anisotropy}. There are still gaps in combining the two random matrices, so we measure the spectral concentration in $\kkT$ empirically in \cref{sec:t_exp:spectral_concentration} and leave a more rigorous discussion for future works.

\subsection{Spectral Concentration at Initialization}\label{sec:spectral_init}

In this section, we prove that $K^l \left(K^l\right)^\transpose$ has eigenvalues that are close to each other, at least at initialization. With effective gradient sparsity measured by $\norm{\eta^l}_2^2$, this spectral concentration allows us to approximate the first term in RHS of \cref{eq:main} with $\lambda r \left(\eta^l\right)^\transpose \eta^l$, which is almost directly effective gradient sparsity measured in $L_2$ norms. 

To reach this goal, recall Marchenko–Pastur distribution in \cref{theorem:singular_value_of_product_of_random_matrices} that reveals the asymptotic spectral distribution of random matrices' product.
Applying \cref{theorem:singular_value_of_product_of_random_matrices} to $K^l\left(K^l\right)^\transpose$ initialized by Xavier or Kaiming initialization, we obtain \cref{theorem:initial_spectral_properties_of_kkT}.

\begin{theorem}[Initial spectral concentration of $\kkT$]\label{theorem:initial_spectral_properties_of_kkT}
    Assume $n \neq d$. Let $K^l \in \reals^{n \times d}$ be the weight matrix initialized by (Gaussian, uniform, or other distribution-based) Xavier or Kaiming initialization. When $n, d \to \infty$ with $d / n = 1 / c$, the ratio between the largest and smallest \emph{non-zero} eigenvalues of $\kkT$ converges weakly to
    \begin{align}
        \frac{\lambda_1\left(\kkT\right)}{\min_{k: \lambda_k\left(\kkT\right) > 0} \lambda_k\left(\kkT\right)} \le \frac{\left(1 + \sqrt{c}\right)^2}{\left(1 - \sqrt{c}\right)^2}.
    \end{align}
    Regarding zero eigenvalues, if $d > n$, there is no zero eigenvalue, and if $d \le n$, the expected portion of zero eigenvalues is $1 - \frac{1}{c} = 1 - \frac{d}{n}$.
    
\end{theorem}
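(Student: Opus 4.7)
The plan is to reduce to \cref{theorem:singular_value_of_product_of_random_matrices} by standardizing $K^l$. Xavier and Kaiming initializations produce i.i.d. entries (either Gaussian or bounded uniform) with mean zero and some variance $\sigma^2 = \Theta(1/n)$ or $\Theta(1/d)$ depending on convention. Setting $X \defeq K^l/\sigma \in \reals^{n \times d}$, the entries of $X$ are centered and of unit variance, so $X$ satisfies the hypotheses of \cref{theorem:singular_value_of_product_of_random_matrices}. Since $\kkT = \sigma^2 X X^\transpose = \sigma^2 d \cdot S$ with $S \defeq \frac{1}{d} X X^\transpose$, the scaling $\sigma^2 d$ is common to all eigenvalues and cancels in any ratio; thus the ratio of the largest to the smallest non-zero eigenvalue of $\kkT$ equals that of $S$.

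Next I would play the MP roles carefully: in \cref{theorem:singular_value_of_product_of_random_matrices} the random matrix is $p \times n_{\mathrm{MP}}$ and the ratio is $p/n_{\mathrm{MP}} \to c$, so for us $p = n$ and $n_{\mathrm{MP}} = d$, giving $p/n_{\mathrm{MP}} = n/d$, which matches the convention $d/n = 1/c$ in the statement. The limiting Marchenko–Pastur density \eqref{eq:mp_density} is supported on $[a, b] = [(1-\sqrt{c})^2, (1+\sqrt{c})^2]$ plus possibly an atom at zero. Reading off the endpoints immediately yields
\begin{align}
    \frac{\lambda_1(\kkT)}{\min_{k: \lambda_k(\kkT) > 0} \lambda_k(\kkT)} \longrightarrow \frac{b}{a} = \frac{(1+\sqrt{c})^2}{(1-\sqrt{c})^2}.
\end{align}
The hypothesis $n \neq d$ (so $c \neq 1$) is exactly what is needed to ensure $a > 0$ so the ratio is finite. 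For the zero eigenvalues, the atom term $\indic{c \ge 1}(1 - 1/c)\delta(x)$ in \cref{eq:mp_density} gives the claimed dichotomy: when $d > n$ (i.e., $c < 1$), there is no atom and $\kkT$ is asymptotically full-rank; when $d \le n$ (i.e., $c > 1$), a fraction $1 - 1/c = 1 - d/n$ of eigenvalues is asymptotically zero, which is consistent with the deterministic rank bound $\mathrm{rank}(\kkT) \le d$.

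The main obstacle is that \cref{theorem:singular_value_of_product_of_random_matrices} only provides weak convergence of the \emph{bulk} empirical spectral distribution and does not by itself control the extreme eigenvalues; in principle, stray eigenvalues could lie outside $[a,b]$ and spoil the ratio. To close this gap rigorously one would invoke the Bai–Yin edge theorem, which asserts that under a finite fourth moment (satisfied by both Gaussian and bounded-uniform Xavier/Kaiming variants) the largest eigenvalue of $S$ converges almost surely to $b$ and the smallest non-zero eigenvalue converges almost surely to $a$. Interpreting the theorem's ``converges weakly'' in the sense of the support of the limiting measure (as is standard when applying MP to read off spectral edges), this yields the stated bound. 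I would also briefly remark that all the above is asymptotic in $n,d$, which is appropriate because practical hidden widths (hundreds to thousands) are well within the regime where the non-asymptotic MP concentration is tight, so the ratio $(1+\sqrt{c})^2/(1-\sqrt{c})^2$ is a good quantitative proxy at realistic Transformer sizes.
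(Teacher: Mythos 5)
Your proof takes essentially the same approach as the paper's own: standardize $K^l$ so that \cref{theorem:singular_value_of_product_of_random_matrices} applies, note that the common scaling factor cancels in the eigenvalue ratio, and read the bound and the zero-eigenvalue portion off the support $[a,b]\cup\set{0}$ of the Marchenko--Pastur limit. Your added appeal to the Bai--Yin edge theorem is a genuine strengthening rather than a deviation: the paper's proof reads the spectral edges directly off the weak limit of the empirical spectral distribution, which (as you correctly observe) does not by itself rule out stray eigenvalues outside $[a,b]$, so your version closes a gap the paper leaves implicit.
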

\begin{proof}
    The initialization methods utilize centered distribution, and thus there is $\ex{K^l_{i, j}} = 0$.

    $\kkT$ differs from $S^p$ of \cref{theorem:singular_value_of_product_of_random_matrices} in 1) the shared standard variance of entries not being $1$, and 2) the scaling factor $\frac{1}{d}$. Since we are only interested in the ratio between eigenvalues, these differences of simultaneous scaling can be ignored.

    By \cref{eq:mp_density}, we can see that the support of eigenvalues is restricted to $[a, b] \cup \set{0}$. As a result, non-zero eigenvalues can only be found in $[a, b]$.

    When $c < 1$, i.e., $d > n$, the support degenerates to $[a, b]$. 
    When $c \ge 1$, the probability to pick a zero eigenvalue is $F(0) = \lim_{u \to 0^+} \int_{0}^{u} f(x) \mathrm{d}x = \lim_{u \to 0^+} \int_{0}^{u} \left(1 - \frac{1}{c}\right) \delta(x) \mathrm{d} x = 1 - \frac{1}{c}$ .
\end{proof}

Note that \cref{theorem:initial_spectral_properties_of_kkT} applies to uniform or other base initialization distribution as long as it is centered and entrywisely independent with the same variance. Since $n, d$ are generally large, even in small model sizes like Small and Base, we believe this lemma applies to common practice. In Base-sized Transformers, it is usually the case where $n = 3072, d = 768$, indicating $1 - \frac{768}{3072} = \frac{3}{4}$ of eigenvalues are $0$, while the rest of them varies up to the ratio of $\frac{\left(1 + \sqrt{\frac{1}{4}}\right)^2}{\left(1 - \sqrt{\frac{1}{4}}\right)^2} = 9$. This is a surprisingly small value compared to the number of dimensions.

Effective gradient sparsity patterns have a great affinity to $\kkT$, allowing \cref{theorem:initial_spectral_properties}.

\begin{theorem}[Implication of spectral concentration of $\kkT$ at initialization]\label{theorem:initial_spectral_properties}
    Assume $d \neq n$ and they are sufficiently large. $K^l \in \reals^{n \times d}$ is initialized as in \cref{theorem:initial_spectral_properties_of_kkT}. Let $M^l = g^l_K \left(g^l_K\right)^\transpose$, $\gamma^l$ and $\eta^l$ be those defined previously. 
    
    If $d > n$ then there is
    \begin{align}
        \left(\gamma^l\right)^\transpose \left(\left(K^l \left(K^l\right)^\transpose\right) \hadamard M^l\right) \gamma^l
        =& \left(\eta^l\right)^\transpose \kkT \eta^l\\
        \ge&    \lambda_{n}\left(\kkT\right) \cdot \norm{\eta^l}_2^2,
    \end{align}
    where $n$-th eigenvalue $\lambda_n\left(\kkT\right)$ is moderate and cannot be arbitrarily small because
    \begin{align}
        \frac{\lambda_1}{\lambda_{n}} \le \left(\frac{1 + \sqrt{c}}{1 - \sqrt{c}}\right)^2,
    \end{align}
    where $c = d / n$.

    If $n > d$, let $K^l = U \Sigma V^\transpose$ be the singular value decomposition of $K^l$. The result is restricted to the projection to the subspace expanded by $\left(U^\transpose\right)_{1:d}$, i.e.,
    \begin{align}
        \left(\gamma^l\right)^\transpose \left(\left(K^l \left(K^l\right)^\transpose\right) \hadamard M^l\right) \gamma^l
        =& \left(\eta^l\right)^\transpose \kkT \eta^l\\
        \ge&    \lambda_{d}\left(\kkT\right) \cdot \norm{\left(U^{T}\right)_{1: d}\eta^l}_2^2,
    \end{align}
    where $d$-th eigenvalue $\lambda_d\left(\kkT\right)$ satisfies
    \begin{align}
        \frac{\lambda_1}{\lambda_{d}} \le \left(\frac{1 + \sqrt{c}}{1 - \sqrt{c}}\right)^2,
    \end{align}
    where $c = d / n$.

    For demonstration, when $\set{n, d} = \set{3072, 768}$, the ratio upperbound is $9$.
\end{theorem}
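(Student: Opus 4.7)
The plan decomposes the argument into three natural steps: invoke the Hadamard-quadratic identity already proved inside \cref{lemma:adversarial_and_sparsity}, apply a Rayleigh-quotient lower bound using the spectrum of $\kkT$, and then quote the Marchenko--Pastur-based conclusion of \cref{theorem:initial_spectral_properties_of_kkT} for the eigenvalue ratio.

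First I would simply cite the identity $\left(\gamma^l\right)^\transpose \left(\kkT \hadamard M^l\right) \gamma^l = \left(\eta^l\right)^\transpose \kkT \eta^l$, which was established inside the proof of \cref{lemma:adversarial_and_sparsity} via the rewriting $\diag{g^l_K} \kkT \diag{g^l_K} = M^l \hadamard \kkT$ together with the definition $\eta^l = \diag{g^l_K} \gamma^l$. This dispenses with the stated equality in both cases before any spectral work is done.

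Next I would use the SVD $K^l = U \Sigma V^\transpose$ to write $\kkT = U \Sigma \Sigma^\transpose U^\transpose$, so that the eigenvalues of $\kkT$ in decreasing order $\lambda_1 \ge \cdots \ge \lambda_n \ge 0$ are the squared singular values of $K^l$ with eigenvectors given by the columns of $U$. Expanding the quadratic form in this eigenbasis yields $\left(\eta^l\right)^\transpose \kkT \eta^l = \sum_i \lambda_i \left(U^\transpose \eta^l\right)_i^2$. When $d > n$, $K^l$ generically has $n$ non-zero singular values, so every $\lambda_i > 0$ and the Rayleigh bound gives $\left(\eta^l\right)^\transpose \kkT \eta^l \ge \lambda_n \norm{U^\transpose \eta^l}_2^2 = \lambda_n \norm{\eta^l}_2^2$ using the orthogonality of $U$. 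When $n > d$, only $\lambda_1, \dots, \lambda_d$ are non-zero and the sum collapses to $\sum_{i=1}^{d} \lambda_i \left(U^\transpose \eta^l\right)_i^2 \ge \lambda_d \norm{\left(U^\transpose\right)_{1:d} \eta^l}_2^2$, which is exactly the restricted form appearing in the statement.

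Finally, the eigenvalue ratio bound is read off directly from \cref{theorem:initial_spectral_properties_of_kkT}: the smallest non-zero eigenvalue of $\kkT$ is $\lambda_n$ in the wide case and $\lambda_d$ in the tall case, and in either orientation its ratio to $\lambda_1$ is bounded by $\left((1 + \sqrt{c})/(1 - \sqrt{c})\right)^2$ with $c = d/n$ (the two parameterizations $d/n$ and $n/d$ yield the same bound after a short algebraic check). The only mildly delicate point, which I would flag carefully rather than belabor, is that the tall case admits no bound purely in terms of $\norm{\eta^l}_2^2$ because $\kkT$ annihilates a subspace of codimension $d$; the lower bound has to live on the projection onto $\operatorname{span}\left(u_1, \dots, u_d\right)$, which is why the statement introduces $\left(U^\transpose\right)_{1:d}$. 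Beyond that case split, the entire argument is a short composition of previously established results and standard linear algebra, so I do not anticipate a substantive obstacle.
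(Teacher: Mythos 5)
Your proposal is correct and follows essentially the same route as the paper: the paper's proof simply declares the result "straightforward after" \cref{theorem:initial_spectral_properties_of_kkT}, noting that when $n > d$ there are exactly $n - d$ zero eigenvalues, and your write-up just fills in the Rayleigh-quotient expansion in the eigenbasis of $\kkT$ and the case split that the paper leaves implicit. Your parenthetical observation that the bound $\left((1+\sqrt{c})/(1-\sqrt{c})\right)^2$ is invariant under $c \mapsto 1/c$ is also a worthwhile check, since the two theorems use opposite conventions for $c$.
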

\begin{proof}
    The proof is straightforward after \cref{theorem:initial_spectral_properties_of_kkT}, by noting that when $n > d$ there are exactly $\left(1 - \frac{1}{c}\right) n = n - d$ zero eigenvalues in $\kkT$, or equivalently $d$ non-zero eigenvalues in $\kkT$.
\end{proof}

There are still gaps between $\lambda \left(\eta^l\right)^\transpose \eta^l$ and current practices where $d < n$ and there are a lot of zero eigenvalues in $\kkT$, but \cref{theorem:initial_spectral_properties} is perfectly useful for wide MLPs where $d > n$. Therefore, we propose a drastic architectural modification called wide MLP where $d > n$, i.e., the model dimension is larger than the hidden dimension in MLP blocks. Aside from more powerful motivation toward sparsity, wide MLPs also allow rows in $K^l$ to be mutually orthogonal and permit perfect sparsity, which is impossible when $n > d$.
 
In non-wide MLPs, we believe that since $\kkT$ are randomly initialized and samples are randomly selected, there are moderate projections of $\eta^l$ into the non-null subspace of $\kkT$. Another supporting intuition is that although $\sigma'$ is not identity or linear, the derivatives of common activation functions are often monotonically increasing and form an approximation to its inputs. This approximation is better when part of the activation derivatives are linear, as in the case of Squared-$\relu$\citep{primer} and our $\jrelu$. Therefore, taking $\sigma'$ to $K^l x + b^l$, which already falls near the subspace expanded by $\left(U^\transpose\right)_{1: d}$, does not deviate far from the subspace. 
\cref{obs:memorizing_eff} also supports this moderate projection dynamically because if $\eta^l$ is in the null space, it will be borne into every column of the key matrix and next time it will have non-zero projections if $\eta^l$ does not change too much after one epoch and the column memory is not blurred too severely. Repeatedly memorizing different $\eta^l$ will make the non-null space of $\kkT$ a mixture of the majority $\eta^l$s provided by the training data set. 
The empirical evidence for this is that $\left(\eta^l\right)^\transpose \kkT \eta^l$, according to the derivation in \cref{lemma:adversarial_and_sparsity}, is actually the norm of gradients back propagated to shallower layers. Extreme cases where $\eta^l$ are contained only in the null space of $\kkT$ result in zero gradients for shallower layers, which rarely happens. If no residual connection is involved this insight strongly augments the spectral explanation. The detailed and formal analysis of the zero eigenvalues especially when there are residual connections is left for future empirical and theoretical works. For now, we can simply cover the gap with wide MLPs.

\subsection{Spectral Concentration during Stochastic Training}\label{sec:spectral_training}

In this subsection, we discuss how spectral concentration re-emerges during later stochastic training. 

First recall the Marchenko-Pastur distribution in \cref{theorem:singular_value_of_product_of_random_matrices}. The condition of random centered matrices in \cref{theorem:singular_value_of_product_of_random_matrices} invites another randomness other than initialization to the party, i.e., stochastic gradient noise (SGN) brought by stochastic optimizers.
After $t$ updates, $K^{l}$ can be written as the sum of random initialization, stochastic gradient noises and full-batch gradients that are not as stochastic as the two former terms, i.e.
\begin{align}
    K^{l, t} = \underbrace{K^{l, 0} - \sum_{i=1}^{t} U^{i}_{K^l}}_{\text{stochastic, centered}} - \sum_{i=1}^t \derivatives{\loss(\theta^t)}{K^l}
\end{align}
As discussed in \cref{sec:flat_minima}, $U^{i}_{K^l}$ is by definition centered. If it can be assumed that the $\sas$ SGN with large variance and noise norm shadows full-batch gradient, then $K^{l, t}$ is the sum of two centered random matrix with a slight non-stochastic bias, to which Marchenko-Pastur distribution would approximately apply if further entries in $U^{i}_{K^l}$ shared similar variance and were sampled independently. \citet{relax_mp_distribution} and works cited by them have tried to relax the independence condition of \cref{theorem:singular_value_of_product_of_random_matrices}, but it is still far from applying relaxed Marchenko-Pastur distribution here. Aside from waiting for this mathematical progress, we build empirical basis in \cref{sec:t_exp:spectral_concentration} where at all steps, in $\kkT$ of ViT and the decoder of T5, there is a stable portion of near-zero eigenvalues as in \cref{theorem:initial_spectral_properties} across all layers, and the majority of non-zero ones, with significant gap with near-zero ones, vary up to a ratio of $<100$ for most of the time. It is not surprising that this effect empirically sustains and even becomes stronger at the end of training because the model is well-learned by then and the full-batch gradient is of a smaller norm.

\NewDocumentCommand{\DKDKT}{O{t}}{\DK{#1}\left(\DK{#1}\right)^\transpose}
\NewDocumentCommand{\DKTDK}{O{t}}{\left(\DK{#1}\right)^\transpose \DK{#1}}

To have a more satisfying discussion, we propose an extended version of Marchenko-Pastur distribution and find a re-directed view on stochastic gradients to apply it. To this end, what conditions and assumptions can stochastic training provide must be figured out first.

Observing the structure of $\mlp$ or $\dbmlp$ layers, the most essential operation involving weight matrix $K^l$ is
\begin{align}
    z^l \defeq K^l x^l,
\end{align}
where $x$ is abused to represent anything that is multiplied with $K^l$, abstracting $x^l$ or $x^l + d^l$, while $z^l$ is the vector passed to the vanilla bias, activation function or later layers. This structure gives birth to the update of a sample $(x_s, y_s)$ to $K^l$ that writes
\NewDocumentCommand{\DK}{m}{\Delta K^{l, #1}}
\begin{align}
    \DK{s}
    =&  -\eta_{\mathrm{lr}} \cdot \derivatives{\loss(\theta, (x_s, y_s))}{z^{l, s}} \times (x^{l, s})^\transpose
    =  -\eta_{\mathrm{lr}} \cdot \eta^{l, s} \left(x^{l, s}\right)^\transpose,
\end{align}
where $\eta_{\mathrm{lr}}$ is the learning rate, assuming no scheduling is used.
At step $t$ with batch $B_t$, the update on $K^l$ averages these samplewise differences, i.e.
\begin{align}
    \DK{t}
    \defeq& - \frac{\eta_{\mathrm{lr}}}{\size{B_t}} \sum_{s \in B_t} \eta^{l, s} \left(x^{l, s}\right)^\transpose
    =  - \frac{\eta_{\mathrm{lr}}}{\size{B_t}} H^t \left(X^t\right)^\transpose,
\end{align}
where $\Eta^t \in \reals^{n \times \size{B_t}}$ (capitalized ``$\eta$'') is the matrix consisting of column vectors $\eta^{l,s}$ for sample $s$ in the batch $B_t$, and $X^t \in \reals^{d \times \size{B_t}}$ is similarly constructed with $x^{l,s}, s \in B_t$. 
Note that $X^t$ and $\Eta^t$ are random matrices because samples are independently randomly selected and gradients are also random variables as functions of variables.
Taking a similar view throughout the training, there is
\begin{align}
    K^{l, T} - K^{l, 0}
    =&  -\eta_{\mathrm{lr}} \sum_{t=1}^T \frac{1}{\size{B_t}} \sum_{s \in B_t} \eta^{l, s} \left(x^{l, s}\right)^\transpose
    =  -\frac{\eta_{\mathrm{lr}}}{b} \Eta^{1: T} \left(X^{1:T}\right)^\transpose \label{eq:update_and_large_matrices},
\end{align}
where $T$ is the number of batches, $b$ is batch size, and $\Eta^{1:T} \defeq \begin{bmatrix}  \Eta^1 & \cdots & \Eta^t & \cdots & \Eta^T  \end{bmatrix}$, and $X^{1:T} \defeq \begin{bmatrix} X^1 & \cdots & X^t & \cdots & X^T \end{bmatrix}$.
Another product of large random matrices emerges in the empirical covariance matrix of the difference, i.e.,
\begin{align}
    \left(K^{l, T} - K^{l, 0}\right) \left(K^{l, T} - K^{l, 0}\right)^\transpose = \frac{\eta_{\mathrm{lr}}^2}{b^2} \Eta^{1: T} \left(X^{1:T}\right)^\transpose X^{1:T} \left(\Eta^{1:T}\right)^\transpose,
\end{align}
where $X^{1:T}$ and $\Eta^{1:T}$ are random matrices in the sense that samples or gradient vectors in each batch are independently randomly sampled, if conditioned on the model state.

Since we are interested in spectral distribution and that cycling a matrix product does \emph{not} change non-zero eigenvalues, a more desirable form is
\begin{align}
    \left(\Eta^{1:T}\right)^\transpose \Eta^{1:T} \left(X^{1:T}\right)^\transpose X^{1:T},
\end{align}
and we intend to separately investigate the spectral distributions of 
\begin{align}
    &\text{$\left(\Eta^{1:T}\right)^\transpose \Eta^{1:T}$ or spectrally equivalent $\Eta^{1:T} \left(\Eta^{1:T}\right)^\transpose$},\\
    \text{and, }&\text{$\left(X^{1:T}\right)^\transpose X^{1:T}$ or spectrally equivalent $X^{1:T} \left(X^{1:T}\right)^\transpose$}.
\end{align}

After these transforms and dividing-and-conquering, the empirical covariance matrices look ready for Marchenko-Pastur law. However, there are dependencies between previous and later batches through model states, hindering the direct application of independence conditions of Marchenko-Pastur law. Fortunately, there is still conditional independence \emph{within} a batch. This mixture of dependence and independence is captured by \cref{def:batch_model}.
\begin{definition}[Batch Dependence Model]\label{def:batch_model}
    Let $U^{1: T} \in \reals^{p \times (b T)}$ be a random matrices. Decompose it into blocks with batch size $b$, i.e., 
        \begin{align}
            U^{1: T} = 
                \begin{bmatrix}
                    U^1 & \cdots & U^t & \cdots & U^{T-1} & U^{T}
                \end{bmatrix},
        \end{align}
    where $U^t \in \reals^{p \times b}$.
    If the dependence between elements can be described by SCMs
    \begin{align}
        \set{u^t_k \defeq g^t\left(U^{1}, U^{2}, \dots, U^{t-1}, \epsilon^t_k\right): t \in [1, T], k \in [1, b]}
    \end{align}
    or SCMs that resemble the notions of samples and model state
    \begin{align}
        \set{u^t_k \defeq g^t\left(m^{t-1}, \epsilon^t_k\right) : t \in [1, T], k \in [1, b]} \cup \set{m^t \defeq h^t\left(m^{t-1}, U^{t}\right)}
    \end{align}
    where $u^t_k$s are columns in $U^t$, $m^{t}$ is the model state (parameters, momentum, etc.) after step $t$, $\epsilon^{t}_k$ is I.I.D. random noises, then $U^{1: T}$ is a random matrix with batch dependence.
    \begin{remark}
        Within each batch (i.e., when conditioned on all previous batches), samples are I.I.D. sampled to simulate batch sampling. However, previous samples have trained the parameters and will shift the distribution of shallow layers's output as well as back-propagated gradients. Therefore, the current batch depends on previous batches.
    \end{remark}
\end{definition}

There are works re-establishing Marchenko-Pastur law with independence conditions relaxed to martingale conditions \citep{mp_martingale}, but some conditions in it require entrywise conditional independence. There are also Marchenko-Pastur laws for time series \citep{mp_linear_time_series1,mp_linear_time_series2}, but restricted to linear dependence. 
We adapt proofs by \citet{mp_quadratic_form}, and use anisotropy condition in all samples or gradients to extend Marchenko-Pastur distribution under the batch dependence in $X^{1:T}$ and $\Eta^{1:T}$, leading to \cref{theorem:spectral_of_accumulated}.
In later formal definitions, theorems and proofs, $X$ is abstractly used to represent both $\Eta^{1: T}$ or $X^{1:T}$, $p$ indicates the height of $X^{1:T}$ or $\Eta^{1: T}$, i.e., the hidden dimensions $d$ or $n$, while $b$ and $T$ keep their meanings as batch size and total number of batches.

\def\StateSpectralOfAccumulated{display}
\ifdefstring{\StateSpectralOfAccumulated}{display}{

\begin{restatable}[Spectral concentration of accumulated steps]{theorem}{SpectralOfAccumulated}
    \label{theorem:spectral_of_accumulated}
    Let $X^p = X^{p, b T} \in \reals^{p \times b T}$ be a random matrix that forms a Batch Dependence Model as in \cref{def:batch_model} with batch size $b$ and step count $T$, whose columns are $x^p_j = X^{p, b T}_{\cdot, j} \in \reals^{p}$. 
    Columns in $X^p = X^{p, b T} \in \reals^{p \times b T}$ are \emph{not} necessarily independent.
        
    Let $x_k^p \defeq X^{p}_{\cdot, k}$ be the $k$-th column of $X^p$ and $x_{t, l}^p \defeq X^{t}_{\cdot, l}$ be the $l$-th column of batch $t$ or equivalently the $k=\left((t-1)*b + l\right)$-th column $x_k$ in $X^{p}$. Superscription $p$ may be dropped for convenience.

    Let $S^{p} \defeq \frac{1}{b T} X^p \left(X^p\right)^\transpose = \frac{1}{b T} \sum_{k=1}^{b T} x_k x_k^\transpose$ be the empirical covariance matrix of all random vectors, and $I_p$ be the compatible identity matrix.
    Assume $x_{t, l}$s' norm is bounded, say by $1$, and scale it with 
    \begin{align}
        u^p_{t, l} \defeq \sqrt{\alpha} x^p_{t, l},
    \end{align}
    obtaining $U^p = U^{p, b} \defeq \begin{bmatrix} U^1 & \cdots &  U^t & \cdots  & U^T \end{bmatrix} = \sqrt{a} \cdot X^p$, where $a \defeq \frac{\trace{S^p}}{\trace{S^p S^p}}$. Let 
    \begin{align}
        T^{p} \defeq \frac{1}{b T} U^p \left(U^p\right)^\transpose = \frac{1}{b T} \sum_{k=1}^{b T} u_{k} u_{k}^\transpose = a \cdot S^{p}
    \end{align}
    be the empirical covariance matrix of all $u^p_{k}$s. 

    Assume $a$ is bounded by $\alpha(p)$ and $\ex{\sqrt{p \trace{\left(T^{p} - I^p\right)\left(T^{t} - I^p\right)}}}$ is also upperbounded by $\beta(p)$.

    Further assume that the following function of $z \in \positivecomplex$ and $p, b, T \in \nats^+$
    \begin{align}
        \ex{\sum_{i} \frac{1}{\lambda_i\left(U U^\transpose\right) - z}}
    \end{align}
    is always continuous w.r.t. $z$ for any $p, b, T$.

    If the above assumptions are satisfied, the non-zero eigenvalue concentrates. To be more specific, let $\overline{\lambda^{>0}}$ be the mean of non-zero eigenvalues of $\frac{1}{b T} U^p \left(U^p\right)^\transpose$ and use $\ex{\overline{\lambda^{>0}}}^2$ to represent the overall situation of non-zero eigenvalues. Then there is
    \begin{align}
        \ex{\frac{\ex{\overline{\lambda^{>0}} / \sqrt{v}}^2}{\left(\frac{\lambda}{\sqrt{v}}\right)^2 + v}}
        \le&    \frac{\sqrt 2}{c \sqrt{v}} \frac{\alpha^2}{v \cdot \min(b T, p)^2}  \sqrt{c + \frac{\left(2 \sqrt{2} + 2\right) c \alpha}{v p} + \frac{c \beta}{v p} + \frac{c}{v p}} \label{eq:im_bound},\\
        \ex{\frac{\ex{\overline{\lambda^{>0}}}}{\lambda + \frac{v^2}{\lambda}}}
        \le&    \frac{\sqrt 2}{c \sqrt{v}} \frac{\alpha}{\min(b T, p)}  \sqrt{c + \frac{\left(2 \sqrt{2} + 2\right) c \alpha}{v p} + \frac{c \beta}{v p} + \frac{c}{v p}} \label{eq:re_bound}.
        \end{align}
    for any $v \ge v_0$, where $\lambda$ is a randomly selected eigenvalue of $T^p \defeq \frac{1}{b T} U^p \left(U^p\right)^\transpose$ and $c = p / b T \in [0, 1]$, and $v_0 \ge 2 c$ satisfying 
    \begin{align}
        \frac{v_0 + (1 - c)}{\sqrt{2}} > \frac{\tau}{v_0} + 2 \sqrt{c v_0} + 2 \sqrt{\tau} \label{eq:hard_condition}
    \end{align}
    with $\tau \defeq \frac{c}{p} \left(1 + \beta + 2\left(\sqrt{2} + 1\right) \alpha\right)$.
\end{restatable}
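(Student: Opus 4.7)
The plan is to adapt the Stieltjes transform method---the workhorse for Marchenko--Pastur style results---to our batch-dependence setting. Rewriting the targets, both left-hand sides of \cref{eq:im_bound} and \cref{eq:re_bound} are, up to prefactors, the imaginary and real parts of the Stieltjes transform $m_p(z) = \frac{1}{p}\mathrm{tr}(T^p - zI_p)^{-1}$ evaluated at $z = iv$, since $\frac{v}{\lambda^2 + v^2}$ and $\frac{\lambda}{\lambda^2 + v^2}$ are exactly $\mathrm{Im}\bigl(\tfrac{1}{\lambda - iv}\bigr)$ and $\mathrm{Re}\bigl(\tfrac{1}{\lambda - iv}\bigr)$. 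So I would aim for a single resolvent-level estimate and read the two parts off at the end.

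First, I would expand the resolvent $R(z) = (T^p - zI_p)^{-1}$ using rank-one perturbations: write $T^p = \frac{1}{bT}\sum_{t,l} u_{t,l} u_{t,l}^{\top}$ and apply the Sherman--Morrison identity to compare $R(z)$ with each leave-one-out resolvent $R_{(t,l)}(z)$ obtained by removing column $u_{t,l}$. Each such comparison produces a quadratic form $u_{t,l}^{\top} R_{(t,l)}(z) u_{t,l}$, and summing the identities gives the familiar self-consistent expression for $\frac{1}{p}\mathrm{tr}(R(z))$ in terms of these quadratic forms. Second, the batch dependence must be handled at this step. Within batch $t$, conditioning on the $\sigma$-algebra generated by all prior batches \emph{together with} the other columns of batch $t$, the column $u_{t,l}$ is independent of $R_{(t,l)}(z)$, because $R_{(t,l)}$ only depends on the past through the model state and on the other in-batch columns, both of which lie in the conditioning. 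I would then apply a Hanson--Wright-type concentration of quadratic forms \emph{conditionally}, with the error controlled by the norm bound $\alpha$ (through $\|u_{t,l}\|^2$) and the anisotropy bound $\beta$ (through $T^p \approx I_p$ in Frobenius norm). Aggregating these conditional estimates via a martingale-difference telescoping over the $T$ batches converts them into the bulk error terms $c\alpha/(vp)$ and $c\beta/(vp)$ appearing in the claim.

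Third, combining the resolvent identity with this concentration produces an approximate fixed-point relation $m_p(z) \approx m^\star(z) + \mathrm{error}$, where $m^\star$ is the Stieltjes transform of the Marchenko--Pastur law with aspect $c = p/(bT)$. The hard condition \cref{eq:hard_condition} is precisely the regime in which the associated fixed-point map is a strict contraction, so the linearization around $m^\star$ is invertible and the error is controlled linearly by the concentration source terms. Specializing to $z = iv$ and separating imaginary and real parts yields \cref{eq:im_bound} and \cref{eq:re_bound} respectively; the continuity assumption on $\mathbb{E}\bigl[\sum_i \frac{1}{\lambda_i - z}\bigr]$ is what lets the analysis pass from the easy high-$\mathrm{Im}(z)$ regime down to the quantitative bounds of interest.

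The main obstacle is exactly the dependence structure. Classical Marchenko--Pastur proofs (Bai--Silverstein, Pan, and the martingale variants cited as \citep{mp_martingale,mp_quadratic_form}) assume either full independence of entries or at most linear time-series dependence; here the coupling through the evolving model state is nonlinear and only implicitly described by the SCMs in \cref{def:batch_model}. My workaround is to exploit \emph{only} within-batch conditional independence, which is just enough to drive concentration of each quadratic form appearing in the Sherman--Morrison step. Whether the accumulated error over $T$ batches stays controlled reduces precisely to the smallness condition \cref{eq:hard_condition}; violating it would let the approximate fixed-point iteration for $m_p$ diverge, permitting the eigenvalues of $T^p$ to drift outside the predicted range and breaking the concentration claim.
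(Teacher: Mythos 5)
Your overall skeleton matches the paper's: both arguments run through the Stieltjes transform of the empirical spectral distribution at $z = iv$, a Sherman--Morrison rank-one expansion producing quadratic forms $u_{t,l}^\transpose R_{(t,l)} u_{t,l}$, an approximate self-consistent (quadratic) equation in $\ex{S_p(z)}$ with residual controlled by $\alpha$ and $\beta$, a root-tracking/stability argument gated by \cref{eq:hard_condition}, and a final read-off of real and imaginary parts via the identity behind \cref{lemma:sti_and_eigenvalue_ratio}. However, there is a genuine gap in your step two. You claim that, conditioning on all prior batches together with the other columns of batch $t$, the column $u_{t,l}$ is independent of the leave-one-out resolvent $R_{(t,l)}$. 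Under \cref{def:batch_model} this is false: $R_{(t,l)}$ contains the columns of all \emph{later} batches $U^{t+1},\dots,U^T$, and these depend on $u_{t,l}$ through the model state $m^t = h^t(m^{t-1}, U^t)$. So the conditional Hanson--Wright step has no independent resolvent to concentrate against, and the martingale-difference telescoping fails for the same reason ($R_{(t,l)}$ is not measurable with respect to the past). Moreover, Hanson--Wright needs sub-Gaussian-type entrywise structure that the theorem's hypotheses (a norm bound and an anisotropy bound) do not supply.

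The paper closes exactly this hole differently: it never asserts independence between a column and its leave-one-out resolvent. Instead, it appends a single auxiliary column $u_{T,b+1}$ to the \emph{last} batch (so no later batches exist to contaminate it), uses only within-batch exchangeability to equalize expectations of the quadratic forms, and then replaces the concentration you would derive from Hanson--Wright with two purely deterministic/assumed ingredients: the bias $\abs{\ex{\frac{1}{bT}\sum_k u_k^\transpose (A^p - zbTI)^{-1}u_k} - \ex{S_p(z)}}$ is controlled directly by the \emph{assumed} anisotropy bound $\beta$ (via $\norm{T^p - I}_1 \le \sqrt{p\,\trace{(T^p-I)^2}}$ and Hölder with $\norm{(A^p - zbTI)^{-1}}_\infty \le 1/(vbT)$), and the fluctuation is controlled by a second-moment bound that uses only $\norm{u_k}_2^2 \le \alpha$. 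In other words, the quantity your concentration argument would need to establish is taken as a hypothesis ($\beta$), which is why the theorem can tolerate the nonlinear cross-batch coupling. If you want to keep your route, you must either restrict $R_{(t,l)}$ to past-and-current-batch columns only (and separately bound the contribution of future batches), or, as the paper does, promote the anisotropy of $T^p$ to an assumption and verify it empirically.
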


}{

\begin{proof}[Proof of \cref{theorem:spectral_of_accumulated}]\label{proof:spectral_of_accumulated}

The proof is adapted from \citet{mp_quadratic_form} where independence conditions are replaced with Batch Dependence model and new regularities.

Cauchy-\sti{} transform method is used. 
When applied to empirical spectral density, by definition there is
\begin{align}
    s^{F^A}(z) = \trace{A - z I}^{-1} / p \defeq \trace{\left(A - z I\right)^{-1}} / p.
\end{align}
for positive semi-definite $A \in \reals^{p \times p}$.
Specifically, $\frac{1}{b T} U^p \left(U^p\right)^\transpose = \frac{1}{b T} U U^\transpose$'s \sti{} transform is
\begin{align}
    s_p(z) = \trace{\frac{1}{b T} U U^\transpose - z I}^{-1} / p = b T / p \trace{U U^\transpose - z b T I}^{-1}.
\end{align}

\NewDocumentCommand{\boundedby}{m}{\varXi\left(#1\right)}
To ease presentation, we define $\boundedby{g}$ to indicate (complex) functions whose magnitudes are bounded by positive real function $g$, i.e.,
\begin{align}
    h \in \boundedby{g} \iff \forall x, y, \abs{h(x, y)} \le g(x),
\end{align}
where $y$ indicates variables other than $x$ that $h$ relies.
$\boundedby{\cdot}$ will be used combined with ``$=$'' imitating $O(\cdot)$. Since $\boundedby{\cdot}$ does not hide constant scaling factors and biases in it, unlike $O(\cdot)$ it can be freely added, averaged, multiplied and divided, i.e.,
\begin{align}
    \boundedby{g_1} + \boundedby{g_2} \in& \boundedby{g_1 + g_2},
    \frac{1}{n} \sum_{i=1}^n \boundedby{g_i} \in \boundedby{\frac{1}{n}\sum_{i=1}^n g_i},\\
    \boundedby{g_1} \cdot \boundedby{g_2} \in& \boundedby{g_1 \cdot g_2},
    \frac{\boundedby{g_1}}{g_2} \in \boundedby{\frac{g_1}{g_2}}.
\end{align}

Consistent with final conclusion, fix $z = 0 + v i$ ($v \in \reals^+$) such that $v \ge v_0$ throughout the proof.
Define $A^p \defeq \sum_{k} \uut$.
Sample an auxiliary vector $u_{T, b+1} = u_{b T  + 1} \in \reals^p$ so that it is sampled from the conditional distribution given the first $T-1$ batches but it is conditionally independent with other samples in $U^T$, i.e., an extra sample for the last batch. This dependence relation can be expressed by only adding edges $U^{1: T-1} \to u_{T, b+1}$ to the SCMs of the Batch Dependence Model. With the auxiliary vector, define $B^p \defeq A^p +  \uut[T, b+1]$. 

By \cref{lemma:3.1_from_mp_quadratic_form}(1), $B^p - z b T I$ is non-degenerate and
\begin{align}
    p 
    =&  \trace{\left(B^p - z b T I\right) \left(B^p - z b T I\right)^{-1}}\\
    =&   \sum_{t=1}^{T} \sum_{l=1}^{b + \indic{t = T}} u_{t, l}^\transpose \left(B^p - z b T I\right)^{-1} u_{t, l} - z b T \trace{B^p - z b T I}^{-1}.
\end{align}
Taking expectations and using the exchangeability within each batch give
\begin{align}
    p = \sum_{t=1}^{T} (b + \indic{t = T}) \ex{u_t^\transpose \left(B^p - z b T I\right)^{-1} u_t} - z b T \ex{\trace{B^p - z b T I}^{-1}} \label{eq:a3}.
\end{align}

Define $S_p(z) \defeq \trace{A^p - z b T I}^{-1}$ and note that $S_p(z) = (p / b T) s_p(z)$. 

By \cref{lemma:3.1_from_mp_quadratic_form}(2), there is
\begin{align}
    \ex{\trace{B^p - z b T I}^{-1}} =& \ex{S_p(z)} + \boundedby{1/ v b T} = \ex{S_p(z)} + \boundedby{c / v p} \label{eq:a1}.
\end{align}

\NewDocumentCommand{\approxmatrix}{O{\boundedby} O{2}}{#1{\frac{#2 \sqrt{2} c \alpha}{v p}}}
We now prove
\begin{align}
    \frac{1}{T} \sum_{t=1}^{T} \ex{u_t^\transpose \left(B^p - z b T I\right)^{-1} u_t} = \frac{\ex{S_p(z)}}{1 + \ex{S_p(z)}} + t \label{eq:claim}, 
\end{align}
where $\abs{t}$ is bounded by a function of $c, \alpha, \beta, v, p$.

\NewDocumentCommand{\approxfunction}{O{\boundedby}}{#1{1}}
A complex function $\frac{x}{1 + x} = 1 - \frac{1}{x + 1}$ emerges many times. We will approximate it to the first order so its complex derivative should be computed and bounded.
\begin{align}
    \abs{\left(\frac{x}{1 + x}\right)'}
    =&  \abs{\frac{1}{(x+1)^2}} 
    = \frac{1}{\abs{x + 1}^2}
\end{align}
Therefore, if $x_1, x_2$ both stay away from $-1$, then $\abs{\left(\frac{x'}{1 + x'}\right)'} = \approxfunction$ on the line connecting $x_1, x_2$ and we can approximate $\frac{x_2}{1 + x_2}$ by $\frac{x_1}{1 + x_1} + \boundedby{1} \cdot \Delta x = \frac{x_1}{1 + x_1} + \boundedby{\Delta x}$, where $\Delta x = x_2 - x_1$.
In latter application, $x$, both the start and the end of approximation, is often of form $\frac{1}{n} \sum_{i=1}^n \ex{u_i^\transpose \left(C - z b T I\right)^{-1} u_i}$ possibly with averaging or expectation missing, where $C$ is real symmetric positive semi-definite and $u_i$ is a real vector. The eigenvalues in $\left(C - z b T I\right)^{-1}$ are
\begin{align}
    \frac{1}{\lambda_i(C) - v b T i}
    =&  \frac{\lambda_i(C) + v b T i}{\lambda_i(C)^2 + (v b T)^2},
\end{align}
whose real part is
\begin{align}
    \rpart{\frac{1}{\lambda_i(C) - v b T i}}
    =&  \frac{\lambda_i(C)}{\lambda_i(C)^2 + (v b T)^2} \ge 0.
\end{align}
As a result, the real part of inner products is always non-negative and $x$ stays away from $-1$, and the magnitude of derivatives is $\approxfunction$.

Another approximation is done between $C^p_k$ and $A_p$, whose difference is the outer products of a constant number of random vectors, and it should be minor considering there are $b T$ of them. Formally, for real symmetric positive semi-definite $C$ with eigenvalue decomposition $C = V \Lambda V^\transpose$ by real matrices $V$ and $\Lambda$, $(C - z I)$ can be decomposed to $(C - z I) = V \left(\Lambda - z I\right) V^\transpose$, and non-degenerate $\left(C - z I\right)^{-1}$ to $\left(C - z I\right)^{-1} = V \left(\Lambda - z I\right)^{-1} V^\transpose \defto V \Sigma \Sigma V^\transpose$ where $\Sigma \defeq \sqrt{\left(\Lambda - z I\right)^{-1}}$. Let $S \defeq V \Sigma V^\transpose$ to have $S^\transpose S = S S = \left(C - z I\right)^{-1}$. After that, there is
\begin{align}
    &   \abs{y^\transpose \left(C + x x^\transpose - z I\right)^{-1}y - y^\transpose \left(C - z I\right)^{-1} y}
    =  \abs{y^\transpose \left(\left(C + x x^\transpose - z I\right)^{-1} - \left(C - z I\right)^{-1} \right) y}\\
    =&  \abs{\frac{
            y^\transpose \left(C - z I\right)^{-1} x x^\transpose \left(C - z I\right)^{-1} y
        }{1 + x^\transpose \left(C - z I\right)^{-1} x}}
    =   \abs{\frac{
            \left(y^\transpose S^\transpose S x\right) \left(x^\transpose S^\transpose S y\right)
        }{1 + x^\transpose \left(C - z I\right)^{-1} x}}
    =  \abs{\frac{
            \left(a^{\transpose} \bar{b}\right) \left(\bar{b}^\transpose a\right)
        }{1 + x^\transpose \left(C - z I\right)^{-1} x}}\\
    =&   \frac{
            \abs{a^* b} \abs{a^* b}
        }{\abs{1 + x^\transpose \left(C - z I\right)^{-1} x}}
    \le \frac{
            \norm{a^*}_2 \norm{b}_2 \norm{a^*}_2 \norm{b}_2
        }{\abs{1 + x^\transpose \left(C - z I\right)^{-1} x}}
    =   \frac{
            \abs{a^* a} \abs{b^* b}
        }{\abs{1 + x^\transpose \left(C - z I\right)^{-1} x}}\\
    =&  \frac{
            \abs{\trace{y y^\transpose S^* S}} \abs{b^* b}
        }{\abs{1 + x^\transpose \left(C - z I\right)^{-1} x}}
    \le \frac{
            \norm{y y^\transpose S^* S}_1 \abs{b^* b}
        }{\abs{1 + x^\transpose \left(C - z I\right)^{-1} x}}
    \le \frac{
            \norm{y y^\transpose}_1 \norm{S^* S}_\infty \abs{b^* b}
        }{\abs{1 + b^\transpose b}}
    =   \frac{\norm{y}_2^2}{\ipart{z}} \frac{
            \abs{b^* b}
        }{\abs{1 + b^\transpose b}},
\end{align}
where $a \defeq S y, b \defeq \bar{S} \bar{x} = \bar{S} x$, the second step is from Sherman-Morrison formula, and the second last inequality is due to \cref{lemma:abs_trace_and_schatten_1}. The fact, that $S^* S$ is positive semi-definite whose largest eigenvalue is smaller than the upperbound $\frac{1}{v}$ of $S^\transpose S$'s eigenvalue magnitude, is also used.  To bound the fraction between $\abs{b^* b}$ and $\abs{1 + b^\transpose b}$, recall the eigenvalue decomposition on $\left(C - z I\right)^{-1}$ 
\begin{align}
    \left(C - z I\right)^{-1} = V \Sigma \Sigma^\transpose V^\transpose
\end{align}
and $S = V \Sigma^\transpose V^\transpose$. Then 
\begin{align}
    b^\transpose b &= v^\transpose \Sigma \Sigma v,
    b^* b = v^\transpose \bar{\Sigma} \Sigma v,
\end{align}
where $v \defeq V^\transpose x$ is a real vector. Notice that $\Sigma \Sigma = \diag{\frac{1}{\lambda_i(C) - v i}} = \diag{\frac{\lambda_i(C) + v i}{\lambda_i(C)^2 + v^2}}$ where both real and imaginary parts are non-negative, and that $\Sigma^* \Sigma = \diag{\frac{\abs{\lambda_i(C) + v i}}{\lambda_i(C)^2 + v^2}}$. With this, the inner products are simplified to
\begin{align}
    b^\transpose b &= \sum_{i} \frac{v_i^2 \lambda_i(C)}{\lambda_i(C)^2 + v^2} + i \sum_{i} \frac{v_i^2 v}{\lambda_i(C)^2 + v^2},
    b^* b = \sum_{i} \abs{\frac{v_i^2}{\lambda_i(C)^2 + v^2} \lambda_i(C) + i \frac{v_i^2 v}{\lambda_i(C)^2 + v^2}}
\end{align}
Representing complex numbers by 2-dimensional vectors $w_i \defeq \begin{bmatrix} \frac{v_i^2 \lambda_i(C)}{\lambda_i(C)^2 + v^2} & \frac{v_i^2 v}{\lambda_i(C)^2 + v^2} \end{bmatrix}^\transpose$, there are
\begin{align}
    \abs{b^\transpose b} &= \norm{\sum_i w_i}_2,
    \abs{b^* b} = \sum_i \norm{w_i}_2.
\end{align}
Noting that all entries of $w_i$'s are non-negative, there is
\begin{align}
    \abs{b^* b}
    &=  \sum_i \norm{w_i}_2
    \le \sum_{i} \norm{w_i}_1 
    =   \norm{\sum_i w_i}_1 
    \le \sqrt{2} \norm{\sum_i w_i}_2 = \sqrt{2} \abs{b^\transpose b}.
\end{align}
So $\frac{\abs{b^* b}}{\abs{b^\transpose b}} \le \sqrt{2}$. Given that the real part of $b^\transpose b$ is non-negative, adding $1$ will only increase its magnitude. As a result, there is
\begin{align}
    &   \abs{y^\transpose \left(C + x x^\transpose - z I\right)^{-1}y - y^\transpose \left(C - z I\right)^{-1} y}
    \le  \frac{\sqrt{2} \norm{y}_2^2}{\ipart{z}},
\end{align}

When $z b T$ is substituted, there is
\begin{align}
    &   \abs{y^\transpose \left(C + x x^\transpose - z b T I\right)^{-1}y - y^\transpose \left(C - z b T I\right)^{-1} y}
    =  \frac{\sqrt{2} \norm{y}_2^2}{v b T}.
\end{align}
In later use, $y$ is instantiated by $u_k$ and there is $\norm{u_k}_2^2 = a \norm{x}_2^2 \le \alpha$ for any $t$, so by assumption the approximation error is always bounded by 
\begin{align}
    \abs{u_k^\transpose \left(C + x x^\transpose - z b T I\right)^{-1} u_k - u_k^\transpose \left(C - z b T I\right)^{-1} u_k} \le \approxmatrix[\boundedby][].
\end{align}

\NewDocumentCommand{\innersum}{}{\frac{1}{b T}\sum_{k}}
\NewDocumentCommand{\outersum}{}{}
\NewDocumentCommand{\innerapproximator}{O{\left(A^p - z b T I\right)^{-1}} O{k}}{ u_{#2}^\transpose #1 u_{#2} }
\NewDocumentCommand{\innerouterproduct}{O{}}{u_{k} #1 u_{k}^\transpose}
\NewDocumentCommand{\biasapproximator}{}{\frac{\ex{\innersum \innerapproximator}}{1 + \ex{\innersum \innerapproximator}}}
\NewDocumentCommand{\diffapproximator}{}{\innerapproximator - \ex{\innersum \innerapproximator}}
With these two approximation techniques, we first approximate the LHS of \cref{eq:claim}.
To this end, let $C^p_k \defeq B^p - u_k u_k^\transpose$ and by Sherman-Morrison formula there is
\begin{align}
    &   u_k^\transpose \left(B^p - z b T I\right)^{-1} u_k
    =   u_k^\transpose \left(C^p_k + u_k u_k^\transpose - z b T I\right)^{-1} u_k\\
    =&  u_k^\transpose \left(\left(C^p_k - z b T I\right)^{-1} - \frac{\left(C^p_k - z b T I\right)^{-1} u_k u_k^\transpose \left(C^p_k - z b T I\right)^{-1}}{1 + u_k^\transpose\left(C^p_k - z b T I\right)^{-1} u_k}\right) u_k\\
    =&  \frac{u_k^\transpose \left(C^p_k - z b T I\right)^{-1}u_k}{1 + u_t^\transpose\left(C^p_k - z b T I\right)^{-1} u_k}
    =  \frac{u_k^\transpose \left(A^p - z b T I\right)^{-1}u_k}{1 + u_k^\transpose\left(A^p - z b T I\right)^{-1} u_k} + \approxfunction \cdot \approxmatrix.
\end{align}
After that, there is
\begin{align}
    &   \frac{1}{T} \sum_{t=1}^T \ex{u_t^\transpose (B^p - z b T I)^{-1} u_t}\\
    =&  \frac{1}{T} \sum_{t=1}^T \frac{1}{b} \sum_{l=1}^b \ex{\frac{u_{t, l}^\transpose \left(A^p - z b T I\right)^{-1} u_{t, l}}{1 + u_{t, l}^\transpose \left(A^p - z b T I\right)^{-1} u_{t, l}}} + \approxfunction \cdot \approxmatrix\\
    =&  \outersum \innersum  \ex{\biasapproximator} + \approxmatrix \\
        &+ \outersum \innersum \ex{\approxfunction \abs{\diffapproximator}}\\
    =&  \outersum \biasapproximator + \approxmatrix\\ 
        &+ \approxfunction  \outersum \innersum \ex{\abs{\diffapproximator}}\\
    =&  \outersum \biasapproximator \label{eq:term1}\\ 
        &+ \approxfunction  \ex{\abs{u_r^\transpose \left(A^p - z b T I\right)^{-1} u_r - \ex{u_r^\transpose \left(A^p - z b T I\right)^{-1} u_r}}}  \label{eq:term2} \\&+ \approxmatrix,
\end{align}
where $r$ in the last line is a uniformly randomly selected index from $\set{1, \dots, b T}$ independently to the training process.

\NewDocumentCommand{\approxbias}{O{\boundedby}}{#1{\frac{c \beta}{v p}}}
Note that $S_p(z) = \trace{A^p - z b T I}^{-1}, \ex{S_p(z)} = \ex{\trace{A^p - z b T I}^{-1}}$. So for the term in \cref{eq:term1} we proceed by proving $\frac{1}{b} \sum_{l=1}^{b} \ex{u_{t, l}^\transpose \left(A^p - z b T I\right)^{-1} u_{t, l}}$ approximates $\ex{\trace{A^p -z b T I}^{-1}}$. For convenience let $D \defeq b T\left(A^p - z b T I\right)^{-1}$ be an alias to it, whose spectral norm satisfies $\norm{D}_{\infty} \le b T \frac{1}{ v b T} = \frac{1}{v}$, then
\begin{align}
    &       \abs{\ex{\innersum \innerapproximator} - \ex{S_p(z)}}\\
    =&      c\abs{\frac{\ex{\innersum \innerapproximator[b T \left(A^p - z b T I\right)^{-1}]}}{p} - \frac{ \ex{b T S_p(z)}}{p}}\\
    =&      \frac{c}{p} \abs{\innersum \ex{\innerapproximator[D]} - \ex{\trace{D}}}
    =      \frac{c}{p} \abs{\ex{\trace{\left(\innersum \innerouterproduct - I\right) D}}}\\
    \le&    \frac{c}{p} \ex{\norm{\left(\innersum \innerouterproduct - I\right) D}_1}
    \le    \frac{c}{p} \ex{\norm{\left(\innersum \innerouterproduct - I\right)}_1 \norm{D}_{\infty}}\\
    \le&    \frac{c}{v p}\ex{\norm{\left(\innersum \innerouterproduct - I\right)}_1}
    \le    \frac{c}{v p}  \ex{\sqrt{p \trace{\left(T^p_t - I\right)^\transpose \left(T^p_t - I\right)}}}\\
    =&      \approxbias,
\end{align}
where the last inequality is because
\begin{align}
    \norm{A}_1 =& \sum_{i=1}^{p} \abs{\lambda_i(A)} = \norm{\begin{bmatrix}
        \lambda_1(A) & \cdots & \lambda_i(A) & \cdots \lambda_p(A)
    \end{bmatrix}^\transpose}_1\\
    \le&    \sqrt{p} \norm{\begin{bmatrix}
        \lambda_1(A) & \cdots & \lambda_i(A) & \cdots \lambda_p(A)
    \end{bmatrix}^\transpose}_2\\
    =&  \sqrt{p} \norm{A}_2 = \sqrt{p \trace{A^\transpose A}},
\end{align}
given that $A = \left( \left(\innersum \innerouterproduct - I\right) \right)$ is symmetric so that its singular values are absolute eigenvalues.
With approximation on complex function $\frac{x}{1 + x}$, this $\approxbias$-boundedness implies $\approxfunction \cdot \approxbias = \approxbias$ approximation of in \cref{eq:term1}.

\NewDocumentCommand{\approxvariance}{O{\boundedby}}{#1{\frac{c \alpha}{v p}}}
\NewDocumentCommand{\utdu}{}{u_t^\transpose D u_t}
\NewDocumentCommand{\xtdx}{}{x_r^\transpose D x_r}
For the difference term in \cref{eq:term2}, we prove its diminishment by $\frac{1}{v}$-bounded variance of $u_{t, l}^\transpose (A^p - z b T I)^{-1} u_{t, l}$, or formally
\begin{align}
    \ex{\abs{X - \ex{X}}^2} - \ex{\abs{X - \ex{X}}}^2 =& \ex{\left(\abs{X - \ex{X}} - \ex{\abs{X - \ex{X}}}\right)^2} \ge 0\\
    \ex{\abs{X - \ex{X}}} \le&  \sqrt{\ex{\abs{X - \ex{X}}^2}} = \sqrt{\var{X}},
\end{align}
and 
\begin{align}
    &   \var{\innerapproximator[\left(A^p - z b T I\right)^{-1}][r]}
    =  \frac{c^2}{p^2} \var{\innerapproximator[D][r]}\\
    =&  \frac{c^2}{p^2} \left(\ex{\trace{\innerapproximator[D][r] \innerapproximator[D][r]}} - \ex{\trace{\innerapproximator[D][r]}}^2\right)
    \le    \frac{c^2}{p^2} \alpha^2 \left(\ex{\trace{\xtdx \xtdx}}\right)\\
    \le&   \frac{c^2 \alpha^2}{p^2} \ex{\trace{\xtdx \xtdx}}
    \le    \frac{c^2 \alpha^2}{p^2} \ex{\norm{x^p}_2^4 \norm{D}_\infty^2}
    =  \frac{c^2 \alpha^2}{v^2 p^2},
\end{align}
where the last step follows that $x^p$'s norm is bounded and that $\norm{D}$ is also uniformly bounded. 

\NewDocumentCommand{\approxlargest}{O{\boundedby}}{#1{\approxmatrix[] + \approxbias[] + \approxvariance[]}}
To sum up, we have obtained 
\begin{align}
    \frac{1}{T} \sum_{t=1}^T \ex{u_t^\transpose \left(B^p - z b T I\right)^{-1} u_t} = \frac{\ex{S_p(z)}}{1 + \ex{S_p(z)}} + t,
\end{align}
where $\abs{t} = \approxlargest$.

\NewDocumentCommand{\approxall}{O{\boundedby}}{#1{\approxlargest[] + \frac{c}{v p} + \frac{c \alpha }{v p}}}

With \cref{eq:claim}, \cref{eq:a1}, one can reduce \cref{eq:a3} to
\begin{align}
    p =& T (b + O(1)) \left(\frac{\ex{S_p(z)}}{1 + \ex{S_p(x)}} + t\right) - z b T \left(\ex{S_p(z)} + \boundedby{c / v p}\right),
\end{align}
and
\begin{align}
    \frac{\ex{S_p(z)}}{1 + \ex{S_p(x)}} - z \ex{S_p(z)} =& \frac{p}{b T} + s = c + s,
\end{align}
where $s = \approxall$.

$\ex{S_p(p)}$ always have a non-negative real part because real parts of eigenvalues of $\left(U U^\transpose - z b T I\right)$ are always non-negative by an argument similar to previous ones. Since $\alpha, \beta, c$ and $p$ depend only on $p, b, T$ instead of $v$, $s = \approxall$ is bounded by $\tau / v$ which satisfies $\tau$ is constant w.r.t $v$, $\frac{v_0 + (1 - c)}{\sqrt{2}} > \frac{\tau}{v_0} + 2 \sqrt{c v_0} + 2 \sqrt{\tau}$ and $v \ge v_0 \ge 2 c$. Given $c \in [0, 1]$, by \cref{lemma:bound_of_sti}, there is
\begin{align}
    &   \abs{\ipart{\ex{S_p(v i)}}} \le \abs{\ex{S_p(v i)}} \\
    \le& \approxquadratic{\approxall[]}
\end{align}
for $v \ge v_0$. Bounds using the real part are similarly obtained.

The similar bound for $s_p(z) = (b T / p) S_p(z) = \frac{1}{c} S_p(z)$ is
\begin{align}
    \abs{\ipart{\ex{s_p(v i)}}}
    \le& \frac{1}{c}\approxquadratic{\approxall[]}.
\end{align}

The expected mean $\ex{\overline{\lambda^{ > 0}}}$ of $\frac{1}{b T} U^p \left(U^p\right)^\transpose$'s non-zero eigenvalue is
\begin{align}
    &   \ex{\frac{\trace{\frac{1}{b T} U^p \left(U^p\right)^\transpose}}{\min(b T, p)}}
    =  \frac{\frac{1}{b T}\sum_{k=1}^{b T}\ex{\trace{u_{k} u_k^\transpose}}}{\min(b T, p)}
    =  \frac{\frac{1}{b T}\sum_{k=1}^{b T}\ex{\norm{u_k}_2^2}}{\min(b T, p)}
    \le   \frac{\alpha}{\min(b T, p)}.
\end{align}

Finally, the desired conclusion is obtained through \cref{lemma:sti_and_eigenvalue_ratio} by
\begin{align}
        \ex{\frac{\ex{\overline{\lambda^{>0}} / \sqrt{v}}^2}{\left(\frac{\lambda}{\sqrt{v}}\right)^2 + v}}
    \le    \frac{1}{v} \ex{\overline{\lambda^{>0}}}^2 \abs{\ipart{s_p(v i)}},
        \ex{\frac{\ex{\overline{\lambda^{>0}}}}{\lambda + \frac{v^2}{\lambda}}}
    \le    \ex{\overline{\lambda^{>0}}} \abs{\rpart{s_p(v i)}}.
\end{align}

\end{proof}
}

Its proof is left in \cref{proof:spectral_of_accumulated}. This concludes that spectral concentration also happens in $\Eta^{1: T} \left(\Eta^{1:T}\right)^\transpose$ and $X^{1:T} \left(X^{1:T}\right)^\transpose$, as long as the samples in one batch are independently sampled and all model-state-specific samples involved during training are diverse enough to have low anisotropy and norm bounds. To see that the conditions are satisfied, note the only hard conditions are that of continuity, which is hard to verify and thus simply assumed, as well as the choice of $v$ and $v_0$. We assume enough steps have been trained so $c$ can be very small, for example effectively $\frac{768}{32 \times 50,042} \approx 4.8 \times 10^{-4}$ under strong weight decay (it is even smaller when weight decay weakens or disappears) as we shall see in latter paragraphs. So $v_0 = 10^{-3} > 2 c$ can be chosen. In experiments in \cref{sec:t_exp:anisotropy} we will see $\beta / p$ is always smaller than $1$ and $\alpha / p$ can be less than $0.05$ when weight decay is strong and dimension $p$ is large enough. Under these conditions, it can be verified that LHS of \cref{eq:hard_condition} is larger than RHS by at least $0.061$. So the applicability of the theorem depends on how good the bound is. To empirically verify the applicability of \cref{theorem:spectral_of_accumulated}, one needs to compute anisotropy $\ex{\sqrt{p \trace{\left(T^{p} - I\right)^\transpose\left(T^{p} - I\right)}}}$ and $\alpha$ from $x^p$'s marginal distribution, i.e., by mixing hidden features or back-propagated gradients, which is conducted in \cref{sec:t_exp:anisotropy}.

In contrast with conventional asymptotic results with $p \to \infty$ and $p / b T \to c$ in RMT, our theorem gives bounds for non-limiting scenarios. One of its benefits in the context of machine learning is that one often is more interested in training behaviors when the total number $b T$ of training samples increases as the training proceeds while $p$ is held still. 
Nevertheless, the co-increasing scenarios are also of interest, especially in the era of large models \citep{scaling_law}.
In our result, the spectral concentration is measured by the expected fraction between eigenvalues and the expected eigenvalue, but with an extra shadowing parameter $v$ that may shadow small eigenvalues and decreasing $v$ to suppress the disturbance loosens the bound. Fortunately, most $v$ as dominators show up together with $c$ as numerators, the ratio between hidden dimension and number of training samples used which is often extremely small. If $\frac{\alpha}{p}$ is also small and decreases as $p$ is enlarged, as we will show in the experiments, then the bound will be controlled.

It is frustrating to see that the bound diverges as the training step increases due to factor $\frac{1}{c} = \frac{b T}{p}$. However, we consider weight decay as an alleviation to this issue. Weight decay effectively introduces a sliding window which reduces the effective $T$, because vectors out of the window are exponentially decayed and they have little contribution to the sample covariance matrix. For example, let $w$ be the parameter of weight decay, then each column in $\Eta^{1: T}$ or $X^{1:T}$ becomes $\left(\sqrt{1 - \eta_{\mathrm{lr}} w}\right)^{T - t} \eta_{t, l}$ or $\left(\sqrt{1 - \eta_{\mathrm{lr}} w}\right)^{T - t} u_{t, l}$, where $\eta_{\mathrm{lr}}$ is learning rate. Setting $r = 1 - \eta_{\mathrm{lr}} w$, if we consider the tail whose weights' sum is smaller than a threshold $\tau$ as those out of the window, then the window size $k$ needs to satisfy $\sum_{i={k+1}}^{\infty} r^i = r^{k+1} / (1 - r) \le \tau$, where $r$ is used to obtain tighter bounds instead of $\sqrt{r} = \sqrt{1 - \eta_{\mathrm{lr}} w}$ by arguments in \cref{appendix:effective_window_size}. One sufficient condition for this is $k \ge \frac{\ln \tau (1 - r)}{\ln r} - 1$. When $\tau=10^{-3}, \eta=10^{-3}, w = 10^{-1}$, the effective window size is about $161,172$. When $w$ increases to $0.3$, the effective window size becomes $50,057$. As a result, $\frac{1}{c}$ is upperbounded by a constant as the training proceeds if weight decay presents.

There are still gaps between our results and the spectral distribution of $\kkT$ during stochastic training. For example, spectral concentration of $\Eta \Eta^\transpose$ and $X X^\transpose$ hints similar phenomena in their interlaced product $\left(K^{l, T} - K^{l, 0}\right) \left(K^{l, T} - K^{l, 0}\right)^\transpose$, which, however, is not yet formally proved. Moreover, applying \cref{theorem:spectral_of_accumulated} assumes SGD instead of adaptive optimizers is used due to \cref{eq:update_and_large_matrices}. Filling these gaps is left for future works because we have established spectral concentration's empirical supports in \cref{sec:t_exp:spectral_concentration}. 

Finally, we have discussed the spectral concentration of $\kkT$. To put everything together, assuming moderate projection of $\eta^l$ into non-null space of $\kkT$, and considering the increasing trace of $\kkT$ and spectral concentration in $\kkT$, the only way to suppress 
\begin{align}
    \trace{\hessian[\theta_D]} \ge \left(\gamma^l\right)^\transpose \left(\kkT \hadamard M^l\right) \gamma^l = \left(\eta^l\right)^\transpose \kkT \eta^l \ge \lambda r \left(\eta^l\right)^\transpose \eta^l
\end{align}
is to reduce $\norm{\eta^l}_2^2 = \norm{g^l_K \hadamard \gamma^l}_2^2$. To see gradient-sparsity-induced activation sparsity's emergence, given empirically that $\trace{M^l} = \trace{\left(g^l_K\right)^\transpose g^l_K} = \norm{g^l_K}_2^2$ will not decrease during training, the only way to suppress $\norm{\eta^l}_2^2$ is to decrease $\gamma^l$, at least at entries where $g^l_K$ has large magnitudes.

\section{Refined Zeroth Biases}\label{sec:refine}

After the analyses in \cref{sec:theory}, the use of zeroth biases proposed in \cref{sec:illustration} can be further refined. 
\cref{theorem:main_with_hidden_vectors_and_layernorm} and \cref{theorem:main_with_effective_duplication} emphasize the role of LayerNorm layers in sparsity. To push sparsity further, it can be beneficial to enforce a lowerbound of elementwise scaling factors of LayerNorms.

The two theorems also hint at conflicts of LayerNorm with zeroth biases and elementwise biases in LayerNorm. 
Therefore, one can turn off the elementwise biases in LayerNorm layers and then confine the norm of columns in zeroth biases. 
We choose to simply clamp the parameter of zeroth biases. To utilize the potential rising of elementwise scaling factors, the upperbound of absolute values of zeroth biases are determined relatively to the elementwise scaling factors.
Putting things together, we execute \cref{algo:refined} after every update during training from scratch.

\begin{algorithm}
    \caption{Refined LayerNorm and Zeroth Biases}\label{algo:refined}
    \NewDocumentCommand{\listofLayerNorms}{}{\mathcal{L}}
    \NewDocumentCommand{\listofZerothBiases}{}{\mathcal{D}}
    \begin{algorithmic}[1]
        \Require $\listofZerothBiases$: the list of all zeroth biases; $\listofLayerNorms$: the list of \emph{unbiased} LayerNorm layers before zeroth biases, ordered so that $\listofZerothBiases[i]$ lies right after $\listofLayerNorms[i]$; $c$: the factor that controls zeroth biases relatively to the LayerNorm scaling factors
        \Procedure{RestrictLayerNorm}{$\listofLayerNorms$} 
            \For{$L \in \listofLayerNorms$}
                \State $L.\text{weight} \gets \text{clamp}(L.\text{weight},\text{min}=1.0)$
            \EndFor
        \EndProcedure
        \\
        \Procedure{RestrictZerothBiases}{$\listofZerothBiases, \listofLayerNorms, c$}\label{algo:restricted-zb}
            \For{$l = 0 \dots \size{\listofZerothBiases} - 1$}
                \State $D \gets \listofZerothBiases[l]$
                \State $L \gets \listofLayerNorms[l]$
                \State $s \gets c * \text{unsqueeze}(\text{abs}(L.\text{weight}), \texttt{dim}=-2)$ 
                \State \Comment{In Transformers, scaling factors in LayerNorms are vectors.}
                \State \Comment{It needs to be expanded along the token-stacking dimension}
                \State \Comment{to adapt the shape of $D$}
                \State $D.\text{biases} \gets \text{clamp}(D.\text{biases}, \text{elementwise-min}=-s, \text{elementwise-max}=s)$
            \EndFor
        \EndProcedure
    \end{algorithmic}
\end{algorithm}

In sparsity finetuning, it is not likely that the loaded checkpoints have scaling-restricted and unbiased LayerNorms so we freeze LayerNorms' biases and gradually uplift the absolute value of scaling factors while retaining their signs after each update according to \cref{algo:refined_finetuning} instead of \mathsc{RestrictLayerNorm}. It is also possible that changing to $\jrelu$ without any adaptation harms the performance, so we mix $\jrelu$ and $\relu$ linearly and increase the portion of $\jrelu$ linearly after every step, starting from $0$. \cref{sec:f_experiments} shows that the refinement performs well during finetuning.
\begin{algorithm}
    \caption{Refined LayerNorm in finetuning for sparsity in training from scratch}\label{algo:refined_finetuning}
    \NewDocumentCommand{\listofLayerNorms}{}{\mathcal{L}}
    \NewDocumentCommand{\listofZerothBiases}{}{\mathcal{D}}
    \begin{algorithmic}[1]
        \Require $\listofLayerNorms$: the list of \emph{unbiased} LayerNorm layers before zeroth biases; $T_{\text{uplifting}}$: the number of steps of uplifting; $t$: the index of the current step.
        \Procedure{UpliftLayerNorm}{$\listofLayerNorms, T_{\text{uplifting}}, t$}
            \For{$L \in \listofLayerNorms$}
                \State $s \gets (L.\text{weight}.\text{sign}() + 0.1).\text{sign}()$ \Comment{Uplift zeros towards the positive direction}
                \State $\text{UpliftedAbsoluteValue} \gets \text{clamp}(L.\text{weight}.\text{abs}(),\text{min}= \min(t / T_{\text{uplifting}}, 1))$
                \State $L.\text{weight} \gets s * \text{UpliftedAbsoluteValue}$
            \EndFor
        \EndProcedure
    \end{algorithmic}
\end{algorithm}
\section{Experiments for Verification}\label{sec:v_experiments}

The experiments for verification are designed to rule out other possible explanations and show that gradient sparsity is the main cause of activation sparsity, at least in $\dbmlp$.

Although comprehensively demonstrated by \citet{observation}, the sparsity emerging in MLP blocks has still a lot of confusion. 
This is because a lot of coincidences exist in $\relu$. For example, $\relu$ and its derivatives share the same zero-valued intervals, and they are both monotonically increasing.
Previous works \citep{observation,sharpness_aware} consider that it is the value or norm of activations that are reduced to produce sparsity. There are more possible explanations, such as the absolute value of activations as well as all similar explanations based on pre-activations.
We show none of them is the essential source of observed activation sparsity but the gradient sparsity, by designing weird activation functions, using which activations or pre-activations are not consistently decreased, increased or moved towards zero, but the norm of derivatives is.

\NewDocumentCommand{\weird}{}{\operatorname{Weird-J-SquaredReLU}}
\NewDocumentCommand{\halfwidth}{}{w_{\textrm{half}}}
Our construction includes an activation function $\weird$ and 4 shifting operations. $\weird$ is defined as
\begin{align}
    \weird(x) \defeq \begin{cases}
        -\frac{1}{2} ((x + \halfwidth - 1)^2 - 1)  & x \le -\halfwidth,\\
        0 & -\halfwidth < x < \halfwidth,\\
        \frac{1}{2} ((x - \halfwidth + 1)^2 - 1)  & x \ge \halfwidth,
    \end{cases}
\end{align}
where $\halfwidth > 0$ is a hyperparameter.
\arxivonly{$\weird$ and its derivatives are illustrated in \cref{fig:weird}.}
We shift it around along $x$- or $y$-direction, i.e., use $\weird(\cdot - \Delta x) + \Delta y$ in MLP layers, to decorrelate between activations and pre-activations, values and absolute values or norms. 
If gradient sparsity is indeed essential, then the activations and pre-activations will concentrate around the flat area instead of zero point or any half-planes and will follow it well if it is shifted in another way.
To have enough space for all pre-activations, we choose $\halfwidth=1.5$, i.e., the width of the flat area is $3$, which is approximately the width of majority pre-activations in Fig. B.2(c) of \citet{observation}'s experimental observations.
To exclude flat area from zero point, we shift $\weird$ horizontally and vertically by $(\Delta x, \Delta y) \in \set{+1.6, -1.6} \times \set{+1.6, -1.6}$. This family of shifted $\weird$ is illustrated in \cref{fig:shifted_weird0-}-\cref{fig:shifted_weird-0}. From theory, we expect that most pre-activations and activations will fall around $[\Delta x - \halfwidth, \Delta x + \halfwidth] \times \set{\Delta y}$.
\arxivonly{\begin{figure}
    \centering
    \resetHeight{}
    \begin{subfigure}[h]{0.3\textwidth}
        \centering
        \myincludegraphics[width=\textwidth]{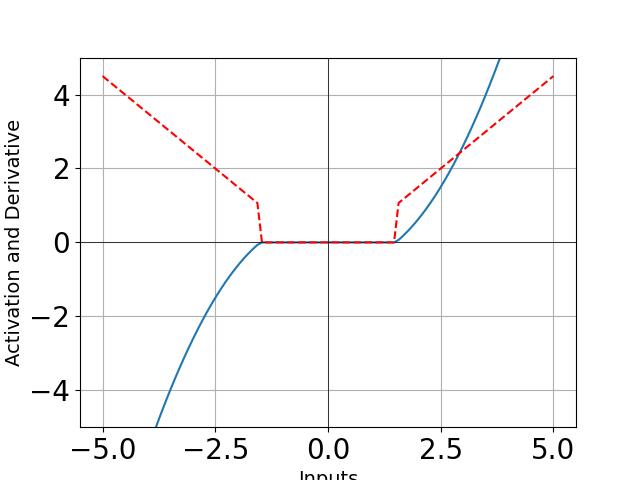}
        \caption{\scriptsize Base $\weird$.}\label{fig:weird}
    \end{subfigure}
    \begin{subfigure}[t]{0.65\textwidth}
        \centering
        \foreach \i in {,-}{
        \foreach \j in {-,}{
            \begin{subfigure}[t]{0.48\textwidth}
                \centering
                \myincludegraphics[width=\textwidth]{pic/activation/shifted_wired_jsrelu_\j1.6_\i1.6.jpg}
                \caption{\scriptsize $\Delta x = \j1.6, \Delta y = \i1.6$}\label{fig:shifted_weird\i0\j}
            \end{subfigure}
        }}
    \end{subfigure} 
    \caption{The illustration of shifted $\weird$ and their derivatives, indicated by blue lines and red dashed lines, respectively.} 
\end{figure}}
\jmlronly{\begin{figure}
    \centering
    \resetHeight{}
    \centering
    \foreach \i in {,-}{
    \foreach \j in {-,}{
        \begin{subfigure}{0.22\textwidth}
            \centering
            \myincludegraphics[width=\textwidth]{pic/activation/shifted_wired_jsrelu_\j1.6_\i1.6.jpg}
            \captionsetup{font=tiny}
            \caption{$\Delta x = \j1.6, \Delta y = \i1.6$}\label{fig:shifted_weird\i0\j}
        \end{subfigure}
    }}
    \caption{The illustration of shifted $\weird$ and their derivatives, indicated by blue lines and red dashed lines, respectively.} 
\end{figure}}

We train ViT-Base with $\dbmlp$ and $\weird$ on CIFAR-10 \citep{cifar10} for 100 epochs. Training details are listed in \cref{appendix:experimental_details}. The gradient sparsity during training, measured simply by the percentage \citep{observation} of activations not falling in $[\Delta x - \halfwidth, \Delta x + \halfwidth] \times [\Delta y - 10^{-6}, \Delta y + 10^{-6}]$, is illustrated in \cref{fig:validation_full}. 
\begin{figure}
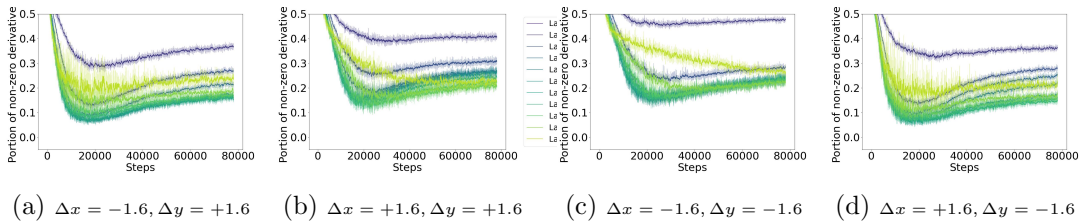

    \resetHeight{}
    \centering
    \foreach \i in {+,-}{
    \foreach \j in {-,+}{
        \begin{subfigure}[lt]{0.22\textwidth}
            \centering
            \myincludegraphics[width=\textwidth]{pic/results/dumps/validations/dx=\j1.6,dy=\i1.6/training.jpg}
            \caption{\tiny $\Delta x = \j1.6, \Delta y = \i1.6$}\label{fig:validational_full\i0\j}
        \end{subfigure}
    }}
    \caption{Gradient sparsity during training of ViT-Base with $\dbmlp$ and differently shifted $\weird$ on CIFAR-10.}\label{fig:validation_full}
\end{figure}
In \cref{fig:validation_full}, most layers have at least 70\% activations concentrating at the flat area as expected, which not only indicates that activation is currently not sparse, but also is a strong evidence that gradient sparsity dominates the sparsity among other potential factors. \cref{fig:validation_full} rules out the dominance of other potential activation or pre-activation explanations, because in \cref{fig:validational_full+0+}, \cref{fig:validational_full+0-} and \cref{fig:validational_full-0+} activation or pre-activation are not decreased, in \cref{fig:validational_full-0-}, \cref{fig:validational_full-0+} and \cref{fig:validational_full+0-} activations or pre-activations are not increased, while in all of them activations or pre-activations' norms are not optimized toward zero. Furthermore, in \cref{fig:validation_full}, where the spurious correlation between activation and gradient sparsity is broken, activation sparsity is gone while gradient sparsity survives and follows the expected area well, indicating that gradient sparsity is more stable and can be more essential than the activation one and thus deserves more research attention.
\section{Experiments for Productivity}\label{sec:p_experiments}

In this section, we conduct experiments with non-weird activation functions on larger data sets to examine the effectiveness our modifications and verify the theoretical guidance behind them.
Aside from modified models, vanilla ViTs are used as baselines. Since it is hard to define (or to achieve) sparsity on $\gelu$ activations \citep{observation}, even in ``vanilla'' ViTs, $\relu$ are used throughout this section.

\subsection{Training from Scratch}\label{sec:from_scratch}

Although distinguished in \cref{sec:v_experiments}, gradient and activation sparsity should better coincide to allow aggressive dynamic neuron pruning during both training and inference. 
To show their effectiveness in such practical scenarios, we test our modifications of $\dbmlp$ and non-weird $\jrelu$ by training from scratch ViT-Base/16 \citep{vit} on ImageNet-1K \citep{imagenet1k} and T5 on C4 \citep{t5}, representing computer vision and natural language processing tasks as recommended by \citet{sparsity_handbook}. 

We generally follow PyTorch's recipe\citep{pytorch_recipe} for ViT-Base/16, except that we turn off model EMA to avoid \texttt{deepcopy} that bothers experiment tracking and parameter clamping. For major hyperparameters, we use learning rate $0.03$, weight decay $0.3$, batch size $2048$, cosine annealing scheduler and a variety of random data augmentation during the 300-epoch training.
To train T5, we follow major hyperparameters released by \citet{observation} such as learning rate of $0.01$, batch size of $256$ and inverse square root scheduling with linear warmup of 10,000 steps, while filling not mentioned ones from \citet{t5_recipe}. The training of T5 lasts for only 100,000 steps as well \citep{observation}, since it involves too much computation.
To make both models sparser, we modify them by plugging zero-initialized zeroth biases before MLP blocks and replacing all $\relu$ with $\jrelu$. We run \cref{algo:refined} after each step with $c=0.1$, i.e., $\sqrt{d} - c \ge 0.9 \sqrt{d}$.
Since a lot of statistics are computed which slows the training, for ViT the logging (including training losses, sparsity, etc.) happens after per-epoch evaluation or every 100 steps in training. The encoder-decoder architecture and larger token number of T5 cost more memory use, so half of a batch is used to compute the statistics. As compensation logging happens every 25 steps for T5.
T5 and ViT are both trained for only one trial due to our limited computation budget. More details of the experiments can be found in \cref{appendix:experimental_details}.
The training and testing sparsity of ViT-Base/16 and T5-Base are illustrated in \cref{figure:productive_vit} and \cref{figure:productive_t5}, and are summarized in \cref{table:productive_vit} and \cref{table:productive_t5}.
\begin{figure}
    \centering
    \resetHeight{}
    \begin{subfigure}[t]{0.24\textwidth}
        \myincludegraphics[width=\textwidth]{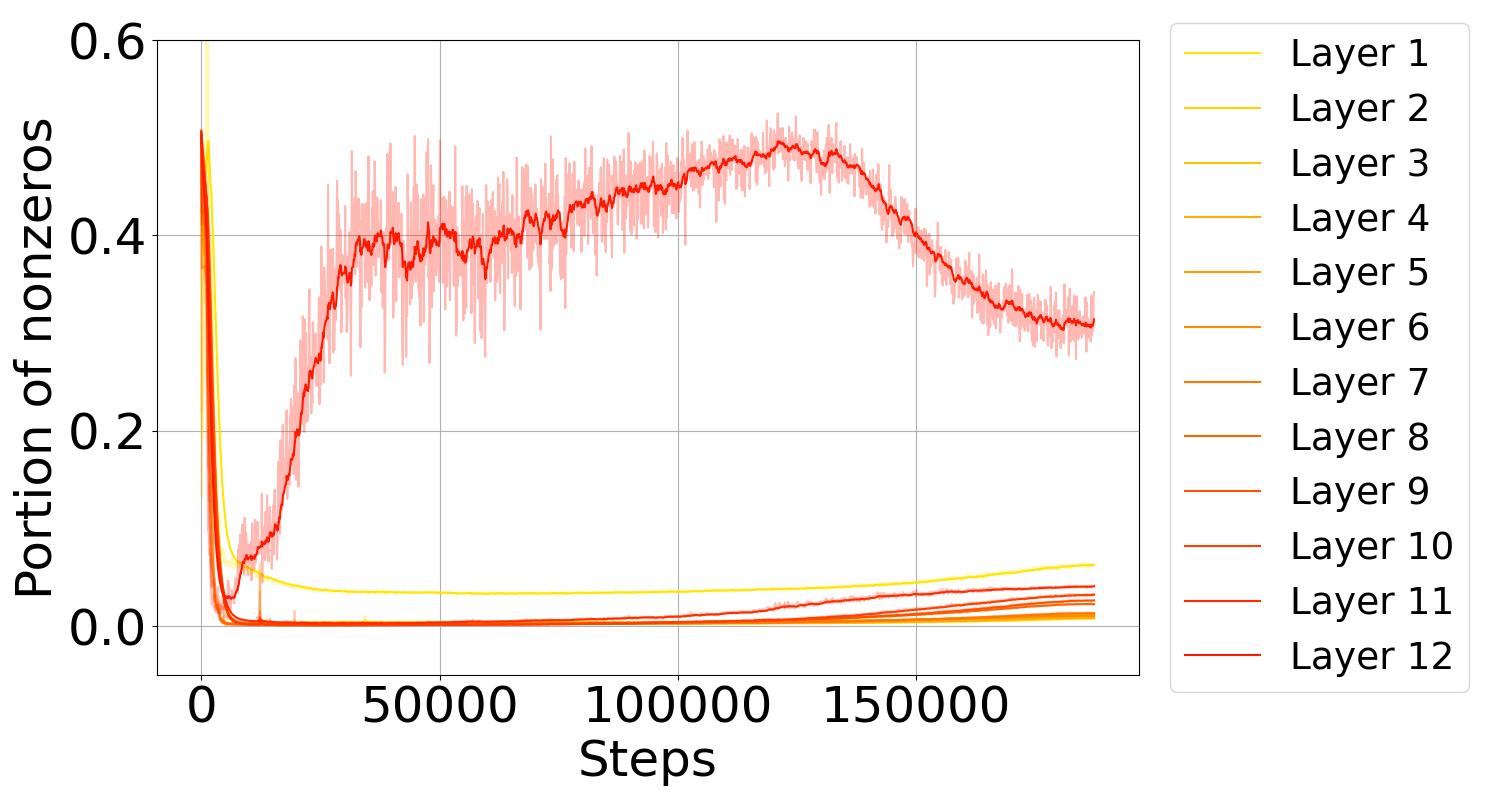}
        \caption{\tiny Training sparsity of modified ViT.}\label{figure:productive_vit_sparsified_training}
    \end{subfigure}
    \begin{subfigure}[t]{0.24\textwidth}
        \myincludegraphics[width=\textwidth]{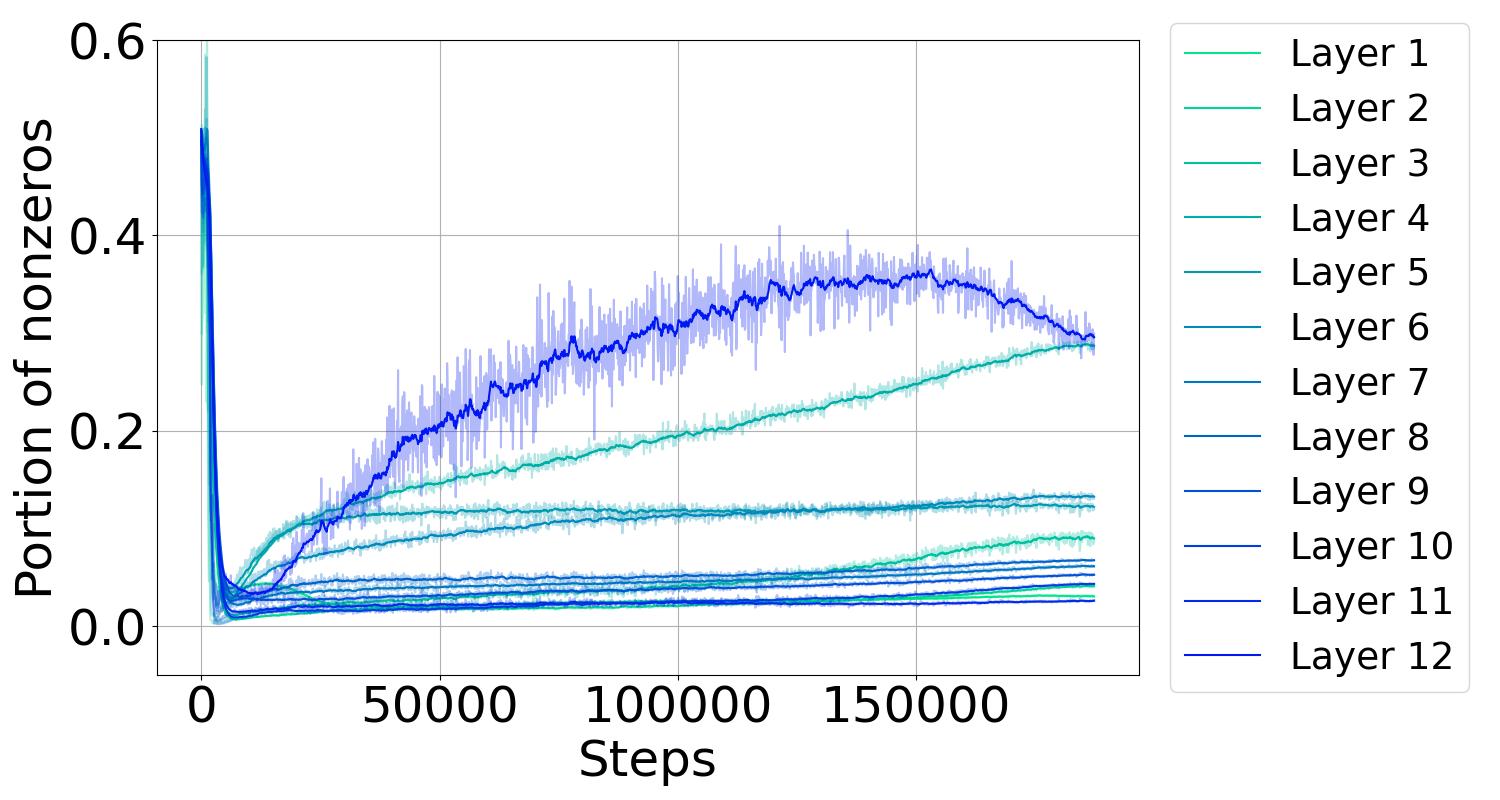}
        \caption{\tiny Training sparsity of vanilla ViT.}\label{figure:productive_vit_vanilla_training}
    \end{subfigure}
    \begin{subfigure}[t]{0.24\textwidth}
        \myincludegraphics[width=\textwidth]{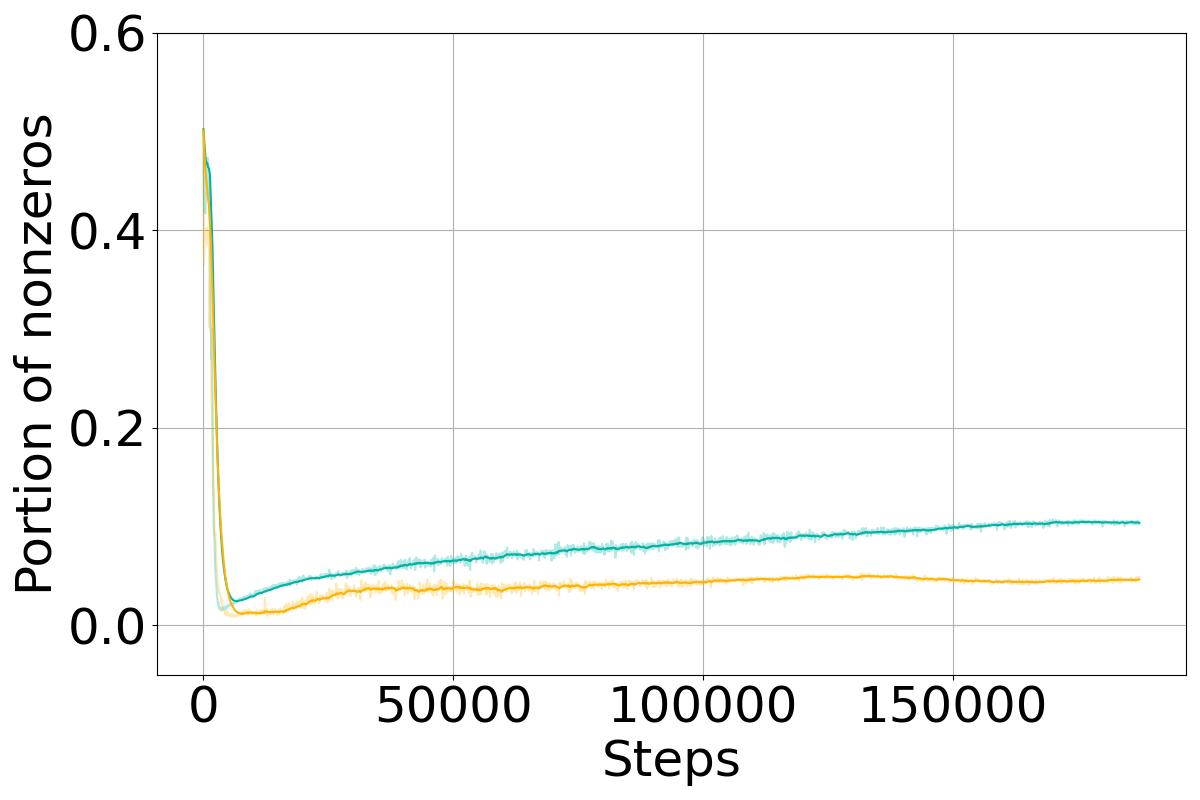}
        \caption{\tiny Comparison of layer-averaged training sparsity between vanilla and modified ViT.}\label{figure:productive_vit_average_training}
    \end{subfigure}
    \begin{subfigure}[t]{0.24\textwidth}
        \myincludegraphics[width=\textwidth]{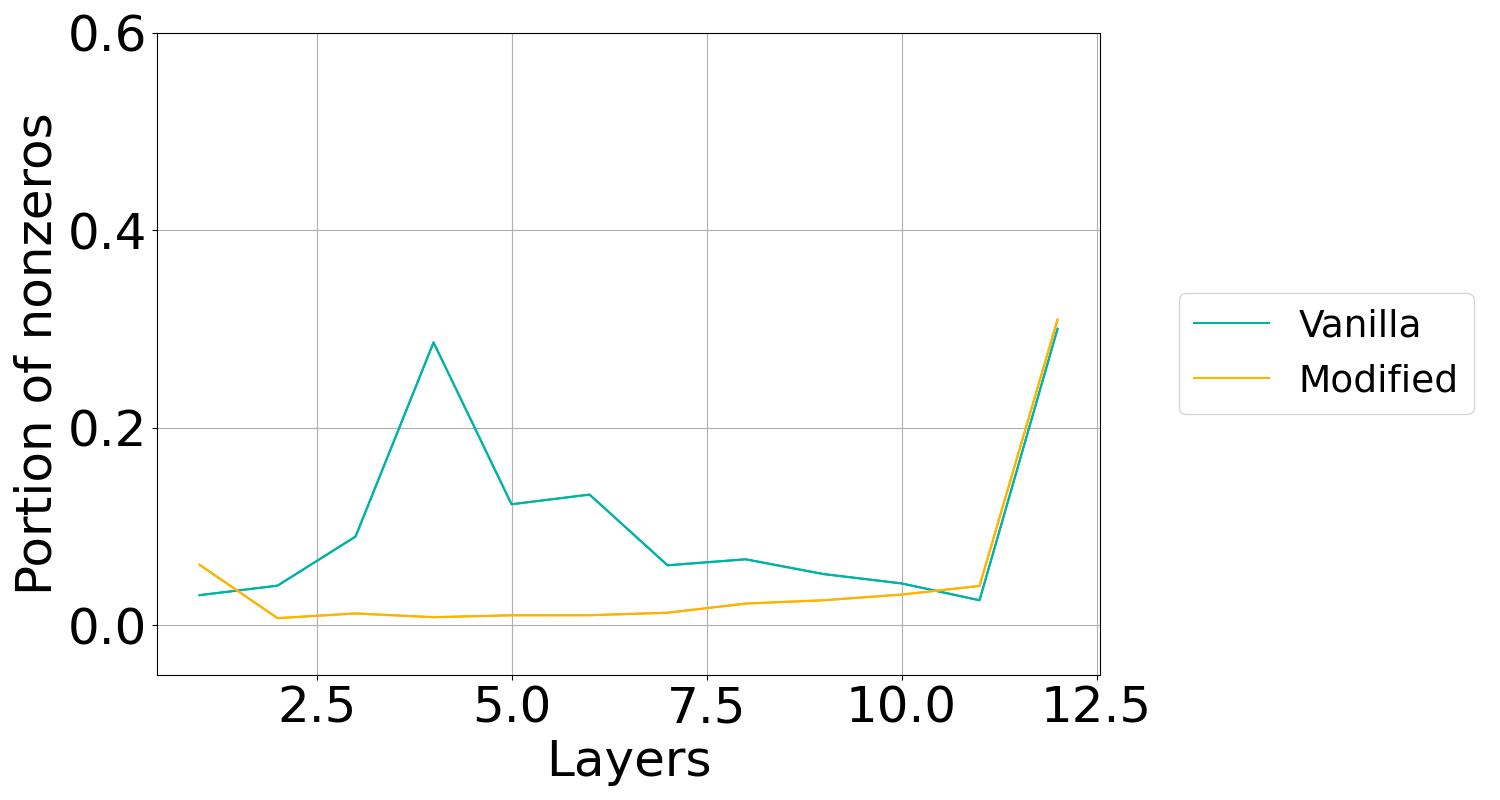}
        \caption{\tiny Comparison of layerwise training sparsity between vanilla and modified ViT during the last 100 steps.} \label{figure:productive_vit_end_training}
    \end{subfigure}
    \begin{subfigure}[t]{0.24\textwidth}
        \myincludegraphics[width=\textwidth]{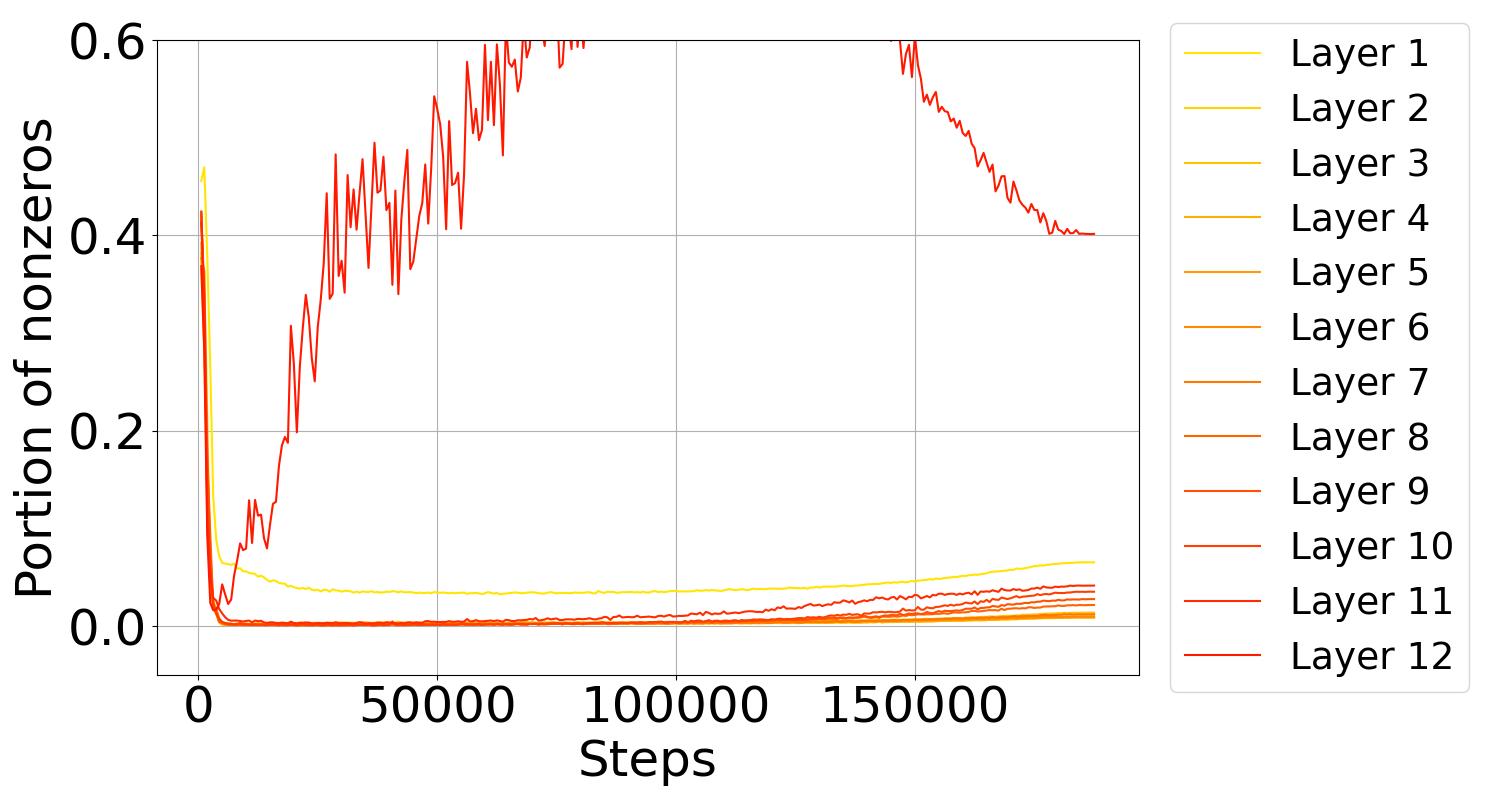}
        \caption{\tiny Testing sparsity of modified ViT.}\label{figure:productive_vit_sparsified_testing}
    \end{subfigure}
    \begin{subfigure}[t]{0.24\textwidth}
        \myincludegraphics[width=\textwidth]{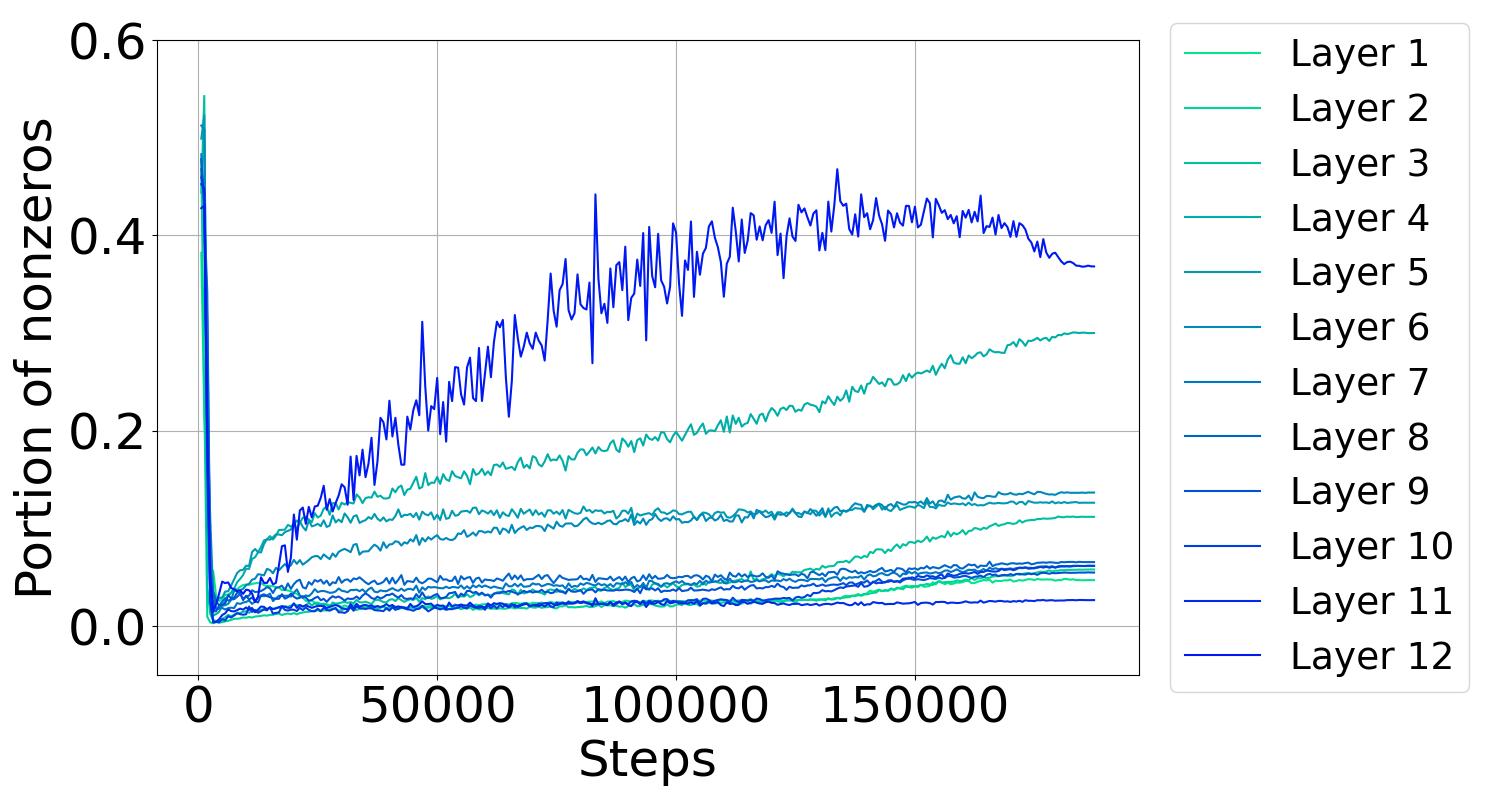}
        \caption{\tiny Testing sparsity of vanilla ViT.}\label{figure:productive_vit_vanilla_testing}
    \end{subfigure}
    \begin{subfigure}[t]{0.24\textwidth}
        \myincludegraphics[width=\textwidth]{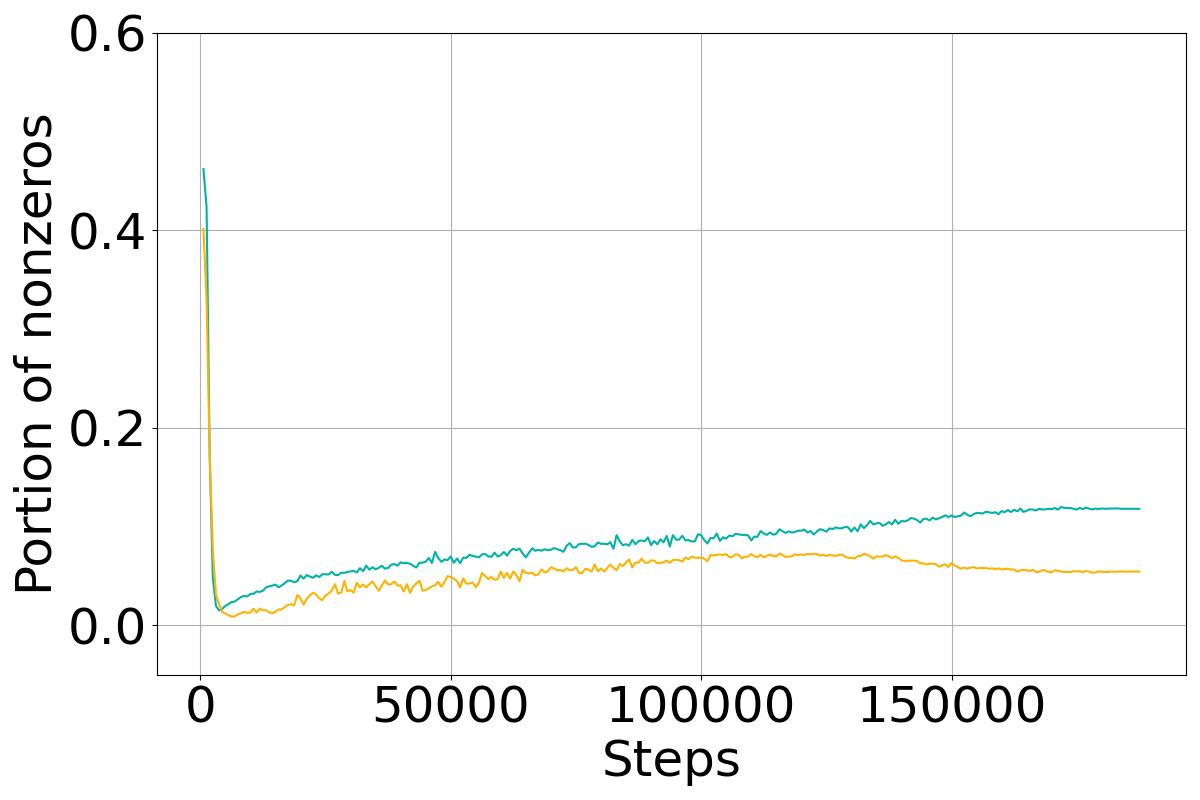}
        \caption{\tiny Comparison of layer-averaged testing sparsity between vanilla and modified ViT.}\label{figure:productive_vit_average_testing}
    \end{subfigure}
    \begin{subfigure}[t]{0.24\textwidth}
        \myincludegraphics[width=\textwidth]{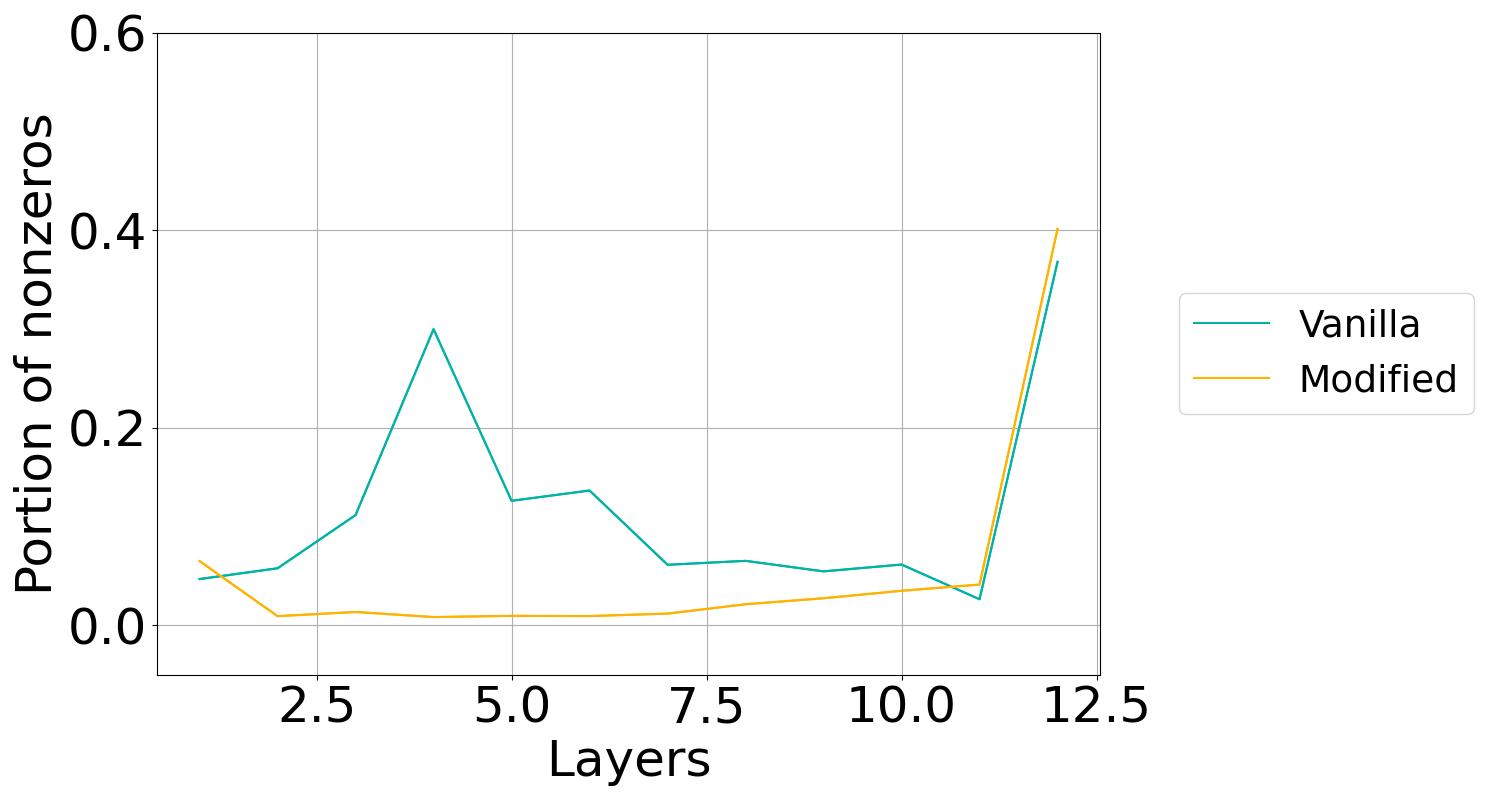}
        \caption{\tiny Comparison of layerwise testing sparsity between vanilla and modified ViT at the last testing.} \label{figure:productive_vit_end_testing}
    \end{subfigure}
    \caption{Training and testing sparsity during training of ViT-Base/16 on ImageNet-1K. Red and yellow are used for modified ViT while blue and green indicate vanilla ViT. 
        \cref{figure:productive_vit_sparsified_training}-\cref{figure:productive_vit_end_training} show \emph{training} sparsity while \cref{figure:productive_vit_sparsified_testing}-\cref{figure:productive_vit_end_testing} show \emph{testing} sparsity. 
        \cref{figure:productive_vit_sparsified_training}, \cref{figure:productive_vit_vanilla_training}, \cref{figure:productive_vit_sparsified_testing} and \cref{figure:productive_vit_vanilla_testing} show sparsity in a layerwise manner, while \cref{figure:productive_vit_average_training} and \cref{figure:productive_vit_average_testing} illustrate after averaging across layers in order to demonstrate the overall improvement.
        \cref{figure:productive_vit_end_training} and \cref{figure:productive_vit_end_testing} display the layerwise sparsity at the end of training to show improvements' tendency along the depth.
    }\label{figure:productive_vit}
\end{figure}
\begin{table}
    \centering
    \begin{tabular}{lrrrrrrrr}
        \toprule
                                & \multicolumn{2}{c}{Training Sparsity}                             &   \multicolumn{2}{c}{Testing Sparsity}                &   \multicolumn{1}{c}{Acc@1}   &   \multicolumn{1}{c}{Acc@5}\\
        \midrule    
        Vanilla                 &   $0.104$             &                                           &   $0.087$                 &                           &   $\mathbf{77.35\%}$          &   $\mathbf{93.50\%}$\\
        Modified                &   $\mathbf{0.046}$    &   $\downarrow 55.92\%$                    &   $\mathbf{0.055}$        &   $\downarrow 36.03\%$    &   $76.77\%$                   &   $93.30\%$\\
        \bottomrule
    \end{tabular}
    \caption{Summary of training ViT-Base from scratch on ImageNet-1K\citep{imagenet1k}. Training sparsity is computed by integrating \cref{figure:productive_vit_average_training}.}\label{table:productive_vit}
\end{table}

\begin{figure}
    \centering
    \resetHeight{}
    \begin{subfigure}[t]{0.24\textwidth}
        \myincludegraphics[width=\textwidth]{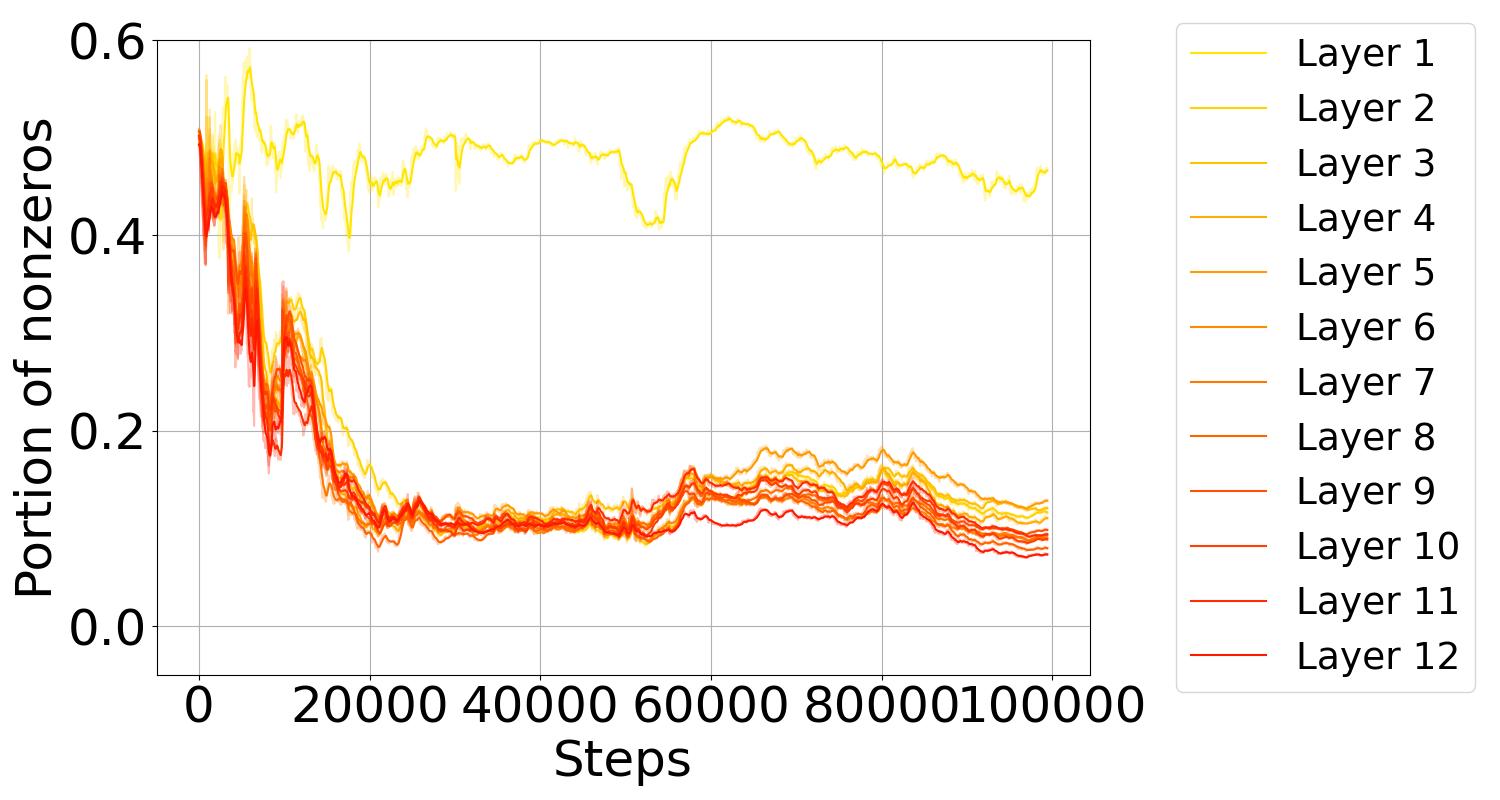}
        \caption{\tiny Training sparsity in the encoder of modified T5.}\label{figure:productive_t5_sparsified_training_encoder}
    \end{subfigure}
    \begin{subfigure}[t]{0.24\textwidth}
        \myincludegraphics[width=\textwidth]{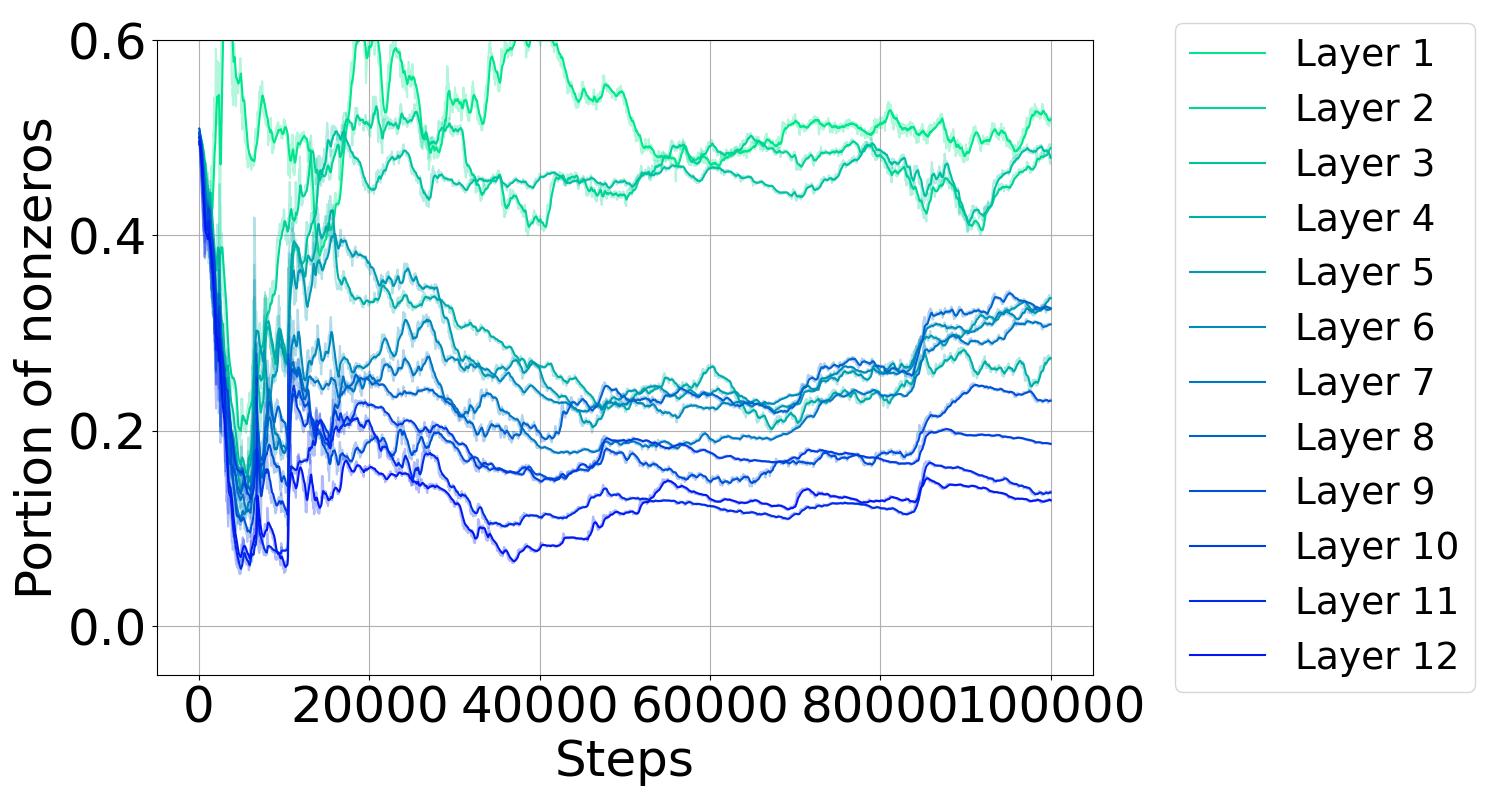}
        \caption{\tiny Training sparsity in the encoder of vanilla T5.}\label{figure:productive_t5_vanilla_training_encoder}
    \end{subfigure}
    \begin{subfigure}[t]{0.24\textwidth}
        \myincludegraphics[width=\textwidth]{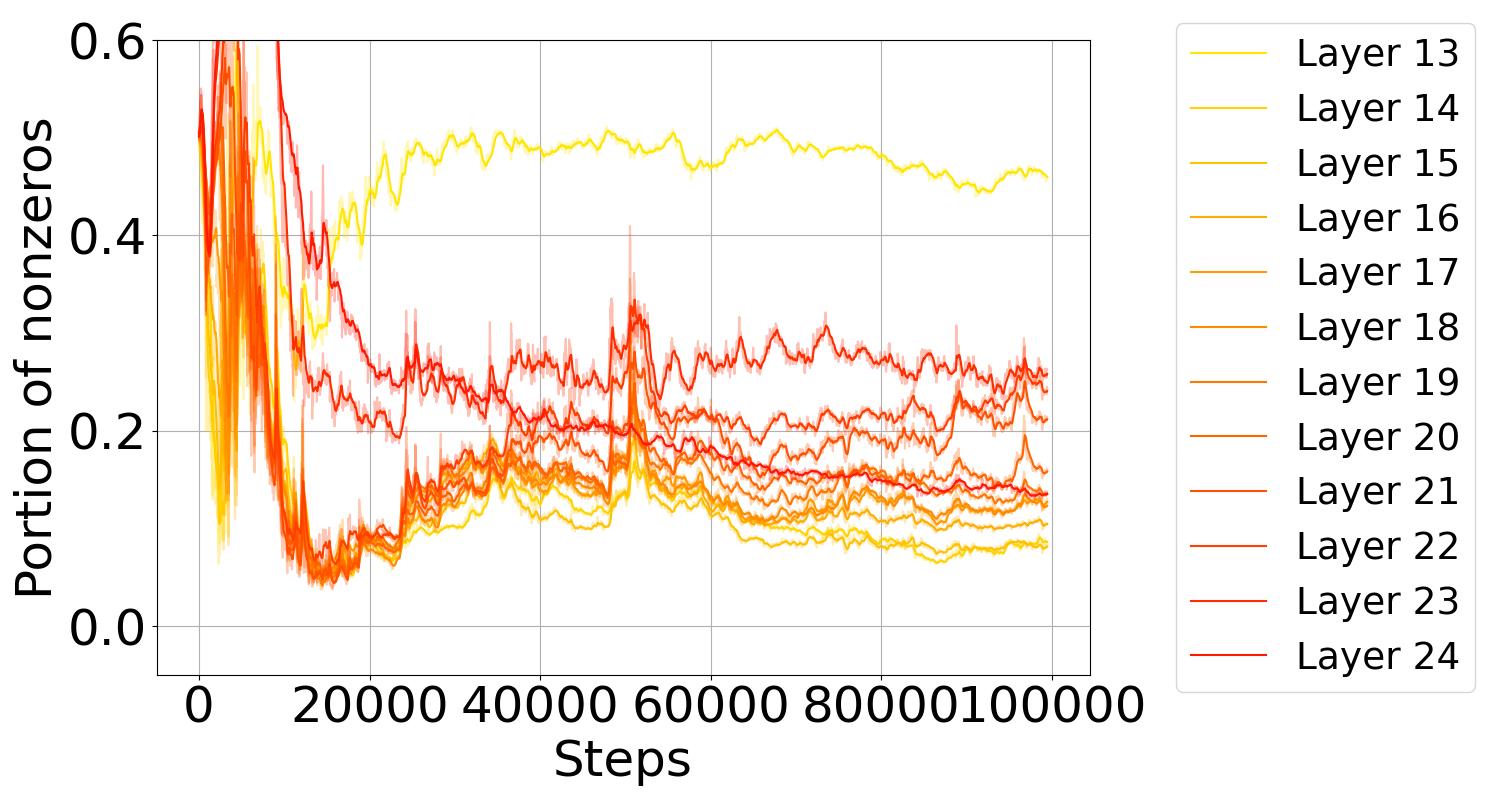}
        \caption{\tiny Training sparsity in the decoder of modified T5.}\label{figure:productive_t5_sparsified_training_decoder}
    \end{subfigure}
    \begin{subfigure}[t]{0.24\textwidth}
        \myincludegraphics[width=\textwidth]{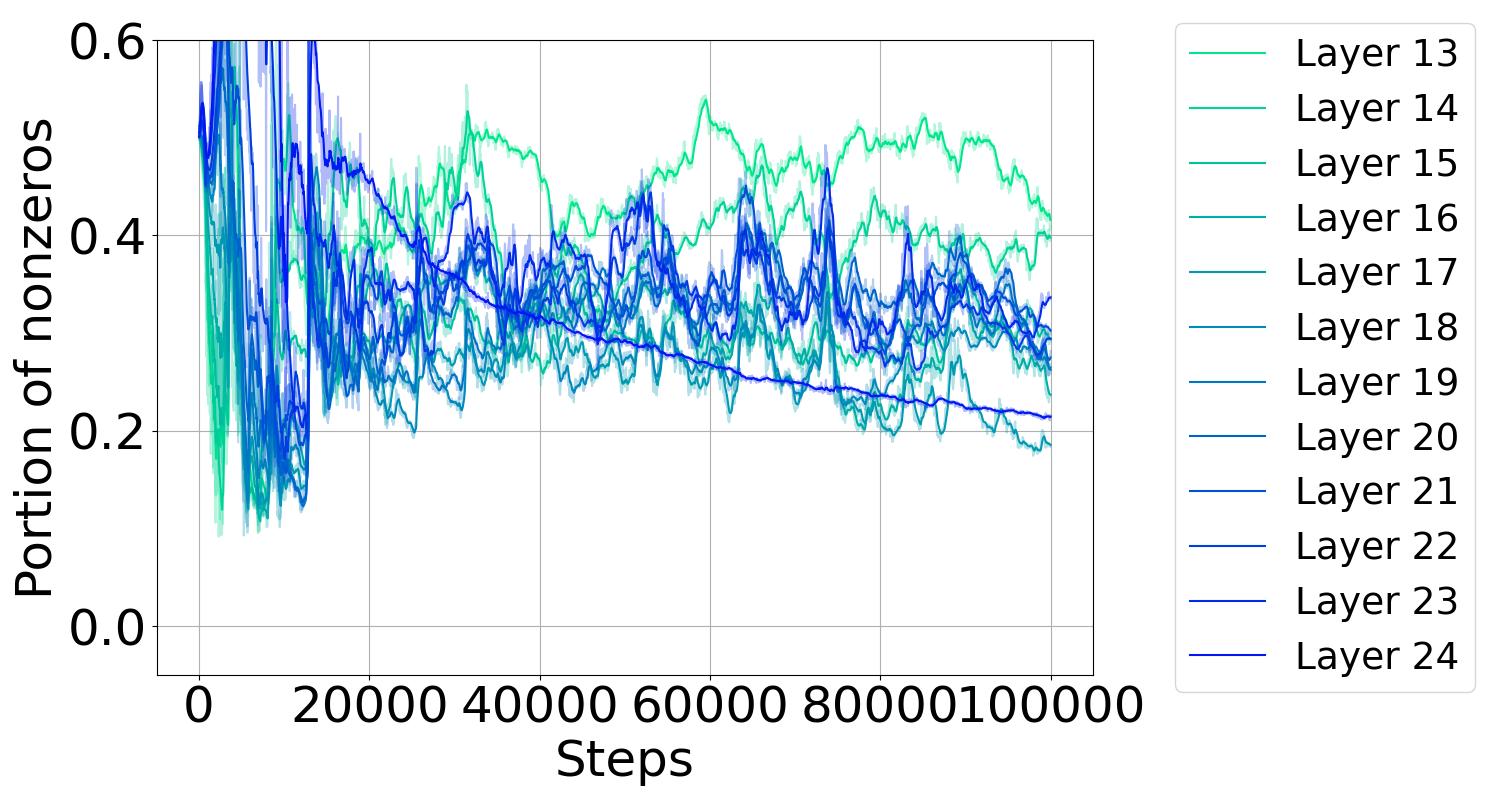}
        \caption{\tiny Training sparsity in the decoder of vanilla T5.}\label{figure:productive_t5_vanilla_training_decoder}
    \end{subfigure}
    \begin{subfigure}[t]{0.24\textwidth}
        \myincludegraphics[width=\textwidth]{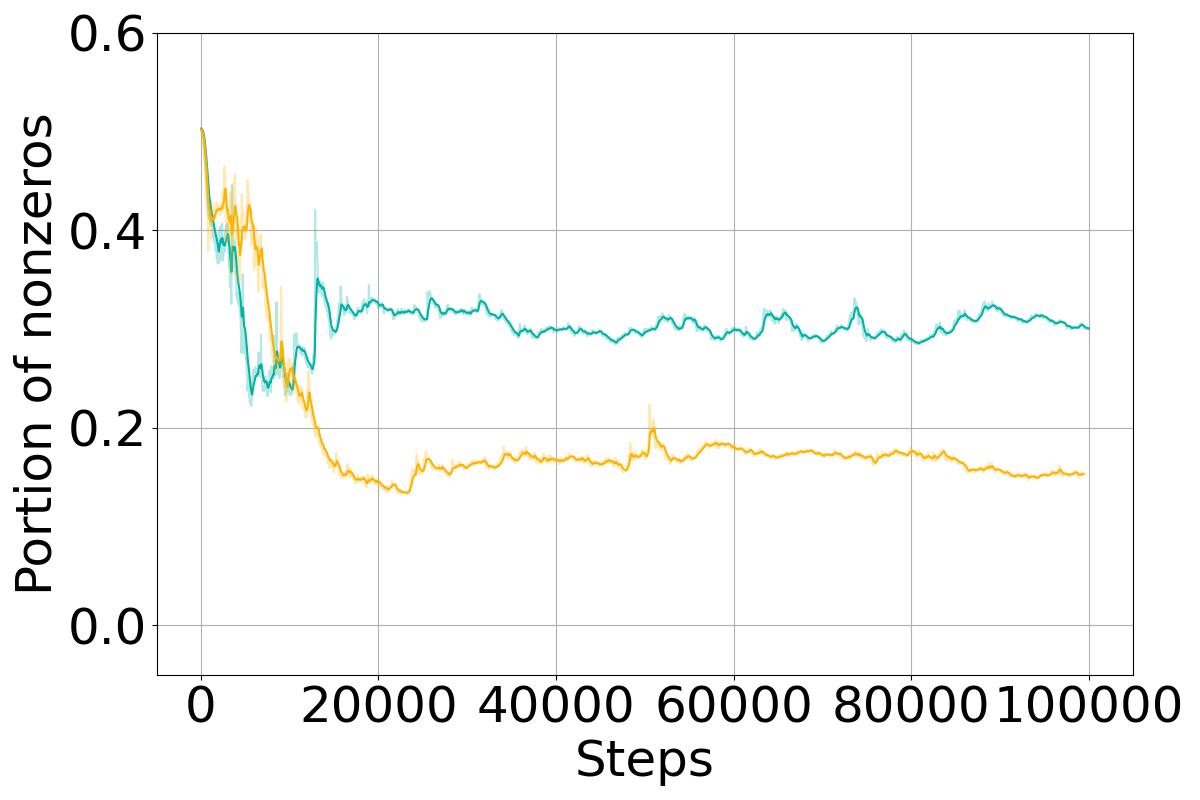}
        \caption{\tiny Comparison of layer-averaged training sparsity between vanilla and modified T5.}\label{figure:productive_t5_average_training}
    \end{subfigure}
    \begin{subfigure}[t]{0.24\textwidth}
        \myincludegraphics[width=\textwidth]{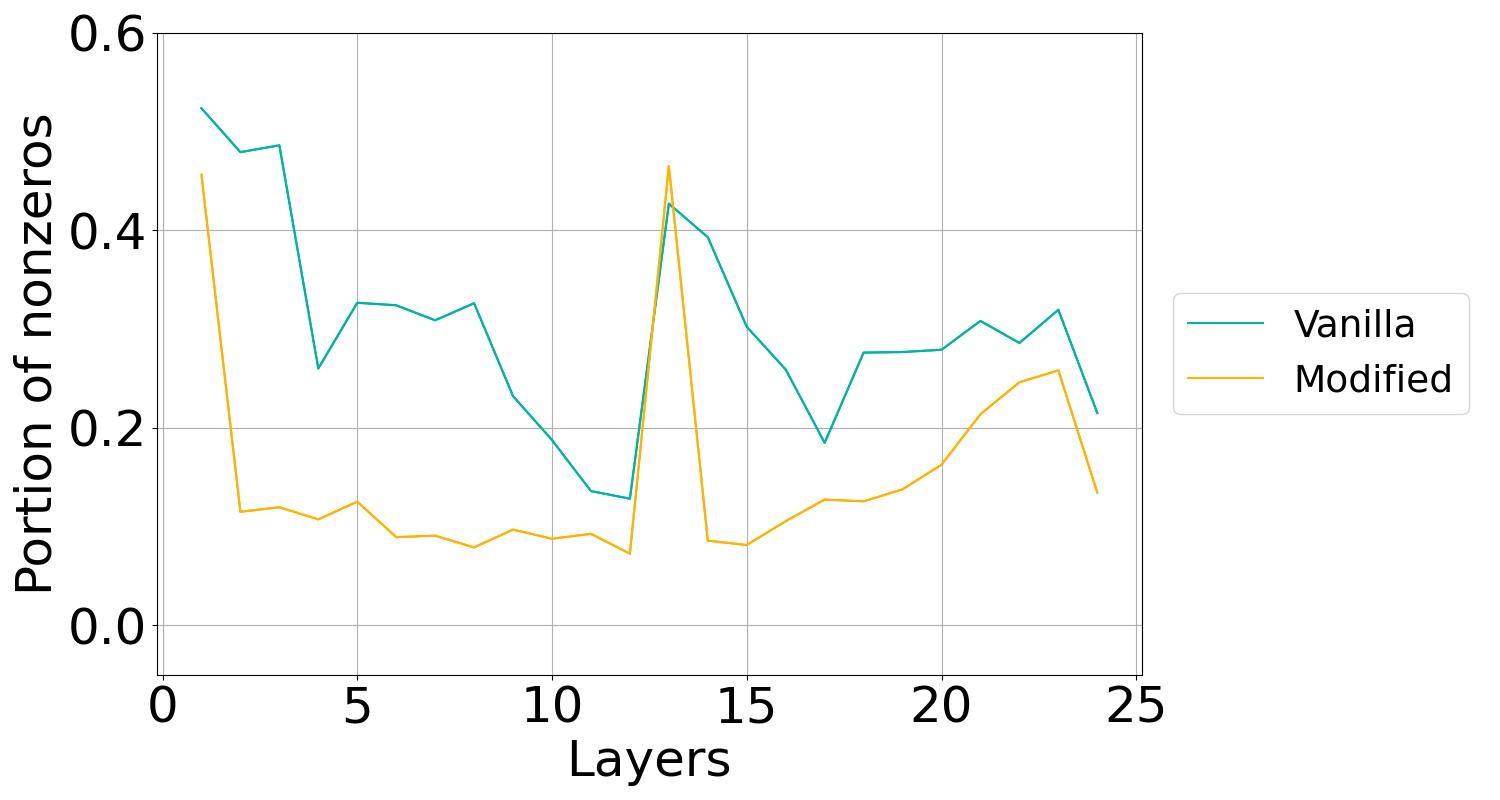}
        \caption{\tiny Comparison of layerwise training sparsity between vanilla and modified T5 during the last 100 steps.} \label{figure:productive_t5_end_training}
    \end{subfigure}
    \begin{subfigure}[t]{0.24\textwidth}
        \myincludegraphics[width=\textwidth]{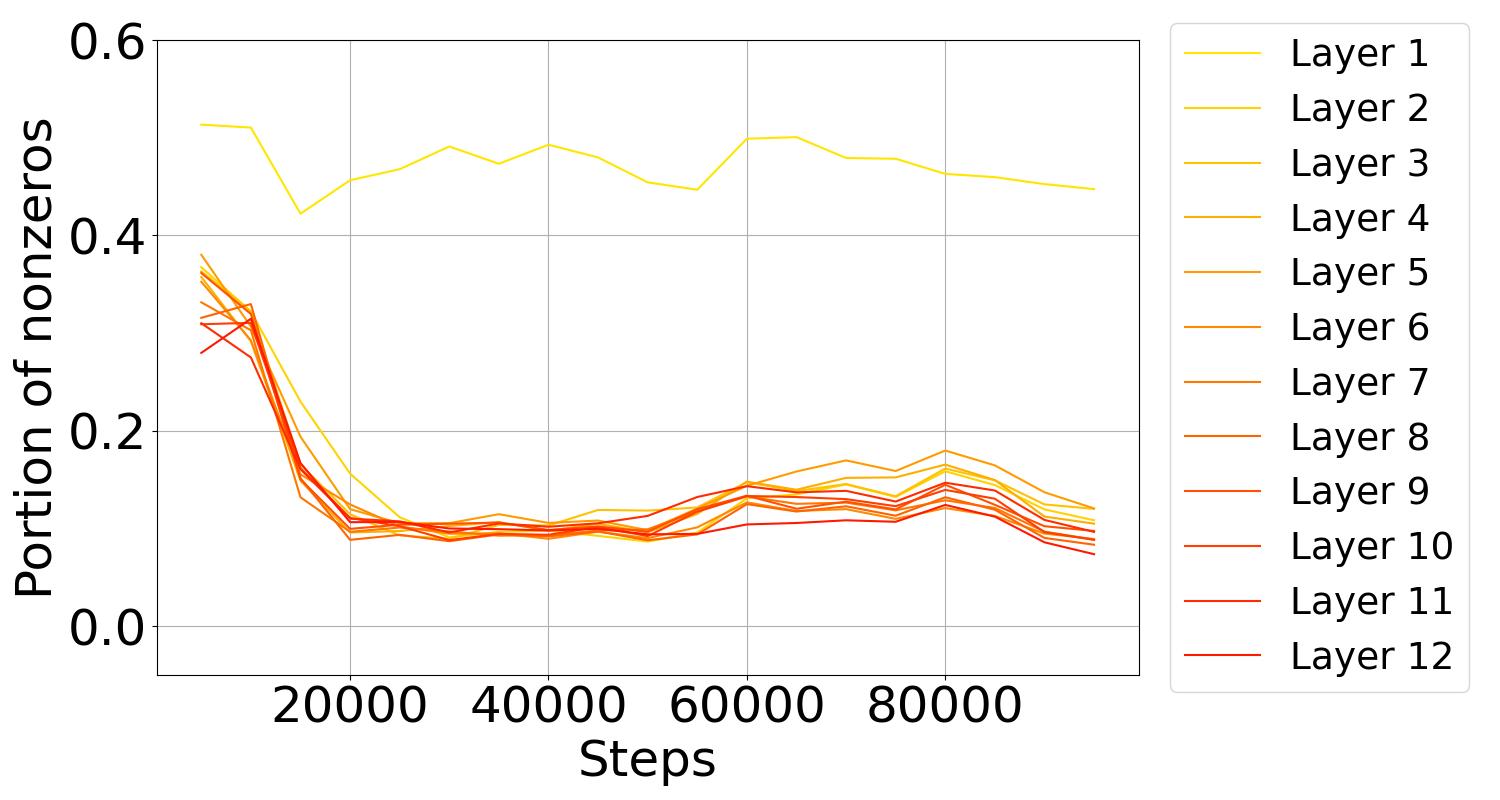}
        \caption{\tiny Testing sparsity in the encoder of modified T5.}\label{figure:productive_t5_sparsified_testing_encoder}
    \end{subfigure}
    \begin{subfigure}[t]{0.24\textwidth}
        \myincludegraphics[width=\textwidth]{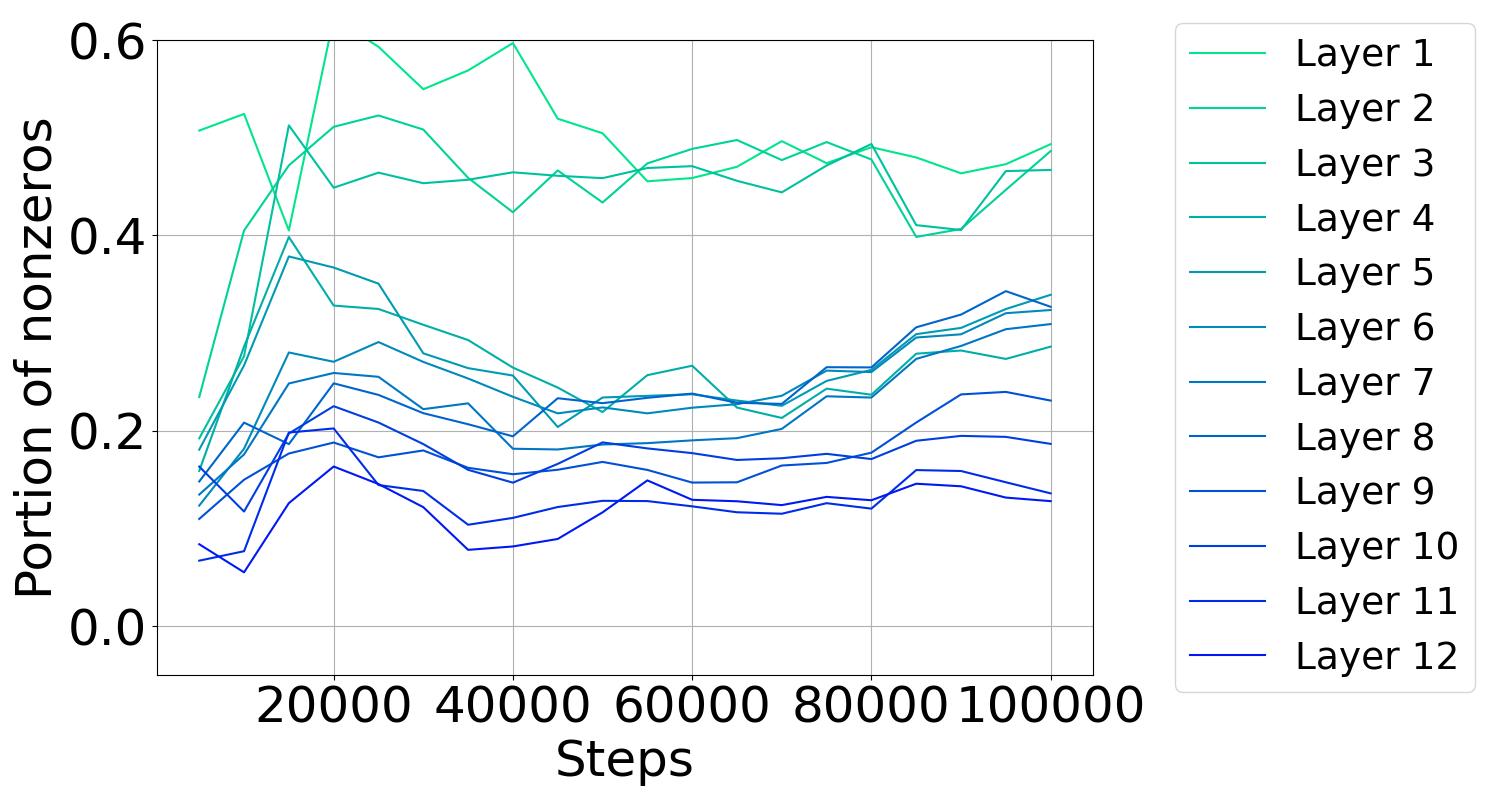}
        \caption{\tiny Testing sparsity in the encoder of vanilla T5.}\label{figure:productive_t5_vanilla_testing_encoder}
    \end{subfigure}
    \begin{subfigure}[t]{0.24\textwidth}
        \myincludegraphics[width=\textwidth]{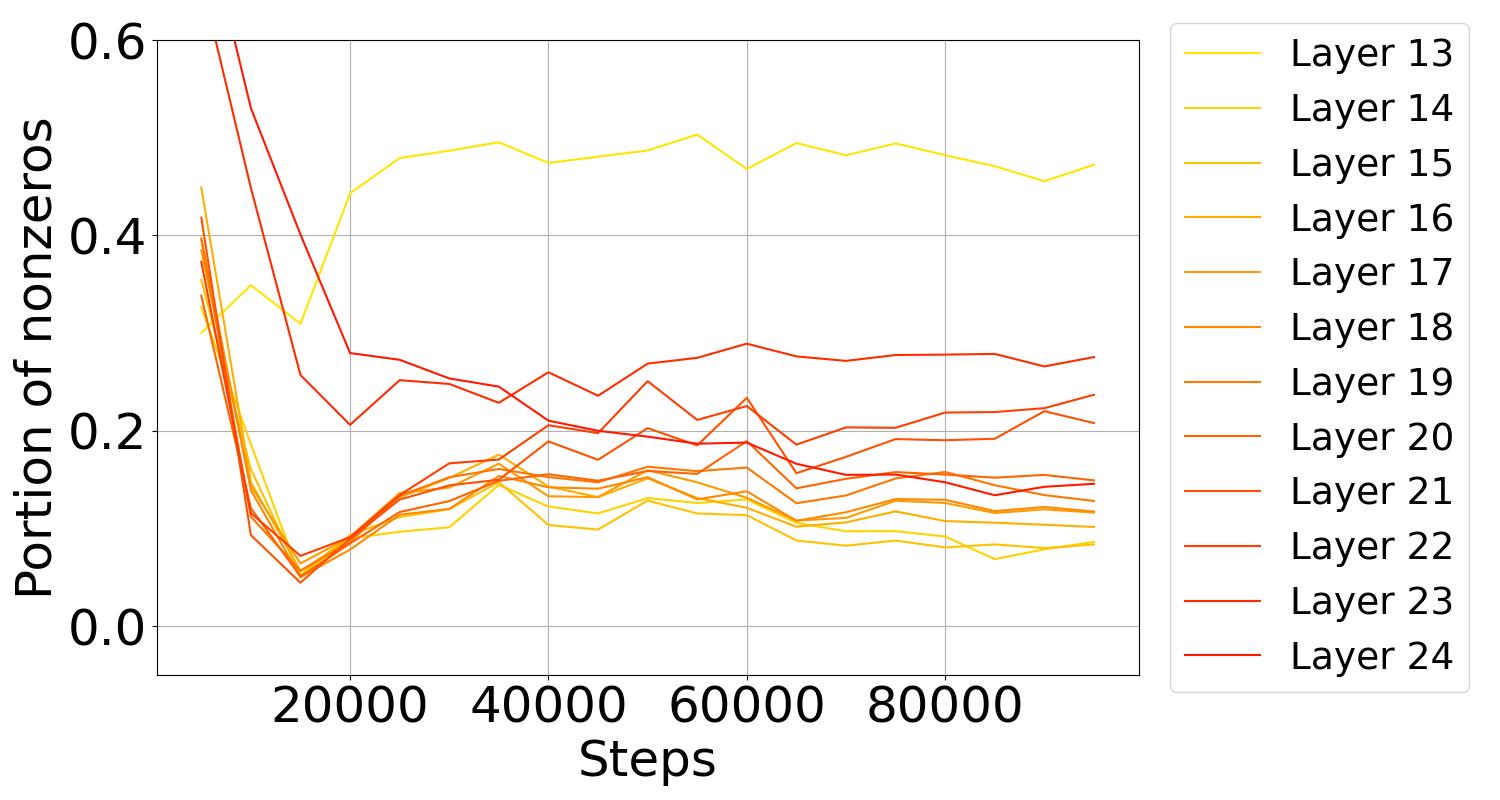}
        \caption{\tiny Testing sparsity in the decoder of modified T5.}\label{figure:productive_t5_sparsified_testing_decoder}
    \end{subfigure}
    \begin{subfigure}[t]{0.24\textwidth}
        \myincludegraphics[width=\textwidth]{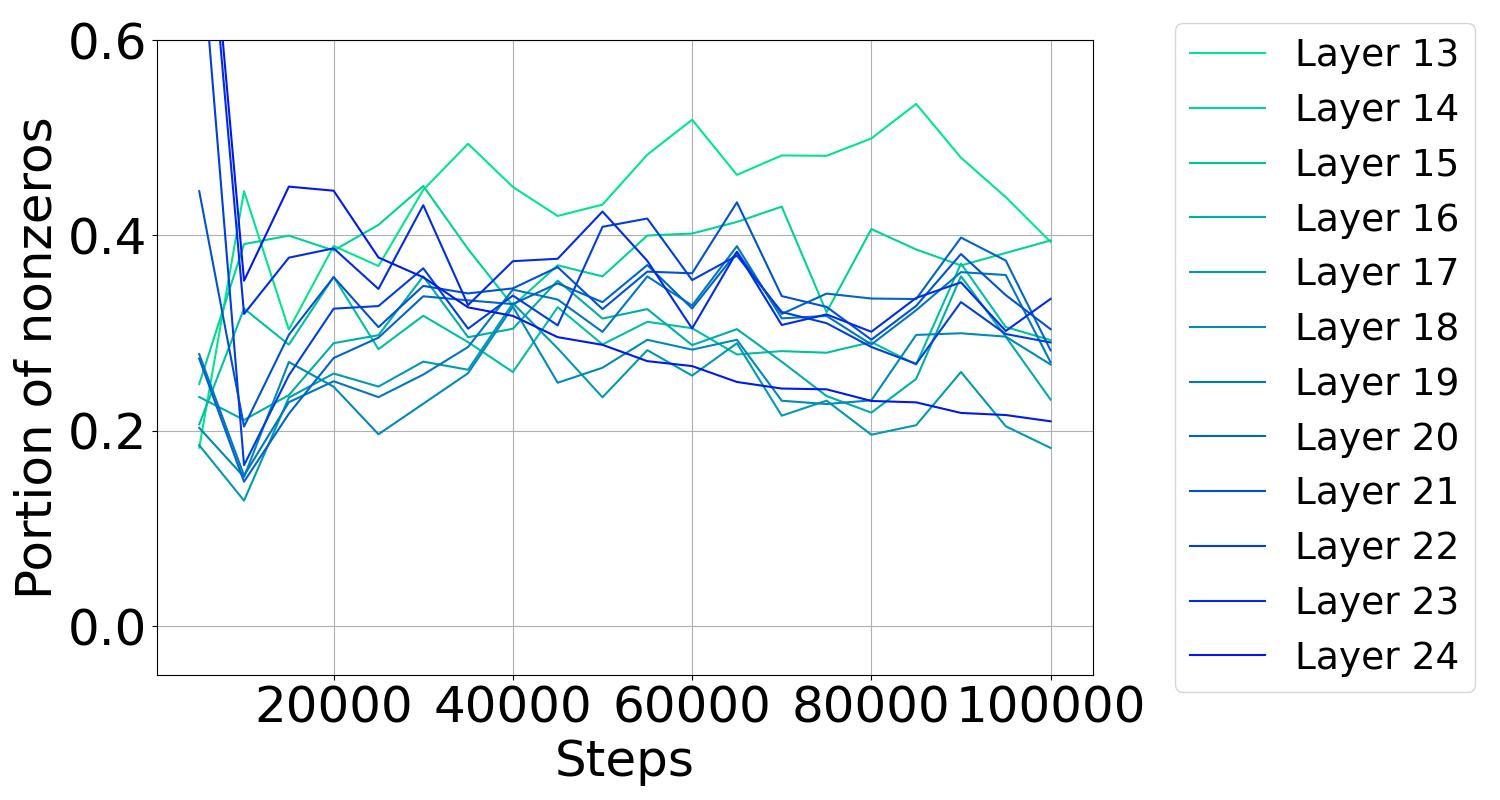}
        \caption{\tiny Testing sparsity in the decoder of vanilla T5.}\label{figure:productive_t5_vanilla_testing_decoder}
    \end{subfigure}
    \begin{subfigure}[t]{0.24\textwidth}
        \myincludegraphics[width=\textwidth]{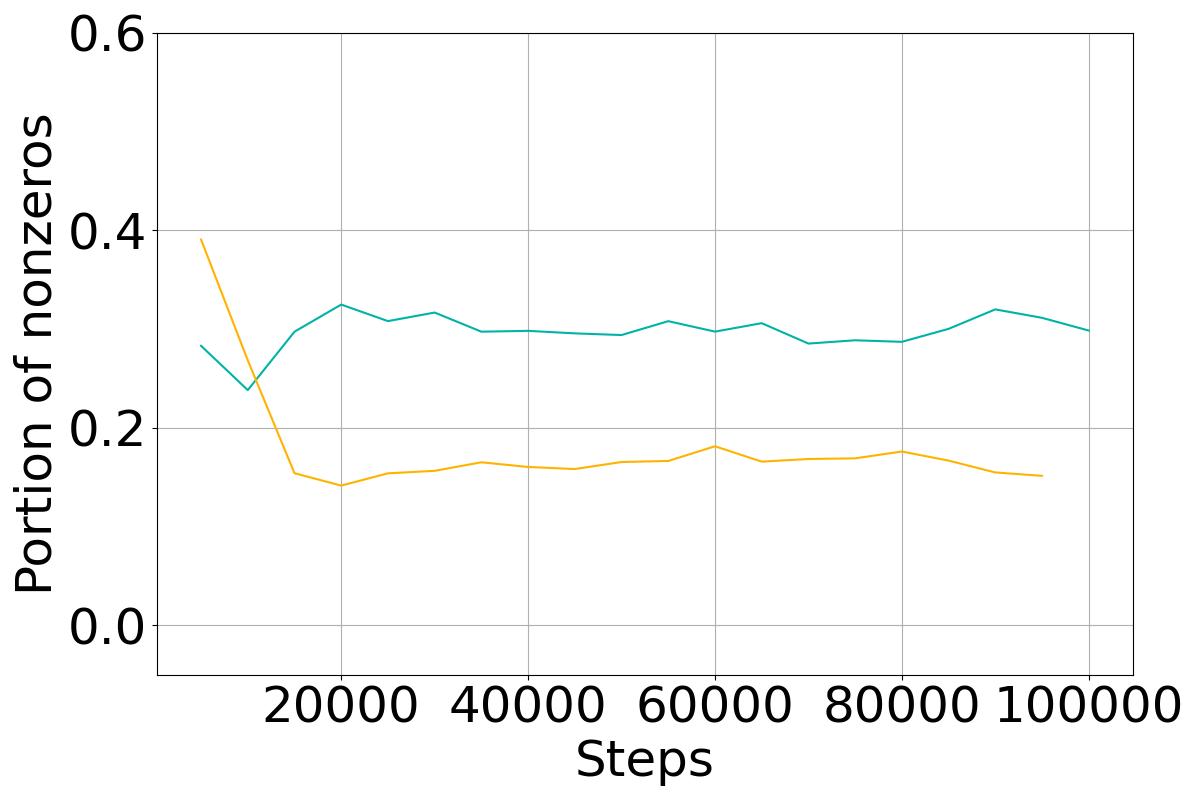}
        \caption{\tiny Comparison of layer-averaged testing sparsity between vanilla and modified T5.}\label{figure:productive_t5_average_testing}
    \end{subfigure}
    \begin{subfigure}[t]{0.24\textwidth}
        \myincludegraphics[width=\textwidth]{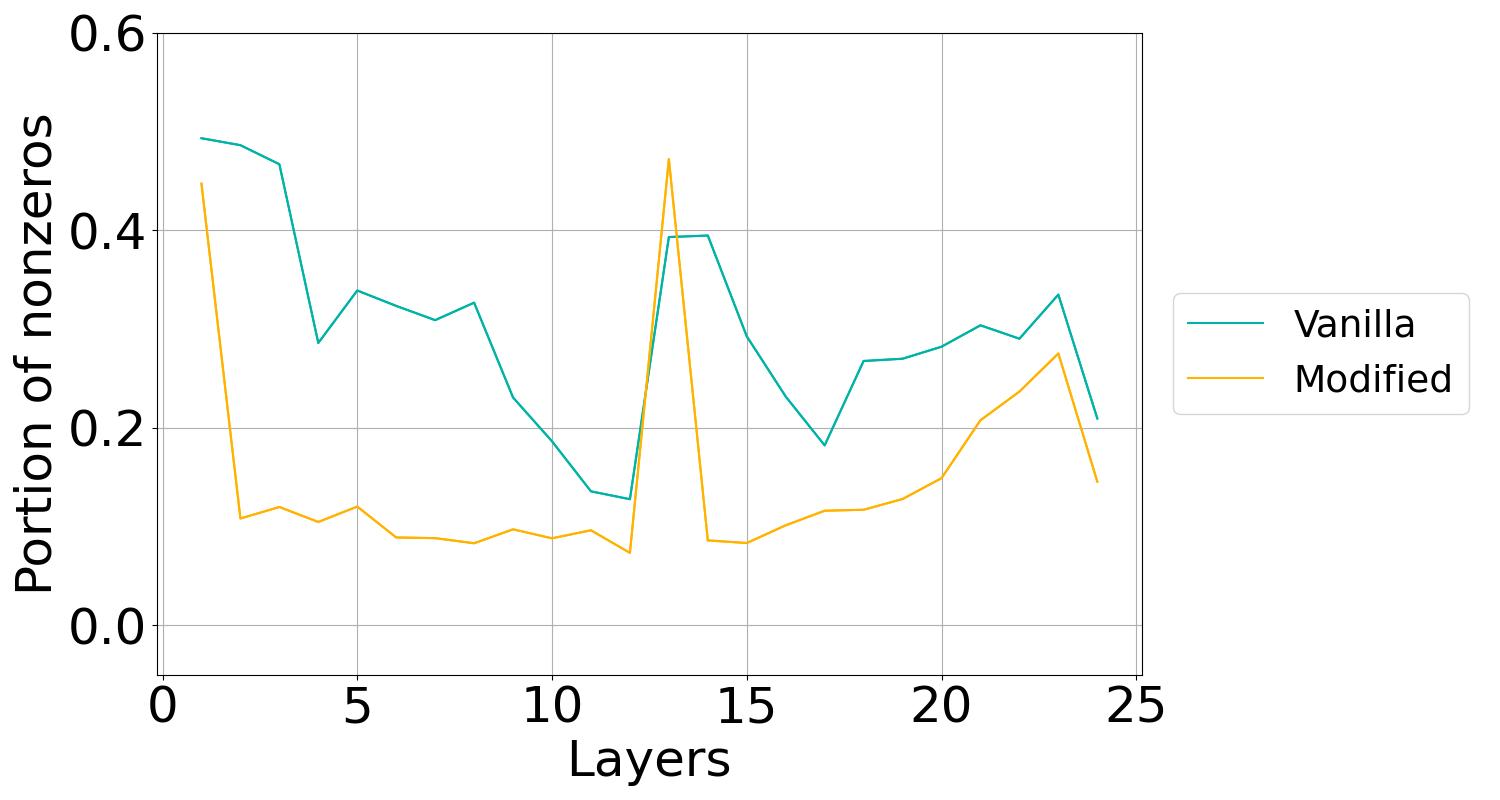}
        \caption{\tiny Comparison of layerwise testing sparsity between vanilla and modified T5.} \label{figure:productive_t5_end_testing}
    \end{subfigure}
    \caption{Training and testing sparsity during training of T5-Base on C4. Red and yellow are used for modified models while blue and green indicate vanilla ones. 
        \cref{figure:productive_t5_sparsified_training_encoder}-\cref{figure:productive_t5_end_training} show \emph{training} sparsity while \cref{figure:productive_t5_sparsified_testing_encoder}-\cref{figure:productive_t5_end_testing} show \emph{testing} sparsity. 
        \cref{figure:productive_t5_sparsified_training_encoder}-\cref{figure:productive_t5_vanilla_training_decoder} and \cref{figure:productive_t5_sparsified_testing_encoder}-\cref{figure:productive_t5_vanilla_testing_decoder} show sparsity in a layerwise manner with encoder layers numbered by $1$-$12$ and decoder layers numbered by $13$-$24$, while \cref{figure:productive_t5_average_training} and \cref{figure:productive_t5_average_testing} illustrate after averaging across layers in order to demonstrate the overall improvement.
        \cref{figure:productive_t5_end_training} and \cref{figure:productive_t5_end_testing} display the layerwise sparsity at the end of training to show improvements' tendency along the depth.
    }\label{figure:productive_t5}
\end{figure}
\begin{table}
    \centering
    \begin{tabular}{lrrrrrrrr}
        \toprule
                                & \multicolumn{2}{c}{Training Sparsity}                             &   \multicolumn{2}{c}{Testing Sparsity}                &   \multicolumn{1}{c}{Testing Loss}\\
        \midrule    
        Vanilla                 &   $0.302$             &                                           &   $0.299$                 &                           &   $4.88$\\
        Modified                &   $\mathbf{0.153}$    &   $\downarrow 49.25\%$                    &   $\mathbf{0.180}$        &   $\downarrow 39.64\%$    &   $\mathbf{4.78}$\\
        \bottomrule
    \end{tabular}
    \caption{Summary of training T5-Base from scratch on C4. Training sparsity is computed by integrating \cref{figure:productive_t5_average_training}.}\label{table:productive_t5}
\end{table}

\begin{figure}
    \centering
    \resetHeight{}
    \begin{subfigure}[t]{0.24\textwidth}
        \myincludegraphics[width=\textwidth]{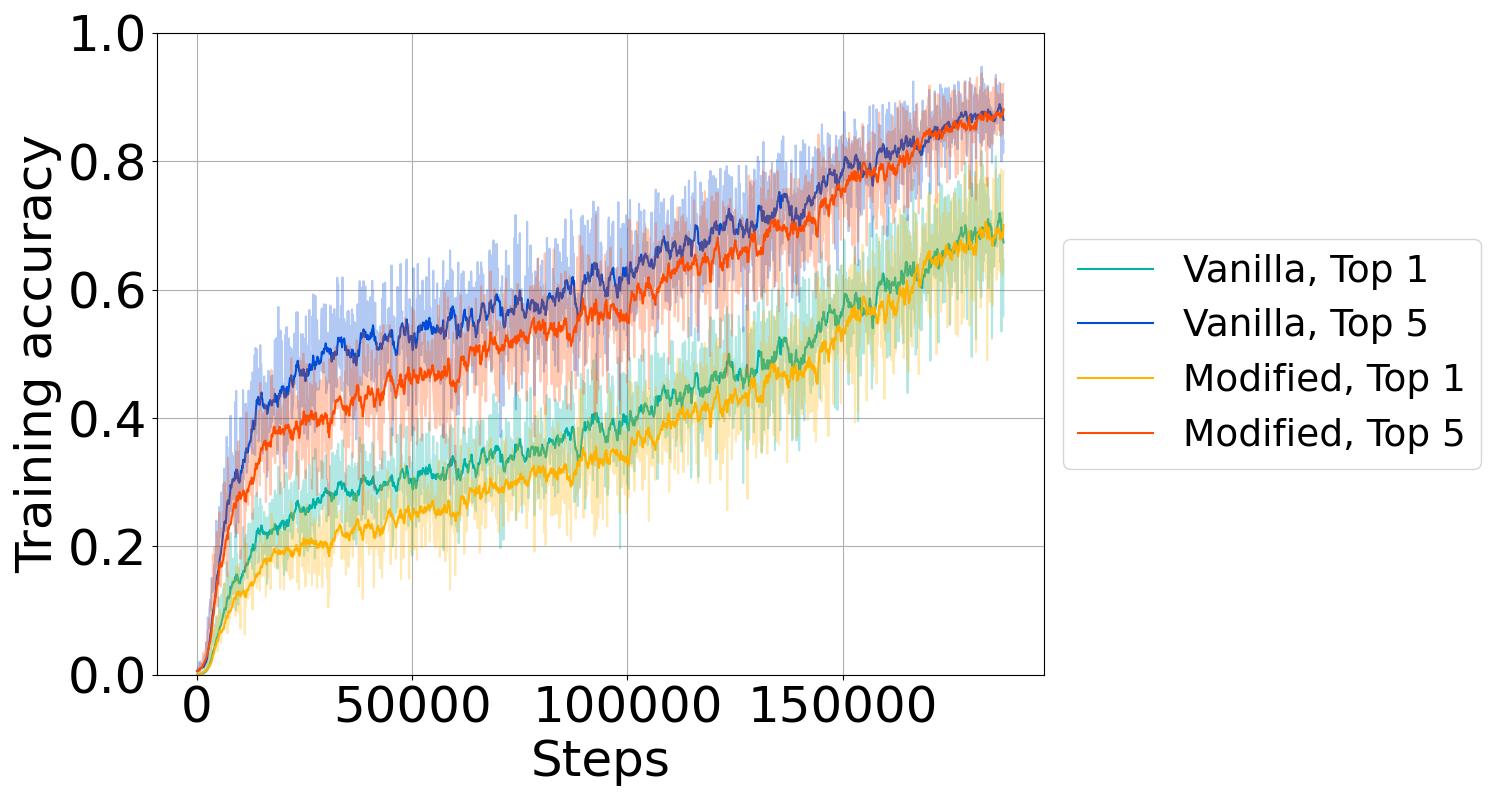}
        \caption{\tiny Training accuracy of ViT-Base/16 on ImageNet-1K.}\label{figure:productive_vit_acc_training}
    \end{subfigure}
    \begin{subfigure}[t]{0.24\textwidth}
        \myincludegraphics[width=\textwidth]{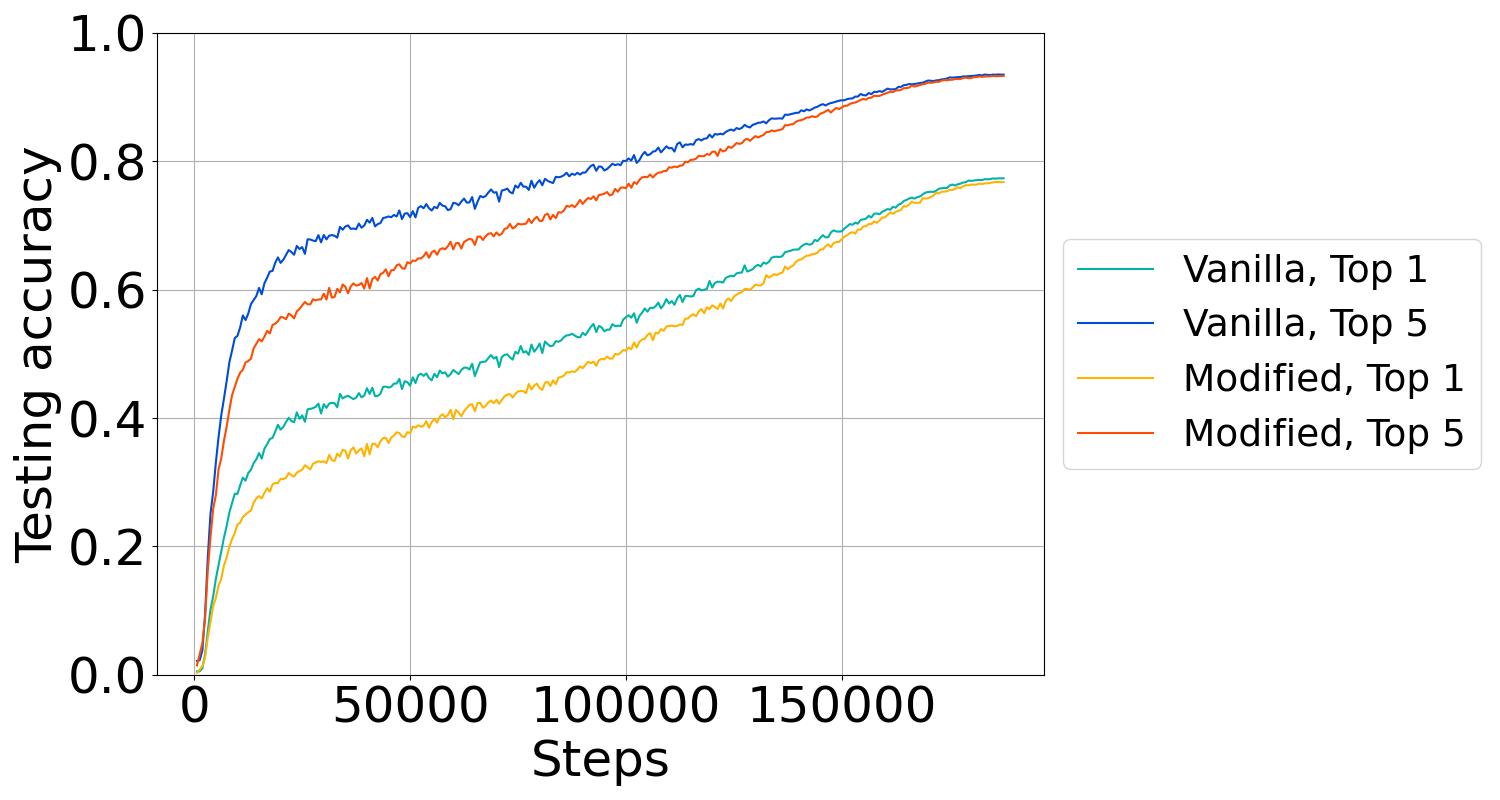}
        \caption{\tiny Testing accuracy of ViT-Base/16 on ImageNet-1K.}\label{figure:productive_vit_acc_testing}
    \end{subfigure}
    \begin{subfigure}[t]{0.24\textwidth}
        \includegraphics[width=\textwidth]{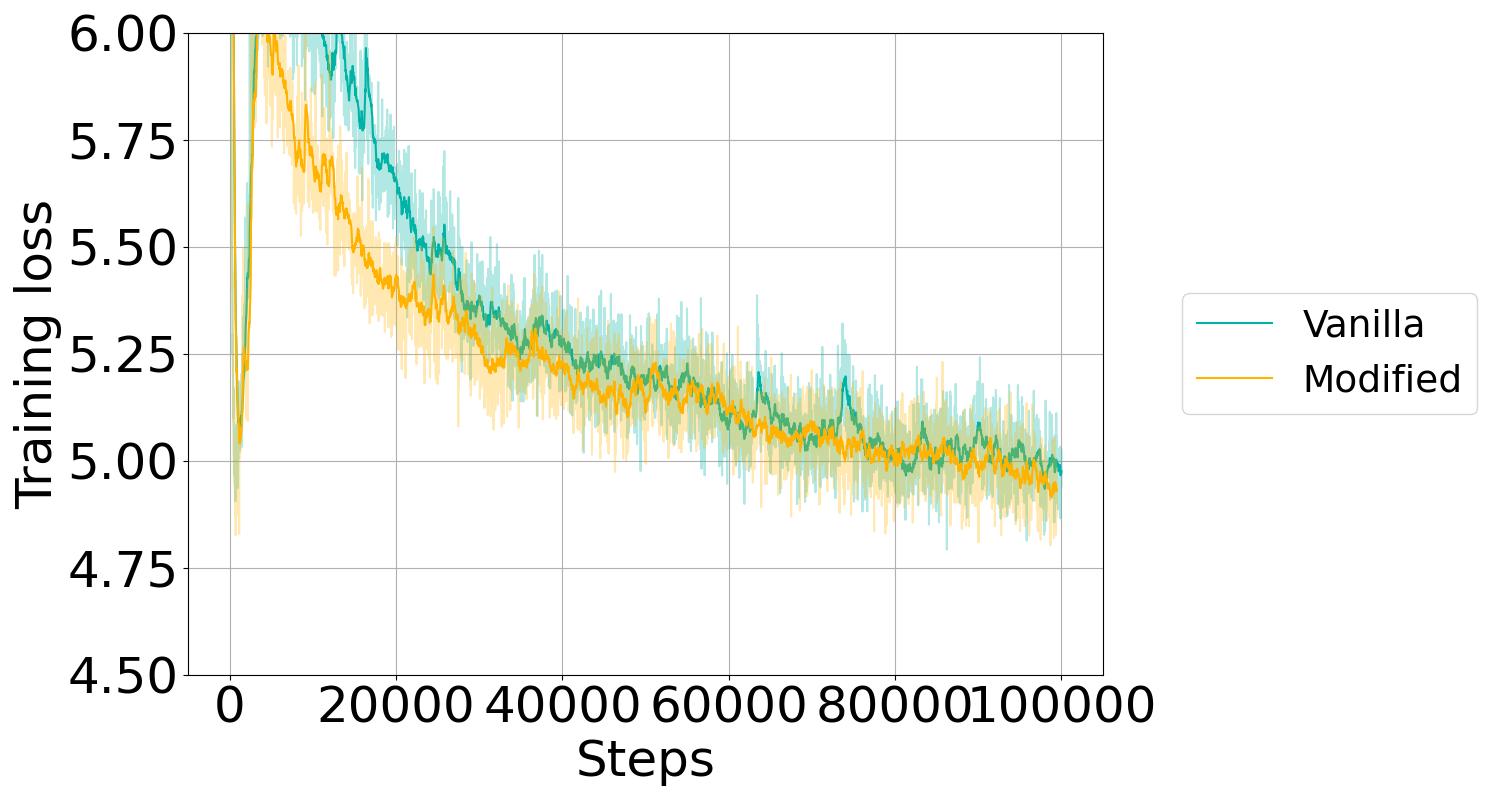}
        \caption{\tiny Training loss of T5-Base on C4.}\label{figure:productive_t5_acc_training}
    \end{subfigure}
    \begin{subfigure}[t]{0.24\textwidth}
        \includegraphics[width=\textwidth]{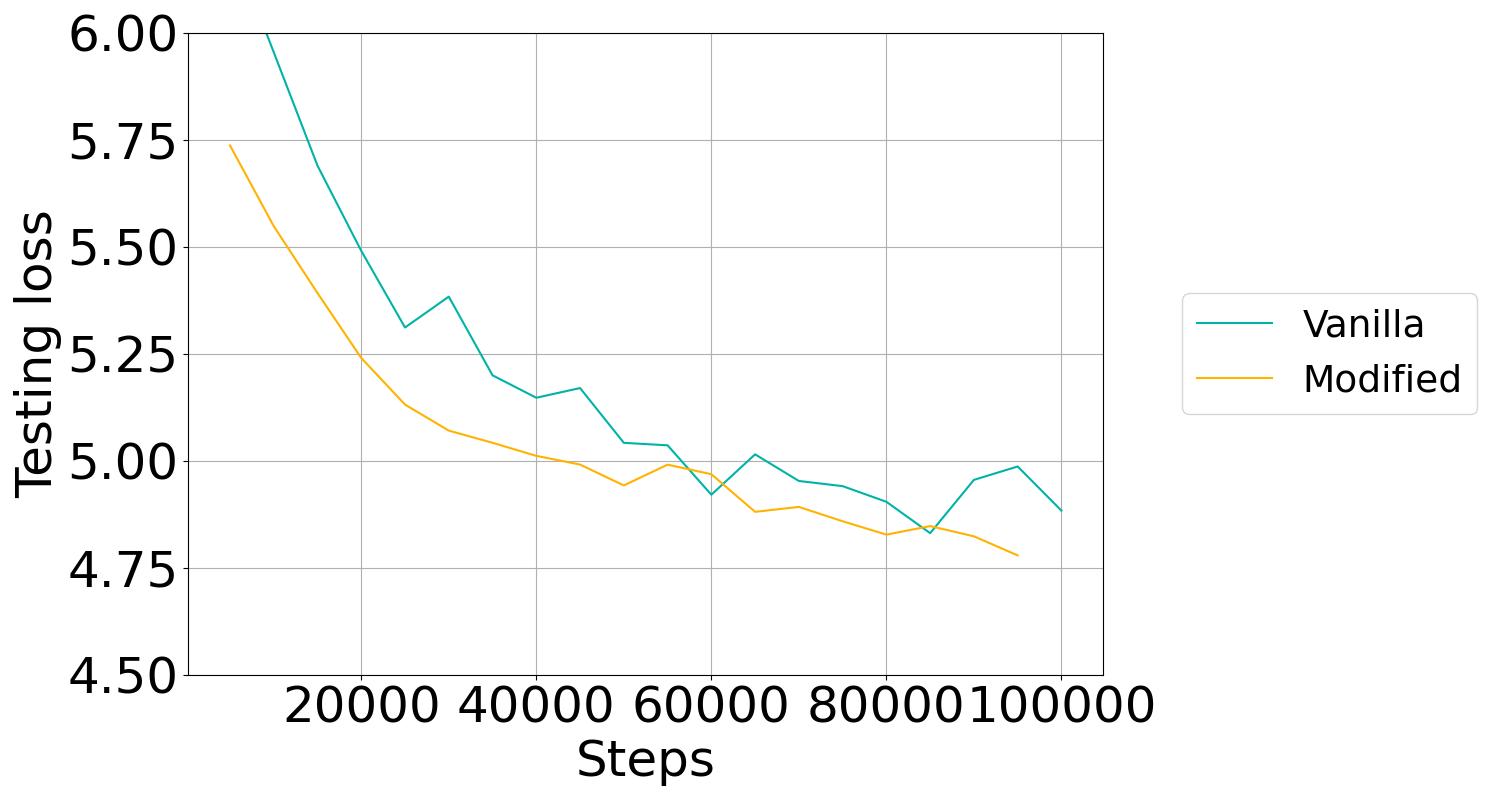}
        \caption{\tiny Testing loss of T5-Base on C4.}\label{figure:productive_t5_acc_testing}
    \end{subfigure}
    \caption{Accuracies or losses during training from scratch.}
\end{figure}

Visually obvious significant improvements in activation/gradient sparsity can be found in the experiment results. 
By integrating the training sparsity along steps, a training sparsity improvement of approximately 50\% is achieved in both CV and NLP tasks. 
By observing the testing sparsity of the last checkpoint, there is a testing improvement of relatively at least 36\%.
\cref{figure:productive_vit_end_training}, \cref{figure:productive_vit_end_testing}, \cref{figure:productive_t5_end_training} and \cref{figure:productive_t5_end_testing} suggest that sparsity improvements happen in shallow and middle layers of both encoder and decoder. We conjecture that it is because our improvement focuses on stimulating gradient sparsity, which is poorly trained since fewer parameters lie before shallow layers and implicit adversarial samples are insufficient. Zeroth biases can fix this insufficiency. In contrast, deep layers have sufficient implicit adversarial samples but the motivation directly toward activation sparsity is limited, which our modification hardly helps.
In \cref{table:productive_vit} and \cref{table:productive_t5}, in contrast to sparsity generalization brought by vanilla training one can observe sparsity overfitting after modification in both CV and NLP tasks. Fortunately, it is slight compared to the improvement.

The improvement in sparsity does not harm performance or generalization. 
According to \cref{table:productive_vit}, \cref{table:productive_t5}, \cref{figure:productive_vit_acc_testing} and \cref{figure:productive_t5_acc_testing}, testing accuracy of ViT only suffers an acceptable accuracy degeneration while the testing loss of T5 surprisingly improves after modification. 
Nevertheless, the training of the modified ViT indeed slows during the middle stage in \cref{figure:productive_vit_acc_training} and \cref{figure:productive_vit_acc_testing}, but the lagging diminishes at the end of training. The reason for this catching up remains mysterious and we conjecture it has something to do with the decayed learning rate. For T5 the modified version always have better training and testing losses. Notably, the modified T5 seems to have better generalization because in \cref{figure:productive_t5_acc_testing} modified T5 has significantly lower testing losses despite similar training losses with the vanilla one according to \cref{figure:productive_t5_acc_training}.

\subsection{Finetuning for Sparsity}\label{sec:f_experiments}

Apart from training from scratch, we finetune existing weights for sparsity to show a cheaper way to make them sparser, given that our modification is nearly plug-and-play as analyzed in \cref{sec:illustration}. 
We use the vanilla weights from \cref{sec:p_experiments} and finetune it for another 15 epochs or 10,000 steps after applying the modification. 
Since finetuning is relatively short and mainly serves inference, we only measure testing sparsity rather than training sparsity in this subsection. 

LoRA is used for finetuning to modify weight matrices while keeping learned knowledge, because we observe difficulties in recovering performance with full-parameter finetuning in tentative experiments. In both tasks, we use a relatively large $r=192$ for LoRA because we expect better sparsity requires massively rearranging the matrices. LoRA is applied at all matrices in MLP blocks as well as in self-attention blocks. 
For ViT we directly plug zero-initialized zeroth biases before MLP blocks and replace $\jrelu$ for $\relu$, which does not harm testing accuracy thanks to the derivative calibration of $\jrelu$ to imitate $\relu$. However we find T5 hard to directly adapt to $\jrelu$, so we mix two activation functions linearly and increase the portion of $\jrelu$ linearly after each step, starting from $0$ to $1$ during the first 3,000 steps.
In tentative experiments we find clamped scaling factors in LayerNorm layers critical to better sparsity, so we apply \cref{algo:refined_finetuning} after every step with $T_{\text{uplifting}}=3,000$ in both CV and NLP tasks. The finetuning lasts for 15 epochs for the CV task while 10,000 steps for the NLP task. All other hyperparameters are inherited from \cref{sec:from_scratch}.
The testing sparsity of modified models is compared in \cref{figure:finetuning_vit}, \cref{figure:finetuning_t5} and \cref{table:finetuning} with those before finetuning and those after vanilla finetuning. 
\begin{figure}
    \centering
    \resetHeight{}
    \begin{subfigure}[t]{0.24\textwidth}
        \myincludegraphics[width=\textwidth]{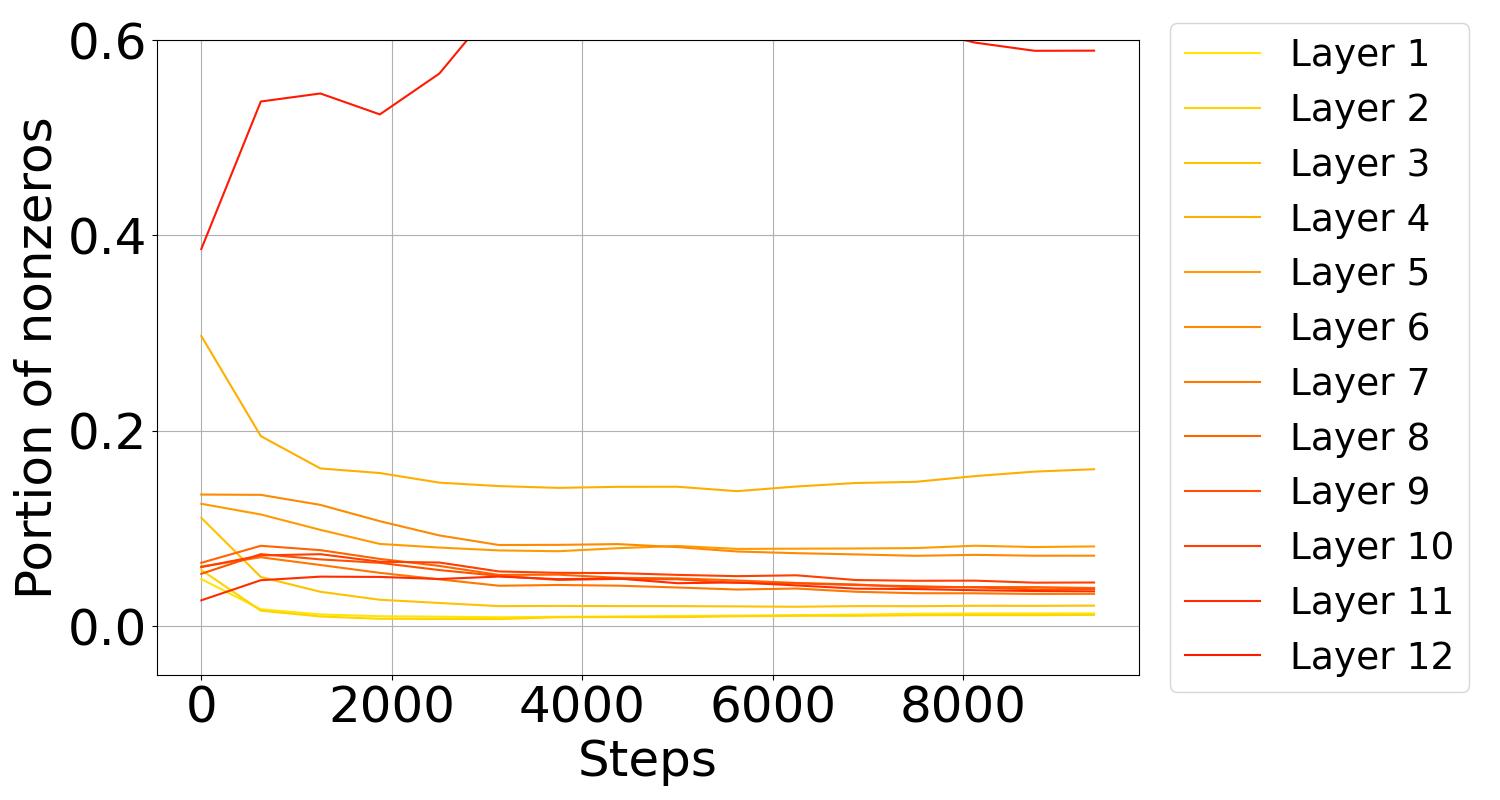}
        \caption{\tiny Modified ViT-Base.}\label{figure:finetuning_vit_sparsified}
    \end{subfigure}
    \begin{subfigure}[t]{0.24\textwidth}
        \myincludegraphics[width=\textwidth]{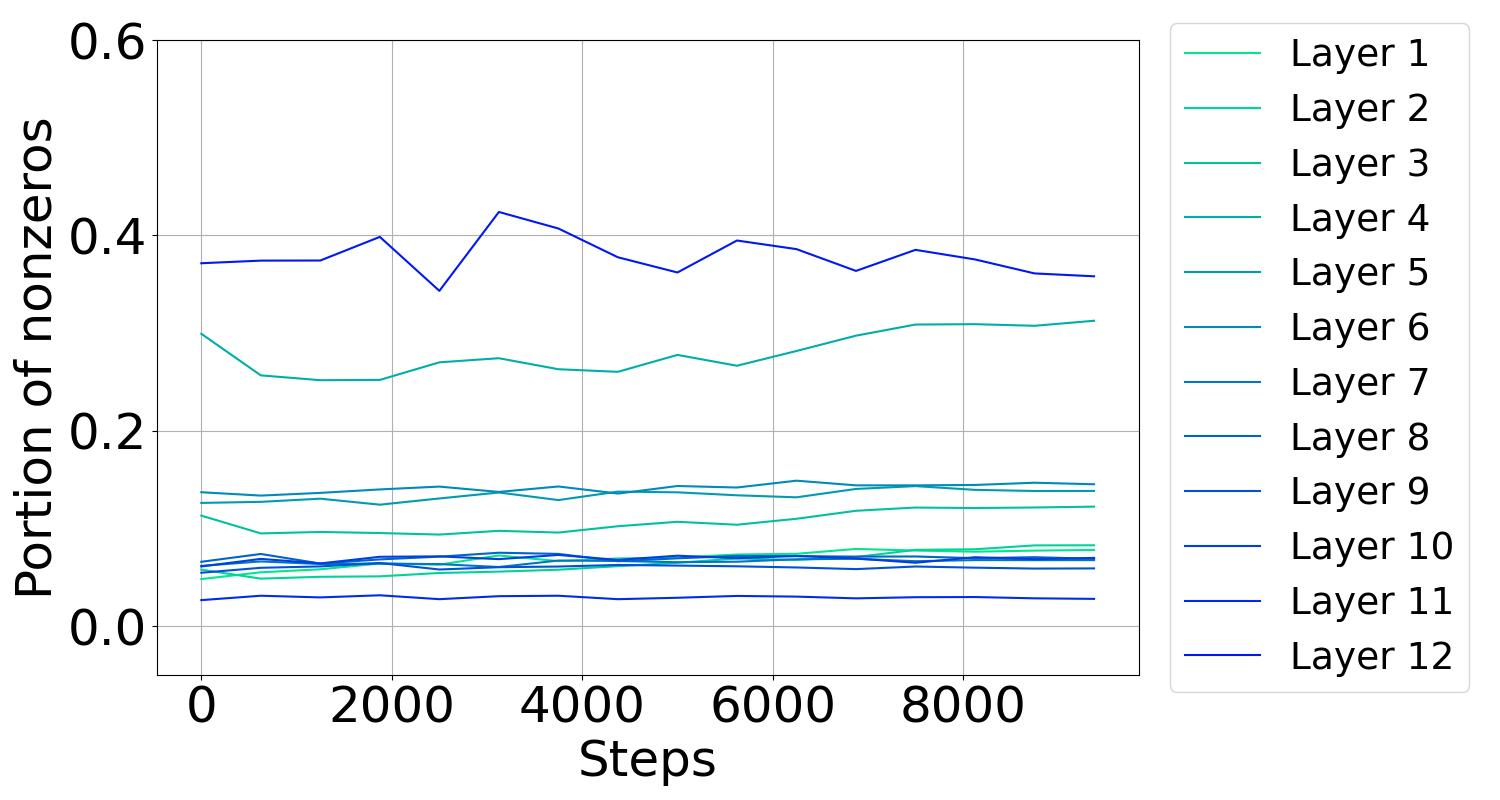}
        \caption{\tiny Vanilla ViT-Base.}\label{figure:finetuning_vit_vanilla}
    \end{subfigure}
    \begin{subfigure}[t]{0.24\textwidth}
        \myincludegraphics[width=\textwidth]{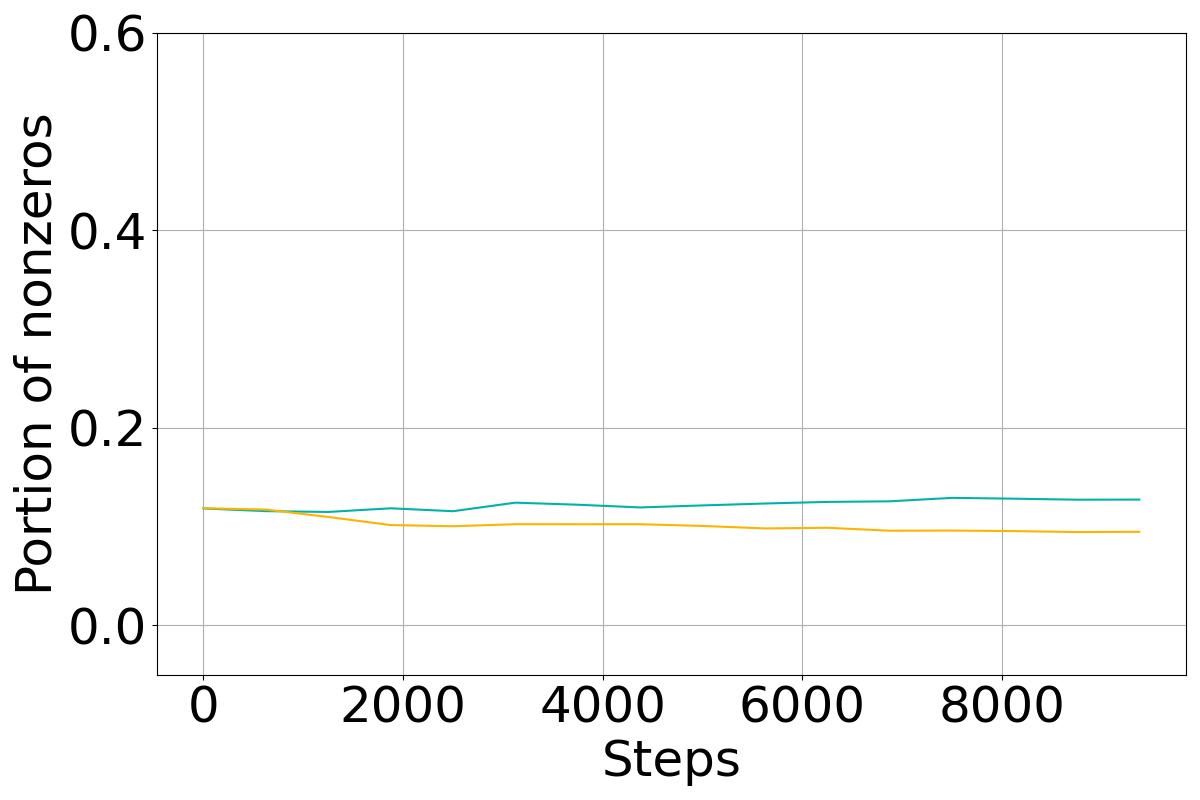}
        \caption{\tiny Testing sparsity averaged across layers during finetuning.}\label{figure:finetuning_vit_average}
    \end{subfigure}
    \begin{subfigure}[t]{0.24\textwidth}
        \myincludegraphics[width=\textwidth]{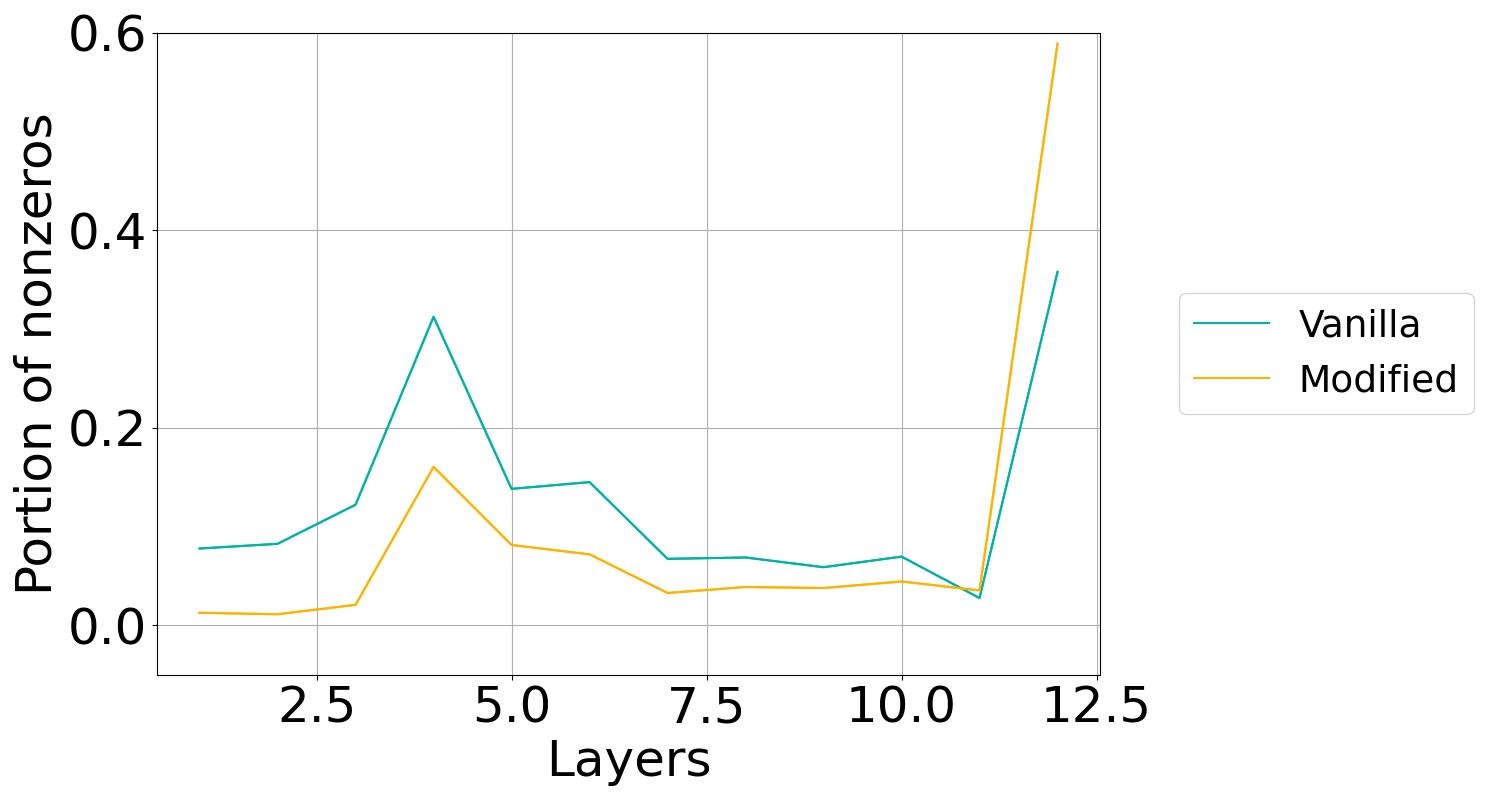}
        \caption{\tiny Layerwise testing sparsity during the last 100 steps.}\label{figure:finetuning_vit_end}
    \end{subfigure}
    \caption{Testing sparsity during finetuning ViT-Base/16 for sparsity on ImageNet-1K. Red and yellow are used for the modified model while blue and green indicate the vanilla one. }\label{figure:finetuning_vit}
\end{figure}
\begin{figure}
    \centering
    \resetHeight{}
    \begin{subfigure}[t]{0.24\textwidth}
        \myincludegraphics[width=\textwidth]{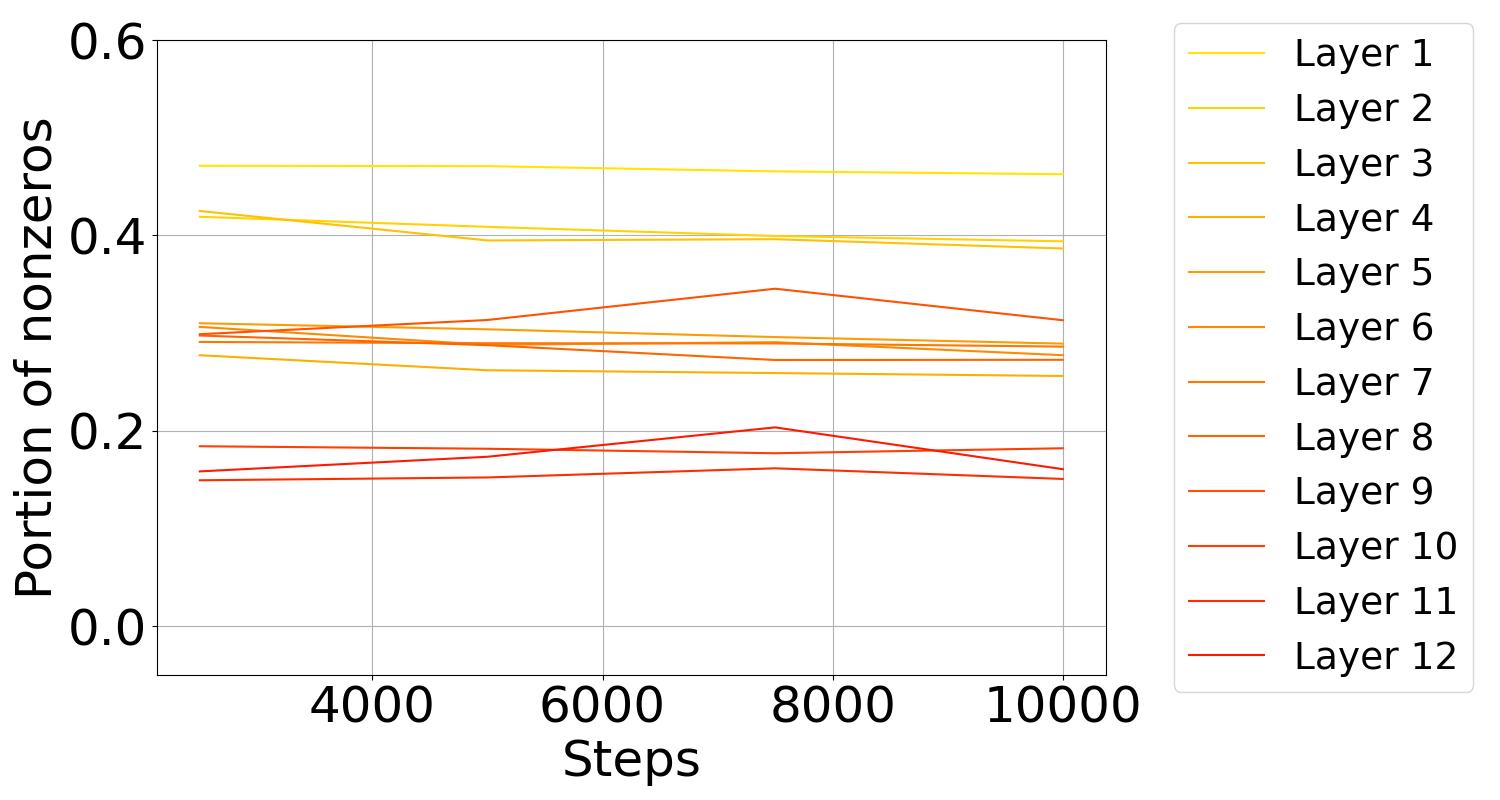}
        \caption{\tiny Encoder of modified T5.}\label{figure:finetuning_t5_sparsified_encoder}
    \end{subfigure}
    \begin{subfigure}[t]{0.24\textwidth}
        \myincludegraphics[width=\textwidth]{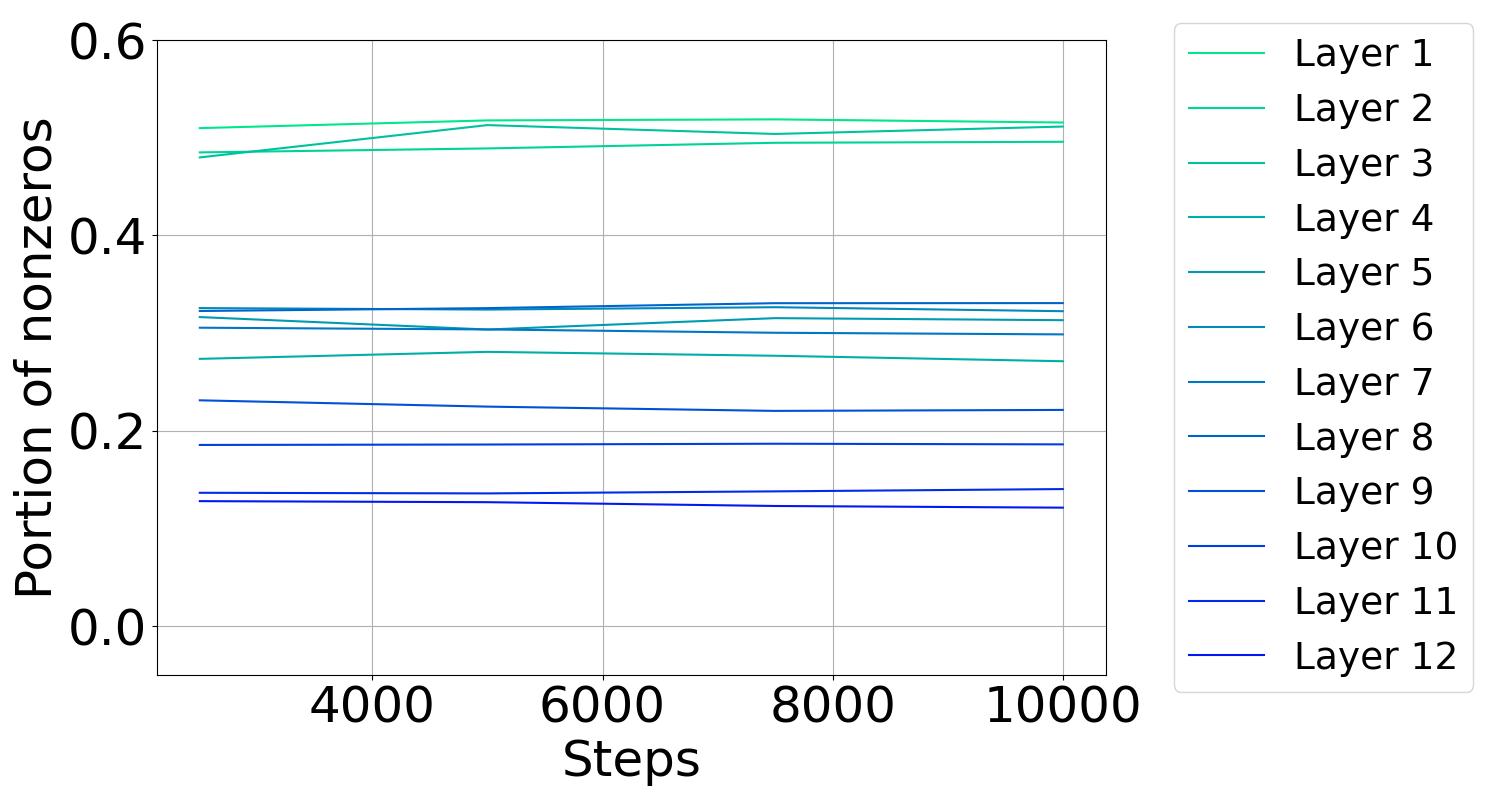}
        \caption{\tiny Encoder of Vanilla T5.}\label{figure:finetuning_vit_vanilla_encoder}
    \end{subfigure}
    \begin{subfigure}[t]{0.24\textwidth}
        \myincludegraphics[width=\textwidth]{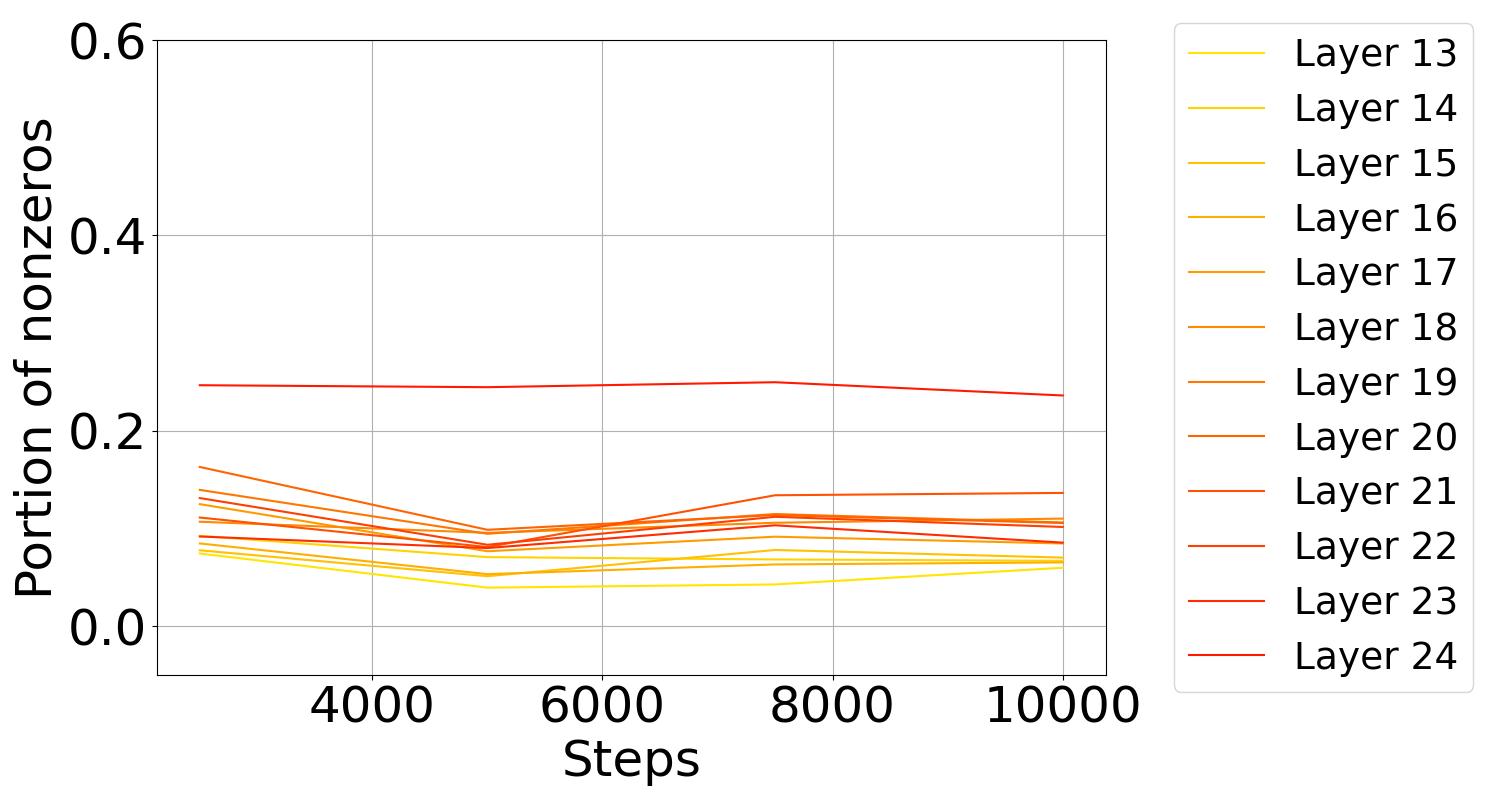}
        \caption{\tiny Decoder of modified T5.}\label{figure:finetuning_t5_sparsified_decoder}
    \end{subfigure}
    \begin{subfigure}[t]{0.24\textwidth}
        \myincludegraphics[width=\textwidth]{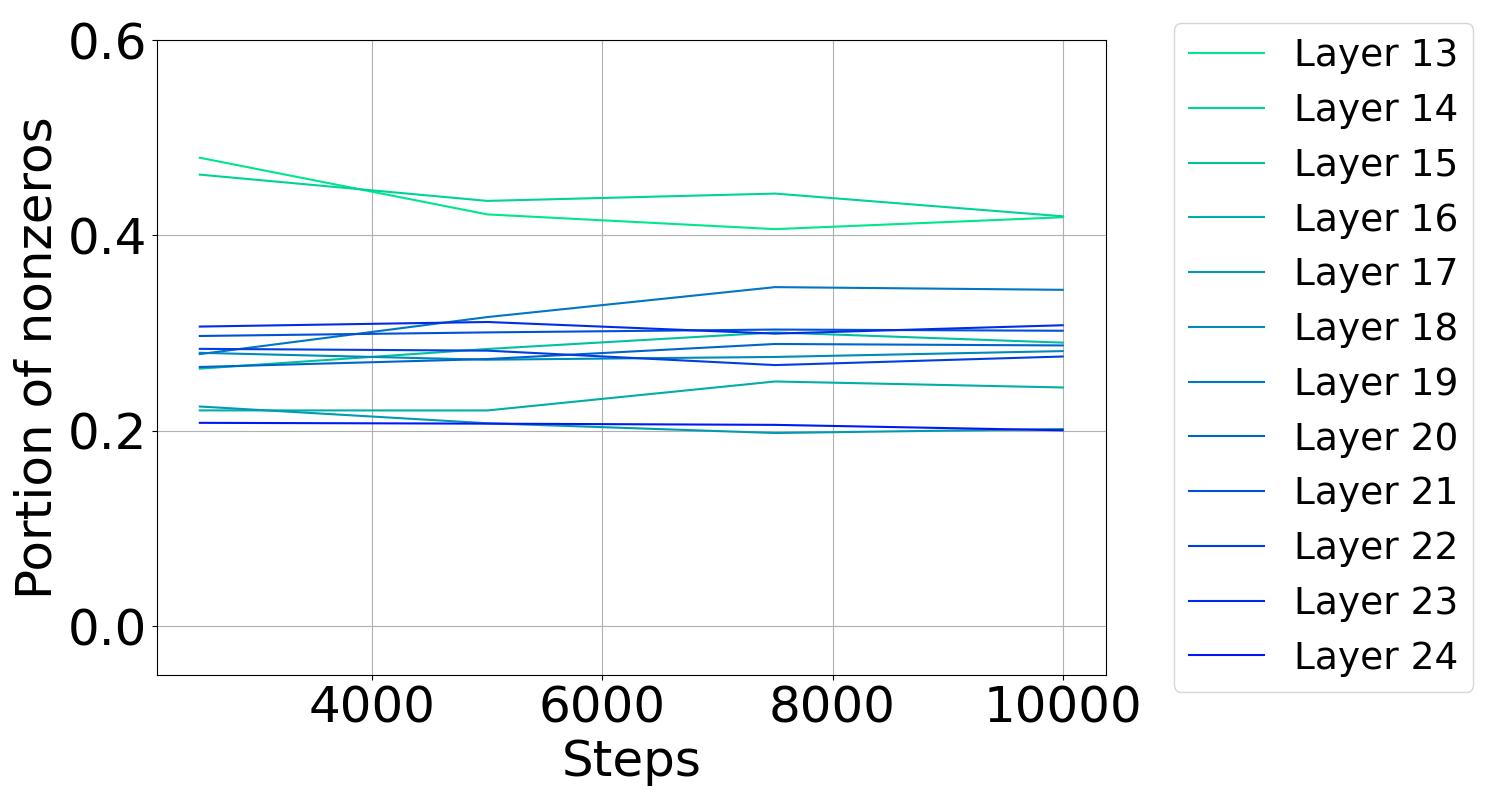}
        \caption{\tiny Decoder of Vanilla ViT-Base.}\label{figure:finetuning_t5_vanilla_decoder}
    \end{subfigure}
    \begin{subfigure}[t]{0.24\textwidth}
        \myincludegraphics[width=\textwidth]{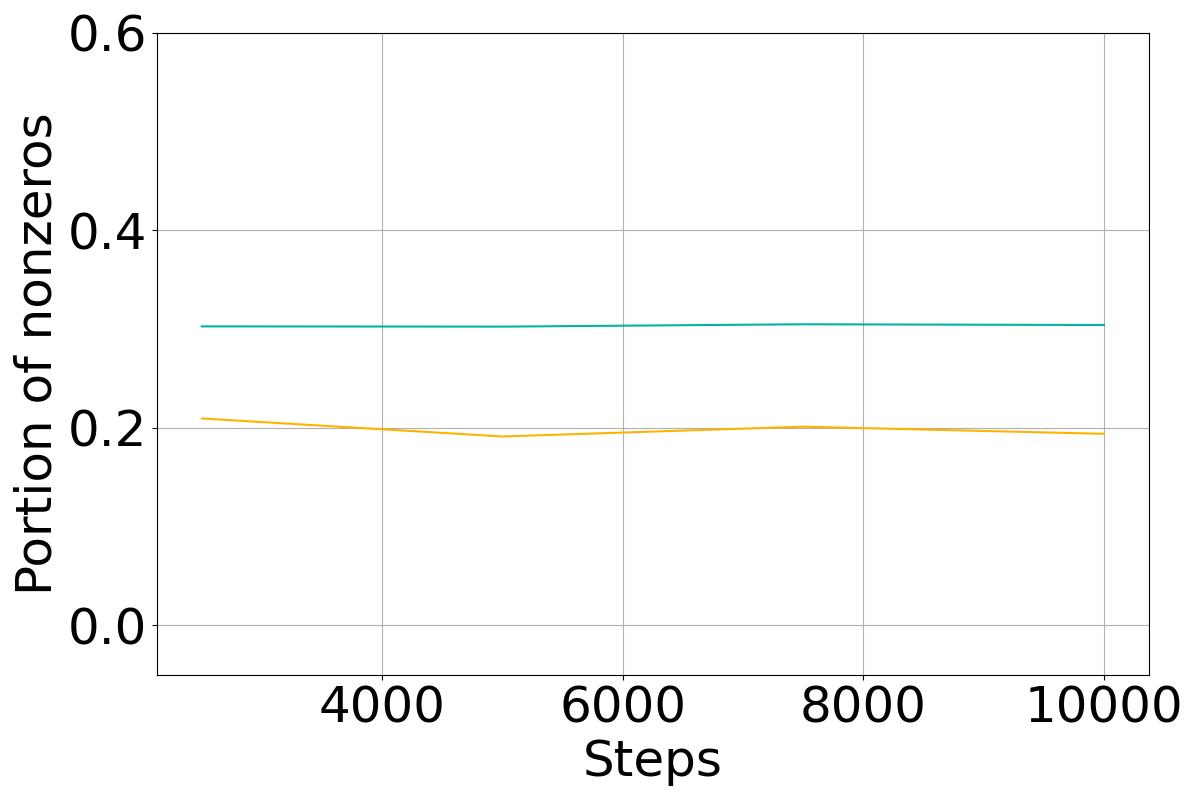}
        \caption{\tiny Testing sparsity averaged across layers during finetuning.}\label{figure:finetuning_t5_average}
    \end{subfigure}
    \begin{subfigure}[t]{0.24\textwidth}
        \myincludegraphics[width=\textwidth]{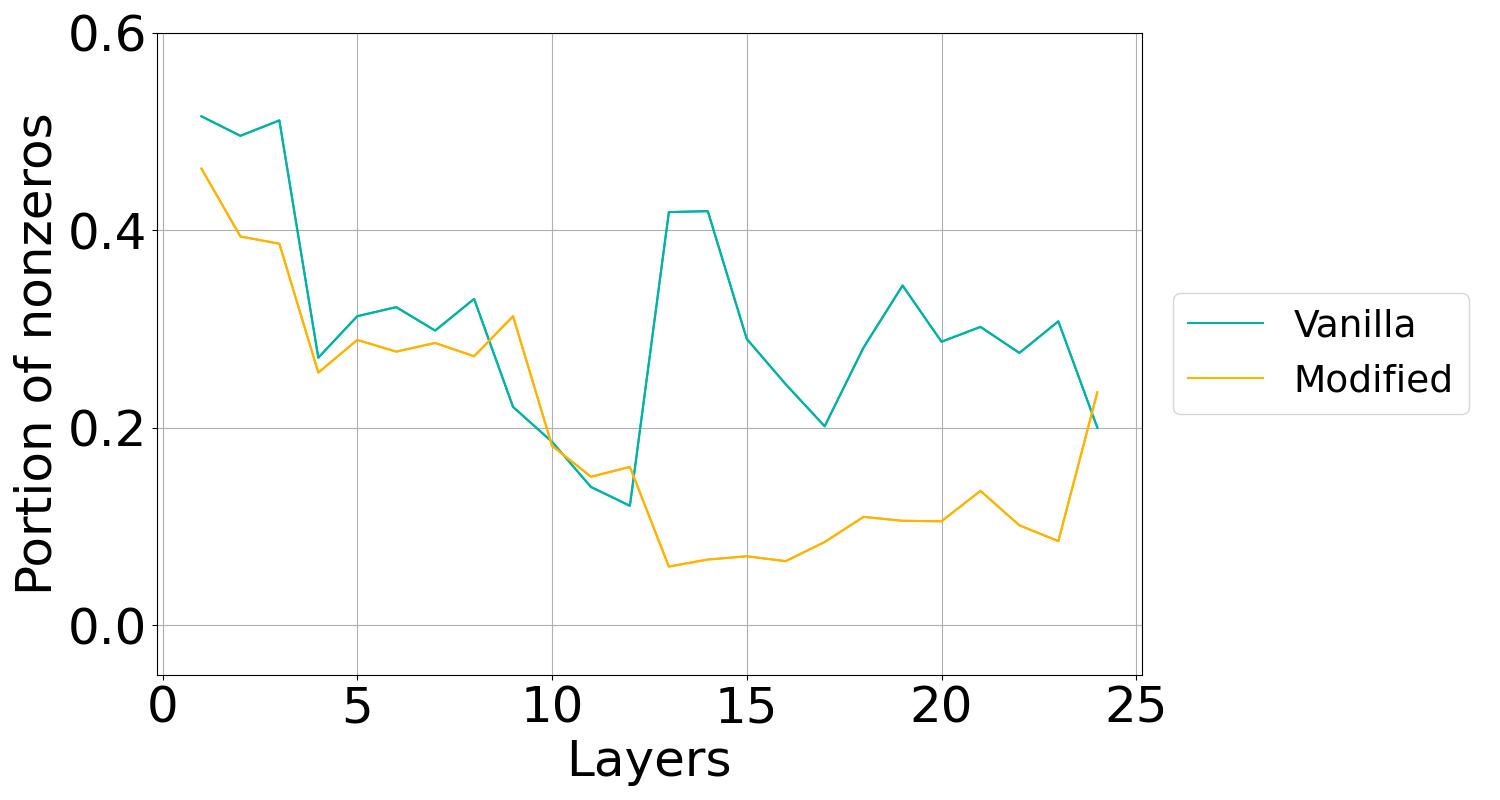}
        \caption{\tiny Layerwise testing sparsity at the end of finetuning.}\label{figure:finetuning_t5_end}
    \end{subfigure}
    \caption{Testing sparsity during finetuning T5-Base for sparsity on C4. Red and yellow are used for the modified model while blue and green indicate the vanilla one. }\label{figure:finetuning_t5}
\end{figure}
\begin{table}
    \centering
    \begin{tabular}{lrrrrrrrrrrr}
        \toprule
                        &   \multicolumn{3}{c}{ViT-Base/16 on ImageNet-1K}                                                                          &   \multicolumn{3}{c}{T5-Base on C4}\\
                        &   \multicolumn{1}{c}{Testing Sparsity}                &   \multicolumn{1}{c}{Acc@1}   &   \multicolumn{1}{c}{Acc@5}       &   \multicolumn{2}{c}{Testing Sparsity}                &   \multicolumn{1}{c}{Testing Loss}\\ 
        \midrule    
        Before Finetuning   &   $\mathbf{0.087}$                                    &   $\mathbf{77.35\%}$          &   $\mathbf{93.50\%}$          &   $0.299$                 &                           &   $4.88$\\
        Vanilla         &   $0.122$                                             &   $77.17\%$                   &   $93.44\%$                       &   $0.304$                 &                           &   $\mathbf{4.59}$\\
        Modified        &   $0.102$                                             &   $74.46\%$                   &   $92.10\%$                       &   $\mathbf{0.199}$        &   $\downarrow 33.42\%$    &   $4.61$\\
        \bottomrule
    \end{tabular}
    \caption{Summary of finetuning for sparsity.}\label{table:finetuning}
\end{table}

For ViT, the finetuning damages both testing sparsity and accuracy. However, modified finetuning still has advantages over vanilla LoRA finetuning in sparsity so it seems the drop is mainly due to finetuning. For T5 the improvement is significant. After finetuning for sparsity, T5 obtains similar testing sparsity compared to training for sparsity from scratch, indicating that existing language models can become sparser at a relatively low finetuning cost. This sparsity improvement only brings 0.02 degradation in testing loss. Despite the success of finetuning for sparsity in the NLP tasks, training for sparsity from scratch is still recommended for new models because it brings better testing sparsity as well as potential significant reduction in training costs from better training sparsity.
In ViT, sparsity improvement mainly happens in shallow and middle layers as illustrated in \cref{figure:finetuning_vit_end}, which can be explained in a similar way as \cref{sec:from_scratch}. For T5 the result is more interesting. According to \cref{figure:finetuning_t5_end} the sparsity improvement mainly happens in decoder layers. This may be related to architectural differences between encoders and decoders, which is of interest to investigate in future works.

\subsection{Role of LayerNorm Layers}

Although the experiments reported above are conducted with restricted or uplifted LayerNorm layers, we also conducted tentative experiments before these refinements. The results are not displayed in this manuscript due to length limit\jmlronly{ but they are available on our Huggingface repository\footnote{\logrepo{}}}. Without restricted LayerNorm, ViT has only about relatively 20\% sparsity improvements when trained from scratch, and with frozen not-uplifted LayerNorm, sparsity rarely changes during finetuning ViT from vanilla checkpoint whose scaling factors are observed very small. 
Both theory and experiments indicate the critical role of LayerNorm layers when it comes to sparsity.
Last but not least, \citet{observation} observe that there is \emph{no} activation sparsity in token mixing layers in MLP-Mixer. They attribute it to the small token-stacking dimension. Here based on the theoretical and empirical findings, we provide another conjecture that it is because MLP-Mixer does not have immediate LayerNorm layers before token-mixing MLP blocks \citep{mixer}, but with transposition lying in between.

\section{Empirical Supports for Assumptions}\label{sec:t_experiments}

In this section, evidence of assumptions required in \cref{sec:spectral_init} and \cref{sec:spectral_training} is empirically examined. 
The observations are taken during or after training from scratch or finetuning described by \cref{sec:p_experiments}.

\subsection{Spectral Increase in $\kkT$ and $M$}\label{sec:t_exp:spectral_increase}

\addvalue{kkT}{$\kkT$}
\addvalue{M}{$M$}
\addvalue{taskvanilla}{finetuning}
\addvalue{tasksparsified}{imagenet1k}
\addvalue{fulltaskvanilla}{finetuning for sparsity}
\addvalue{modelnamevanilla}{vanilla}
\addvalue{modelnamesparsified}{modified}
\addvalue{basemodeldirViT}{imagenet1k}
\addvalue{basemodeldirT5}{T5}
\addvalue{fullbasemodelViT}{ViT-Base/16 on ImageNet-1K}
\addvalue{fullbasemodelT5}{T5-Base on C4}
The traces of $\kkT$ and $M$ during training from scratch of ViT and T5 are displayed in \cref{figure:spectral_increase_ViT} and \cref{figure:spectral_increase_T5}, respectively.

\foreach \basemodel in {ViT, T5}{
    \begin{figure}
        \resetHeight{}
        \centering
        \foreach \modeltype in {sparsified, vanilla}{
            \foreach \matrixtype in {kkT, M}{
                \begin{subfigure}[t]{0.22\textwidth}
                    \centering
                    \myincludegraphics[width=\textwidth]{pic/results/dumps/\usevalue{basemodeldir\basemodel}/from_scratch/\modeltype/spectral_increase/\matrixtype.jpg}
                    \caption{\tiny \usevalue{\matrixtype} of \usevalue{modelname\modeltype} \basemodel}\label{figure:spectral_increase_\basemodel_\modeltype_\matrixtype}
                \end{subfigure}
            }
        }
        \caption{
            The traces of \usevalue{kkT} and \usevalue{M} of \usevalue{fullbasemodel\basemodel} during training from scratch.
        }\label{figure:spectral_increase_\basemodel}
    \end{figure}
}

In \cref{figure:spectral_increase_ViT}, both modified and vanilla ViTs have increasing $\trace{M} = \norm{g^l_K}_2^2$ and $\trace{\kkT}$ that increases rapidly first and decays slowly. Note that the decay in the \emph{late} phase of the training does not directly conflict with the assumption of theoretical results because the sparsity improvements of both modified and vanilla ViTs happen mainly in the \emph{early} stage of the training, where main sparsity improvemnts happen as well according to \cref{figure:productive_vit}. As a result, better implicit adversarial robustness and flatness cannot be obtained by shriking $\kkT$ or $M$.
Intriguingly in \cref{figure:spectral_increase_T5} for T5, $\kkT$ and $M$ swap their roles. $\trace{M}$ drop rapidly first and then stables. This indicates that although $\norm{g^l_V}_2^2$ decreases flatness can also be parallelly obtained through sparsity.

\subsection{Moderate Alignment in $\eta^l$ between $g^l$ and $\gamma^l$}\label{sec:t_exp:align_eff}

\cref{sec:t_exp:spectral_increase} have demonstrated that $\norm{\eta^l}_2^2$ is not decreased by decreasing $\norm{g^l}_2^2$, at least in ViT. 
To bring $\eta^l$ and $\gamma^l$ closer, we further demonstrate that decreased $\norm{\eta^l}_2^2 = \norm{g^l \hadamard \gamma^l}_2^2$ is not done by adversarially misaligning $g^l$ with $\gamma^l$, i.e., multiplying gradients $g^l$ of large magnitudes with derivatives of non-activated neurons and leaving gradients of small magnitudes to activated neurons.

We load checkpoints after training models on ImageNet-1K or C4, as described in \cref{sec:p_experiments}. For each checkpoint, we randomly select one batch of samples (2048 samples for ImageNet-1K or 256 samples for C4) to simulate batches encountered in training and use backward propagation to compute $g^l$, gradients w.r.t. activations, at all layers. The layerwise distribution of squared entries in $g^l$ after mixing all samples, tokens and entries together, as well as the distribution conditioned on activated neurons, is displayed in \cref{fig:alignment_eff}.

In \cref{fig:alignment_eff}, we observe that for most layers, activated neurons have gradients with a distribution similar to the distribution of all gradients. Furthermore, the former is even more concentrated, shorter-tailed and righter-shifted than the latter, in both ViT and T5 of both vanilla and modified architectures throughout training.
This moderate alignment indicates that there is no adversarial misalignment in $\eta^l$ and a considerable portion of entries in $\gamma^l$ have weights from $g^l$ that are relatively large and can be lowerbounded or approximated by $\ex{\norm{g^l}_2^2} / d$. 
Specifically under $\relu$-activation, \cref{lemma:eff_and_sparsity} is made meaningful, since $\ex{\left(g^l_{K, i, j}\right)^2 \mid \gamma^l_{i, j} > 0}$ is close to or even larger than $\ex{\left(g^l_{K, i, j}\right)^2}$ (see \cref{fig:alignment_ratio_imagenet1k_vanilla} and \cref{fig:alignment_ratio_T5_vanilla}) for most layers, so the former cannot be reduced since the increase in the latter at least in ViT as observed in \cref{sec:t_exp:spectral_increase}, and the only way to reduce effective gradient sparsity measured in $L_2$ norms is to improve gradient or activation sparsity directly measured in $L_0$ norms. As a result, our $\eta^l$-based theories are tightly related to activation sparsity measured in $L_0$ norm for $\relu$ networks and are approximately related for $\jrelu$ networks.

\begin{figure}
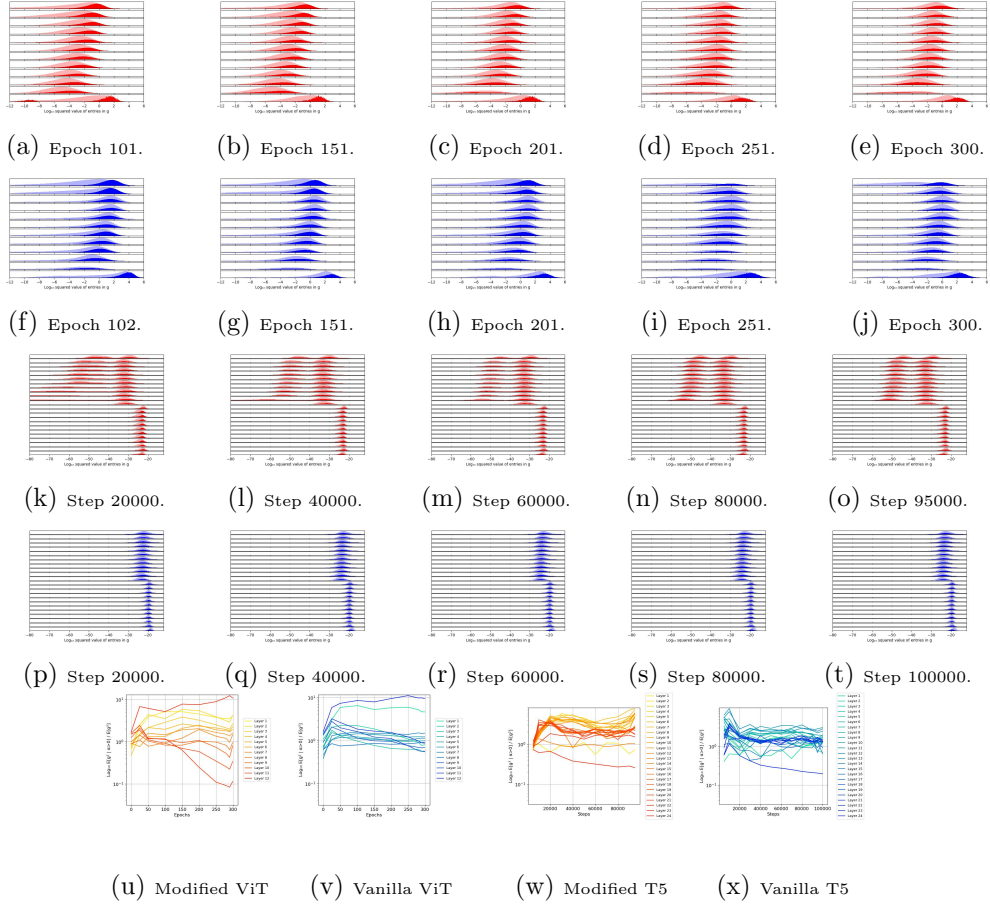

    \centering
    \resetHeight
    \foreach \model in {sparsified,vanilla}{
        \foreach \i in {100,150,200,250,299}{
            \ifthenelse{\equal{\model}{vanilla} \AND \equal{\i}{100}}
            {\renewcommand{\i}{101}}
            {}
            \pgfmathtruncatemacro{\epochid}{1+\i} 
            \begin{subfigure}[t]{0.15\textwidth}
                \myincludegraphics[width=\textwidth]{pic/results/dumps/imagenet1k/gradient_density/\model/\i.jpg}
                \caption{\tiny Epoch \epochid.}\label{fig:alignment_imagenet1k_\model_\i}
            \end{subfigure}
        }
        \\
    }

    \foreach \model in {sparsified,vanilla}{
        \foreach \i in {20000,40000,60000,80000,95000}{
            \ifthenelse{\equal{\model}{vanilla} \AND \equal{\i}{95000}}
            {\renewcommand{\i}{100000}}
            {}
            \begin{subfigure}[t]{0.15\textwidth}
                \myincludegraphics[width=\textwidth]{pic/results/dumps/T5/gradient_density/\model/\i.jpg}
                \caption{\tiny Step \i.}\label{fig:alignment_T5_\model_\i}
            \end{subfigure}
        }
        \\
    }

    \resetHeight
    \newcommand{\modelname}{}
    \newcommand{\taskname}{}
    \foreach \task in {imagenet1k,T5}{
        \foreach \model in {sparsified,vanilla}{
            \begin{subfigure}[t]{0.15\textwidth}
                \myincludegraphics[width=\textwidth]{pic/results/dumps/\task/gradient_density/\model/ratio.jpg}
                \ifthenelse{\equal{\model}{sparsified}}
                    {\renewcommand{\modelname}{Modified}}
                    {\renewcommand{\modelname}{Vanilla}}
                \ifthenelse{\equal{\task}{imagenet1k}}
                    {\renewcommand{\taskname}{ViT}}
                    {\renewcommand{\taskname}{T5}}
                \caption{\tiny \modelname{} \taskname{}}\label{fig:alignment_ratio_\task_\model}
            \end{subfigure}
        }
    }
    \caption{%
        Empirical distribution of $\log_{10}$-ed squared values of entries in $g^l$, observed in modified (red) and vanilla (blue) ViT-Base/16 (\cref{fig:alignment_imagenet1k_sparsified_100}-\cref{fig:alignment_imagenet1k_vanilla_299}) and T5 (\cref{fig:alignment_T5_sparsified_20000}-\cref{fig:alignment_T5_vanilla_100000}).
        The top rows indicate the first layers of the encoder while the bottom row indicates the last layer of the encoder or decoder.
        The semi-transparent distribution indicates the distribution of \emph{all} entries in $g^l$s, while the opaque distribution indicates that of only entries corresponding to activated neurons. The two distributions are visualized in an unnormalized manner using counts in equal-width bins, but the latter distribution is scaled larger to have better visualization.
        \cref{fig:alignment_ratio_imagenet1k_sparsified}-\cref{fig:alignment_ratio_T5_vanilla} compute $\log_{10} \frac{\ex[X, l, i, j]{\left(g^l_{K, i, j}\right)^2 \mid \alpha^l_{i, j} > 0}}{\ex[X, l, i, j]{\left(g^l_{K, i, j}\right)^2}}$ from the histograms.
    }\label{fig:alignment_eff}
\end{figure}

\subsection{Contribution of Gradient Sparsity and Direct Activation Sparsity}\label{sec:t_exp:contribution}

\cref{theorem:main}, \cref{theorem:main_with_hidden_vectors_and_layernorm} and \cref{theorem:main_with_effective_duplication} indicate that gradient sparsity and activation sparsity together lowerbound the implicit adversarial robustness and flatness, with coefficients of $\left(\sqrt{d} - c\right)^2$ and $\norm{g_V^l}_2^2$ empirically. To provide support for our emphasis on gradient sparsity, we compare $\left(\sqrt{d} - c\right)^2$ and $\norm{g_V^l}_2^2$. 
Since scaling factors are controlled and tracked only in modified models, only results from modified training are provided.

\begin{figure}
    \centering
    \resetHeight{}
    \begin{subfigure}[t]{0.22\textwidth}
        \myincludegraphics[width=\textwidth]{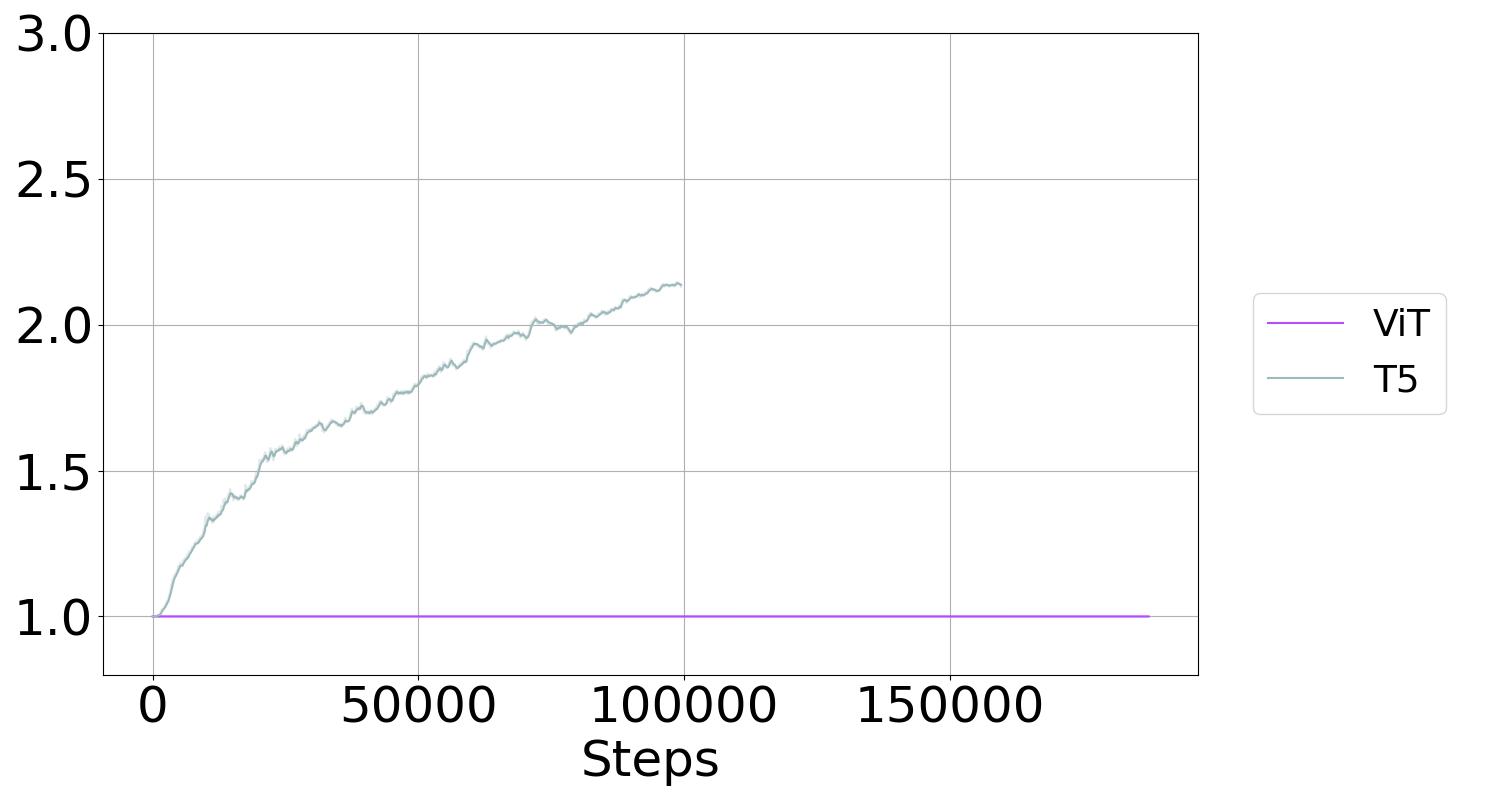}
        \caption{\tiny Change of scaling factors in LayerNorm layers.}\label{figure:layernorm}
    \end{subfigure}
    \begin{subfigure}[t]{0.22\textwidth}
        \myincludegraphics[width=\textwidth]{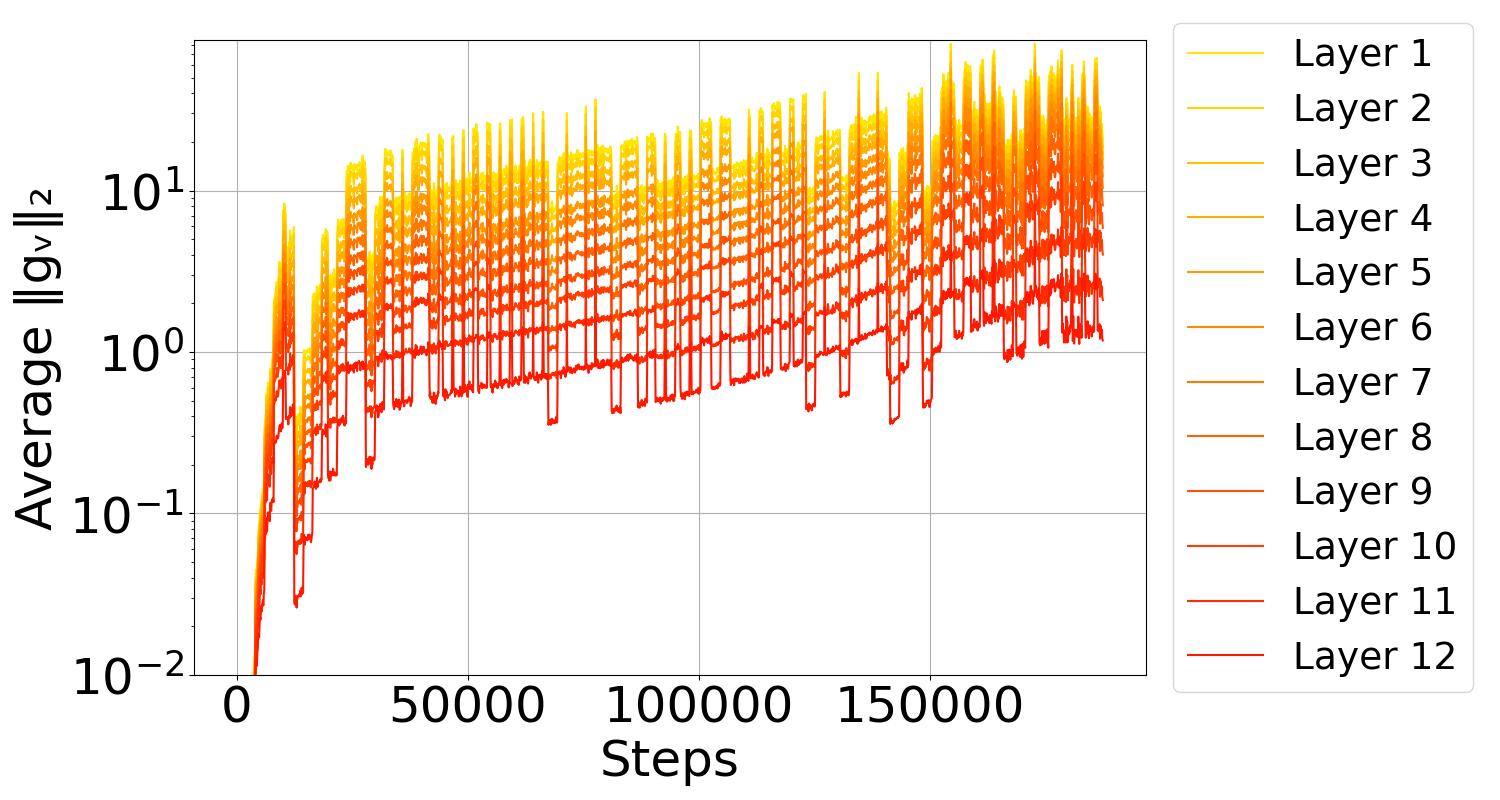}
        \caption{\tiny Average $\norm{g_V^l}_2$ in modified ViT.}\label{figure:g_V_vit}
    \end{subfigure}
    \begin{subfigure}[t]{0.22\textwidth}
        \myincludegraphics[width=\textwidth]{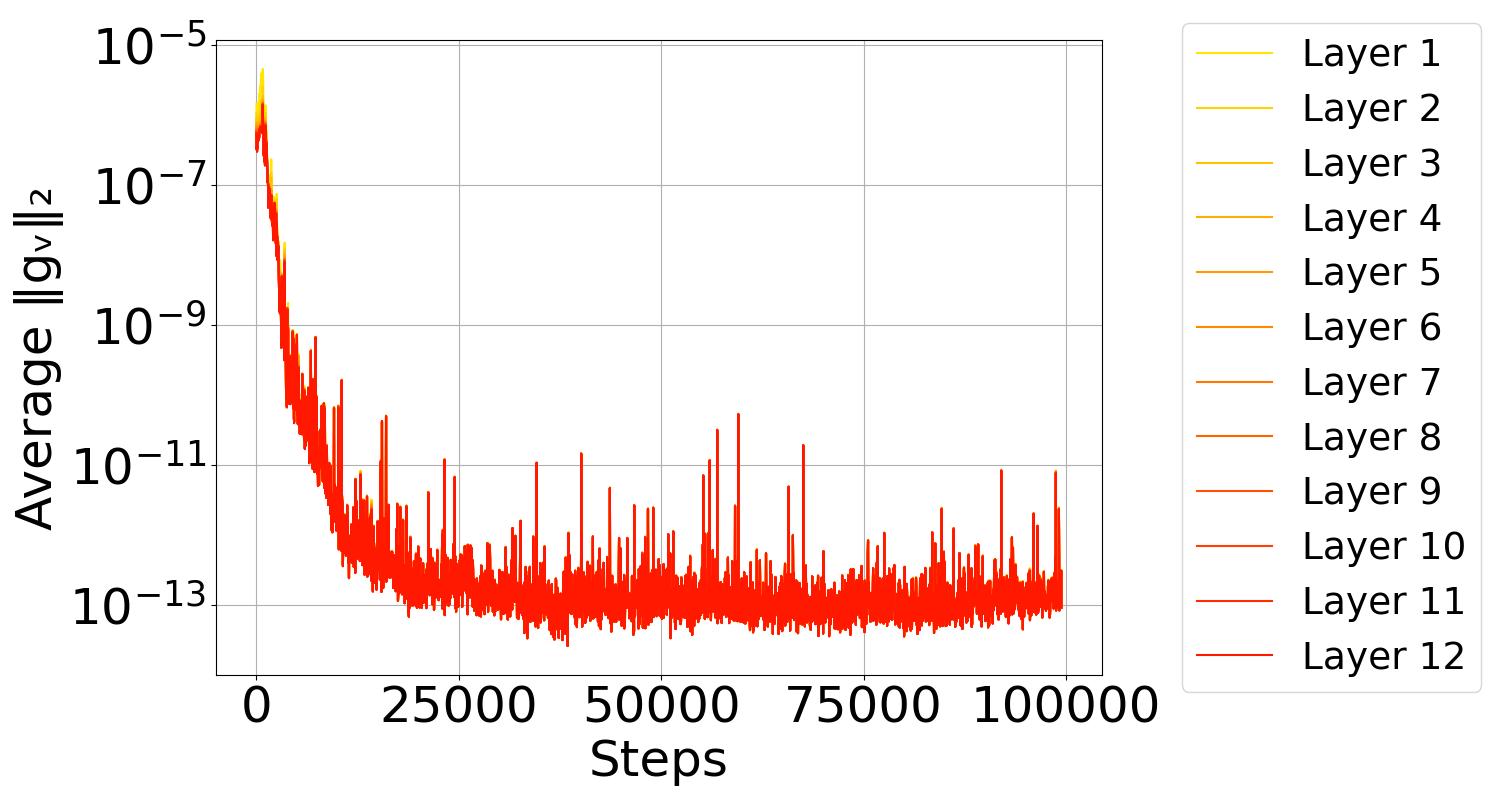}
        \caption{\tiny Average $\norm{g_V^l}_2$ in encoder layers of modified T5.}\label{figure:g_V_T5_encoder}
    \end{subfigure}
    \begin{subfigure}[t]{0.22\textwidth}
        \myincludegraphics[width=\textwidth]{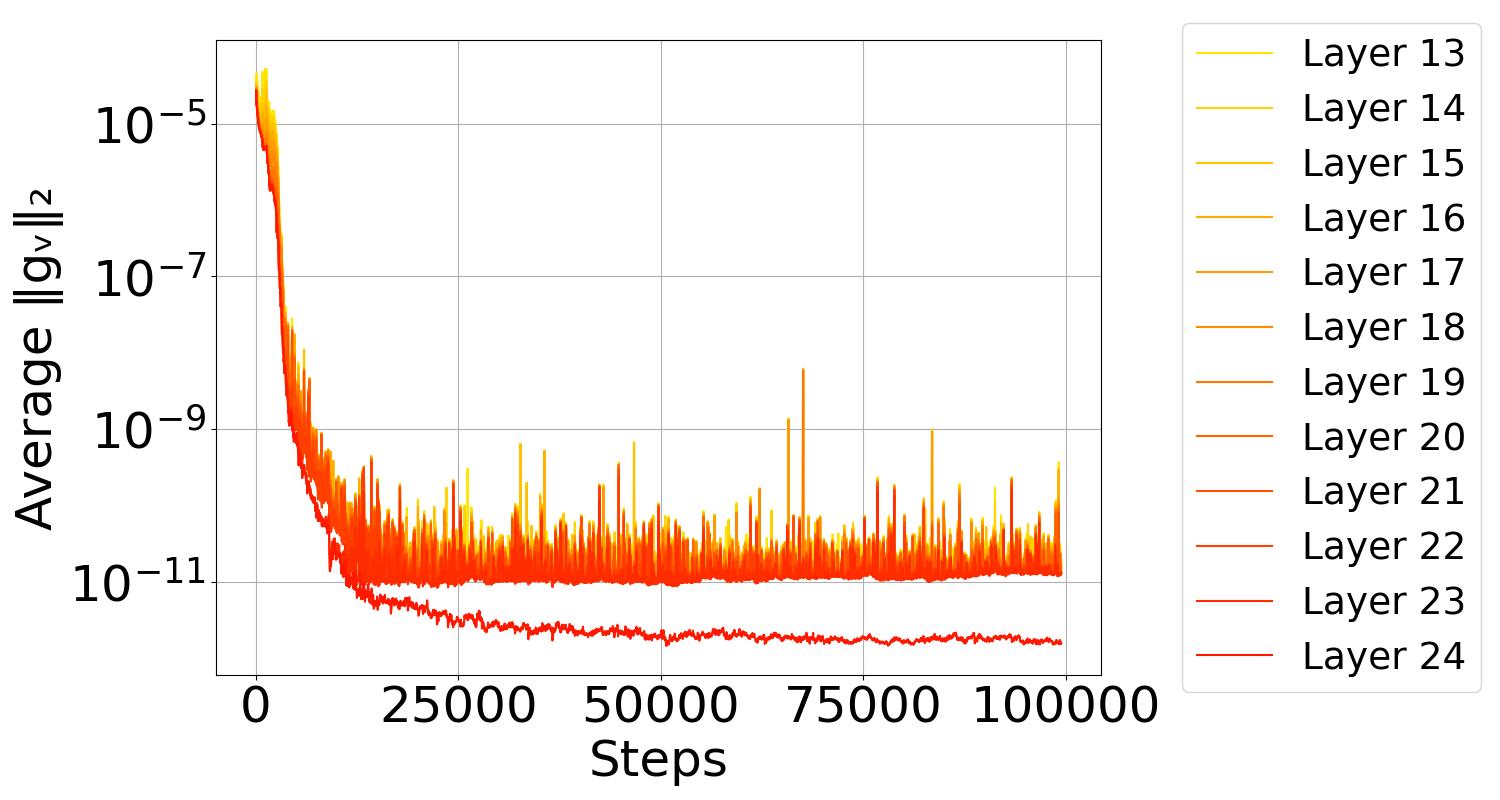}
        \caption{\tiny Average $\norm{g_V^l}_2$ in decoder layers of modified T5.}\label{figure:g_V_T5_decoder}
    \end{subfigure}
    \caption{Contribution of gradient and direct activation sparsity.}\label{figure:contribution}
\end{figure}

\cref{figure:layernorm} shows an intriguing difference between ViT and T5, i.e., the scaling factors in T5 increase slowly while those in ViT stick to $1$, indicating the latter's tendency to hold still or to decrease. Whether it is because of architectural or subtle implementation differences or the more interesting differences between CV and NLP tasks remains mysterious. Nevertheless, \cref{figure:layernorm} indicates that the coefficients of gradient sparsity in ViT and T5 is at least $\left(0.9 \sqrt{d}\right)^2 = 0.81d \approx 622$ and $\sim \left(1.5 \times 0.9\sqrt{d}\right)^2 \approx 1.8 d \approx 1400$, respectively, most of the time.
On the other hand, \cref{figure:g_V_vit} suggests in ViT, $\norm{g_V^l}_2$ varies a lot across different layers and steps. In shallow layers it can be larger than $25$ in the late phase of training, surpassing the coefficients of gradient sparsity. But in deep layers, it is close to $1$, indicating gradient sparsity's dominance. As a result, for ViT, sparsity in shallow layers and late training relies on direct activation sparsity while that in deep layers is dominated by gradient sparsity. This observation accounts for the gradient sparsity's weakening in late training as displayed in \cref{fig:validation_full} because at that time the activation sparsity has enough coefficients to compete with gradient sparsity.
In T5, $\norm{g_V^l}_2$ is extremely small, indicating that sparsity is fully due to gradient sparsity instead of direct activation sparsity.

\subsection{Spectral Concentration of $\kkT$}\label{sec:t_exp:spectral_concentration}

The log-scaled histograms of eigenvalues of $\kkT$ at several epochs of modified ViT's entire training on ImageNet-1K are displayed in \cref{fig:eigenvalues_of_kkT}.
\begin{figure}
    \centering
    \resetHeight{}
    \begin{subfigure}[t]{0.22\textwidth}
        \centering
        \myincludegraphics[width=\textwidth]{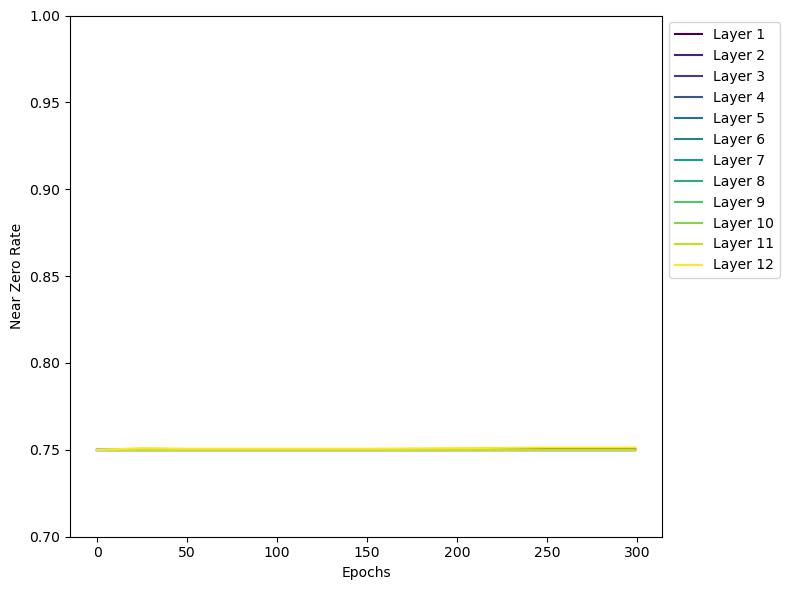}
        \caption{\scriptsize Portion of near-zero ($<10^{-3}$) eigenvalues.}\label{fig:near_zero_rate}
    \end{subfigure}
    \foreach \i in {0, 50, 100, 150, 200, 250, 299}{
        \begin{subfigure}[t]{0.2\textwidth}
            \centering
            \myincludegraphics[width=\textwidth]{pic/results/dumps/imagenet1k/from_scratch/sparsified/spectral/eigenvalues_\i.jpg}
            \pgfmathtruncatemacro{\epochid}{1+\i} 
            \caption{\scriptsize Epoch $\epochid$.}\label{fig:spectral_epoch_\i}
        \end{subfigure}
    }
    \\
    \begin{subfigure}[t]{0.3\textwidth}
        \centering
        \includegraphics[width=\textwidth]{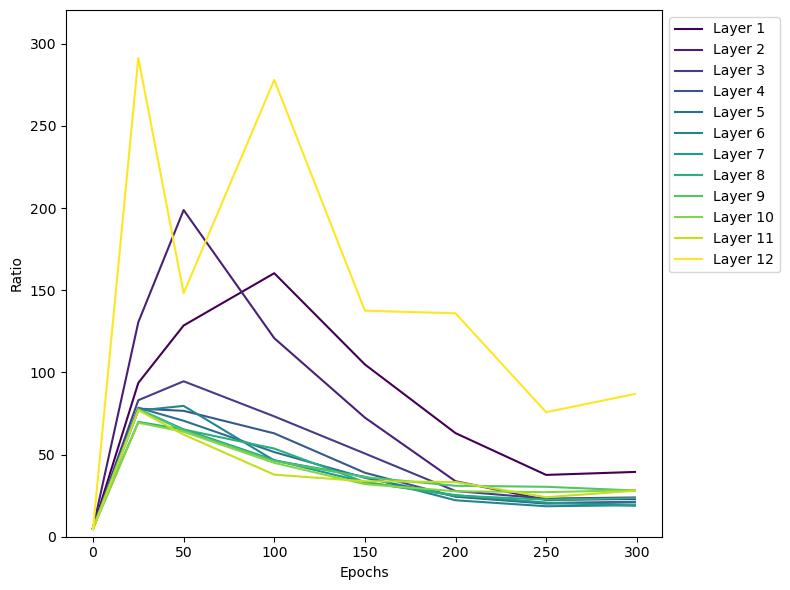}
        \caption{\scriptsize Ratios between upper- and lower-bounds of the majority of eigenvalues.}\label{fig:ratio_full}
    \end{subfigure}
    \begin{subfigure}[t]{0.3\textwidth}
        \centering
        \includegraphics[width=\textwidth]{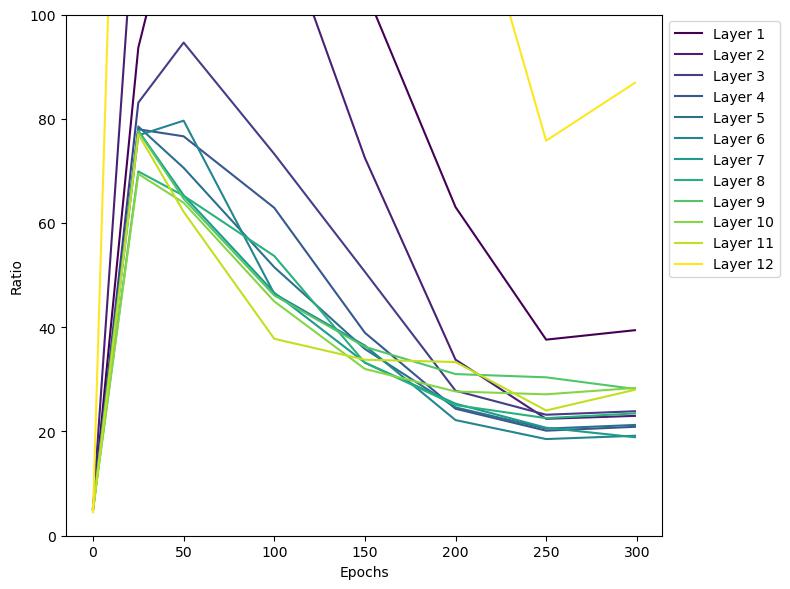}
        \caption{\scriptsize Ratios between upper- and lower-bounds of the majority of eigenvalues, focusing on deep layers and the later half of training.}\label{fig:ratio}
    \end{subfigure}
    \caption{Empirical spectral distribution of the first weight matrices in MLP blocks, observed in modified ViT-Base/16 trained on ImageNet-1K with PyTorch's recipe \citep{pytorch_recipe}. We consider eigenvalues $< 10^{-3}$ as near-zero eigenvalues. \cref{fig:near_zero_rate} displays the portion of near-zero eigenvalues during training. \cref{fig:spectral_epoch_0}-\cref{fig:spectral_epoch_299} display layerwise distributions of \emph{non-near-zero} eigenvalues at different checkpoints by showing the histograms of $\log_{10} \lambda_i$. The top row indicates Layer $1$ while the bottom one represents Layer $12$.  In these distributions, we annotate the majority, i.e., the shortest interval that covers at least 70\% (approximately one sigma) of eigenvalues, with red vertical lines. We compute the width of the majority and un-log it, illustrating the ratio within which the majority varies in \cref{fig:ratio_full} and \cref{fig:ratio}.
    }\label{fig:eigenvalues_of_kkT}
\end{figure}
It can be seen that there is a portion of near-zero ($< 10^{-3}$) eigenvalues, which are well separated from non-near-zero ones. 
The portion of near-zero values in \cref{fig:near_zero_rate} is stable and consistent with the theoretical expectation computed by the shape of $K^l$.
For non-near-zero eigenvalues, shallow layers and the last layer tend to have non-concentrating eigenvalues, but all of them still vary within the ratio of $300$ and most of them vary within the ratio of $100$, despite that there are about $3072 \times \frac{1}{4} = 768$ non-near-zero eigenvalues. For deep layers, the majority varies within a ratio of $80$, and the ratio decreases to about $40$ during the ending phase of training, indicating a tendency toward concentration as the training proceeds.

\begin{figure}
    \centering
    \resetHeight{}
    \begin{subfigure}[t]{0.22\textwidth}
        \centering
        \myincludegraphics[width=\textwidth]{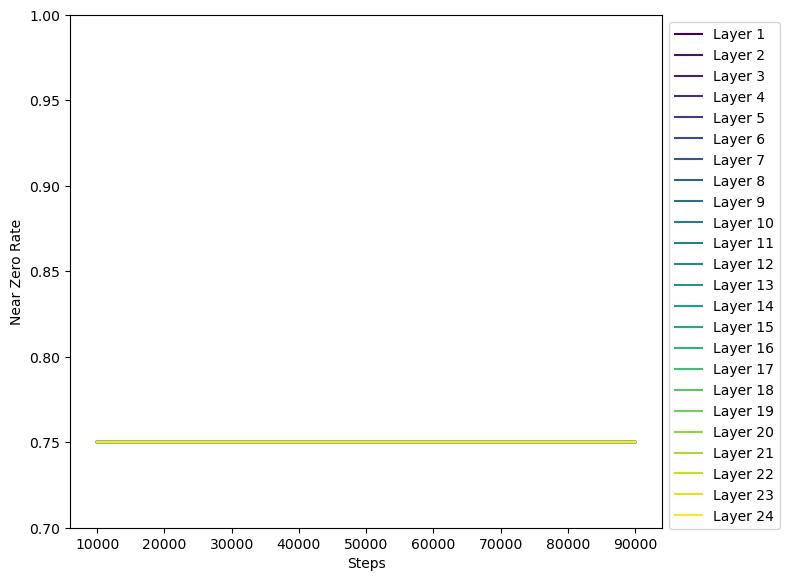}
        \caption{\scriptsize Portion of near-zero ($<10^{-1}$) eigenvalues.}\label{figure:spectral_concentration_t5_near_zero_rate}
    \end{subfigure}
    \foreach \i in {10000, 30000, 50000, 70000, 90000}{
        \begin{subfigure}[t]{0.24\textwidth}
            \centering
            \includegraphics[width=\textwidth]{pic/results/dumps/T5/from_scratch/sparsified/spectral/eigenvalues_\i.jpg}
            \caption{\scriptsize Step $\i$.}\label{figure:spectral_concentration_t5_step_\i}
        \end{subfigure}
    }
    \begin{subfigure}[t]{0.24\textwidth}
        \centering
        \includegraphics[width=\textwidth]{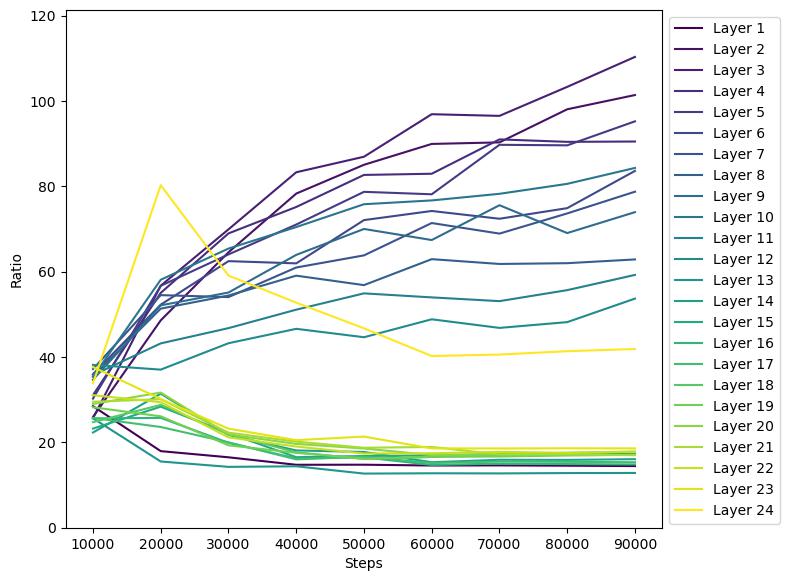}
        \caption{\scriptsize Ratios between upper- and lower-bounds of the majority of eigenvalues.}\label{figure:spectral_concentration_t5_ratio_full}
    \end{subfigure}
    \caption{Empirical spectral distribution of the first weight matrices in MLP blocks, observed in modified T5 trained on C4. We consider eigenvalues $< 10^{-1}$ as near-zero eigenvalues. \cref{figure:spectral_concentration_t5_near_zero_rate} displays the portion of near-zero eigenvalues during training. \cref{figure:spectral_concentration_t5_step_10000}-\cref{figure:spectral_concentration_t5_step_90000} display layerwise distributions of \emph{non-near-zero} eigenvalues at different checkpoints by showing the histograms of $\log_{10} \lambda_i$. The top row indicates Layer $1$ in the encoder while the bottom one represents Layer $12$ in the decoder.  In these distributions, we annotate the majority, i.e., the shortest interval that covers at least 70\% (approximately one sigma) of eigenvalues, with red vertical lines. We compute the width of majority and un-log it, illustrating ratio within which the majority varies in \cref{figure:spectral_concentration_t5_ratio_full}.
    }\label{figure:eigenvalues_of_kkT_t5}
\end{figure}
As displayed in \cref{figure:eigenvalues_of_kkT_t5}, similar phenomena can be observed in the decoder of T5, even though ViT is closer to an encoder. 
In contrast, the encoder tends to have non-concentrating eigenvalues, demonstrating another interesting difference between encoders and decoders. Nevertheless, the ratio between extreme non-zero eigenvalues in the encoder is not too large during the training. 
Another difference with ViT is that eigenvalues are much larger in T5, indicating even stronger drives toward (effective) gradient sparsity through the first terms.

\subsection{Applicability of \cref{theorem:spectral_of_accumulated}}\label{sec:t_exp:anisotropy}

We measure anisotropy $\sqrt{p \trace{\left(T - I\right)^\transpose \left(T - I\right)}}$ and $a$ empirically, as required in \cref{sec:spectral_training}. They are compared to hidden dimension $p=d$ or $n$ to build empirical evidence for application of \cref{theorem:spectral_of_accumulated}.

Since hidden dimension and weight decay have to be altered, the experiment is conducted on a small data set MNIST\citep{mnist} and a tiny pure MLP with 1 input layer, 1 classifier layer and $4$ hidden linear layers with $\relu$, each with hidden dimension $d$ to be altered. Skip connections and LayerNorm layers are equipped. The models are trained using Cross Entropy Loss and SGD, according to the implicit assumption of \cref{theorem:spectral_of_accumulated}. Inputs of the hidden layers and gradients w.r.t. linear layers' outputs are collected as $x_k$ vectors and are used to compute $a$, after which $u_k$s are computed by rescaling. 
\cref{theorem:spectral_of_accumulated} requires normalization on $x_k$ but it is inefficient to  re-normalize after every step. Observing $a S^p$ is scaling invariant w.r.t. $S^p$ and $a$ essentially captures the expected norm upperbound in $u_k$ in the proof, we directly compute $T^p = a S^p$ without normalizing $S^p$ and compute the expected norms in $u_k$, or $\trace{T^p}$ to illustrate $a$.

\arxivonly{\foreach \mywd in {0.0,0.01,0.1,0.3}{
\begin{figure}
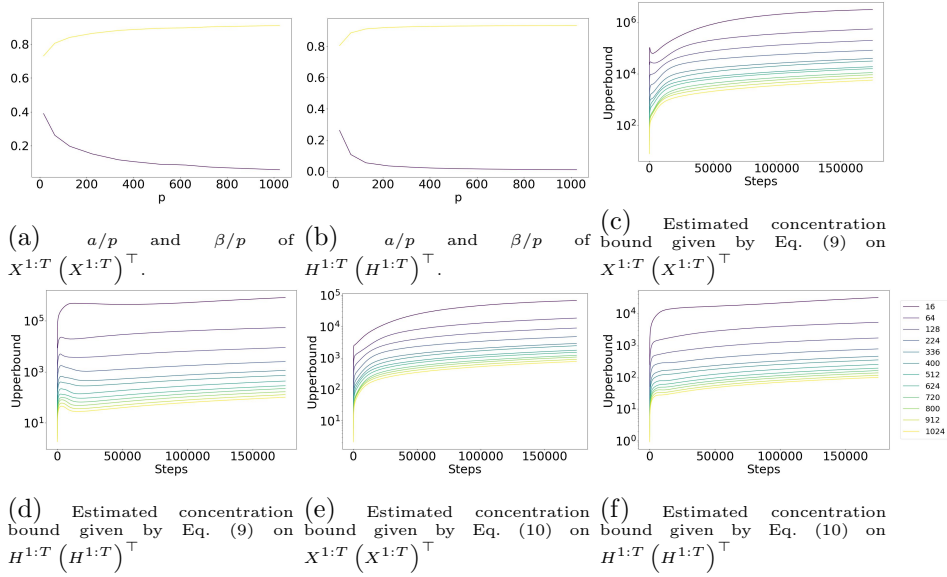

    \centering
    \resetHeight{}
    \begin{subfigure}{0.25\textwidth}
        \centering
        \expandafter\myincludegraphics[width=\textwidth]{pic/results/dumps/mp/lr\_epoch100/wd\mywd/sample.jpg}
        \caption{\tiny $a / p$ and $\beta / p$ of $X^{1:T} \left(X^{1:T}\right)^\transpose$.}\label{fig:spectral_assumption_sample_\mywd}
    \end{subfigure}
    \begin{subfigure}{0.25\textwidth}
        \centering
        \expandafter\myincludegraphics[width=\textwidth]{pic/results/dumps/mp/lr\_epoch100/wd\mywd/gradient.jpg}
        \caption{\tiny $a / p$ and $\beta / p$ of $\Eta^{1:T} \left(\Eta^{1:T}\right)^\transpose$.}\label{fig:spectral_assumption_gradient_\mywd}
    \end{subfigure}
    \begin{subfigure}{0.25\textwidth}
        \centering
        \expandafter\myincludegraphics[width=\textwidth]{pic/results/dumps/mp/lr\_epoch100/wd\mywd/sample_im_bound.jpg}
        \caption{\tiny Estimated concentration bound given by \cref{eq:im_bound} on $X^{1:T} \left(X^{1:T}\right)^\transpose$}\label{fig:spectral_assumption_sample_im_bound_\mywd}
    \end{subfigure}
    \begin{subfigure}{0.25\textwidth}
        \centering
        \expandafter\myincludegraphics[width=\textwidth]{pic/results/dumps/mp/lr\_epoch100/wd\mywd/gradient_im_bound.jpg}
        \caption{\tiny Estimated concentration bound given by \cref{eq:im_bound} on $\Eta^{1:T} \left(\Eta^{1: T}\right)^\transpose$}\label{fig:spectral_assumption_gradient_im_bound_\mywd}
    \end{subfigure}
    \begin{subfigure}{0.25\textwidth}
        \centering
        \expandafter\myincludegraphics[width=\textwidth]{pic/results/dumps/mp/lr\_epoch100/wd\mywd/sample_re_bound.jpg}
        \caption{\tiny Estimated concentration bound given by \cref{eq:re_bound} on $X^{1:T} \left(X^{1:T}\right)^\transpose$}\label{fig:spectral_assumption_sample_re_bound_\mywd}
    \end{subfigure}
    \begin{subfigure}{0.25\textwidth}
        \centering
        \expandafter\myincludegraphics[width=\textwidth]{pic/results/dumps/mp/lr\_epoch100/wd\mywd/gradient_re_bound.jpg}
        \caption{\tiny Estimated concentration bound given by \cref{eq:re_bound} on $\Eta^{1: T} \left(\Eta^{1: T}\right)^\transpose$}\label{fig:spectral_assumption_gradient_re_bound_\mywd}
    \end{subfigure}
    \caption{When weight decay is \mywd, values of $a / p, \beta / p$ required by \cref{theorem:spectral_of_accumulated} and concentration bounds of tiny pure MLPs with different choice of hidden dimension on MNIST. $a / p$ and $\beta / p$ do not change much during training so for each hidden dimension, values of $a / p$ and $\beta / p$ are averaged across steps to ease presentation. Layers are also averaged for the same reason.} \label{fig:assumptions_of_spectral_concentration_\mywd}
\end{figure}
}}

\jmlronly{\foreach \mywd in {0.0,0.3}{
\begin{figure}
    \centering
    \resetHeight{}
    \begin{subfigure}{0.25\textwidth}
        \centering
        \expandafter\myincludegraphics[width=\textwidth]{pic/results/dumps/mp/lr\_epoch100/wd\mywd/sample.jpg}
        \caption{\tiny $a / p$ and $\beta / p$ of $X^{1:T} \left(X^{1:T}\right)^\transpose$.}\label{fig:spectral_assumption_sample_\mywd}
    \end{subfigure}
    \begin{subfigure}{0.25\textwidth}
        \centering
        \expandafter\myincludegraphics[width=\textwidth]{pic/results/dumps/mp/lr\_epoch100/wd\mywd/gradient.jpg}
        \caption{\tiny $a / p$ and $\beta / p$ of $\Eta^{1:T} \left(\Eta^{1:T}\right)^\transpose$.}\label{fig:spectral_assumption_gradient_\mywd}
    \end{subfigure}
    \begin{subfigure}{0.25\textwidth}
        \centering
        \expandafter\myincludegraphics[width=\textwidth]{pic/results/dumps/mp/lr\_epoch100/wd\mywd/sample_im_bound.jpg}
        \caption{\tiny Estimated concentration bound given by \cref{eq:im_bound} on $X^{1:T} \left(X^{1:T}\right)^\transpose$}\label{fig:spectral_assumption_sample_im_bound_\mywd}
    \end{subfigure}
    \begin{subfigure}{0.25\textwidth}
        \centering
        \expandafter\myincludegraphics[width=\textwidth]{pic/results/dumps/mp/lr\_epoch100/wd\mywd/gradient_im_bound.jpg}
        \caption{\tiny Estimated concentration bound given by \cref{eq:im_bound} on $\Eta^{1:T} \left(\Eta^{1: T}\right)^\transpose$}\label{fig:spectral_assumption_gradient_im_bound_\mywd}
    \end{subfigure}
    \begin{subfigure}{0.25\textwidth}
        \centering
        \expandafter\myincludegraphics[width=\textwidth]{pic/results/dumps/mp/lr\_epoch100/wd\mywd/sample_re_bound.jpg}
        \caption{\tiny Estimated concentration bound given by \cref{eq:re_bound} on $X^{1:T} \left(X^{1:T}\right)^\transpose$}\label{fig:spectral_assumption_sample_re_bound_\mywd}
    \end{subfigure}
    \begin{subfigure}{0.25\textwidth}
        \centering
        \expandafter\myincludegraphics[width=\textwidth]{pic/results/dumps/mp/lr\_epoch100/wd\mywd/gradient_re_bound.jpg}
        \caption{\tiny Estimated concentration bound given by \cref{eq:re_bound} on $\Eta^{1: T} \left(\Eta^{1: T}\right)^\transpose$}\label{fig:spectral_assumption_gradient_re_bound_\mywd}
    \end{subfigure}
    \caption{When weight decay is \mywd, values of $a / p, \beta / p$ required by \cref{theorem:spectral_of_accumulated} and concentration bounds of tiny pure MLPs with different choices of hidden dimension on MNIST. $a / p$ and $\beta / p$ do not change much during training so for each hidden dimension, values of $a / p$ and $\beta / p$ are averaged across steps to ease presentation. Layers are also averaged for the same reason.} \label{fig:assumptions_of_spectral_concentration_\mywd}
\end{figure}
}}

\cref{fig:assumptions_of_spectral_concentration_0.0}-\cref{fig:assumptions_of_spectral_concentration_0.3} show anisotropies, norms and spectral concentration bounds computed by effective window sizes under weight decay of \arxivonly{$0.0, 0.01, 0.1$ and $0.3$.}\jmlronly{$0.0$ and $0.3$. More results for weight decay $0.01$ and $0.1$ can be found in the Huggingface repository\footnote{\logrepo{}} for this manuscript.}
From subfigures (a) and (b) in them, $a / p$ significantly drops as $p$ increases while $\beta / p$ increases but never exceeds $1$. Since in bounds \cref{eq:im_bound} and \cref{eq:re_bound}, $\beta / p$ is only found within the square root, the increase of $\beta / p$ has relatively small effects on the bounds. As a result, in subfigures (c)-(f), the estimated bounds decrease as the hidden dimension increases under all experimented weight decay intensities. 
Among the two bounds, it is theoretically and empirically observed that \cref{eq:re_bound} based on real parts is more desirable.

As discussed in \cref{sec:spectral_training}, weight decay effectively limits $T$ and stops bounds' divergence. From \cref{fig:assumptions_of_spectral_concentration_0.0} to \cref{fig:assumptions_of_spectral_concentration_0.3} we can see that as weight decay intensifies, the bounds computed by effective windows size drop significantly\arxivonly{, especially in \cref{fig:assumptions_of_spectral_concentration_0.3} where $w=0.3$}.
Weight decay not only upperbounds $\frac{1}{c}$, which happens at approximately $50,000$ steps in \cref{fig:assumptions_of_spectral_concentration_0.3}, but also decreases $a / p$. Note this is not trivial because $\trace{a S^p}$ is scaling invariant. How weight decay causes this drop is worth investigating in future works.
When training ViTs, weight decay is usually as large as 0.1 or 0.3 \citep{vit,pytorch_recipe}, so bounds estimated in \arxivonly{\cref{fig:assumptions_of_spectral_concentration_0.1} and }\cref{fig:assumptions_of_spectral_concentration_0.3} suit the practices best, where both $X^{1:T} \left(X^{1:T}\right)^\transpose$ and $\Eta^{1: T} \left(\Eta^{1: T}\right)^\transpose$ have relatively low spectral ratio bounds smaller than $400$ or $100$ when hidden dimension is relatively large, which is a low value even though there are hundreds of dimensions.
\section{Conclusion}

In this work, we explain the activation sparsity observed by \citet{observation} with gradient sparsity and effective gradient sparsity, which emerges from flatness and implicit adversarial robustness. 
Proofs are done for pure MLPs and other architectures including Transformers under hypothetical massively perturbed training. LayerNorm plays a critical role in them.
To argue the effectiveness of zeroth biases, we analyze the phenomenon of spectral concentration in weight matrices by introducing random matrix theory to the research of training dynamics. 
We propose two sparsity-oriented architectural plug-and-play modifications and one radical modification. 
Experiments for verification, where our theory predicts well, rule out some potential explanations and provide support for our emphasis on gradient sparsity. 
We test our modification on ImageNet-1K and C4 to demonstrate its great practical effectiveness. 
Finetuning for sparsity demonstrates a cheaper way toward better sparsity for existing models. Assumptions made during analyses are empirically validated.

Having demonstrated how sparsity emerges from noises and the robustness against them, we wildly conjecture that a similar explanation applies to sparsity, and thus energy efficiency, in human brains and cognition, which face enormous environmental, sensory and synaptic noises as well as small-batch learning every day and exhibit great robustness against them. Our work also demonstrates that some architectural designs may greatly ease theoretical analyses in deep neural networks, and more theoretically oriented architectures can be proposed in the future. We introduce random matrix theory as a powerful tool in analyzing training dynamics, hoping for its broader application in the machine learning community.

\appendix
\appendixpage
\addappheadtotoc
\section{Proof of Lemmas}

\begin{proof}[Proof of \cref{lemma:flatness_and_grad_norm}]\label{proof:flatness_and_grad_norm}
    Expanding definitions and simple rearrangement give
    \begin{align}
        \hessian
        =&  \nabla_{\theta}^2 \ex[(X, Y) \sim \dataset]{\ce(f_\theta, (X, Y))} 
        = -\ex[(X, Y) \sim \dataset]{\nabla_\theta^2 \log f(Y \mid \theta, X)}. \label{step:expanded_hessian_definition}
    \end{align}
    We first deal with the Hessian within the expectation, i.e., for any $(y, x)$ there is
    \begin{align}
        \nabla_\theta^2 \log f(y \mid \theta, x)
        =&  J_\theta \left( \frac{\nabla_\theta f(y \mid \theta, x)}{f(y \mid \theta, x)} \right)\\
        =&  \frac{\left(\nabla_\theta^2 f(y \mid \theta, x)\right) f(y \mid \theta, x) - \left(\nabla_\theta f(y \mid \theta, x)\right)\left(\nabla_\theta f(y \mid \theta, x)\right)^\transpose}{f(y \mid \theta, x) f(y \mid \theta, x)}\\
        =&  \frac{\nabla_\theta^2 f(y \mid \theta, x)}{f(y \mid \theta, x)} - \left(\frac{\nabla_\theta f(y \mid \theta, x)}{f(y \mid \theta, x)}\right)\left(\frac{\nabla_\theta f(y \mid \theta, x)}{f(y \mid \theta, x)}\right)^\transpose\\
        =&  \frac{\nabla_\theta^2 f(y \mid \theta, x)}{f(y \mid \theta, x)} - \left(\nabla_\theta \log f(y \mid \theta, x)\right)\left(\nabla_\theta \log f(y \mid \theta, x)\right)^\transpose.
    \end{align}
    Plugging this equality back to \cref{step:expanded_hessian_definition} gives
    \begin{align}
        \hessian
        =&  -\ex[(X, Y) \sim \dataset]{\frac{\nabla_\theta^2 f(Y \mid \theta, X)}{f(Y \mid \theta, X)} - \left(\nabla_\theta \log f(Y \mid \theta, X)\right)\left(\nabla_\theta \log f(Y \mid \theta, X)\right)^\transpose}\\
        =&  \ex[(X, Y) \sim \dataset]{\left(\nabla_\theta \log f(Y \mid \theta, X)\right)\left(\nabla_\theta \log f(Y \mid \theta, X)\right)^\transpose}
             -\ex[(X, Y) \sim \dataset]{\frac{\nabla_\theta^2 f(Y \mid \theta, X)}{f(Y \mid \theta, X)}}, 
    \end{align}
    the trace of which is
    \begin{align}
        \trace{\hessian}
        =&  \ex[(X, Y) \sim \dataset]{\trace{\left(\nabla_\theta \log f(Y \mid \theta, X)\right)\left(\nabla_\theta \log f(Y \mid \theta, X)\right)^\transpose}}
         \\&- \ex[\dataset(X)]{\trace{\ex[\dataset(Y \mid X)]{\frac{\nabla_\theta^2 f(Y \mid \theta, X)}{f(Y \mid \theta, X)}}}}\\
        =&  \ex[(X, Y) \sim \dataset]{\trace{\left(\nabla_\theta \log f(Y \mid \theta, X)\right)^\transpose \left(\nabla_\theta \log f(Y \mid \theta, X)\right)}}
         \\&- \ex[\dataset(X)]{\trace{\ex[\dataset(Y \mid X)]{\frac{\nabla_\theta^2 f(Y \mid \theta, X)}{f(Y \mid \theta, X)}}}}\\
        =&  \ex[(X, Y) \sim \dataset]{\norm{\nabla_\theta \log f(Y \mid \theta, X)}_2^2} 
            - \ex[\dataset(X)]{\trace{\ex[\dataset(Y \mid X)]{\frac{\nabla_\theta^2 f(Y \mid \theta, X)}{f(Y \mid \theta, X)}}}}.
    \end{align}
    For well learned model, there is $f(Y \mid \theta, X) \approx \dataset(Y \mid X)$, so given the finiteness of label space there is
    \begin{align}
        \ex[\dataset(Y \mid X)]{\frac{\nabla_\theta^2 f(Y \mid \theta, X)}{f(Y \mid \theta, X)}}
        =&  \sum_{y} \dataset(y \mid X) \cdot \frac{\nabla_\theta^2 f(y \mid \theta, X)}{f(y \mid \theta, X)}\\
        \approx&    \sum_{y} \nabla_\theta^2 f(y \mid \theta, X) = \nabla_\theta^2 \sum_y f(y \mid \theta, X) \\
        =& \nabla_\theta^2 1 = \mathbf{0}, \label{step:sum_probabilities}
    \end{align}
    where \cref{step:sum_probabilities} follows the assumption that $f$ outputs a distribution over label space that sums to $1$.
\end{proof}

\section{More Preliminaries}\label{appendix:more_preliminary}

In this section, we introduce more preliminaries and notations used when proving \cref{theorem:spectral_of_accumulated}.

Recall from \cref{sec:preliminary} that $\norm{\cdot}_p$ for matrices is Schatten norms, or $L_p$ norm conducted on singular values. 
Additionally, let $\norm{\cdot} \defeq \norm{\cdot}_\infty$ indicate the spectral norm, i.e. $L_\infty$ norm or the maximum magnitude of singular values. 
One major difference with the main text is that complex matrices are involved (although most vectors are still real vectors). So conjugate transposition is substituted for transposition in traces' connection to Schatten 2-norms and elementwise $L_2$ norms. 
More properties of traces are involved, for example, \cref{lemma:abs_trace_and_schatten_1} is used to introduce Schatten 1-norm to bound approximation errors.

\begin{lemma}[Absolute Trace and Schatten 1-norm]\label{lemma:abs_trace_and_schatten_1}
    For real or complex $n \times n$ matrix $A \in \complexes^{n \times n}$,
    \begin{align}
        \abs{\trace{A}}  \le \sum_{i} \abs{\lambda_i(A)} \le \sum_{i} \abs{\sigma_i(A)} \defto \norm{A}_1,
    \end{align}
    where $\sigma_i(A)$ indicates singular values of $A$ and equals to the absolute value of the $i$-th eigenvalue in normal matrices.
\end{lemma}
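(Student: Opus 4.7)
The plan is to prove the two inequalities separately, each by a standard matrix-analytic tool, and then note the normal-matrix case as an easy consequence.

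For the first inequality $|\trace{A}| \le \sum_i |\lambda_i(A)|$, I would invoke the Schur triangularization: there exists a unitary $U$ and upper-triangular $T$ with $A = U T U^*$ whose diagonal entries are exactly the eigenvalues $\lambda_1(A), \dots, \lambda_n(A)$ (counted with algebraic multiplicity). Since trace is unitarily invariant, $\trace{A} = \trace{T} = \sum_i \lambda_i(A)$, and the complex triangle inequality gives $|\sum_i \lambda_i(A)| \le \sum_i |\lambda_i(A)|$. This step is entirely routine.

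For the second inequality $\sum_i |\lambda_i(A)| \le \sum_i \sigma_i(A)$, I would appeal to Weyl's majorant theorem (or more specifically its $k=n$ sum version): ordering $|\lambda_1(A)| \ge \cdots \ge |\lambda_n(A)|$ and $\sigma_1(A) \ge \cdots \ge \sigma_n(A)$, one has $\sum_{i=1}^k |\lambda_i(A)| \le \sum_{i=1}^k \sigma_i(A)$ for every $1 \le k \le n$. Taking $k = n$ yields the desired bound. If a self-contained derivation were required, the cleanest route is via Schur's decomposition $A = UTU^*$ again: writing $T = D + N$ with $D$ diagonal (eigenvalues) and $N$ strictly upper triangular, one bounds $\sum_i |\lambda_i|^2 \le \sum_i |T_{ii}|^2 \le \|T\|_F^2 = \|A\|_F^2 = \sum_i \sigma_i(A)^2$, which gives the 2-norm majorization; promoting this to the 1-norm requires either the Horn--Weyl inversion or the standard argument using exterior powers, where $\sigma_1 \cdots \sigma_k$ bounds $|\lambda_1 \cdots \lambda_k|$ and one descends via Abel summation / log-majorization to linear majorization.

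The main obstacle is the second inequality, since Weyl's majorant theorem is the non-elementary step; the first inequality is a one-line consequence of Schur. Finally, for the asserted equality in the normal case: if $A$ is normal then $T$ in the Schur form is diagonal, so $A = U D U^*$ with $D = \diag{\lambda_i(A)}$, giving $A^* A = U \diag{|\lambda_i(A)|^2} U^*$ and hence $\sigma_i(A) = |\lambda_i(A)|$ after reordering. So the second inequality collapses to an equality, as claimed in the lemma's closing remark. The only real care in presentation will be to fix a consistent indexing convention (decreasing magnitude vs. decreasing singular value) so that the correspondence between $|\lambda_i|$ and $\sigma_i$ in the normal case is well-posed.
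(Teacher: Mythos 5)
Your proof is correct, but for the key step it takes a genuinely different route from the paper. For the second inequality the paper does not invoke Weyl's majorant theorem; instead it proves the variational characterization $\norm{A}_1 = \max_{X, Y \in U_n} \abs{\trace{X A Y}}$ directly from the SVD (writing $\trace{XAY} = \sum_i (V^* Y X U)_{i,i}\,\sigma_i$ and using that entries of a unitary matrix have magnitude at most $1$), and then exhibits a specific unitary pair that realizes $\sum_i \abs{\lambda_i(A)}$: with the Schur form $A = STS^{-1}$ and a diagonal phase matrix $D = \diag{d_i}$ chosen so that $\bar{d_i}\lambda_i(A) = \abs{\lambda_i(A)}$, one gets $\trace{D^* S^* A S} = \sum_i \abs{\lambda_i(A)}$, which is then dominated by the maximum. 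This is self-contained and elementary, needing only Schur and SVD. Your primary route instead outsources the hard inequality to Weyl's majorant theorem, whose proof (via exterior powers and log-majorization) is considerably heavier machinery than what the paper uses; your sketch of a self-contained fallback is the part that is genuinely nontrivial and is left at the level of a plan ("descends via Abel summation / log-majorization"), whereas the paper's duality argument avoids majorization theory entirely. Your treatment of the first inequality and of the normal case matches the paper's (the paper leaves both essentially implicit), and your care about indexing conventions is a reasonable addition. Both arguments are valid; the paper's buys a short self-contained proof, yours buys a one-line reduction to a named classical theorem.
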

\begin{proof}
    This proof is from \citet{sum_eigenvalue_and_singular_value}.
    By Schur decomposition, there exists unitary $S$ and upper triangular matrix $T$ such that $A = S T S^{-1}$. Note that $T$ shares the same eigenvalues, which are its diagonal entries, as $A$. There exists $d_i$ such that $\lambda_i(A) = \abs{\lambda_i(A)} d_i$ and $\abs{d_i}=1$. Let $D \defeq \diag{d_i}$ and note that $D$ is unitary. As a result, there exist unitary matrices $D^* S^*$ and $S$ such that
    \begin{align}
        \trace{D^* S^* A S} = \sum_{i=1}^n \abs{\lambda_i(A)}.
    \end{align}

    On the other hand, the sum of singular values is also
    \begin{align}
        \sum_{i} \sigma_i(A) = \max_{X, Y \in U_n} \abs{\trace{X A Y}} \label{eq:another_expression_of_singular_sum},
    \end{align}
    where $U_n$ is the set of all unitary matrices in $\complexes^{n \times n}$. 
    To see this, first notice that the singular value decomposition $A = U \Sigma V^*$ naturally provides a pair of $X, Y$ that achieve $\sum_{i} \sigma_i(A)$. For other unitary matrices, $\trace{X A Y} = \trace{V^* Y X U \Sigma} = \sum_{i} \left(V^* Y X U\right)_{i, i} \sigma_i$ and
    \begin{align}
        \abs{\trace{X A Y}} = \abs{\sum_i \left(V^* Y X U\right)_{i, i} \sigma_i} \le \sum_i \abs{\left(V^* Y X U\right)_{i, i}} \sigma_i.
    \end{align}
    Note that $V^* Y X U$ is also unitary, whose entries' magnitudes are no greater than $1$, which implies $\abs{\trace{X A Y}} \le \sum_{i} \sigma_i$ and \cref{eq:another_expression_of_singular_sum} is proved.

    With $D^* S^*$ and $S$ also unitary, the lemma directly follows.
\end{proof}

Schatten 1-norms are usually not direct from assumptions and conditions, so we apply Hölder's inequality on Schatten norms to produce
\begin{align}
    \norm{X Y}_{1} \le \norm{X}_{1} \norm{Y}_{\infty},
\end{align}
where $\frac{1}{p} + \frac{1}{q} = \frac{1}{1} + \frac{1}{\infty} = 1$. 
If $X$ is real symmetric positive semi-definite then $\norm{X}_1 = \sum_{i} \abs{\lambda_i} = \sum_{i} \lambda_i = \trace{X}$. If $X$ is further an outer product of real vectors, then $\norm{X}_1 = \trace{X}$ can be bounded by upperbounds on vector norms. One can also prove
\begin{align}
    \norm{x}_1 \le \sqrt{\dim x} \norm{x}_2
\end{align}
to relate $L_1$ and $L_2$ norms, and thus Schatten 1-norms and 2-norms by seeing them as $L_p$ norms conducted on singular values.
The bound on $\norm{Y}_{\infty} = \norm{Y}$ is often provided by \cref{lemma:3.1_from_mp_quadratic_form} introduced later.
\NewDocumentCommand{\uut}{}{u^p \left(u^p\right)^\transpose}
\NewDocumentCommand{\utau}{O{u}}{\left(#1^p\right)^\transpose A^p #1^p}
\NewDocumentCommand{\xtax}{}{\utau[x]}

\section{Proof of \cref{theorem:spectral_of_accumulated}}

\NewDocumentCommand{\sti}{}{Stieltjes}
\RenewDocumentCommand{\uut}{O{k}}{u_{#1} u_{#1}^\transpose}
\NewDocumentCommand{\xxt}{O{k}}{x_{#1} x_{#1}^\transpose}

More notations and preliminary information are used in the proof. They are listed in \cref{appendix:more_preliminary}.

To have a clear goal, we first point out how the ratios in \cref{theorem:spectral_of_accumulated} emerge. 
In random matrix theory (RMT), a powerful tool is \sti{} transform when it comes to spectral distribution. 
\sti{} transform of a distribution $\mu$ is 
\begin{align}
    s^\mu(z) \defeq \int_{\reals} \frac{1}{\lambda - z} \mu(\dd \lambda), \forall z \in \positivecomplex,
\end{align}
where $\positivecomplex \defeq \set{u + v i: u, v \in \reals, v > 0} \subset \complexes$.
It can be seen as the expected value of inverses. So if the \sti{} transform of the spectral distribution' bound is computed and multiplied with the average eigenvalue that can be computed by trace as the sum of eigenvalues, then the expected fraction is bounded. \cref{lemma:sti_and_eigenvalue_ratio} fulfills this intuition.

\NewDocumentCommand{\approxquadratic}{m}{\sqrt{\frac{2}{v} \left(c + #1\right)}}
\begin{lemma}[\sti{} transform and eigenvalue ratio]\label{lemma:sti_and_eigenvalue_ratio}
    For any density $\mu$ over real numbers that is only supported on positive numbers, there is
        \begin{align}
            \rpart{s^{\mu}(v i)}
            =&  \int \frac{1}{\lambda + \frac{v^2}{\lambda}} \mu(\dd \lambda) = \ex{\frac{1}{\lambda + \frac{v^2}{\lambda}}},\\
            \ipart{s^{\mu}(v i)}
            =&  \int \frac{1}{\left(\frac{\lambda}{\sqrt{v}}\right)^2  + v} \mu(\dd \lambda) = \ex{\frac{1}{\left(\frac{\lambda}{\sqrt{v}}\right)^2 + v}},
        \end{align}
    where $v \in \reals^+$.
    As a result, if upperbound or mean of $\lambda$ is obtained, one can estimate the averaged ratio between eigenvalues.
\end{lemma}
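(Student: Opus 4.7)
The proof is essentially a direct computation via conjugate rationalization, so the plan is short. Starting from the definition
\begin{align}
    s^{\mu}(vi) = \int_{\reals} \frac{1}{\lambda - vi}\, \mu(\dd \lambda),
\end{align}
I would multiply numerator and denominator of the integrand by the complex conjugate $\lambda + vi$ to obtain
\begin{align}
    \frac{1}{\lambda - vi} = \frac{\lambda + vi}{\lambda^2 + v^2} = \frac{\lambda}{\lambda^2 + v^2} + i \cdot \frac{v}{\lambda^2 + v^2}.
\end{align}
Since $\mu$ is supported on the positive reals and the integrand's real and imaginary parts are well-defined real-valued measurable functions of $\lambda$, linearity of the integral (and Fubini-type separation of real/imaginary parts) lets me pull real and imaginary parts through the integral.

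Next I would massage each piece into the claimed form. For the real part, factoring $\lambda$ out of the denominator gives
\begin{align}
    \frac{\lambda}{\lambda^2 + v^2} = \frac{1}{\lambda + v^2/\lambda},
\end{align}
which is valid because $\lambda > 0$ on the support of $\mu$ (so no division-by-zero issue). For the imaginary part, dividing numerator and denominator by $v$ yields
\begin{align}
    \frac{v}{\lambda^2 + v^2} = \frac{1}{\lambda^2/v + v} = \frac{1}{(\lambda/\sqrt{v})^2 + v},
\end{align}
which is well-defined since $v > 0$. Rewriting each integral as an expectation under $\mu$ completes the statement.

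The only subtlety worth flagging is the implicit assumption that both integrals converge, which follows automatically from the fact that $\left|1/(\lambda - vi)\right| \le 1/v$ uniformly in $\lambda$, making both integrands bounded by $1/v$; so for any probability measure $\mu$ the expectations exist and Fubini-style splitting is legitimate. There is no hard step here: the lemma is really just an observation that pairing $s^\mu(vi)$ with a purely imaginary argument turns the Stieltjes transform into two explicit averages that can be interpreted spectrally, which is precisely what \cref{theorem:spectral_of_accumulated} needs to convert its analytic bound on $\ex{s_p(vi)}$ into a statement about eigenvalue concentration.
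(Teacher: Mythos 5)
Your proposal is correct and follows essentially the same route as the paper: conjugate rationalization of $1/(\lambda - z)$, separation of real and imaginary parts, and the algebraic rewrites $\lambda/(\lambda^2+v^2) = 1/(\lambda + v^2/\lambda)$ and $v/(\lambda^2+v^2) = 1/\bigl((\lambda/\sqrt{v})^2 + v\bigr)$, valid since $\lambda>0$ on the support and $v>0$. The only cosmetic difference is that the paper works with general $z = u+vi$ and specializes to $u=0$ at the end, whereas you set $z=vi$ from the start; your added remark on the uniform bound $\abs{1/(\lambda - vi)} \le 1/v$ guaranteeing convergence is a harmless bonus.
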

\begin{proof}
    By definition,
    \begin{align}
        s^{\mu}(z)
        \defeq& \int \frac{1}{\lambda - z} \mu(\dd \lambda)
        =  \int \frac{\lambda - \bar{z}}{\left(\lambda - z\right)\left(\lambda - \bar{z}\right)} \mu(\dd \lambda)
        =  \int \frac{\lambda - u + v i}{\lambda^2 - 2 u \lambda + u^2 + v^2} \mu(\dd \lambda), \\
        \rpart{s^{\mu}(z)}
        =&  \int \frac{\lambda - u}{\lambda^2 - 2 u \lambda + u^2 + v^2} \mu(\dd \lambda)
        =   \int \frac{1}{\lambda - u + \frac{v^2}{\lambda - u}} \mu(\dd \lambda), \\
        \ipart{s^{\mu}(z)}
        =&  \int \frac{v}{(\lambda - u)^2 + v^2} \mu(\dd \lambda).
    \end{align}
    By setting $z = 0 + v i$, there is
    \begin{align}
        \rpart{s^{\mu}(z)}
        =&  \int \frac{1}{\lambda + \frac{v^2}{\lambda}} \mu(\dd \lambda),
        \ipart{s^{\mu}(z)}
        =   \int \frac{1}{\left(\frac{\lambda}{\sqrt{v}}\right)^2  + v} \mu(\dd \lambda).
    \end{align}
\end{proof}

We adapt the proof of Theorem 2.1 from \citet{mp_quadratic_form}, where a quadratic equation $\frac{\ex{S_p}}{1 + \ex{S_p}} - z \ex{S_p} = c + o(1) = \frac{p}{b T} + o(1)$ bridges the \sti{} transform of the spectral distribution and value $c$. We follow a similar path, but in a non-asymptotic and dependent scenario where it is hard to obtain an $o(1)$ residual. So we turn to a generalized form where the residual term is bounded. \cref{lemma:bound_of_sti} explores how we can bound $\ex{S_p}$, which is scaled \sti{} transform of the empirical spectral distribution, by $c$ and the magnitude of the residual term.

\begin{lemma}[Bounds on continuous roots of a equation]\label{lemma:bound_of_sti}
    Suppose $S$ is a function of $z, p, b, T$ such that
    \begin{align}
        \frac{S}{1 + S} - z S = c + t,
    \end{align}
    where $z = 0 + v i \in \positivecomplex (v \in \reals^+), c = p / b T \in [0, 1]$, and $t \in \complexes$ is also a function of $z, p, b, T$.
    Assume $S$ is continuous w.r.t. $z$ and further assume $S$ always has a non-negative real part.
    Assume $\abs{t}$ is upperbounded by function $\tau(p, b, T) / v$ where $\tau$ is constant w.r.t. $v$.
    If for some $v_0 \ge 2 c$, there is 
    \begin{align}
        \frac{v_0 + (1 - c)}{\sqrt{2}} > \frac{\tau}{v_0} + 2 \sqrt{c v_0} + 2 \sqrt{\tau}
    \end{align}
    then for all $v \ge v_0$, there is
    \begin{align}
        \abs{\ipart{S}}, \abs{\rpart{S}} \le \abs{S} \le&  \sqrt{\frac{2}{v} \abs{c + t}}.
    \end{align}
\end{lemma}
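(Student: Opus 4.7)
The plan is to view the given identity as a quadratic equation in $S$ and use Vieta's formulas together with a branch-tracking argument powered by the assumed continuity of $S$ in $z$. Clearing denominators in $S/(1+S)-zS=c+t$ yields
$$
z S^{2} + (z+c+t-1)\,S + (c+t) = 0,
$$
whose two roots $S_{\pm}$ satisfy, by Vieta, $|S_{+}||S_{-}|=|c+t|/|z|=|c+t|/v$ and $S_{+}+S_{-}=(1-z-c-t)/z$. First I would show that for $v\ge v_{0}$ the two roots stay a bounded distance apart. Writing the discriminant as $(1-c-t-z)^{2}-4z(c+t)$, decomposing into real and imaginary parts using $z=vi$ and $|t|\le\tau/v$, a direct triangle-inequality estimate will yield a lower bound on $|S_{+}-S_{-}|$ whose smallness is controlled by exactly the three terms $\tau/v_{0}$, $2\sqrt{c v_{0}}$, and $2\sqrt{\tau}$; the hypothesis $\frac{v_{0}+(1-c)}{\sqrt{2}}>\frac{\tau}{v_{0}}+2\sqrt{c v_{0}}+2\sqrt{\tau}$ is precisely the quantitative statement that this gap is strict, so the two branches never meet on the ray $\{vi:v\ge v_{0}\}$.

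Next I would identify which branch $S$ sits on. As $v\to\infty$ the relation forces $-zS\sim c+t$, so $S$ decays like $O(1/v)$; a direct computation then shows the other root behaves like $(1-z-c-t)/z\to -1$, whose real part is negative. Since $S$ is assumed continuous in $z$ and has non-negative real part everywhere, it must lie on the small-root branch $S_{-}$ (the one that tends to $0$) throughout $v\ge v_{0}$: passing to the other branch would either make $\Re S$ strictly negative for some $v$, or force the two branches to collide, which is ruled out by the gap established in the previous step.

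With $S$ pinned to the small branch, the bound is immediate: $|S_{-}|\le|S_{+}|$ combined with the Vieta product gives $|S|^{2}=|S_{-}|^{2}\le|S_{-}||S_{+}|=|c+t|/v$, hence $|S|\le\sqrt{|c+t|/v}\le\sqrt{2|c+t|/v}$, and the inequalities $|\Re S|,|\Im S|\le|S|$ are standard.

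The hard part will be producing the clean branch-separation inequality from the hypothesis on $v_{0}$: the discriminant is a complex polynomial in $z$ with several cross terms, and extracting exactly the structure $\tau/v_{0}+2\sqrt{c v_{0}}+2\sqrt{\tau}$ on the right-hand side requires a careful split of the modulus into real and imaginary contributions and absorption of error terms of order $\tau/v$. The factor of $\sqrt{2}$ in the stated bound (rather than the tighter $1$ that Vieta alone would give) is most likely slack introduced either in that split or to accommodate the side condition $v_{0}\ge 2c$.
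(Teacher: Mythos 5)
Your setup --- the same quadratic $zS^2+(z+c+t-1)S+(c+t)=0$, branch separation via a non-vanishing discriminant controlled by the $v_0$ hypothesis, and identification of $S$ as the branch tending to $0$ (the other root tending to $-1$ and hence eventually of negative real part) --- matches the paper's strategy up to the last step; the paper performs the branch bookkeeping with a homotopy in an auxiliary parameter $u$ that scales the constant term of the quadratic from $0$ to $c+t$, rather than by tracking branches along the ray $v\to\infty$, but these are interchangeable for that purpose.

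The genuine gap is in your final step. From ``$S$ lies on the small-root branch'' you infer $\abs{S_-}\le\abs{S_+}$ at every $v\ge v_0$ and then apply Vieta's product formula. But the branch connected by continuity to the root that vanishes as $v\to\infty$ is not the same thing as the root of smaller modulus at each finite $v$: two \emph{distinct} complex roots can be equimodular, so the modulus ordering can flip along the ray without the branches ever colliding, and your discriminant estimate only rules out collision. Nothing in the hypotheses visibly forbids $\abs{S_-(v)}>\abs{S_+(v)}$ at some $v\ge v_0$, so the inequality $\abs{S}^2\le\abs{S_+}\abs{S_-}=\abs{c+t}/v$ is unjustified as stated. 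The paper sidesteps this entirely: writing the equation as $(R+b)^2-b^2+\frac{c+t}{z}u=0$ and letting $u$ run from $0$ (where the relevant branch equals $0$) to $1$ (where it equals $S$), it bounds $\abs{S}=\abs{S-R(u{=}0)}\le\int_0^1\abs{\dd R/\dd u}\,\dd u$ with $\abs{\dd R/\dd u}=\frac{\abs{c+t}}{2\abs{z}\sqrt{\abs{b^2-(c+t)u/z}}}$, and the integral is at most $\sqrt{2\abs{c+t}/v}$; this integral estimate, not Vieta slack, is also where the factor $\sqrt{2}$ in the conclusion originates (the $\sqrt{2}$ in the hypothesis comes separately from bounding $\sqrt{v^2+(1-c)^2}$ from below by $(v+1-c)/\sqrt{2}$). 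To repair your argument you would need either this integral step or an additional proof that the moduli of the two branches never cross on $\set{vi: v\ge v_0}$.
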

\begin{proof}
    Assume $p, b, T$ is fixed and $v \ge v_0$. 
        
    First of all, since $t = \frac{S}{1 + S} - z S - c$, where $S$ is continuous w.r.t. $z$, when well defined $t$ is also continuous w.r.t. $z$. Since $S$ is well defined for all $z = v i$ and $S$ has a non-negative real part, meaning $1 + S \neq 0$, $t$ is defined and continuous for all $z = v i$.

    $\frac{S}{1 + S} - z S = c + t$ implies
    \begin{align}
        S^2 + \frac{z - 1 + c + t}{z} S + \frac{c + t}{z} = (S + b)^2 - b^2 + \frac{c + t}{z} = 0,
    \end{align}
    where $b = \frac{z - 1 + c + t}{2 z}$.
    Consider a modified version of this quadratic equation by altering the zeroth order term
    \begin{align}
        \left(R + b\right)^2 - b^2 + \frac{c + t}{z} u = 0. \label{eq:modified_quadratic}
    \end{align}
    where $R = R(z, p, T, u)$ is a solution of the modified equation. Since $z, t, u$  depend on $z, u$ in a continuous manner and there is no singular point given $\rpart{z} > 0$, there exist two continuous solutions $R_1$ and $R_2$ to the equation that is well defined for all $z=v i$ and $u$ where $v \in \reals^+, u \in [0, 1]$. Conversely, all possible solutions $R$ are contained in the union of manifolds given by $R_1$ and $R_2$.
    Since
    \begin{align}
        &\forall u \in [0,1], - b^2 + \frac{c + t}{z} u \neq 0
        \impliedby  \abs{b^2} > \abs{\frac{c + t}{z}}\\
        \impliedby& \abs{v i - 1 + c + t} > 2 \sqrt{v \abs{c + t}}
        \impliedby  \sqrt{v^2 + (1 - c)^2} - \abs{t} > 2 \sqrt{v} \sqrt{c + \abs{t}}\\
        \impliedby& \sqrt{v^2 + (1 - c)^2} > \abs{t} + 2 \sqrt{v} \sqrt{c + \abs{t}}
        \impliedby  \sqrt{v^2 + (1 - c)^2} > \frac{\tau}{v} + 2 \sqrt{c v + \tau}\\
        \impliedby& \frac{v + (1 - c)}{\sqrt{2}} > \frac{\tau}{v} + 2 \sqrt{c v} + 2 \sqrt{\tau} \quad\quad\left(\text{LHS: concavity of $\sqrt{x}$}\right)\\
        \impliedby& \frac{v_0 + (1-c)}{\sqrt{2}} > \frac{\tau}{v_0} + 2 \sqrt{c v_0} + 2 \sqrt{\tau} \\&    \land \forall v \ge v_0,
            \frac{\dd \left(\frac{v + (1 - c)}{\sqrt{2}} - \left(\frac{\tau}{v} + 2 \sqrt{c v} + 2 \sqrt{\tau}\right)\right)}{\dd v} = \frac{1}{\sqrt{2}} + \frac{\tau}{v^2} - \frac{\sqrt{c}}{\sqrt{v}} \ge \frac{1}{\sqrt{2}} - \frac{\sqrt{c}}{\sqrt{v}} \ge 0\\
        \impliedby& \frac{v_0 + (1 - c)}{\sqrt{2}} > \frac{\tau}{v_0} + 2 \sqrt{c v_0} + 2 \sqrt{\tau} \land v_0 \ge 2 c,
    \end{align}
    $R_1$ and $R_2$ has no intersection when $v \ge v_0$. As a result, continuous $S$, as $R(u=1)$ for some $R$, can only be found in exactly one of $R_1$ and $R_2$ and thus is either $R_1(u=1)$ or $R_2(u=1)$.

    Consider $R$s' behaviours when $u=0$. $R_1$ and $R_2$ are either $0$ or $-2b$. By their being continuous and disjoint, $R_1$ and $R_2$ can only be universally one of them otherwise there would be discontinuity. Without loss of generality, let
    \begin{align}
        R_1(z, p, T, 0) =&  0,\\
        R_2(z, p, T, 0) =&  -2 b =  \frac{1 - z - c - t}{z} = \frac{-i - v + c i + t i}{v}.
    \end{align}
        
    Assume $S = R_2(u=1)$ for contradiction. If so, $\lim_{v \to \infty} \rpart{R_2(u=1)} \ge 0$. Consider the following trajectory of $(z, u)$ to conclude that $\lim_{v \to \infty} \rpart{R_2(u=1)} \le -0.5$:
    \begin{enumerate}
        \item Start from $z=i, u=0$, where $R_2 = -\frac{i}{v} - 1 + \frac{c + t}{v} i$.
        \item Increase $v$ alone until $v \ge 100 (c + \tau(z, b, p, T) / v)$. After this step, term $\frac{c + t}{v} i$ will not flip the sign of $-1$ given that $\abs{\frac{c + t}{v} i} \le \frac{c + \tau / v}{v} \le \frac{1}{100}$. Therefore, after this step $\rpart{R_2} \le -0.99$.
        \item Increase $u$ alone to $1$. During this step, the term $\frac{c + t}{z} u$ in \cref{eq:modified_quadratic}, which appears under squared root in $R_2$, changes $R_2$ at most by $\sqrt{\abs{\frac{c + t}{z}}} \le \sqrt{\frac{c + \tau}{v}} \le \frac{1}{10}$. Therefore, it will not flip the sign and $\rpart{R_2} \le -0.89$. 
        \item Increase $v$ to infinity. The influences of $\frac{c + t}{v}$ and $\frac{c + t}{v} u$ will further drop and become ignorable. So $R_2 \le -0.5$ in later parts of the trajectory.
    \end{enumerate}
    Note since $\tau/v$ monotonically decreases as $v$ increases, step (3) will happen when $v < \infty$ and the limit following this trajectory is equal to $\lim_{v \to \infty} R_2(z)$.
    Therefore, $\lim_{v \to \infty} R_2(z) \le -0.5$, $S \neq R_2$ and $S = R_1(u=1)$. As a result, $S$ can be approximated from $R_1=0$ at $u=0$.

    \NewDocumentCommand{\derivativenorm}{}{\frac{\abs{c + t}}{2 \sqrt{\abs{v^2 b^2 - v(c + t) u}}}}
    To this end, taking differentiation (the derivative w.r.t. $u$ exists except for a finite number of points given the elementary dependence on it) gives
    \begin{align}
        2(R(u) + b) \dd R + \frac{c + t}{z} \dd u = 0
    \end{align}
    or
    \begin{align}
        \abs{\frac{\dd R}{\dd u}} = \abs{-\frac{c + t}{2 z (R(u) + b)}} = \frac{\abs{c + t}}{2 v \sqrt{\abs{b^2 - \frac{c + t}{z} u}}} \le \derivativenorm.
    \end{align}
    Starting from $R(u=0)=0$, there is
    \begin{align}
        &\abs{S - R(u=0)}
        =  \abs{\int_{0}^1 \frac{\dd S}{\dd u} \dd u}\\
        \le&    \int_{0}^1 \derivativenorm \dd u
        \le     \frac{1}{2 v}\int_{0}^1 \frac{v \abs{c + t}}{\sqrt{\abs{v^2 \abs{b^2} - v \abs{c + t} u}}} \dd u\\
        =&      \frac{1}{2 v}\int_{v^2 \abs{b}^2 - v \abs{c + t}}^{v^2 \abs{b}^2} \frac{1}{\sqrt{\abs{w}}} \dd w
        \le     \frac{1}{2 v}\int_{- v \abs{c + t}/2}^{v \abs{c + t} /2} \frac{1}{\sqrt{\abs{w}}} \dd w
        =       \sqrt{\frac{2}{v} \abs{c + t}},
    \end{align}
    where the second inequality follows the triangle inequality for subtracting edges. The third inequality is done by moving $[v^2 \abs{b}^2 - v \abs{c + t}, v^2 \abs{b^2}]$ towards zero until it is symmetric, during which the end with the largest absolute value is moved to the other end and its absolute value is decreased, enlarging the inverted square root.
\end{proof}

As a result, if the residual term is bounded, so are the \sti{} transform and the expected fraction. In the proof of \citet{mp_quadratic_form} this quadratic equation is extracted by convergence, which is hard in non-asymptotic analysis. We extract it with approximation, whose error is bounded by vector norm bound $\alpha / p$, anisotropy bound $\beta / p$. 

Some matrices are also involved in the approximation, where a technical lemma by \citet{mp_quadratic_form} is very useful. \cref{lemma:3.1_from_mp_quadratic_form} states that a special matrix, which we will encounter many times in the following proof, has a bounded spectral norm. Combined with Hölder's inequality, bounds on matrices' spectral norms and vector norms ($\alpha/p$) will suppress the approximation error.

\begin{lemma}[Lemma 3.1 by \citet{mp_quadratic_form}]\label{lemma:3.1_from_mp_quadratic_form}
    Let $C \in \reals^{p \times p}$ be a real symmetric positive semi-definite matrix and $x \in \reals^p$. If $z \in \complexes$ is such that $v = \ipart{z} > 0$, then
    \begin{enumerate}
        \item\label{lemma:spectral_bound_of_reverse} 
            $\norm{\left(C - z I\right)^{-1}} \le 1 / v$ ;
        \item\label{lemma:x_means_little_in_reversed_sample_covariance}
            $\abs{\trace{C + x x^\transpose - z I}^{-1} - \trace{C - z I}^{-1}} \le 1 / v$;
        \item $\abs{x^\transpose\left(C + x x^\transpose - z I\right)^{-1} x} \le 1 + \abs{z} / v$;
        \item $\ipart{z + z \trace{C - z I}^{-1}} \ge v$ and $\ipart{\trace{C - z I}^{-1}} > 0$;
        \item $\ipart{z + z x^\transpose (C - z I)^{-1} x} \ge v$.
    \end{enumerate}
\end{lemma}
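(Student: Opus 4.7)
My plan is to handle the five parts in an order that lets later parts reuse the identities set up earlier. The common setup is to diagonalize the real symmetric PSD matrix $C = \sum_i \lambda_i u_i u_i^\transpose$ with $\lambda_i \ge 0$ and $u_i$ real orthonormal, and to write $x^\transpose (C - z I)^{-1} x = \sum_i a_i / (\lambda_i - z)$ where $a_i = (x^\transpose u_i)^2 \ge 0$. Writing $z = u + i v$ with $v > 0$, the moduli $|\lambda_i - z|^2 = (\lambda_i - u)^2 + v^2 \ge v^2$ give the basic spectral bound.

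For part (1), $(C - zI)^{-1}$ has eigenvalues $1/(\lambda_i - z)$, so the spectral norm is $\max_i 1/|\lambda_i - z| \le 1/v$. For parts (4) and (5), I will directly compute imaginary parts with nonnegative weights: since
\begin{align*}
\ipart{z \cdot (\lambda_i - z)^{-1}} &= \lambda_i v / |\lambda_i - z|^2 \ge 0,
\end{align*}
summing with weights $1$ (for the trace) or $a_i \ge 0$ (for the quadratic form) yields $\ipart{z + z \trace{(C-zI)^{-1}}} \ge v$ and $\ipart{z + z x^\transpose (C-zI)^{-1} x} \ge v$. The strict positivity $\ipart{\trace{(C-zI)^{-1}}} = v \sum_i |\lambda_i - z|^{-2} > 0$ is immediate.

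For part (3), Sherman–Morrison yields $x^\transpose (C + xx^\transpose - zI)^{-1} x = \beta / (1 + \beta)$ with $\beta \defeq x^\transpose (C - zI)^{-1} x$. Writing $\beta/(1+\beta) = 1 - 1/(1+\beta)$, it suffices to bound $1/|1 + \beta|$ by $|z|/v$. But part (5) says $\ipart{z(1+\beta)} = v + \ipart{z\beta} \ge v$, so $|z| \cdot |1 + \beta| \ge \ipart{z(1+\beta)} \ge v$ and hence $1/|1+\beta| \le |z|/v$, giving $|x^\transpose(C + xx^\transpose - zI)^{-1} x| \le 1 + |z|/v$.

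The most delicate piece is part (2), where I expect the main obstacle to lie: the naive bound $\norm{(C-zI)^{-1}}^2 \norm{x}^2/v^2$ depends on $\norm{x}$ and so cannot give a uniform $1/v$. The trick I will use is again Sherman–Morrison plus a cyclic trace identity to get
\begin{align*}
\trace{(C - zI)^{-1}} - \trace{(C + xx^\transpose - zI)^{-1}} = \frac{x^\transpose (C - zI)^{-2} x}{1 + x^\transpose (C - zI)^{-1} x},
\end{align*}
and then bound the ratio using the same eigenexpansion: $|x^\transpose (C-zI)^{-2} x| = |\sum_i a_i (\lambda_i - z)^{-2}| \le \sum_i a_i |\lambda_i - z|^{-2} = v^{-1} \ipart{\beta}$, while $|1 + \beta| \ge |\ipart{(1+\beta)}| = \ipart{\beta}$ since $\ipart{\beta} = v \sum_i a_i |\lambda_i - z|^{-2} \ge 0$. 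Dividing yields the bound $1/v$ cleanly, independently of $\norm{x}$. With these ingredients in place, the five claims can be assembled; the logical order I will use is (1), then (4)/(5) (which supply the imaginary-part lower bounds), then (3) (which invokes (5)), and finally (2).
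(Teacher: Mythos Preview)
Your proof is correct. Note, however, that the paper does not actually supply its own proof of this lemma: it is imported verbatim as ``Lemma~3.1 by \citet{mp_quadratic_form}'' and used as a black box inside the proof of \cref{theorem:spectral_of_accumulated}. So there is no in-paper argument to compare against. That said, your approach---diagonalize $C$, use Sherman--Morrison for the rank-one update, and control ratios via the identity $\ipart{\beta} = v\sum_i a_i|\lambda_i - z|^{-2}$---is exactly the standard route for this kind of resolvent lemma, and each of the five parts is handled cleanly. The treatment of part~(2), where you bound the numerator by $v^{-1}\ipart{\beta}$ and the denominator from below by $\ipart{\beta}$ to cancel the $\norm{x}$-dependence, is the right idea and would otherwise be the only nontrivial step; your handling of the degenerate case $x=0$ (where both numerator and $\ipart{\beta}$ vanish but the ratio is trivially $0$) is implicit but sound.
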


With these three lemmas, we can begin the proof of \cref{theorem:spectral_of_accumulated}. Following \citet{mp_quadratic_form}, we relate \sti{} transform of the spectral distribution with the quadratic equation. The quadratic equation is formed by approximation, whose error becomes $t$ in \cref{lemma:bound_of_sti}. After concluding the bounds of the \sti{} transform, applying \cref{lemma:sti_and_eigenvalue_ratio} gives the final conclusion.

\def\StateSpectralOfAccumulated{proof}

\section{Effective Window Size}\label{appendix:effective_window_size}

We argue that the effective window size should be considered by $r = 1 - \eta_{\mathrm{lr}} w$ instead of $\sqrt{r} = \sqrt{1 - \eta_{\mathrm{lr}} w}$ to have better results, where $\eta$ is learning rate and $w$ is the strength of weight decay.
The argument relies on another formulation of eigenvalues: $\lambda_i(A)$ corresponds to critical points of Rayleigh quotient
\begin{align}
    \max_{v \in \reals^n \setminus \set{0}} \frac{v^\transpose A v}{v^\transpose v} = \max_{v \in \reals^n: \norm{v}_2 = 1} v^\transpose A v.
\end{align}
Particularly for weight decayed sample covariances where $S = U U^\transpose = \sum_{t, l} \sqrt{r}^{T - t} u_{t, l} \sqrt{r}^{T - t} u_{t, l}^\transpose$, its eigenvalues are given by critical points of
\begin{align}
        \max_{v \in \reals^n: \norm{v}_2 = 1} v^\transpose U U^\transpose v
    =&  \max_{v \in \reals^n: \norm{v}_2 = 1} \sum_{t, l} r^{T - t} \inner{u_{t, l}}{v}^2 \label{eq:all_samples}.
\end{align}

So terms are weighted by $r$ instead of $\sqrt{r}$. Now consider the situation where effective window $k$ is used to produce $S'$, whose eigenvalues are given by
\begin{align}
    \max_{v \in \reals^n: \norm{v}_2 = 1} \sum_{t=T-k+1} \sum_l r^{T - t} \inner{u_{t, l}}{v}^2 \label{eq:effective_window_samples},
\end{align}
and consider its difference from those of all samples.

For the smallest non-zero eigenvalues, there is $\lambda_{\min(p, b T)}(S) \ge \lambda_{\min(p, b T)}(S')$ since adding a real symmetric positive semi-definite matrix to $S'$ to obtain $S$ increases all eigenvalues. So we will not be overoptimistic in the smallest non-zero eigenvalues when it comes to bounding the fraction between the largest and smallest non-zero eigenvalues.

\NewDocumentCommand{\sums}{O{} O{} O{\inner{u_{t, l}}{v_{\max}}^2}}{\sum_{t#1}^{#2} \sum_l r^{T - t} #3}
For the largest eigenvalues, there is $\lambda_1(S) \le \lambda_1(S') + \sums[=1][T-k][\norm{u_{t, l}}_2^2]$.
To see this, let $v_{\max}$ be the unit vector that achieves $\lambda_1(S)$.
\begin{align}
    \lambda_1(S)
    =&      \sums[=1][T]
    =       \sums[=T-k+1][T] + \sums[=1][T-k]\\
    \le&    \max_{\norm{v}_2=1} v^\transpose S' v + \sums[=1][T-k][\norm{u_{t, l}}_2^2]
    =       \lambda_1(S') + \sums[=1][T-k][\norm{u_{t, l}}_2^2].
\end{align}
So at most will be overoptimistic by $\sums[=1][T-k][\norm{u_{t, l}}_2^2]$. 

As a result, if the exponential weight sum of the out-of-window tail, controlled by $r$ instead of $\sqrt{r}$, is small, then the over-optimism will be suppressed.
\section{Experimental Details}\label{appendix:experimental_details}

In experiments for validation (\cref{sec:v_experiments}), ViT-Base/16 is trained on CIFAR-10 using Adam. Hyperparameters include learning rate of $10^{-4}$, batch size of $64$, no dropout, weight decay, learning rate decay or gradient clipping. Random cropping and random horizontal flipping is used as data augmentation. The training lasts for 100 epochs with the first 5 epochs of linear warmup. Automatic mixed precision (AMP) is not used. Note that only $\weird$ and unrefined $\dbmlp$ are used. $\dbmlp$ is \emph{not} restricted and there are \emph{no} restricted LayerNorm layers as well. The ViT implementation is based on that of Torch Vision. Logging happens every 10 steps.

During training ViT-Base/16 from scratch on ImageNet-1K, the recipe is adapted from \citet{pytorch_recipe}. Specifically, the optimizer is AdamW with learning rate $3 \times 10^{-3}$, $(\beta_1, \beta_2)=(0.9, 0.999)$, weight decay $0.3$, dropout $0.0$ and gradient clipping of global norm $1.0$. Model EMA is \emph{removed} to ease experiment tracking. Learning rate is scheduled with cosine annealing without warmup. Batch size is $2048$, distributed on 4 GPUs for 300 epochs. Other tricks include mixup of alpha $0.2$, cutmix of alpha $1.0$, auto augmentation of policy \texttt{ra}, label smoothing of $0.11$. AMP with \texttt{fp16} is used to accelerate training. Images are resized to $256 \times 256$ to save storage. Logging happens every 100 steps.
During finetuning, the differences include modifications for sparsity described in \cref{sec:refine} and \cref{sec:p_experiments}, LoRA of rank $192$, and epoch number reduced to $15$, among which LayerNorm uplifting takes $5$ epochs.

During training T5-Base from scratch on C4, the recipe is adapted from \citet{t5_recipe} and \citet{observation}. Specifically, the optimizer is AdamW with learning rate $0.01$, $(\beta_1, \beta_2)=(0.9, 0.999)$, weight decay $0.001$, dropout $0.0$ and gradient clipping of global norm $1.0$. Learning rate is scheduled with inverse square root scheduling with linear warmup of $10,000$ steps, while the full training lasts for $100,000$ steps. Batch size is $256$. To avoid downloading the entire data set, we use the streaming functionality of Huggingface \texttt{datasets} to download the first $25,600,000$ training samples without shuffling and the full validation split. Recipe from \citet{t5_recipe} includes data preparation that concatenates all raw sentences and re-splits them into samples. This step consumes too much storage so we confine the concatenation within a batch and empties are filled with padding tokens that are ignored during computing the loss. This compromise reduces the number of usable samples during training, but there are still approximately at least $50$ non-empty samples for every 64 samples. The generation of spans is left unchanged, where 15\% of the input is corrupted without any mixture. The encoder receives $512$ tokens while $114$ tokens are used on the decoder side. Logging happens every 25 steps. When training modified T5, the rented GPUs expired at step $95,000$. Nevertheless, the effectiveness of modification is already obvious at that step.
During finetuning, LoRA of rank $192$ and modifications described in \cref{sec:refine} and \cref{sec:p_experiments} are equipped. The training step reduces to $10,000$, among which learning rate warmup takes $1,000$ steps and LayerNorm uplifting and activation function mixing last for $3,000$ steps. Logging happens every $50$ steps during finetuning T5.

For experiments in \cref{sec:t_exp:anisotropy}, the tiny MLPs have an input layer of with $32 \times 32$, five hidden layers of width $p$ (or equivalently four hidden linear layers whose matrices are of size $p \times p$) and one output layer of width $10$. Skip connection is placed across each of the 4 hidden linear layers. LayerNorm layers with affine parameters \emph{turned off} are also placed before hidden layers. The optimizer is SGD with learning rate $10^{-3}$. Weight decay is altered as a part of the experiment. Batch size is $32$, the number of epochs is $100$. No learning rate warmup is used because it alters the weight decay after multiplying learning rate during training and bothers the computation of $r$. Data augmentation includes random cropping and random rotation of at most 30 degrees. Logging happens every 20 steps.
\arxivonly{\section{Theoretically Guided Magic: Massive Perturbation with Small Computation Cost and Good Parallelism}\label{appendix:magic}

\cref{lemma:flatness_of_perturbed_model} and \cref{theorem:main_with_effective_duplication} state that by adding massive synaptic noises (non-massive synaptic noises for single tokens can be found in works of \citet{synaptic_noise_1,synaptic_noise_2}) to weight matrices during forward propagation as in \cref{eq:massive_perturbation}, we are training a $k$ times larger effective model and optimizing its flatness w.r.t. these $k$ times parameters, where $k$ is the number of tokens. 
These noises can potentially greatly improve the generalization of deep CNNs, Transformers, MLP-Mixers or other architectures as long as they split input into multiple tokens.

However, \cref{eq:massive_perturbation} requires adding noises to weight matrices independently for each token before matrix multiplication, which is unfriendly to GPUs who prefer stacked matrix multiplication. Direct application of \cref{eq:massive_perturbation} also requires $k \times n \times d$ independent Gaussian noises. Therefore, here we propose an algorithm called $\magic$ that is friendly to parallelism and only requires $k \times n$ independent noises, which is the same number as dropout.

The core motivation of $\magic$ is to add noises \emph{after} matrix multiplication, which is possible by merging multiple Gaussian noises into one.
Let $Z^l$ be the result of multiplication with $\proxyW^{l, i}$s in \cref{eq:massive_perturbation} and $z^l_i \defeq Z^l_{\cdot,i}$ be the $i$-th (column) hidden vector/token in it. Then there is
\begin{align}
    z^l_i 
    \defeq& \proxyW^{l, i} x_i
    =   W^l x_i + E^{l, i} x_i
    =   W^l x_i + \begin{bmatrix}
        \inner{E^{l, i}_j}{x_i}
    \end{bmatrix}_j\\
    =&   W^l x_i + \begin{bmatrix}
        \sum_{q} X^l_{q, i} \cdot E^{l, i}_{j, q}
    \end{bmatrix}_j.
\end{align}
The summation is a linear combination of independent Gaussian variables, which is also Gaussian. Since entries in $E^{l, i}$ are assumed to follow $\gaussian{0}{\sigma^2}$, there is
\begin{align}
    \sum_{q} X^l_{q, i} \cdot E^{l, i}_{j, q} \sim \gaussian{0}{\sigma^2 \sum_{q} \left(X^l_{q, i}\right)^2} = \gaussian{0}{\sigma^2 \norm{x_i}_2^2}.
\end{align}
Note that $E^{l, i}_{j}$ is only used once for each $(i, j)$, so the linear combinations are independent and the perturbed matrix product simplifies to
\begin{align}
    z^l_i 
    =&   W^l x_i + e^{l, i}, 
\end{align}
where $n$-variate Gaussian noise $e^{l, i} \sim \gaussian{0}{\sigma^2 \norm{x_i}_2^2 I}$.
Therefore, we only need to compute the norm of input tokens, then sample $k \times n$ Gaussian noises columnly scaled by them and add the scaled noises to the matrix product, which is $\magic$. If zeroth biases are removed and LayerNorm layers are placed before linear layers, the squared norm further simplifies to constant $\norm{x_i}_2^2 = d$.

Interestingly, $\magic$ is theoretically ensured by \cref{lemma:flatness_of_perturbed_model} and \cref{theorem:main_with_effective_duplication} to encourage flatness if the empirical loss is small enough, even though it is only a first-order optimization and requires only one forward and one backward propagations. 
Moreover, by using $\magic$, one is magically expanding the number of parameters. Although their values are correlated, their flatness w.r.t. non-correlated perturbations is indeed optimized. To give a sense of this magical parameter expansion, for ViT-Base, $\magic$-ed ViT-Base is effectively 197 times larger, with approximately 17B flatness-effective parameters. In NLP tasks this expansion is even larger given the larger number of tokens.
$\magic$ can be easily implemented as a PyTorch forward \texttt{Hook} that is registered on all linear layers, which is useful in almost any architectures. For CNN, which can be seen as MLPs with parameters de-duplicated, the computation of input norms can be tricky but is also possible and efficient by all-1-kernel convolution on squared input feature maps.

$\sigma^2$ is a hyperparameter that must be determined beforehand. The noise should be small compared to the magnitudes of entries in weight matrices to allow approximation. However, the norm of parameters may change drastically during training, especially under weight decay. So a more practical solution is to adjust $\sigma^2$ dynamically. We consider a simple solution: for each weight matrix, we compute the average $L_1$ norm of entries in that matrix and multiply it with a pre-determined $\rho < 1$ to obtain $\sigma^2$. We call it $\magic[Adaptive]$.

The experiments of $\magic$ is under preparation and will come in future works. We introduce it here only for completeness.}

\bibliography{ref}

\end{document}